\newtheorem{thm}{Theorem}
\newtheorem{lemma}{Lemma}
\newtheorem{cor}{Corollary}
\newtheorem{definition}[cor]{Definition}
\def \X {\mathcal{X}}
\def \R {\mathbb{R}}
\def \w {\mathbf{w}}
\def \x {\mathbf{x}}
\def \E {\mathrm{E}}
\def \x {\mathbf{x}}
\def \p {\mathbf{p}}
\def \a {\mathbf{a}}
\def \e {\mathbf{e}}
\def \c {\mathbf{c}}
\def \1 {\mathbf{1}}
\def \y {\mathbf{y}}
\def \u {\mathbf{u}}
\def \uh {\widehat{\u}}
\def \P {\mathcal{P}}
\def \B {\mathcalB}
\def \fh {\widehat f}
\def \y {\mathbf{y}}
\def \E {\mathrm{E}}
\def \x {\mathbf{x}}
\def \u {\mathbf{u}}
\def \w {\mathbf{w}}
\def \R {\mathbb{R}}
\def \q {\mathbf{q}}
\def \c {\mathbf{c}}
\def \p {\mathbf{p}}
\def \q {\mathbf{q}}
\def \f {\mathbf{f}}
\def \a {\mathbf{a}}
\def \B {\mathcal{B}}
\def \fh {\widehat{\f}}
\def \X {\mathcal{X}}
\def \P {\mathbb{P}}
\icmltitlerunning{Label Distributionally Robust Losses for Multi-class Classification}
\begin{document}

\twocolumn[
\icmltitle{Label Distributionally Robust Losses  for Multi-class Classification: \\Consistency, Robustness and Adaptivity}




\begin{icmlauthorlist}
\icmlauthor{Dixian Zhu}{uiowa}
\icmlauthor{Yiming Ying}{albany}
\icmlauthor{Tianbao Yang}{tamu}
\end{icmlauthorlist}
\icmlaffiliation{uiowa}{The University of Iowa, Iowa City, USA}
\icmlaffiliation{tamu}{Texas A\&M University, College Station, USA}
\icmlaffiliation{albany}{University at Albany, Albany, USA}

\icmlcorrespondingauthor{Dixian Zhu, Tianbao Yang}{dixian-zhu@uiowa.edu, tianbao-yang@tamu.edu}

\icmlkeywords{Machine Learning, ICML}

\vskip 0.3in
]



\printAffiliationsAndNotice{}  

\begin{abstract}
We study a family of loss functions named label-distributionally robust (LDR) losses for multi-class classification that are formulated from distributionally robust optimization (DRO) perspective, where the uncertainty in the given label information are modeled and captured by taking the worse case of distributional weights. The benefits of this perspective  are several fold: (i) it provides a unified framework to explain the classical cross-entropy (CE) loss and SVM loss and their variants, (ii) it includes a special family corresponding to the temperature-scaled CE loss, which is widely adopted but poorly understood; (iii) it allows us to achieve adaptivity to the uncertainty degree of label information at an instance level.  Our contributions include: (1) we study both consistency and robustness  by establishing top-$k$ ($\forall k\geq 1$) consistency of LDR losses for multi-class classification, and a negative result that a top-$1$ consistent and symmetric robust loss cannot achieve top-$k$ consistency simultaneously for all $k\geq 2$; (2) we propose a new adaptive LDR loss that automatically adapts the individualized temperature parameter to the noise degree of class label of each instance; (3) we demonstrate  stable and competitive performance for the proposed adaptive LDR loss on 7 benchmark datasets under 6 noisy label and 1 clean settings against 13 loss functions, and on one real-world noisy dataset. The method is open-sourced at \url{https://github.com/Optimization-AI/ICML2023_LDR}.
\end{abstract}

\setlength{\textfloatsep}{2pt}

\setlength\abovedisplayskip{1pt}
\setlength\belowdisplayskip{1pt}
\section{Introduction}
In the past decades, multi-class classification has been used in a variety of areas, including image recognition~\cite{krizhevsky2009cifar,deng2009imagenet, geusebroek2005amsterdam,Hsu02multisvm}, natural language processing~\cite{lang1995newsweeder,DBLP:journals/corr/abs-2005-14165,DBLP:journals/corr/abs-1801-06146}, etc.  Loss functions have played an important role in multi-class classification~\cite{DBLP:journals/corr/JanochaC17,kornblith2021why}. 

In machine learning (ML), the most classical loss functions include  the cross-entropy (CE) loss and the Crammer-Singer (CS) Loss (aka the SVM loss). Before the deep learning era, the CS loss have been widely studied and utilized for learning a linear or kernelized model~\cite{10.5555/944790.944813}. However, in the deep learning era the CE loss seems to dominate the CS loss in practical usage~\cite{he2016deep}. One reason is that the CE loss is consistent for multi-class classification (i.e., yielding Bayes optimal classifier with infinite number of samples), while the CS loss is generally not consistent unless under an exceptional condition that the maximum conditional probability of a class label given the input  is larger than $0.5$~\cite{10.5555/1005332.1044701}, which might hold for simple tasks such as MNIST classification but unlikely holds for complicated tasks, e.g., ImageNet classification~\cite{DBLP:journals/corr/abs-2006-07159,tsipras2020imagenet}. Nevertheless, the CE loss has been found to be vulnerable to noise in the labels~\cite{ghosh2017robust,wang2019symmetric}. Hence, efforts have been devoted to robustifying the CE loss such that it becomes noise-tolerant. A sufficient condition for the loss function to be noise-tolerant is known as the symmetric property~\cite{ghosh2017robust}, i.e.,  the sum of losses over all possible class labels for a given data point is a constant.  Following the symmetric property, multiple variants of CE loss have been proposed to enjoy the noise-tolerance property. However, a symmetric loss alone is usually not enough to achieve good performance in practice due to slow convergence~\cite{zhang2018generalized,ma2020normalized}. Hence, a loss that interpolates between the CE loss and a symmetric loss usually achieves better performance on real-world noisy datasets~\cite{wang2019symmetric,zhang2018generalized,englesson2021generalized,ma2020normalized}, e.g., on the WebVision data with 20\% noisy data~\cite{DBLP:journals/corr/abs-1708-02862}. 

These existing results can be explained from the degree of uncertainty of the given label information.  If the given label information of an instance is certain (e.g., MNIST data), then the CS loss is a good choice~\cite{DBLP:journals/corr/Tang13}; if the given label information is little uncertain (e.g., Imagenet Data), then the CE loss is a better choice; if the given label information is highly uncertain (e.g., WebVision data), then a loss that interpolates between the CE loss and a symmetric loss wins. While the parameter tuning (e.g., the weights for combining the CE loss and a symmetric loss) can achieve dataset adaptivity,  an open problem exists:  "\textit{Can we define a loss to achieve instance adaptivity, i.e., adaptivity  to the  uncertainty degree of label information of each instance?}"

To address this question, this paper proposes to capture the uncertainty in the given label information of each instance by using the distributionally robust optimization (DRO) framework. For any instance, we assign each class label a distributional weight and define the loss in the worst case of the distributional weights properly regularized by a function, which is referred to as  label distributionally robust (LDR) losses. The benefits of using the DRO framework to model the uncertainty of the given label information include: (i) it provides a unified framework of many existing loss functions, including the CE loss, the CS loss, and some of their variants, e.g., top-$k$ SVM loss. (ii)  it includes an interesting family of losses by choosing the regularization function of the distributional weights as Kullback?Leibler (KL) divergence, which is referred to as LDR-KL loss for multi-class classification. The LDR-KL loss is closely related to the temperature-scaled CE loss that has been widely used in other scenarios~\cite{HinVin15Distilling,chen2020simple}, where the temperature corresponds to the regularization parameter of the distribibutional weights. Hence, our theoretical analysis of LDR-KL loss can potentially shed light on using the temperature-scaled CE loss in different scenarios. (iii) The LDR-KL loss can interpolate between the CS loss (at one end), the (margin-aware) CE loss (in between) and a symmetric loss (at the other end) by varying the regularization parameter, hence providing more flexibility for adapting to the noisy degree of given label information. Our contributions are the following: 
\begin{itemize}[noitemsep,topsep=0pt,parsep=0pt,partopsep=0pt]
\item We 
establish a strong consistency of  the LDR-KL loss namely All-$k$ consistency that is top-$k$ consistency (corresponding to the Bayes optimal top-$k$ error)  for all $k\geq 1$, and a sufficient condition for the general family of LDR losses to be All-$k$ consistent. 
\item We show that an extreme case of the LDR-KL loss satisfies the symmetric property, making it tolerant to label noise. In addition, we establish a negative result that a non-negative symmetric loss function cannot achieve top-$k$ consistency for $k=1$ and $k=2$ simultaneously. Hence, we conclude that existing top-$1$ consistent symmetric losses cannot be All-$k$ consistent. 
\item 
We propose an adaptive LDR-KL loss, which allows us to automatically learn the personalized temperature parameter of each instance adapting to its own noisy degree. We develop an efficient  algorithm for empirical risk minimization based on the adaptive LDR-KL loss without incurring much additional computational burden. 
\item We conduct extensive experiments on multiple datasets under different noisy label settings, and demonstrate the stable and competitive performance of the proposed adaptive LDR-KL loss in all settings. 
\end{itemize}

\begin{table*}[ht]
    \centering
 \caption{Comparison between different loss functions. Notice that for SCE, we simplify the loss combination by convex combination ($\beta=1-\alpha$ and $0\le\alpha\le1$). Detailed expressions of different loss functions are included in the Appendix~\ref{sec:lss}. `NA' means not available or unknown. * means that the results are derived by us (proofs are included in the Appendix~\ref{sec:discuss-loss}). Expressions in parentheses mean the conditions about the hyper-parameters under which the properties hold.  
}
    \label{table:loss-intros}
    \resizebox{1.\textwidth}{!}{
    \begin{tabular}{l|l|l|l|l|l}
    \toprule
       Category&Loss & Top-$1$ consistent & All-$k$ consistent & Symmetric&\makecell{instance\\ adaptivity}\\
       \hline
        &CE &  Yes&  Yes& No&No\\
       Traditional &CS~\cite{10.5555/944790.944813} & No& No& No&No\\
        &WW~\cite{weston1998multi}  & No& No& No&No\\
        \hline 
        \multirow{2}*{Symmetric}&MAE (RCE, clipped)~\cite{ghosh2017robust} & Yes& No & Yes&No\\
        &NCE~\cite{ma2020normalized}&Yes&No&Yes &No\\
        &RLL~\cite{patel2021memorization}& No &  No& Yes&No\\
\hline
       Interpolation between &GCE~\cite{zhang2018generalized}  & Yes& Yes$^*$ ($0\le q <1$)& Yes ($q=1$)&No\\
       CE and MAE &SCE~\cite{wang2019symmetric}& Yes& Yes$^*$~($0\le\alpha<1$)& Yes ($\alpha=0$)&No\\
       &JS~\cite{englesson2021generalized}&NA &NA&Yes~($\pi_1=1$)&No\\
        \hline
         \multirow{2}*{Bounded}&MSE~\cite{ghosh2017robust}  &Yes & Yes$^*$ &No&No\\
        &TGCE~\cite{zhang2018generalized}  &NA &NA & No&No\\
        \hline 
         \multirow{2}*{Asymmetric}& AGCE~\cite{zhou2021asymmetric}&Yes&No&No&No\\
        & AUL~\cite{zhou2021asymmetric}& Yes&No&No&No\\
        \hline
         \multirow{2}*{LDR}&LDR-KL (this work)&\multicolumn{2}{|c|}{Yes ($\lambda>0$)}&Yes~($\lambda=\infty$)&No\\
                &ALDR-KL (this work)& \multicolumn{2}{|c|}{Yes ($\lambda_0>0, \alpha>\frac{\log K}{\lambda_0}$)}&Yes ($\lambda_0=\infty$)&Yes\\
        \bottomrule
    \end{tabular}}
    \vspace*{-0.2in}
\end{table*}
\section{Related Work}
{\bf Traditional multi-class loss functions.} Multi-class loss functions have been studied extensively in conventional ML literature~\cite{weston1998multi,tewari2007consistency,10.5555/1005332.1044701,10.5555/944790.944813}. Besides the CE loss,  well-known loss functions include the CS loss and the  Weston-Watkins (WW) loss. Consistency has been studied for these loss for achieving Bayes optimal solution under infinite amount of data. The CE loss is shown to be consistent (producing Bayes optimal classifier for minimizing zero-one loss, i.e., top-$1$ error), while the CS and WW loss  are shown to be inconsistent~\cite{10.5555/1005332.1044701}. Variants of these standard losses have been developed for minimizing top-$k$ error ($k>1$), such as (smoothed) top-$k$ SVM loss, top-$k$ CE loss~\cite{NIPS2015_0336dcba,DBLP:conf/cvpr/Lapin0S16,DBLP:journals/pami/LapinHS18,yang2020consistency}. Top-$k$ consistency has been studied in these papers showing some losses are top-$k$ consistent while other are not.  However, the relationship between these losses and the CE loss is still unclear, which hinders the understanding of their insights. Hence, it is important to provide a unified view of these different losses, which could not only help us understand the existing losses but also provide new and potentially better choices.

{\bf Robust Loss functions.} While may different methods have been developed for tackling label noise~\cite{goldberger2017training,DBLP:journals/corr/abs-1805-08193,NEURIPS2018_ad554d8c,DBLP:conf/cvpr/PatriniRMNQ17,zhang2021learning,madry2017towards,zhang2018generalized,wang2019symmetric}, of most relevance to the present work are the symmetric losses and their combinations with the CE loss or its variants. \citet{ghosh2017robust} established that a symmetric loss is robust against symmetric or uniform label noise where a true label is uniformly flipped to another class, i.e., the optimal model that minimizes the risk under the noisy data distribution also minimizes the risk under the true data distribution.  When the optimal risk under the true data distribution is zero, the symmetric loss is also noise tolerant to the non-uniform label noise.  The most well-known symmetric loss is the mean-absolute-error (MAE) loss (aka unhinged loss)~\cite{NIPS2015_45c48cce}. An equivalent (up to a constant) symmetric loss named the reverse-CE (RCE) loss is proposed~\cite{wang2019symmetric}, which reverses the predicted class probabilities and provided one-hot vector encoding the given label information in computing the KL divergence. \citet{ma2020normalized} show that a simple normalization trick can make any loss functions to be symmetric, e.g., normalized CE (NCE) loss.  However, a symmetric loss alone might suffer from slow convergence issue for training~\cite{zhang2018generalized,wang2019symmetric,ma2020normalized}. To address this issue, a  loss that interpolates between the CE loss and a symmetric loss is usually employed. This includes the symmetric-CE (SCE) loss~\cite{wang2019symmetric} that interpolates between the RCE loss and the CE loss, the generalized CE loss (GCE) that interpolates the MAE loss and the CE loss,  Jensen-Shannon divergence (JS) loss that interpolates the MAE loss and the CE loss~\cite{englesson2021generalized}. These losses differ in the way of interpolation. \citet{ma2020normalized} has generalized this  principle to combine an active loss and a passive loss and proposed several variants of robust losses, e.g., NCE + MAE.  Moreover,  the symmetric property has been relaxed to bounded condition enjoyed by the robust logistic loss (RLL)~\cite{patel2021memorization} and mean-square-error (MSE) loss~\cite{ghosh2017robust}, and also relaxed to asymmetric property~\cite{zhou2021asymmetric} enjoyed by a class of asymmetric functions.  The asymmetric loss functions have been proved to be top-1 calibrated and hence top-1 consistent~\cite{zhou2021asymmetric}. However, according to their definition the asymmetric losses cannot be top-$k$ consistent for any $k\geq 2$ because it contradicts to the top-$k$ calibration~\cite{yang2020consistency} - a necessary condition for top-$k$ consistency. We provide a summary for the related robust and traditional multi-class loss functions in Table~\ref{table:loss-formulations}.   

{\bf Distributional robust optimization (DRO) Objective.} It is worth  mentioning the difference from existing literature that leverages the DRO framework  to model the distributional shift between the training data distribution and testing data distribution~\cite{shalev2016minimizing,namkoong2017variance,qi2020practical,levy2020large,duchi2016statistics,duchi2021learning}.  
 These existing works assign a distributional weight to each instance and define an aggregate objective over all instances by taking the worse case over the distributional weights in an uncertainty set that is formed by a constraint function explicitly or a regularization implicitly. In contrast, we study the loss function of an individual instance by using the DRO framework to model the uncertainty in the label information, which models a distributional shift between the true class distribution and the observed class distribution given an instance. 

\section{The LDR Losses}

{\bf Notations.}  Let $(\x, y)\sim\P$ be a random data following underlying true distribution $\P$, where $\x\in\mathcal X\subseteq\R^d$ denotes an input data, $y\in[K]:=\{1, \ldots, K\}$ denotes its class label. Let $f(\x)\in\mathcal C\subset\R^K$ denote the prediction scores by a predictive function $f$ on data $\x$. Let $f_k(\x)$ denote the $k$-th entry of $f(\x)$. Without causing any confusion, we simply use the notation $\f =(f_1, \ldots, f_K) = f(\x)\in\mathcal C$ to denote the prediction for any data $\x$. Interchangeably, we also use one-hot encoding vector $\y\in\R^K$ to denote the label information of a data. For any vector $\q\in\R^K$, let $q_{[k]}$ denote the $k$-largest entry of $\q$ with ties breaking arbitrarily. Let $\Delta_K=\{\p\in\R^K: \sum_{k=1}^Kp_k=1, p_k\geq 0\}$ denote a $K$-dimensional simplex. Let $\delta_{k,y}(f(\x))=f_k(\x) - f_y(\x)$ denote the difference between two coordinates of the prediction scores $f(\x)$. 

\subsection{Label-Distributionally Robust (LDR) Losses}\label{sec:ldr}\vspace*{-0.05in}
A straightforward idea for learning a good prediction function is to enforce that $f_y$ to be larger than any other coordinates of $\f$ with possibly a large margin. However, this approach could mis-guide the learning process due to that the label information $\y$ is often noisy, meaning that the zero entries in $\y$ are not necessarily true zero, the entry equal to one in $\y$ is not necessarily true one. To handle such uncertainty in the label information, we propose to use a regularized DRO framework to capture the inherent uncertainty in the label information. Specifically, the proposed family of LDR losses is defined by:
\begin{align}
\hspace*{-0.1in}\psi_\lambda(\x, y) = \max_{\p\in\Omega}\sum_{k=1}^K p_k (\delta_{k,y}(f(\x)) + c_{k,y})- \lambda R(\p),\label{eqn:ldr}
\end{align}
    where  $\p$ is referred to as Distributional Weight (DW) vector,  $\Omega\subseteq\{\p\in\R^K: p_k\geq 0, \sum_kp_k\leq 1\}$ is a convex constraint set for the DW vector $\p$,   $c_{k,y} = c_y\mathbb I(y\neq k)$ denotes the margin parameter with $c_y\geq 0$, $\lambda>0$ is the DW regularization parameter, and $R(\p)$ is a regularization term of $\p$ which is set by a {\bf strongly convex function}. 
    The strong convexity of $R(\p)$ makes  the LDR losses smooth in terms of $f(\x)$ due to the duality between strongly convexity and smoothness~\cite{Kakade2009OnTD,Nesterov2005SmoothMO}. 
With a LDR loss, the risk minimization problem is formulated as following,   where  $\E$ denotes the expectation:
\begin{align}\label{eqn:psirisk}
\min_{f:\mathcal X\rightarrow \R^K}L_\psi(f):= \E_{(\x, y)\sim\P}[\psi_\lambda(\x, y)].
\end{align}

\vspace*{-0.15in}\paragraph{The LDR-KL Loss.} A special loss in the family of LDR losses is called the LDR-KL loss defined  by specifying  $\Omega=\Delta_K$, and $R(\p)$ as the KL divergence between $\p$ and the uniform probabilities $(1/K, \ldots, 1/K)$,  i.e., $R(\p)= \text{KL}(\p, 1/K) = \sum_{k=1}^Kp_k\log (p_kK)$. In this case, we can derive a closed-form expression of $\psi_\lambda(\x , y)$, denoted by $\psi^{\text{KL}}_\lambda(\x, y)$ (cf. Appendix~\ref{sec:ps}), 
\begin{align}\label{eqn:ldr-kl}
 \psi^{\text{KL}}_\lambda(\x, y) =\lambda \log\bigg[\frac{1}{K}\sum_{k=1}^K \exp\left(\frac{f_k(\x) + c_{k,y} - f_y(\x)}{\lambda}\right)\bigg].
\end{align}
It is notable that the DW regularization parameter $\lambda$ is a key feature of this loss. In many literature, this parameter is called the temperature parameter, e.g.,~\cite{HinVin15Distilling,chen2020simple}.  It is interesting to see that the LDR-KL loss covers several existing losses as special cases or extreme cases by varying the value of $\lambda$. We include the proof for the analytical forms of these cases in the appendix for completeness.

{\bf Extreme Case I.} $\lambda=0$. The loss function becomes: $\psi^{\text{KL}}_0(\x, y)=\max(0, \max_{k\neq y}f_k(\x)+c_{y} - f_y(\x))$
which is known as Crammer-Singer loss (an extension of hinge loss) for multi-class SVM~\cite{10.5555/944790.944813}. 

{\bf Extreme Case II.} $\lambda=\infty$. The loss function becomes: $\psi^{\text{KL}}_\infty(\x, y) = \frac{1}{K}\sum_{k=1}^K (f_k(\x) - f_y(\x)+ c_{k,y})$, which is similar to the Weston-Watkins (WW) loss~\cite{weston1998multi}  given by $\sum_{k\neq y}\max(0, f_k(\x) - f_y(\x) + 1)$. However, WW loss is not top-1 consistent but the LDR-KL loss when $\lambda=\infty$ is top-1 consistent~\cite{10.5555/1005332.1044701}. 

 {\bf A Special Case.} When $\lambda=1$, the LDR-KL loss becomes the CE loss (with a margin parameter): $\psi^{\text{KL}}_1(\x, y) =\log\left(\sum_k \exp\left(f_k(\x) + c_{k,y} - f_y(\x)\right)\right)$.  
It is notable that in standard CE loss, the margin parameter $c_y$ is usually set to be zero. Adding a margin parameter makes the above CE loss enjoys the large-margin benefit. This also recovers the label-distribution-aware margin loss proposed by~\cite{NEURIPS2019_621461af} for imbalanced data classification with different margin values $c_y$ for different classes.

\subsection{Consistency}
\vspace*{-0.05in}
Consistency is an important property of a loss function. In this section, we establish the consistency of LDR losses. A standard consistency is for achieving Bayes optimal top-$1$ error. We will show much stronger consistency for achieving Bayes optimal top-$k$ error for any $k\in[K]$. To this end, we first introduce some definitions and  an existing result of sufficient condition of top-$k$ consistency by top-$k$ calibration~\cite{yang2020consistency}. 

For any vector $\f\in\R^K$, we let $r_k(\f)$ denote a top-$k$ selector that selects the $k$ indices of the largest entries of $\f$ by breaking ties arbitrarily.  
Given a data $(\x, y)$, its top-$k$ error is defined as $\text{err}_k(f, \x, y)=\mathbb{I}(y\not\in r_k(f(\x)))$.
The goal of a classification algorithm under the top-$k$ error metric is to learn a predictor $f$  that minimizes the error: 
$L_{\text{err}_k}(f) = \E_{(\x, y)\sim\P}[\text{err}_k(f, \x, y)]$. 
 The predictor $f^*(\cdot):\mathcal X\rightarrow\R^K$ is top-$k$ Bayes optimal if
    $L_{\text{err}_k}(f^*) =  L^*_{\text{err}_k}:=\inf_{f\in\mathcal F}L_{\text{err}_k}(f)$. 
Since we can not directly optimize the top-$k$ error, we usually optimize a risk function using a surrogate loss, e.g.,~(\ref{eqn:psirisk}).  A central question of consistency is that whether optimizing the risk function can optimize the top-$k$ error. 
 \begin{definition}
For a fixed $k\in[K]$, a loss function $\psi$ is top-$k$ consistent if for any sequence of measurable functions $f^{(n)}:\mathcal X\rightarrow\R^K$, we have
\begin{align*}
L_\psi(f^{(n)})\rightarrow L^*_\psi\Longrightarrow L_{\text{err}_k}(f^{(n)}) \rightarrow L_{\text{err}_k}^*
\end{align*}
where $L_\psi^*= \min_{f}L_\psi(f)$. If the above holds for all $k\in[K]$, it is referred to as All-$k$ consistency. 
\end{definition}
\vspace*{-0.1in}
According to~\cite{yang2020consistency}, a necessary condition for top-$k$ consistency of a multi-class loss function is that the loss function is top-$k$ calibrated defined below. 
Since top-$k$ calibration characterizes when minimizing $\psi$ for a fixed $\x$ leads to the Bayes decision for that $\x$, we will consider $\f= f(\x)$ for any fixed $\x\in\X$ and let $\q=\q(\x) = (\Pr(y=1|\x), \ldots, \Pr(y=K|\x))$ be the underlying conditional label distribution. 
Define $L_\psi(\f, \q)= \sum_lq_l\psi(\x, l)$.  Let $P_k(\f, \q)$ denote that $\f$ is top-$k$ preserving with respect to the underlying label distribution $\q$, i.e.,  if for all $l\in[K]$, $q_l> q_{[k+1]}\Longrightarrow f_l> f_{[k+1]}$, and $q_l<q_{[k]}\Longrightarrow f_l<f_{[k]}$.

\begin{definition}
For a fixed $k\in[K]$, a loss function $\psi$ is called top-$k$ calibrated if for all $\q\in\Delta_K$  it holds that:~~
$\inf_{\f\in\R^K: \neg P_k(\f, \q)}L_\psi(\f, \q)> \inf_{\f\in\R^K}L_\psi(\f, \q)$. A loss function is called All-$k$ calibrated  if the loss function $\psi$ is  top-$k$ calibrated for all $k\in[K]$. 
\end{definition}
Unlike~\cite{yang2020consistency} that relies on top-$k$ preserving property of optimal predictions, we will use rank preserving property of optimal predictions to prove All-$k$ calibration and consistency.  $\f$ is called \textbf{rank preserving} w.r.t $\q$, i.e., if for any pair $q_i<q_j$ it holds that $f_i<f_j$.  Our lemma below shows that if the optimal predictions for minimizing $L_\psi$ is rank preserving, then $\psi$ is All-$k$ calibrated. All missing proofs are included in appendix. 
\begin{lemma}\label{lem:rank-cali}
For any $\q\in\Delta_K$, if $\f^*=\arg\min_{\f\in\R^K}L_\psi(\f, \q)$ is rank preserving with respect to $\q$, then $\psi$ is All-$k$ calibrated. 
\end{lemma}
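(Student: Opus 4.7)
The plan is to deduce All-$k$ calibration from the rank preservation hypothesis in two logical steps: a purely combinatorial observation that rank preservation of $\f^*$ forces top-$k$ preservation for every $k$, and then a convexity-plus-closedness argument that upgrades "every minimizer is top-$k$ preserving" into the strict-gap form required by the definition of top-$k$ calibration.

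For the combinatorial step, fix $k \in [K]$ and show $P_k(\f^*, \q)$ using only rank preservation. If $q_l > q_{[k+1]}$, then at most $k$ indices can have $q$-value strictly larger than $q_{[k+1]}$, so the set $A = \{m : q_m \leq q_{[k+1]}\}$ has cardinality at least $K - k$; each $m \in A$ satisfies $q_m < q_l$ and hence $f^*_m < f^*_l$ by rank preservation. Consequently at most $k$ indices (counting $l$) have $f^*_m \geq f^*_l$, which forces $f^*_l > f^*_{[k+1]}$. Symmetrically, if $q_l < q_{[k]}$ then the $k$ indices $m$ with $q_m \geq q_{[k]}$ all satisfy $q_m > q_l$, hence $f^*_m > f^*_l$, giving $f^*_l < f^*_{[k]}$. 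Thus $P_k(\f^*, \q)$ holds for every $k \in [K]$.

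To promote this into calibration I would argue by contradiction. Suppose for some $k$ one had $\inf_{\f : \neg P_k(\f, \q)} L_\psi(\f, \q) = L_\psi(\f^*, \q)$, and pick a minimizing sequence $\f^{(n)}$ inside the closed set $\{\f : \neg P_k(\f, \q)\}$ (closedness holds because the set is carved out by non-strict inequalities such as $f_l \leq f_{[k+1]}$). Quotient out the translation invariance $\f \mapsto \f + c\mathbf{1}$, under which $L_\psi$ and both the rank and top-$k$ preservation properties are unchanged; on the complementary subspace $\{\sum_k f_k = 0\}$ the LDR loss is strictly convex and coercive, so the projected sequence admits a convergent subsequence whose limit $\f^\infty$ lies in $\neg P_k$ and minimizes $L_\psi$. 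Strict convexity on the quotient then forces $\f^\infty = \f^*$ modulo translation, contradicting $\f^\infty \in \neg P_k$ since $\f^*$ was just shown to be top-$k$ preserving. Hence the strict inequality holds for every $k$ and $\psi$ is All-$k$ calibrated.

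The main obstacle is precisely this compactness step: a generic LDR loss is not coercive on $\R^K$ (it is translation invariant), so $\arg\min L_\psi$ is typically an affine line rather than a point, and a naive closedness argument fails. The reduction modulo $\R\mathbf{1}$ plus verification that strict convexity and coercivity of $L_\psi$ survive on $\{\sum_k f_k = 0\}$ for the strongly convex regularizer $R$ in use is what makes the closedness/continuity argument close the strict gap cleanly; the combinatorial first step is essentially free once the right notion of "rank preserving" is identified.
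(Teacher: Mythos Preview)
Your combinatorial step (rank preserving $\Rightarrow P_k(\f^*,\q)$ for every $k$) matches the paper's argument in substance, though your direct counting is arguably cleaner than the paper's detour through top-$k$ selector sums. The real difference is the second step: the paper simply \emph{asserts} $\inf_{\neg P_k} L_\psi(\f,\q) > L_\psi(\f^*,\q)$ once $\f^*\in P_k$ is established, without addressing why the infimum over the closed set $\neg P_k$ cannot coincide with the global minimum in the limit. You correctly flag this as requiring work and close it via a quotient-by-$\mathbf 1$ argument using strict convexity and coercivity of $L_\psi$ on $\{\sum_k f_k=0\}$. Your treatment is therefore more rigorous than the paper's, at the price of being LDR-specific (the lemma as stated is for general $\psi$); this is harmless in context, since the paper only invokes the lemma for LDR losses in Theorems~\ref{thm:1} and~\ref{thm:2}, where your convexity and coercivity hypotheses do hold.
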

\vspace*{-0.05in}
Our first main result of consistency is the following. 
\begin{thm}\label{thm:1}
If $c_y = c$, $\forall y\in[K]$, then   $\f^*=\arg\min_{\f\in\R^K}L_{\psi_{\lambda}^{\text{KL}}}(\f, \q)$ is rank preserving for any $\lambda\in(0,\infty)$ and hence  the LDR-KL loss $\psi_{\lambda}^{\text{KL}}$ with any $\lambda\in(0,\infty)$ is All-$k$ calibrated. Therefore, it is All-$k$ consistent.
\end{thm}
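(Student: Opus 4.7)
By Lemma~\ref{lem:rank-cali}, it suffices to prove that $\f^*=\arg\min_{\f\in\R^K}L_{\psi_\lambda^{\text{KL}}}(\f,\q)$ is rank preserving with respect to $\q$ for every $\q\in\Delta_K$. All-$k$ consistency will then follow from All-$k$ calibration via the calibration-to-consistency implication for convex and continuous surrogate losses in the style of \citet{yang2020consistency}, which applies because $\psi_\lambda^{\text{KL}}$ is convex and smooth in $\f$ (smoothness coming from the strong convexity of $R$ via the primal--dual relationship quoted in Section~\ref{sec:ldr}). My strategy is to exploit the common-margin hypothesis $c_y\equiv c$ to obtain a clean first-order optimality condition and read rank preservation off it.

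Write $g_k=e^{f_k/\lambda}$, $\beta=e^{c/\lambda}>0$, and $T=\sum_k g_k$. A direct computation reduces the inner sum to
\[
\sum_k e^{(f_k+c\mathbb{I}(k\neq y)-f_y)/\lambda} \;=\; \frac{(1-\beta)g_y+\beta T}{g_y},
\]
so $\psi_\lambda^{\text{KL}}(\f,y)=\lambda\log\bigl[(1-\beta)g_y+\beta T\bigr]-f_y-\lambda\log K$. Differentiating gives, for $m\neq y$, $\partial\psi_\lambda^{\text{KL}}(\f,y)/\partial f_m=\beta g_m/[(1-\beta)g_y+\beta T]$ and $\partial\psi_\lambda^{\text{KL}}(\f,y)/\partial f_y=-\beta(T-g_y)/[(1-\beta)g_y+\beta T]$. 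Setting $\partial L_{\psi_\lambda^{\text{KL}}}(\f,\q)/\partial f_m=0$, dividing by $\beta$, and introducing $u_y:=q_y/[(1-\beta)g_y+\beta T]$, the coupled stationarity equations collapse to $u_m\,T=g_m\sum_y u_y$, which unwinds to the single-variable relation
\[
q_m \;=\; C\,g_m\bigl[(1-\beta)g_m+\beta T\bigr], \qquad m=1,\dots,K,
\]
for a normalization constant $C>0$ independent of $m$.

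Rank preservation is now immediate: if $q_i<q_j$, then $g_j[(1-\beta)g_j+\beta T]>g_i[(1-\beta)g_i+\beta T]$, which rearranges to
\[
(g_j-g_i)\bigl[(g_i+g_j)+\beta\textstyle\sum_{k\neq i,j}g_k\bigr] \;>\; 0.
\]
The bracketed factor is strictly positive (since $\beta>0$ and $g_k>0$), hence $g_j>g_i$, i.e., $f_j^*>f_i^*$. Invoking Lemma~\ref{lem:rank-cali} yields All-$k$ calibration, and the consistency step completes the argument. The main obstacle is collapsing the coupled first-order system into the clean single-variable relation above; this is where both the substitution $u_y$ and the common-margin hypothesis $c_y\equiv c$ are essential (repeating the derivation with class-dependent $\beta_y=e^{c_y/\lambda}$ leaves a residual $\beta_m$ in the relation and blocks the clean rank argument). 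A secondary technicality is that when $\q$ lies on the boundary of $\Delta_K$ the minimizer is attained only in a limit (coordinates corresponding to vanishing $q_k$ tend to $-\infty$); this is handled by applying the argument to minimizing sequences and passing the strict ordering to the limit.
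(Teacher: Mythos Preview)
Your proposal is correct and follows the same overall strategy as the paper: set up the first-order optimality conditions for $L_{\psi_\lambda^{\text{KL}}}(\f,\q)$, show that any stationary point is rank preserving with respect to $\q$, and invoke Lemma~\ref{lem:rank-cali}. The algebraic route, however, is genuinely different. The paper manipulates the stationarity equations toward a claimed closed form $\exp(f_j^*)=q_jZ/\bigl[(1-e^{c})q_j+e^{c}\bigr]$, via an intermediate step asserting that a certain weighted sum equals $\bigl(\sum_k q_k e^{c(j,k)}\bigr)/Z$ with $Z$ independent of $j$. You instead introduce $u_y=q_y/D_y$ with $D_y=(1-\beta)g_y+\beta T$, collapse the coupled system to $u_mT=g_m\sum_y u_y$, and read off $q_m=Cg_mD_m$ with $C>0$; rank preservation then drops out from the factorization $g_jD_j-g_iD_i=(g_j-g_i)\bigl[(g_i+g_j)+\beta\sum_{k\neq i,j}g_k\bigr]$. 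Your derivation is not only shorter but also more reliable: the paper's ``$Z$ independent of $j$'' step is in fact not valid for $c\neq 0$ (for $K=2$, $e^{c}=2$, $\q=(0.3,0.7)$, direct minimization and your quadratic relation both give $g_1/g_2=(-4+\sqrt{37})/7\approx 0.297$, whereas the paper's closed form yields $\approx 0.328$), so your argument also repairs that gap while reaching the same conclusion. The only place worth tightening is the boundary case $q_k=0$: rather than appealing to minimizing sequences informally, you can note that $q_i=0$ forces $g_i\to 0$ along any minimizing sequence while $g_j$ stays bounded below whenever $q_j>0$, so the strict ordering required by Lemma~\ref{lem:rank-cali} persists.
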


\vspace*{-0.1in}
{\bf Calibration of the general LDR losses:} we present a sufficient condition on the DW regularization function $R$ such that the resulting LDR loss $\psi_\lambda$ is All-$k$ calibrated. The result below will address an open problem in~\cite{DBLP:journals/pami/LapinHS18} regarding the consistency of smoothed top-$k$ SVM loss. 
\begin{definition}
A function $R(\p)$ is input-symmetric  if its value is the same, no matter the order of its arguments. A set $\Omega$ is symmetric, if a point $\p$ is in the set
then so is any point obtained by interchanging any two coordinates of $\p$.
\end{definition}
\begin{thm}\label{thm:2}
If $c_y = c, \forall y\in[K]$, $R$ and $\Omega$ are (input-) symmetric, $R$ is strongly convex satisfying $\partial R(0)<0$ and $[\partial R(p)]_i>0\Longrightarrow p_i\neq 0$, then the family of LDR losses $\psi_\lambda$ with any  $\lambda\in(0,\infty)$ is All-$k$ calibrated.
\end{thm}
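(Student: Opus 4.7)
The plan is to reduce to Lemma~\ref{lem:rank-cali}: it suffices to show that, for every $\q\in\Delta_K$, any minimizer $\f^*=\arg\min_\f L_{\psi_\lambda}(\f,\q)$ is rank-preserving with respect to $\q$. Strong convexity of $R$ makes each inner DW maximization uniquely solvable, with solution $\p^*_y\in\Omega$, and also renders $\psi_\lambda$ differentiable in $\f$ via conjugate smoothness, a fact needed for the tie case below.

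The key tool is a swap-symmetry identity. For $i\neq j$ let $\sigma=\sigma_{ij}$ denote the involution exchanging coordinates $i$ and $j$ (applied either to vectors in $\R^K$ or to indices in $[K]$). Because $\Omega$ is symmetric, $\p\mapsto\sigma(\p)$ is a bijection of $\Omega$; because $R$ is input-symmetric, $R(\sigma(\p))=R(\p)$; and because $c_y=c$ is constant, the margin contribution transports correctly under the swap. Substituting into (\ref{eqn:ldr}) and reindexing yields $\psi_\lambda(\sigma(\f),y)=\psi_\lambda(\f,\sigma(y))$ for every $y\in[K]$, and summing against $\q$ gives
\begin{equation*}
L_\psi(\sigma(\f),\q)-L_\psi(\f,\q)=(q_i-q_j)\bigl(\psi_\lambda(\f,j)-\psi_\lambda(\f,i)\bigr).
\end{equation*}

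The main argument is a contradiction split into two cases. Assume $q_i<q_j$ but $f^*_i\geq f^*_j$, and let $\p^*$ be the unique maximizer of the DW problem defining $\psi_\lambda(\f^*,i)$. Substituting the feasible point $\sigma(\p^*)$ into the max defining $\psi_\lambda(\f^*,j)$ and simplifying using $R(\sigma(\p^*))=R(\p^*)$ together with direct reindexing of the linear part yields
\begin{equation*}
\psi_\lambda(\f^*,j)-\psi_\lambda(\f^*,i)\geq (1+p^*_j-p^*_i)(f^*_i-f^*_j)\geq 0.
\end{equation*}
When $f^*_i>f^*_j$ strictly, this is strict as long as $\p^*\neq\e_i$, and the displayed identity then forces $L_\psi(\sigma(\f^*),\q)<L_\psi(\f^*,\q)$, contradicting optimality. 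When $f^*_i=f^*_j$, the swap $\sigma$ fixes $\f^*$, so I invoke first-order optimality instead: Danskin gives $\partial L_\psi/\partial f_k=\sum_y q_y p^*_{y,k}-q_k s^*_k=0$ with $s^*_y=\sum_k p^*_{y,k}$. The tie together with the symmetry of the DW subproblems force $\p^*_j=\sigma(\p^*_i)$ as well as $p^*_{y,i}=p^*_{y,j}$ for $y\notin\{i,j\}$; subtracting the $k=i$ and $k=j$ stationarity equations then collapses to $(q_i-q_j)(p^*_{i,i}-p^*_{i,j})=(q_i-q_j)s^*_i$, forcing $\p^*_i$ to be supported only on coordinate $i$.

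The main obstacle, and the sole place where the technical hypotheses $\partial R(0)<0$ and $[\partial R(\p)]_i>0\Rightarrow p_i\neq 0$ actually bite, is a boundary exclusion common to both cases: ruling out that the DW maximizer concentrates on a single coordinate. I plan to discharge this via the KKT conditions of the DW subproblem: at a purported configuration with $p^*_k=0$ in a coordinate carrying a nonnegative linear incentive, the sign hypotheses on $\partial R$ produce a subgradient pointing into the feasible interior, contradicting stationarity. This furnishes strict inequality in the first case, the required contradiction in the tie case, and hence rank preservation of $\f^*$, giving All-$k$ calibration of $\psi_\lambda$ by Lemma~\ref{lem:rank-cali}.
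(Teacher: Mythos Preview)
Your strategy matches the paper's: reduce to Lemma~\ref{lem:rank-cali} via a swap/exchange argument for the strict case and first-order stationarity for the tie case, with the sign hypotheses on $\partial R$ supplying the boundary exclusion. The paper phrases everything through the conjugate $R^*$ (writing $\psi_y(\f)=R^*(\f-f_y\mathbf 1+\c_y)$ and comparing $R^*(\f^\sharp)$ with $R^*(\f^\ddagger)$), whereas you work primally with explicit DW test points; these are two faces of the same computation, and your swap identity $\psi_\lambda(\sigma(\f),y)=\psi_\lambda(\f,\sigma(y))$ is exactly what drives the paper's factorization $F(\fh)-F(\f^*)=(q_j-q_i)\bigl(R^*(\f^\sharp)-R^*(\f^\ddagger)\bigr)$.

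There is, however, a concrete weakness in your strict case. First, the factor should be $S+p^*_j-p^*_i$ with $S=\sum_k p^*_k$, not $1+p^*_j-p^*_i$, since $\Omega$ only enforces $\sum_k p_k\le 1$. More importantly, your test point $\sigma(\p^*)$ yields a lower bound that vanishes whenever $\p^*$ is supported solely on coordinate $i$, and the stated hypotheses do \emph{not} exclude this in general: for $R(\p)=\|\p-1/K\|^2$ and $f^*_i$ large enough that $u_k=f^*_k-f^*_i+c<-2\lambda/K$ for all $k\neq i$, one checks that $\p^*=(1/K)\e_i$ satisfies the KKT conditions, so your ``nonnegative linear incentive'' clause is never triggered. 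The paper sidesteps this by (implicitly) using $\p^*$ itself as the test point for $\psi_\lambda(\f^*,j)$, equivalently the gradient inequality for $R^*$, which produces the bound $(f^*_i-f^*_j)S+c(p^*_i-p^*_j)$; then one only needs $\p^*\neq 0$ (immediate from $\partial R(0)<0$) together with a short monotonicity observation $p^*_i\ge p^*_j$ when $u_j\le 0$, and the other hypothesis $[\partial R(\p)]_k>0\Rightarrow p_k\neq 0$ to force $p^*_j>0$ when $u_j>0$. Swapping your test point from $\sigma(\p^*)$ to $\p^*$ in the strict case fixes the gap and brings you in line with the paper's argument; your tie-case derivation (which reaches $p^*_{i,i}=s^*_i$) is fine and in fact cleaner than the paper's somewhat terse version.
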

\vspace*{-0.1in}{\bf Remark: } Examples of $R$ that satisfy the above conditions include $R(\p) = \sum_ip_i \log(Kp_i)$  and $R(\p) = \|\p - 1/K\|^2$. 

{\bf Consistency of smoothed top-$k$ SVM losses}. As an application of our result above, we restrict our attention to $\Omega=\Omega(k)=\{\p\in\R^K: \sum_{l=1}^Kp_l\leq 1, p_l\leq 1/k, \forall l\}$. When $\lambda=0$, the LDR loss becomes the top-$k$ SVM loss~\cite{DBLP:journals/pami/LapinHS18}, i.e.,  
\begin{align*}
\psi^k_{\lambda=0}(\x, y) & = \max_{\p\in\Omega(k)}\sum_l p_l (\delta_{l,y}(f(\x)) + c_{l,y}) \\
& = \frac{1}{k}\sum_{i=1}^k\max(0, \f - f_y + \c_y)_{[i]},
\end{align*}
where $\c_y=(c_{1,y},\ldots, c_{K,y})$. 
This top-$k$ SVM loss is not consistent acoording to~\cite{DBLP:journals/pami/LapinHS18}. But, we can make  it consistent by adding a strongly convex regularizer $R(\p)$ satisfying the conditions in Theorem~\ref{thm:2}. 
This addresses the open problem raised in~\cite{DBLP:journals/pami/LapinHS18} regarding the consistency of smoothed top-$k$ SVM loss for multi-class classification. 

\vspace*{-0.05in}\subsection{Robustness}
\vspace*{-0.02in}
\begin{definition}
A loss function $\psi$ is symmetric if:~ $\sum_{j=1}^K\psi\big(\x, j\big)$ is a constant for any $f,\x$.  
\end{definition}
 \vspace*{-0.1in} A loss with the symmetric property is noise-tolerant against uniform noise and class dependent noise under certain conditions~\cite{ghosh2017robust}. For example, under uniform noise if the probability of noise label $\hat y$ not equal to true label $y$ is not significantly large,  i.e., $\Pr(\hat y\neq y|y)< 1 - \frac{1}{K}$, then the optimal predictor for optimizing $\E_{\x,\hat y}[\psi(\x, \hat y)]$ is the same as optimizing the true risk $\E_{\x, y}[\psi(\x, y)]$. 
 
Next, we present a negative result showing that a non-negative symmetric loss cannot be All-$k$ consistent. 
\begin{thm}\label{thm:4}A non-negative symmetric loss function cannot be top-$1,2$ consistent simultaneously when  $K\ge3$. 
\end{thm}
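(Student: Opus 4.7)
The plan is proof by contradiction: assume $\psi$ is a non-negative symmetric loss on $K\ge 3$ classes that is simultaneously top-$1$ and top-$2$ consistent, hence (by the paper's definitions) top-$1$ and top-$2$ calibrated. I will exhibit a pair of conditional distributions whose calibration conditions cannot be jointly satisfied because of the constant-sum constraint $\sum_{j=1}^K\psi(\f,j)=C$.

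First, I would fix $q_1>q_2>q_3>0$ summing to $1$ and take
\[\q=(q_1,q_2,q_3,0,\ldots,0),\qquad \widetilde{\q}=(q_1,q_3,q_2,0,\ldots,0)\in\Delta_K,\]
which is possible since $K\ge 3$. Top-$1$ plus top-$2$ calibration at $\q$ force any minimizing sequence of $L_\psi(\cdot,\q)$ to eventually satisfy $f_1>f_2>f_j$ for every $j\ge 3$; at $\widetilde{\q}$ they analogously force $f_1>f_3>f_j$ for $j\ne 1,3$, so $f_2>f_3$ for the first sequences and $f_3>f_2$ for the second. Because $\sum_j\psi(\f,j)=C$, the risks obey the key identity
\[L_\psi(\f,\q)-L_\psi(\f,\widetilde{\q})=(q_2-q_3)\big(\psi(\f,2)-\psi(\f,3)\big).\]
Each optimizing sequence for one distribution therefore violates top-$2$ preserving at the other, so by the strict-infimum clause of top-$2$ calibration its risk at the other distribution lies strictly above that infimum; summing the two strict inequalities and rewriting via the identity pins down a strict two-sided gap on $\psi(\cdot,2)-\psi(\cdot,3)$ between the two minimizing sets.

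To close the argument I introduce the tied midpoint $\q^{\star}=(q_1,(q_2+q_3)/2,(q_2+q_3)/2,0,\ldots,0)$, for which $L_\psi(\f,\q^{\star})=\tfrac12[L_\psi(\f,\q)+L_\psi(\f,\widetilde{\q})]$, so $\inf L_\psi(\cdot,\q^{\star})$ equals the average of the two twin infima. Let $\mathcal{M}$ be the closure of the $\psi(\cdot,1)$-minimizing set and set $a_j=\inf_\f\psi(\f,j)$, $\beta_j=\inf_{\f\in\mathcal{M}}\psi(\f,j)$. Top-$1$ calibration at $\e_1$ and $\e_j$ shows $\mathcal{M}$ is disjoint from the $\psi(\cdot,j)$-minimizing set, giving $\beta_j>a_j$ for $j\ne 1$. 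A dichotomy then finishes the proof. Either $\psi(\cdot,2)$ and $\psi(\cdot,3)$ are constant on $\mathcal{M}$ (equivalently $\beta_2+\beta_3=C-a_1$), in which case every $\f\in\mathcal{M}$ is an infimum-approacher for both $\q$ and $\widetilde{\q}$, forcing the same sequence to simultaneously satisfy $f_2>f_3$ and $f_3>f_2$, a direct contradiction; or $\psi(\cdot,2),\psi(\cdot,3)$ vary nontrivially on $\mathcal{M}$, in which case a continuous path inside $\mathcal{M}$ on which $f_2-f_3$ changes sign, combined with the strict gap from the previous step applied to a small perturbation of $\q^{\star}$, exhibits a minimizing sequence violating top-$2$ calibration for whichever of $\q,\widetilde{\q}$ is selected.

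The most delicate step is the non-constant subcase: the symmetric property supplies only a scalar sum identity, so ruling out both $(f_2,f_3)$ orderings simultaneously relies on carefully pairing top-$2$ calibration on both sides of the tie $q_2=q_3$ with continuity of the minimizer correspondence plus the non-negativity of $\psi$ to keep the relevant infimum confined to $\mathcal{M}$. Because the infimum need not be attained for a general non-negative symmetric $\psi$, the entire argument must be phrased throughout in terms of minimizing sequences rather than exact minimizers.
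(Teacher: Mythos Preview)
Your proposal has genuine gaps that prevent it from going through.

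The most serious one is the unproved claim that infimum-approachers of $L_\psi(\cdot,\q)$ must lie in (or approach) $\mathcal{M}$, the $\psi(\cdot,1)$-minimizing set. In the constant subcase you assert that ``every $\f\in\mathcal{M}$ is an infimum-approacher for both $\q$ and $\widetilde{\q}$,'' but nothing establishes that the global infimum of $L_\psi(\cdot,\q)=q_1\psi(\cdot,1)+q_2\psi(\cdot,2)+q_3\psi(\cdot,3)$ is achieved on $\mathcal{M}$. For $K=3$, symmetry lets you rewrite $L_\psi(\f,\q)=q_1C-(q_1-q_2)\psi(\f,2)-(q_1-q_3)\psi(\f,3)$; minimizing this means \emph{maximizing} a weighted sum of $\psi(\f,2),\psi(\f,3)$ with unequal positive weights, which is not the same as maximizing $\psi(\f,2)+\psi(\f,3)=C-\psi(\f,1)$ and hence not the same as minimizing $\psi(\f,1)$. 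Concretely, take $q_1=0.5,\,q_2=0.3,\,q_3=0.2$ and a loss with $\psi(\f,\cdot)=(0,0.5,0.5)$ on $\mathcal{M}$ but $(0.1,0,0.9)$ at some $\f\notin\mathcal{M}$: the latter gives strictly smaller $L_\psi(\cdot,\q)$. The claimed equivalence ``$\psi(\cdot,2),\psi(\cdot,3)$ constant on $\mathcal{M}$ $\Leftrightarrow$ $\beta_2+\beta_3=C-a_1$'' is also false: for $K>3$ the sum constraint involves all $K$ coordinates, and even for $K=3$ the equality $\beta_2+\beta_3=C-a_1$ only says the two infima are simultaneously attained at \emph{some} point of $\mathcal{M}$, not that both functions are constant there.

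The non-constant subcase is even more problematic: you invoke ``a continuous path inside $\mathcal{M}$ on which $f_2-f_3$ changes sign'' and ``continuity of the minimizer correspondence,'' but $\mathcal{M}$ need not be connected (or nonempty, if the infimum is not attained), and no regularity of $\psi$ is assumed anywhere. Most tellingly, non-negativity---which is essential, since the paper's own Theorem~\ref{thm:3} exhibits a symmetric loss that \emph{is} All-$k$ consistent once negativity is allowed---never appears in a load-bearing step of your argument; you mention it only to ``keep the relevant infimum confined to $\mathcal{M}$,'' which is exactly the step that fails. By contrast, the paper's route first proves (Lemma~\ref{lem:2}) that top-$1,2$ consistency forces $\psi(\f^*,\pi_1^\q)<\psi(\f^*,\pi_2^\q)<\psi(\f^*,\pi_3^\q)$ at some optimum, and then uses non-negativity together with symmetry to note that the vector $(\psi(\f,1),\ldots,\psi(\f,K))$ lives in a scaled simplex, over which the linear program $\min_{\psi_f}\psi_f^\top\q$ has a vertex solution with a single nonzero entry---contradicting the three strictly distinct values required by the lemma. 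That simplex/LP observation is where non-negativity does its work; your argument has no analogue of it.
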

\vspace*{-0.1in}{\bf Remark: } The above theorem indicates that it is impossible to design a non-negative loss that is not only symmetric but also All-$k$ consistent. Exemplar non-negative symmetric losses include MAE (or SCE or unhinged loss) and NCE, which are top-1 consistent but not All-$k$ consistent.   

Next, we show a way to break the impossibility by using a symmetric loss that is not necessarily non-negative, e.g.,  the LDR-KL loss with $\lambda=+\infty$. 
\begin{thm}\label{thm:3}
When $\lambda=\infty$, the LDR-KL loss enjoys symmetric property,  and is All-$k$ consistent when $\mathcal C=\{\f\in\R^K, \|\f\|_2\leq B\}$.
\end{thm}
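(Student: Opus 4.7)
}

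The plan is two-part: first verify the symmetric property by direct calculation, and then establish All-$k$ consistency by identifying the pointwise minimizer over the $\ell_2$-ball and appealing to Lemma \ref{lem:rank-cali} (adapted to the constraint set $\mathcal{C}$).

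For the symmetric property, I would expand
\[
\sum_{j=1}^K \psi^{\text{KL}}_\infty(\x,j) = \frac{1}{K}\sum_{j=1}^K\sum_{k=1}^K \bigl(f_k(\x)-f_j(\x) + c\,\mathbb{I}(k\neq j)\bigr).
\]
The first two double sums collapse to $\sum_k f_k(\x)$ and $\sum_j f_j(\x)$ respectively and cancel, while the last is the constant $c(K-1)$. Thus $\sum_j \psi^{\text{KL}}_\infty(\x,j)\equiv c(K-1)$, giving the symmetric property.

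For All-$k$ consistency, I would work out the pointwise conditional risk. Writing $\f=f(\x)$ and assuming $c_y=c$ for all $y$, the same cancellation yields
\[
L_{\psi^{\text{KL}}_\infty}(\f,\q) \;=\; \frac{1}{K}\sum_{k=1}^K f_k \;-\; \sum_{k=1}^K q_k f_k \;+\; \text{const} \;=\; -\sum_{k=1}^K \bigl(q_k-\tfrac{1}{K}\bigr) f_k + \text{const}.
\]
This is a linear function of $\f$, which is exactly why minimization over the unconstrained $\R^K$ is ill-posed, and why the bounded domain $\mathcal{C}=\{\f:\|\f\|_2\le B\}$ is needed. Over this ball, Cauchy--Schwarz gives the unique minimizer
\[
\f^* \;=\; B\cdot\frac{\q-\tfrac{1}{K}\1}{\|\q-\tfrac{1}{K}\1\|_2}
\]
whenever $\q\neq \tfrac{1}{K}\1$; in the degenerate uniform case the minimizer set is the whole sphere, but rank-preservation is then vacuously satisfied since there is no pair $q_i<q_j$ to respect. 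In the generic case, $f_i^*<f_j^*$ whenever $q_i<q_j$, so $\f^*$ is rank preserving with respect to $\q$.

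Finally, I would invoke (a version of) Lemma \ref{lem:rank-cali} restricted to predictions in $\mathcal{C}$: rank preservation of the pointwise minimizer implies top-$k$ calibration for every $k\in[K]$, and top-$k$ calibration then implies top-$k$ consistency via the argument of \cite{yang2020consistency}. Combining across $k$ gives All-$k$ consistency. The only subtlety I foresee is the restriction to $\mathcal{C}$: one must check that the calibration-to-consistency bridge carries over when the hypothesis class is required to take values in the ball, but since $\psi^{\text{KL}}_\infty$ is continuous and bounded on $\mathcal{C}$ and the minimizer is attained there, the standard argument adapts without issue. I expect this transfer from calibration to consistency under the norm constraint to be the only step requiring any care; the computation of $\f^*$ and the symmetric identity are straightforward.
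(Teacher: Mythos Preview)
Your proposal is correct and follows essentially the same route as the paper: a direct double-sum cancellation for the symmetric property, and then rewriting the conditional risk as a linear functional $-\sum_k (q_k-\tfrac{1}{K})f_k+\text{const}$, minimizing over the $\ell_2$-ball to obtain $\f^*\propto \q-\tfrac{1}{K}\1$, and invoking Lemma~\ref{lem:rank-cali} to get All-$k$ calibration/consistency. Your treatment is in fact slightly more careful than the paper's, since you explicitly address the degenerate case $\q=\tfrac{1}{K}\1$ and flag the calibration-to-consistency transfer under the constraint $\mathcal{C}$, neither of which the paper comments on.
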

\vspace*{-0.1in}{\bf Remark: } 
Theorem~\ref{thm:4} is proved for general case, and Theorem~\ref{thm:3} only holds with the extra constrain.


\vspace*{-0.1in}
\subsection{Adaptive LDR-KL (ALDR-KL) Loss}\vspace*{-0.05in}
Although LDR-KL with $\lambda=\infty$ is both robust and consistent, it may suffer from the slow convergence issue similar to the MAE loss~\cite{zhang2018generalized}.  We design an Adaptive LDR-KL (ALDR-KL) loss for the data with label noises, which can automatically adjust DW regularization parameter $\lambda$ for individual data. The motivation is: i) for those noisy data, the model is less confident and we want to make the $\lambda$ to be larger, so that loss function is more robust; ii) for those clean data, the model is more confident and we want to make the $\lambda$ to be smaller to enjoy the large margin property of the CS loss. 

To this end, we propose the following ALDR-KL loss: 
\begin{align}\label{eqn:aldr-kl}
\psi_{\alpha, \lambda_0}^{\text{KL}}(\x, y) =& \max_{\lambda\in\R_+}\max_{\p\in\Delta_K}\sum_{k=1}^K p_k (\delta_{k,y}(f(\x)) + c_{k, y})\nonumber\\
&- \lambda \text{KL}(\p, \frac{1}{K}) - \frac{\alpha}{2} (\lambda-\lambda_0)^2.
\end{align}
where $\alpha>0$ is a hyper-parameter, and $\lambda_0$  is a relatively large value.  
To intuitively understand this loss, we consider when the given label is clean and noisy. If the given label is clean, then it is better to push the largest $f_k(\x), k\neq y$ to be small, hence a spiked $\p$ would be better, as a result $\text{KL}(\p, 1/K)$ would be large;  so by maximizing $\lambda$ it will push $\lambda$ to be smaller. On the other hand, if the given label is noisy, then it is better to use uniform $\p$ (otherwise it may wrongly penalize more on predictions for the ``good" class labels), as a result $\text{KL}(\p, 1/K)$ would be small, then by maximizing over $\lambda$  it will push $\lambda$ to be close to the prior $\lambda_0$. 

To elaborate this intuition, we conduct an experiment on a simple synthetic data. We generate data for three classes from  four normal distributions with means at (0.8,0.8), (0.8,-0.8), (-0.8,0.8), (-0.8,-0.8) and standard deviations equal to (0.3, 0.3) as shown in Figure~\ref{fig:syn} (left). To demonstrate the adpative $\lambda$ mechanism of ALDR-KL, we first pretrain the model by optimizing the CE loss by 1000 epochs with momentum optimizer and learning rate as 0.01. Then, we add an extra hard but clean example from the $\circ$ class 
(the blue point in the dashed circle),  and a noisy data 
from the $\circ$ class but mislabeled as the $\diamond$ class (the red point in the dashed circle), and then finetune the model with extra 100 epochs by optimizing our ALDR-KL loss with Algorithm~\ref{alg:1} and optimizing CE loss with momentum SGD.  
We set the $\lambda_0=10,\alpha=0.05$ for ALDR-KL loss. The learned model of using ALDR-KL loss (right) is more robust than that of using CE loss (middle). We also show the corresponding learned $\lambda_T$ values at the last iteration for the two added examples, which show that the noisy data has a large $\lambda=8.4$ and the clean data has a small $\lambda=0.001$.    We also plot the averaged $\lambda_t$ and the KL-divergence values $\text{KL}(\p_t, 1/K)$ during the training process in Figure~\ref{fig:lamswithkls} of Appendix, which clearly shows the negative relationship between them. {Additionally, we provide more synthetic experiments for LDR-KL loss by training from scratch in Appendix~\ref{sec:more-synthetic}, where the observations are consistent with the conclusion here.} 

The theoretical properties of ALDR-KL are stated below. 
\begin{thm}\label{thm:5}The ALDR-KL loss is All-$k$ consistent when $ c_y=0,\forall y, \lambda_0\in(0,\infty), \alpha>\frac{\log K}{\lambda_0}$ or $\lambda_0=\infty, \mathcal C=\{\f\in\R^K, \|\f\|_2\leq B\}$,  and is Symmetric when  $\lambda_0=\infty$. 
\end{thm}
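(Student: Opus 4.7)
My overall plan is to reduce the All-$k$ consistency claim to Lemma~\ref{lem:rank-cali} by proving the conditional-risk minimizer is rank preserving, then invoke the same calibration-to-consistency passage used in the proof of Theorem~\ref{thm:1}. The symmetric property for $\lambda_0=\infty$ will fall out as a one-line computation. The new technical ingredient versus Theorem~\ref{thm:1} is the outer maximization over the per-instance temperature $\lambda$, which I would handle via Danskin's envelope theorem.

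\textbf{The finite-$\lambda_0$ case.} With $c_y=0$, I would first rewrite the loss as the joint maximum
\[
\psi_{\alpha,\lambda_0}^{\text{KL}}(\x,y) = \max_{\lambda\geq 0,\;\p\in\Delta_K}\Bigl\{\sum_{k=1}^K p_k(f_k-f_y) - \lambda\,\text{KL}(\p,1/K) - \tfrac{\alpha}{2}(\lambda-\lambda_0)^2\Bigr\},
\]
and note that strong convexity of $\text{KL}(\cdot,1/K)$ in $\p$ together with strict concavity in $\lambda$ make the maximizer $(\p^{y,*},\lambda^{y,*})$ unique. Using the identity $\log(\tfrac{1}{K}\sum_k e^{z_k/\lambda}) - \tfrac{1}{\lambda}\sum_k p^{y,*}_k z_k = -\text{KL}(\p^{y,*},1/K)$ with $z_k=f_k-f_y$, the first-order condition in $\lambda$ collapses to $\lambda^{y,*} = \lambda_0 - \text{KL}(\p^{y,*},1/K)/\alpha$. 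Since $\text{KL}(\p,1/K)\leq \log K$ on $\Delta_K$, the hypothesis $\alpha > \log K/\lambda_0$ forces $\lambda^{y,*}>0$ strictly---this is precisely what the threshold buys. With positive temperature, the inner maximizer takes the softmax form $p^{y,*}_k = e^{f_k/\lambda^{y,*}}/\sum_j e^{f_j/\lambda^{y,*}}$, and Danskin gives $\partial\psi_{\alpha,\lambda_0}^{\text{KL}}/\partial f_i = p^{y,*}_i - \mathbb{I}(i=y)$. Stationarity of $L(\f,\q)=\sum_y q_y\psi_{\alpha,\lambda_0}^{\text{KL}}(\x,y)$ at $\f^*$ therefore reduces to $q_i = \sum_y q_y p^{y,*}_i$ for every $i$. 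To prove rank preservation I would argue by contradiction: suppose $q_i<q_j$ but $f^*_i\geq f^*_j$; for each fixed $y$ the positive-temperature softmax is order-preserving, so $p^{y,*}_i\geq p^{y,*}_j$; averaging against $q_y$ yields $q_i\geq q_j$, a contradiction. Hence $\f^*$ is rank preserving and Lemma~\ref{lem:rank-cali} delivers All-$k$ calibration, which upgrades to All-$k$ consistency by the same passage used in Theorem~\ref{thm:1}.

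\textbf{The $\lambda_0=\infty$ case and symmetry.} As $\lambda_0\to\infty$ the outer penalty drives $\lambda^{y,*}\to\infty$ and the loss collapses to $\psi_\infty^{\text{KL}}(\x,y)=\tfrac{1}{K}\sum_k f_k - f_y$. Symmetry is immediate since $\sum_y\psi_\infty^{\text{KL}}(\x,y)=\sum_k f_k - \sum_y f_y = 0$, a constant. For consistency under $\|\f\|_2\leq B$ the conditional risk becomes the linear functional $L(\f,\q) = \sum_k(1/K - q_k)f_k$, whose constrained minimizer is $\f^* = B(\q - \tfrac{1}{K}\mathbf{1})/\|\q - \tfrac{1}{K}\mathbf{1}\|_2$, plainly rank preserving with respect to $\q$; Lemma~\ref{lem:rank-cali} again gives calibration and hence consistency.

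\textbf{Main obstacle.} The delicate part will be rigorously legitimizing the envelope step and proving that $\lambda^{y,*}$ lies strictly in the interior of $\R_+$; the threshold $\alpha > \log K/\lambda_0$ is exactly what supplies strict interiority (via the KL upper bound $\log K$) together with, by strict concavity in $\lambda$, the uniqueness needed to differentiate through the maximum. Once those technicalities are in place, the rank-preserving comparison is just order-monotonicity of the softmax, paralleling Theorem~\ref{thm:1}. The $\|\f\|_2\leq B$ constraint in the $\lambda_0=\infty$ branch is essential because the unconstrained linear conditional risk is unbounded below.
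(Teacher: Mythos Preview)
Your approach is essentially the paper's: both use the envelope theorem to differentiate through the inner maximization, establish rank preservation of the conditional-risk minimizer, invoke Lemma~\ref{lem:rank-cali}, and for $\lambda_0=\infty$ reduce to the linear loss handled in Theorem~\ref{thm:3}. There is one structural simplification the paper exploits that you miss: when $c_y\equiv 0$, the identity $\lambda\log\tfrac1K\sum_k e^{(f_k-f_y)/\lambda}=-f_y+\lambda\log\tfrac1K\sum_k e^{f_k/\lambda}$ shows that the inner objective in $\lambda$ is \emph{independent of $y$}. Hence there is a single $\lambda_*$ (not a family $\lambda^{y,*}$), $F(\f)=-\q^\top\f+\max_{\lambda\ge0}\{\lambda\log\tfrac1K\sum_k e^{f_k/\lambda}-\tfrac{\alpha}{2}(\lambda-\lambda_0)^2\}$, and once Danskin pins $\lambda_*$, the paper simply says ``now follow the proof of Theorem~\ref{thm:1} with this $\lambda_*$.'' Your softmax-monotonicity contradiction argument is a direct unpacking of that reduction and is fine; it just re-derives what Theorem~\ref{thm:1} already gives. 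One technical caveat: your uniqueness justification (``strict concavity in $\lambda$'') is not quite right, since after eliminating $\p$ the reduced objective $g(\lambda)=\lambda\log\tfrac1K\sum_k e^{z_k/\lambda}-\tfrac{\alpha}{2}(\lambda-\lambda_0)^2$ is a \emph{convex} function of $\lambda$ (pointwise max of affine functions) minus a strongly convex one, so joint concavity in $(\p,\lambda)$ does not hold. The paper is equally informal on this point, so it is not a gap relative to the target, but you should drop the concavity claim and argue interiority/differentiability at the minimizer directly from $\alpha>\log K/\lambda_0$ and the envelope formula $\lambda_*=\lambda_0-\text{KL}(\p_*,1/K)/\alpha$.
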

\vspace*{-0.05in}\begin{algorithm}[t]
\caption{Stochastic Optimization for ALDR-KL loss}\label{alg:1}
\begin{algorithmic}[1]
\REQUIRE $\eta,\alpha, \beta, \lambda_0$
\STATE Random initialize model parameters $\w_0$; initialize $\lambda^i_0$ as $\lambda_0$ for each data sample, $i\in[n]$; initialize $\mathbf m_0=0$.
\FOR{$t=1,...T$}
\STATE Sample mini-batch of data indices $\B_t$
\FOR{each sample $i\in\B_t$}
\STATE Compute $\p^i_{t-1}$ by E.q.~\ref{eqn:estimate-p} with $\lambda_{t-1}^i$, $f(\x_i, \w_{t-1})$.

\STATE Update $\lambda_{t}^i=[\lambda_0 - \frac{1}{\alpha}\text{KL}(\p^i_{t-1}, \frac{1}{K})]_+$


\STATE Compute $G(\lambda^i_{t},\w_{t-1},\x_i)$ by E.q.~\ref{eqn:stochastic-w}

\ENDFOR

\STATE Compute mini-batch gradient estimation by:\\
$G(\w_{t-1})=\frac{1}{|\B_t|}\sum_{i\in\B_t}G(\lambda^i_{t},\w_{t-1},\x_i)$
\STATE Compute $\mathbf m_{t} = \beta\mathbf m_{t-1} + (1-\beta)G(\w_{t-1})$
\STATE Update $\w_{t}=\w_{t-1} - \eta \mathbf m_{t}$
\ENDFOR
\end{algorithmic}
\end{algorithm}
\begin{figure}
    \centering
    \hspace*{-0.1in}\includegraphics[scale=0.22]{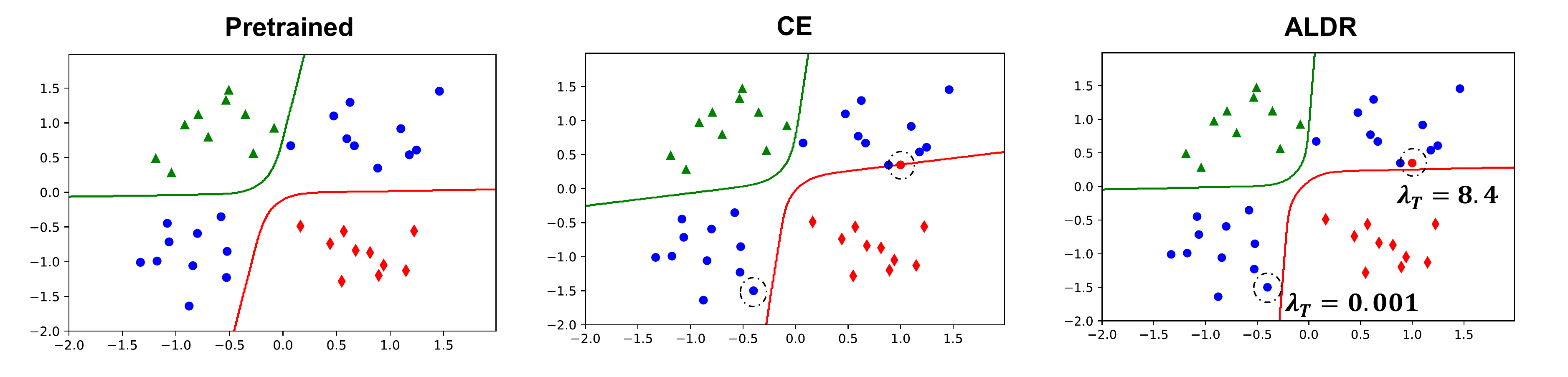}
     \vspace*{-0.2in}\caption{Left: pretrained model on a synthetic data; Middle: a new model learned by optimizing the CE loss after adding two examples in dashed circle; Right:  a new model learned by optimizing the ALDR-KL loss after adding two examples in dashed circle. Shape denotes the true label, and color denotes the given label.   
     }\label{fig:syn}
\end{figure}

{\bf Optimization for ALDR-KL loss:} Using ALDR-KL loss and with a finite set of examples $\{(\x_i, y_i)\}_{i=1}^n$, the empirical risk minimization problem becomes: 
\begin{align}
\min_{\w}\frac{1}{n}\sum\nolimits_{i=1}^n \psi_{\alpha, \lambda_0}^{\text{KL}}(\x_i, y_i),
\end{align}
where $\w$ denotes the parameter of the prediction function $f(\cdot; \w)$.  Below, we will present how to solve this problem without causing significant more burden  than optimizing traditional CE loss.  Recall that the formulation for LDR-KL in (\ref{eqn:ldr-kl}), it is easy to show that:
\begin{align}
&\psi_{\alpha, \lambda_0}^{\text{KL}}(\x, y) =\max_{\lambda\in\R_+}- \frac{\alpha}{2} (\lambda-\lambda_0)^2\nonumber\\ +&\lambda \log\left(\sum_{k=1}^K \exp\left(\frac{f_k(\x) + c_{k,y} - f_y(\x)}{\lambda}\right)\right)\nonumber,
\end{align}
where we have used the  closed-form solution for $\p$ given $\lambda$: \begin{align}\label{eqn:estimate-p}
p^*_k(\lambda)\propto \exp\left(\frac{f_k(\x; \w) + c_{k,y} - f_y(\x; \w)}{\lambda}\right), \forall k.
\end{align}
To compute the $\lambda$,  first by optimality condition ignoring the non-negative constraint we have $-\text{KL}(\p^*, \frac{1}{K})-\alpha(\lambda-\lambda_0)=0.$ This motivate us to update $\lambda$ alternatively with $\p$. Given $\lambda^i_{t-1}$ for an individual data $\x_i$ at the $(t-1)$-th iteration, we compute $\p^i_{t-1}$ according to~(\ref{eqn:estimate-p}) then we  update $\lambda^i_{t}$ according to the above equation with $\p^*$ replaced by $\p^i_{t-1}$ and a projection onto non-negative orthant $\R_+$: 
\begin{align}\label{eqn:update-lambda}\lambda^i_t = \left[\lambda_0 - \frac{1}{\alpha}\text{KL}(\p^i_{t-1}, \frac{1}{K})\right]_+.\end{align}
For updating the model parameter $\w$, at iteration $t$ suppose we have the current estimation for $\lambda_{t}$, we can compute the stochastic gradient estimation for $\w_{t-1}$ as: \begin{align}\label{eqn:stochastic-w}
    &G(\lambda^i_{t},\w_{t-1}, \x_i) =\\ &\nabla_{\w}\lambda^i_{t} \log \sum_{k=1}^K \exp \left( \frac{f_k(\x_i,\w_{t-1}) + c_{k,y} - f_y(\x_i,\w_{t-1})}{\lambda^i_{t}}\right)  \nonumber
\end{align}
Finally, we provide the algorithm using the momentum update for optimizing ALDR-KL loss in Alg.~\ref{alg:1}. It is notable that other updates (e.g., SGD, Adam) can be used for updating $\w_t$ as well. It is worth noting that we could also use gradient method to optimize $\lambda$ and $\p$, but it will involve tuning two extra learning rates. Empirically we verify that Alg.~\ref{alg:1} converges well for optimizing the ALDR-KL loss, which is presented in Appendix~\ref{sec:exP} for interested readers {and the convergence analysis is left as a future work.}


\section{Experiments}

 In this section, we present the datasets, experimental settings, experimental results, and ablation study regarding the adaptiveness in terms of $\lambda$ for ALDR-KL loss. For all the experiments unless specified otherwise, we manually add  label noises to the training and validation data, but keep the testing data clean. A similar setup was used in~\cite{zhang2018generalized}, which  simulates the noisy datasets in practice. We compare with all loss functions in Table~\ref{table:loss-formulations} except for NCE, which is replaced by NCE+RCE following~\cite{ma2020normalized} and NCE+AGCE, NCE+AUL following~\cite{zhou2021asymmetric}. Detailed expressions of these losses with hyper-parameters are shown in~Table~\ref{table:loss-formulations}. 
We do not compare the Generalized JS Loss (GJS) because it relies on extra unsupervised data augmentation and need more computational cost~\cite{englesson2021generalized}. For all experiments, we utilize the identical optimization procedures and  models, only with different losses.

 \vspace*{-0.1in}
\subsection{Benchmark Datasets with Synthetic Noise}\vspace*{-0.05in}
We conduct experiments on 7 benchmark datasets, namely ALOI, News20, Letter, Vowel~\cite{libsvm}, Kuzushiji-49, CIFAR-100 and Tiny-ImageNet~\cite{clanuwat2018deep,deng2009imagenet}. The number of classes vary from 11 to 1000. 
The statistics of the datasets are summarized in Table~\ref{table:data} in the Appendix. We manually add different noises to the datasets. Similar to previous works~\cite{zhang2018generalized,wang2019symmetric}, we consider two different noise settings, uniform (symmetric) noise and class dependent (asymmetric) noise. For uniform noise, we impose a uniform random noise with the probability $\xi\in \{0.3, 0.6, 0.9\}$ such that a data label is changed to any label in $\{1,...,K\}$ with a probability $\xi$. We would like to point out that the uniform noise rate $0.9$ should be tolerable according the theory of  a symmetric loss, which can tolerate uniform noise rate up to $(1-1/K)>0.9$, where $K$ is larger than 10 for the considered datasets.  For class dependent noise, with the probability in $\xi\in\{0.1, 0.3, 0.5\}$, a data label could be changed to another label by following pre-defined transition rules. For letter dataset: $B\leftrightarrow D$, $C\leftrightarrow G$, $E\leftrightarrow F$, $H\leftrightarrow N$, $I\leftrightarrow L$, $K\leftrightarrow X$, $M\leftrightarrow W$, $O\leftrightarrow Q$, $P\leftrightarrow R$, $U\leftrightarrow V$. For news20 dataset: \textit{comp.os.ms-windows.misc} $\leftrightarrow$ \textit{comp.windows.x}, \textit{comp.sys.ibm.pc.hardware} $\leftrightarrow$ \textit{comp.sys.mac.hardware}, \textit{rec.autos} $\leftrightarrow$ \textit{rec.motorcycles}, \textit{rec.sport.baseball} $\leftrightarrow$ \textit{rec.sport.hockey}, \textit{sci.crypt} $\leftrightarrow$ \textit{sci.electronics}, \textit{soc.religion.christian} $\leftrightarrow$ \textit{talk.politics.misc}. For vowel dataset: $i\leftrightarrow I$, $E\leftrightarrow A$, $a:\leftrightarrow Y$, $C:\leftrightarrow O$, $u:\leftrightarrow U$. For ALOI, Kuzushiji-49, CIFAR-100 and Tiny-ImageNet datasets, we simulate class-dependent noise by flipping each class into the next one circularly with probability $\xi$.

For all datasets except for Kuzushiji-49, CIFAR-100 and Tiny-ImageNet, we uniformly randomly split 10\% of the whole data as testing data, and the remaining as training set. For Kuzushiji-49 and CIFAR-100 dataset, we use its original train/test splitting. For Tiny-ImageNet, we use its original train/validation splitting and treat the validation data as the testing data in this work. For News20 dataset, because the original dimension is too large, we apply PCA  to reduce the dimension to 80\% variance level.

\vspace*{-0.1in}
\subsection{Experimental Settings}\vspace*{-0.05in}
Since ALOI, News20, Letter and Vowel datasets are tabular data, we use a 2-layer feed forward neural network, with a number of neurons in the hidden layer same as the number of features or the number of classes (whichever is smaller) as the backbone model for these four datasets. ResNet18 is utilized as the backbone model for Kuzushiji-49, CIFAR100 and Tiny-ImageNet image datasets~\cite{he2016deep}. We apply 5-fold-cross-validation to conduct the training and evaluation, and report the mean and standard deviation for the testing top-$k$ accuracy, where $k\in\{1,2,3,4,5\}$. We use early stopping according to the highest accuracy on the  validation data. For Kuzushiji-49 dataset, the top-$k$ accuracy is first computed in class level, and then averaged across the classes according to~\cite{clanuwat2018deep} because the classes are imbalanced. 

We fix the weight decay as 5e-3, batch size as 64, and total running epochs as 100 for all the datasets except Kuzushiji, CIFAR100 and Tiny-ImageNet (we run 30 epochs for them because the data sizes are large). We utilize the momentum optimizer with the initial learning rate tuned in \{1e-1, 1e-2, 1e-3\} for all experiments. 
We decrease the learning rate by ten-fold at the end of the 50th and 75th epoch for 100-total-epoch experiments, and at the end of 10th and 20th epoch for 30-total-epoch experiments. 
For CE, MSE and MAE loss, there is no extra parameters need to be tuned; for CS and WW loss, we tune the margin parameter at \{0.1, 1, 10\}; for RLL loss, we tune the $\alpha$ parameter at \{0.1, 1, 10\}; for GCE and TGCE loss, we tune the power parameter $q$ at \{0.05, 0.7, 0.95\} and set truncate parameter as 0.5 for TGCE; for SCE, as suggested in the original paper, we fix A=-4 and tune the simplified balance parameter between CE and RCE at \{0.05, 0.5, 0.95\}; for JS loss, we tune the balance parameter at \{0.1, 0.5, 0.9\}; for NCE+(RCE,AGCE,AUL) loss, we keep the default parameters for each individual loss, and tune the two weight parameters $\alpha/\beta$ in \{0.1/9.9, 5/5, 9.9/0.1\} following the original papers. 
 For LDR-KL and ALDR-KL loss, we tune the DW regularization parameter $\lambda$ or the $\lambda_0$ at $\{0.1, 1, 10\}$.  We normalize model logits $\f(\x;\w)$ by $\frac{\|\f(\x; \w)\|_1}{K}$ so that different $\lambda$ values can achieve the different intended effects for LDR-KL and ALDR-KL loss. The margin parameter is fixed as 0.1 for all experiments. For ALDR-KL, we set the $\alpha=\frac{2\log K}{\lambda_0}$, so that $\lambda_{t}^i=\lambda_0 - \frac{1}{\alpha}\text{KL}(\p_{t-1}(\x_i)\|\frac{1}{K})\in[\lambda_0/2, \lambda_0]$, which maintains an appropriate adaptive adjustment level from $\lambda_0$.

\subsection{Leaderboard on seven benchmark data}
\vspace*{-0.05in}We compare 15 losses on 7 benchmark datasets over 6 noisy and 1 clean settings for 5 metrics with 3,675 numbers. However, due to limit of space it is difficult to include the results here. Instead, we focus on the overall results on all datasets in all settings by presenting a leaderboard in Table~\ref{tab:ranks}. For each data in each setting, we rank different losses from $1$ to $15$ according to their performance from high to low (the smaller the rank value meaning the better of the performance). 
We average the ranks across all the benchmark datasets in all settings for top-$k$ accuracy with different $k$ and we also provide an overall rank.

We present the full results and their analysis for different performance metrics in the Appendix~\ref{sec:exP}. From the leaderboard in Table~\ref{tab:ranks}, we observe that the ALDR-KL loss is the strongest and followed by LDR-KL loss. Besides, the CE loss and two variants (SCE, GCE) are more competitive than the other loss functions. The symmetric MAE loss is non-surprisingly the worst  which implies consistency is also an important factor for learning with noisy data. 

\begin{table}[t]
    \centering
        \caption{leaderboard for comparing 15 different loss functions on 7 datasets in 6 noise and 1 clean settings. The reported numbers are averaged ranks of performance. The smaller the better.  }
    \label{tab:ranks}
    \resizebox{.48\textwidth}{!}{
    \begin{tabular}{c|ccccc|c}
    \toprule
       Loss Function & top-1 & top-2& top-3 & top-4& top-5  &  overall \\
       \hline

    ALDR-KL & \textbf{2.163} & \textbf{2.449} & \textbf{2.184} & \textbf{2.224} & \textbf{2.633} & \textbf{2.331} \\
    LDR-KL & 2.429 & 2.571 & 2.816 & 2.939 & 2.959 & 2.743 \\
    CE    & 4.714 & 4.592 & 4.224 & 4.449 & 4.388 & 4.473 \\
    SCE   & 4.898 & 4.776 & 4.429 & 4.837 & 4.327 & 4.653 \\
    GCE   & 5.816 & 5.347 & 6.041 & 5.571 & 5.551 & 5.665 \\
    TGCE  & 6.204 & 6.224 & 6.265 & 6.224 & 6.306 & 6.245 \\
    WW    & 7.673 & 6.592 & 6.082 & 5.959 & 5.551 & 6.371 \\
    JS    & 7.816 & 7.837 & 7.857 & 7.98  & 8.245 & 7.947 \\
    CS    & 8.673 & 8.878 & 8.816 & 8.592 & 8.735 & 8.739 \\
    RLL   & 10.224 & 10.204 & 10.224 & 10.49 & 10.49 & 10.326 \\
    NCE+RCE  & 9.694 & 10.633 & 10.878 & 10.612 & 10.714 & 10.506 \\
    NCE+AUL  & 10.449 & 10.959 & 11.102 & 11.224 & 11.122 & 10.971 \\
    NCE+AGCE  & 11.714 & 11.837 & 11.837 & 11.51 & 11.714 & 11.722 \\
    MSE   & 12.796 & 12.653 & 12.776 & 12.959 & 12.898 & 12.816 \\
    MAE   & 14.735 & 14.449 & 14.469 & 14.429 & 14.367 & 14.49 \\

        \bottomrule
    \end{tabular}}
    \vspace*{-0.14in}
    \centering
        \caption{Top-1 Accuracy on  ILSVRC12 validation data with models trained on mini-webvision.}
    \label{tab:webvision}
    \resizebox{.48\textwidth}{!}{
    \begin{tabular}{c|cccccccc}
    \toprule
       Loss & NCE+RCE & SCE& GCE& JS&CE& AGCE &  LDR-KL &ALDR-KL \\
       \hline
       ACC &  60.24 & 62.44 & 62.76  &  65.00&  66.40  & 67.52  & 69.00 & \textbf{69.92}   \\
        \bottomrule
    \end{tabular}}
    \vspace*{0.05in}
\end{table}

\begin{figure}[t]
\vspace{-0.1 in}
    \centering
    \subfigure[$\lambda_0=1$, CD(0.1)]{\includegraphics[scale=0.2]{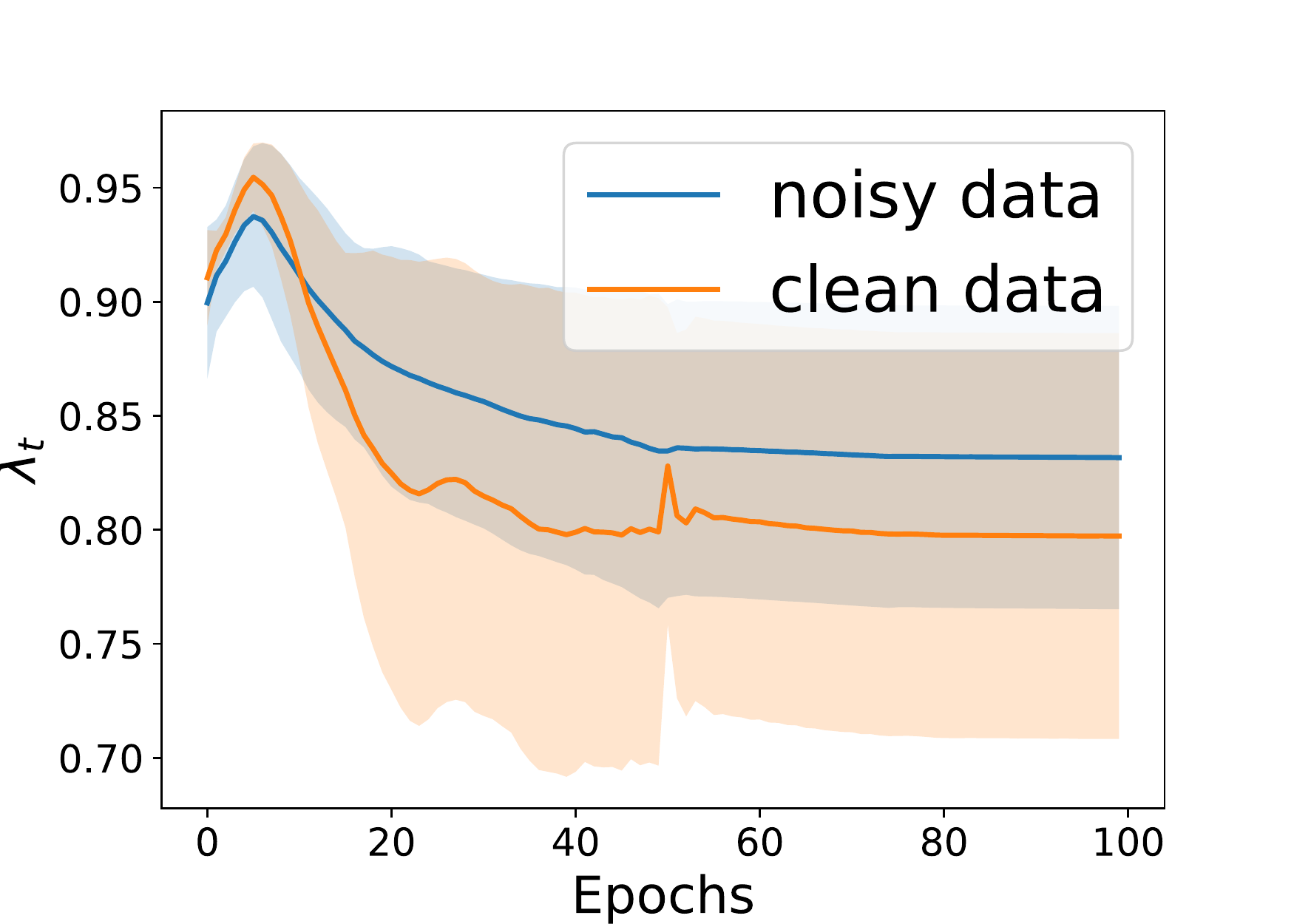}}
    \subfigure[$\lambda_0=10$, CD(0.1)]{\includegraphics[scale=0.2]{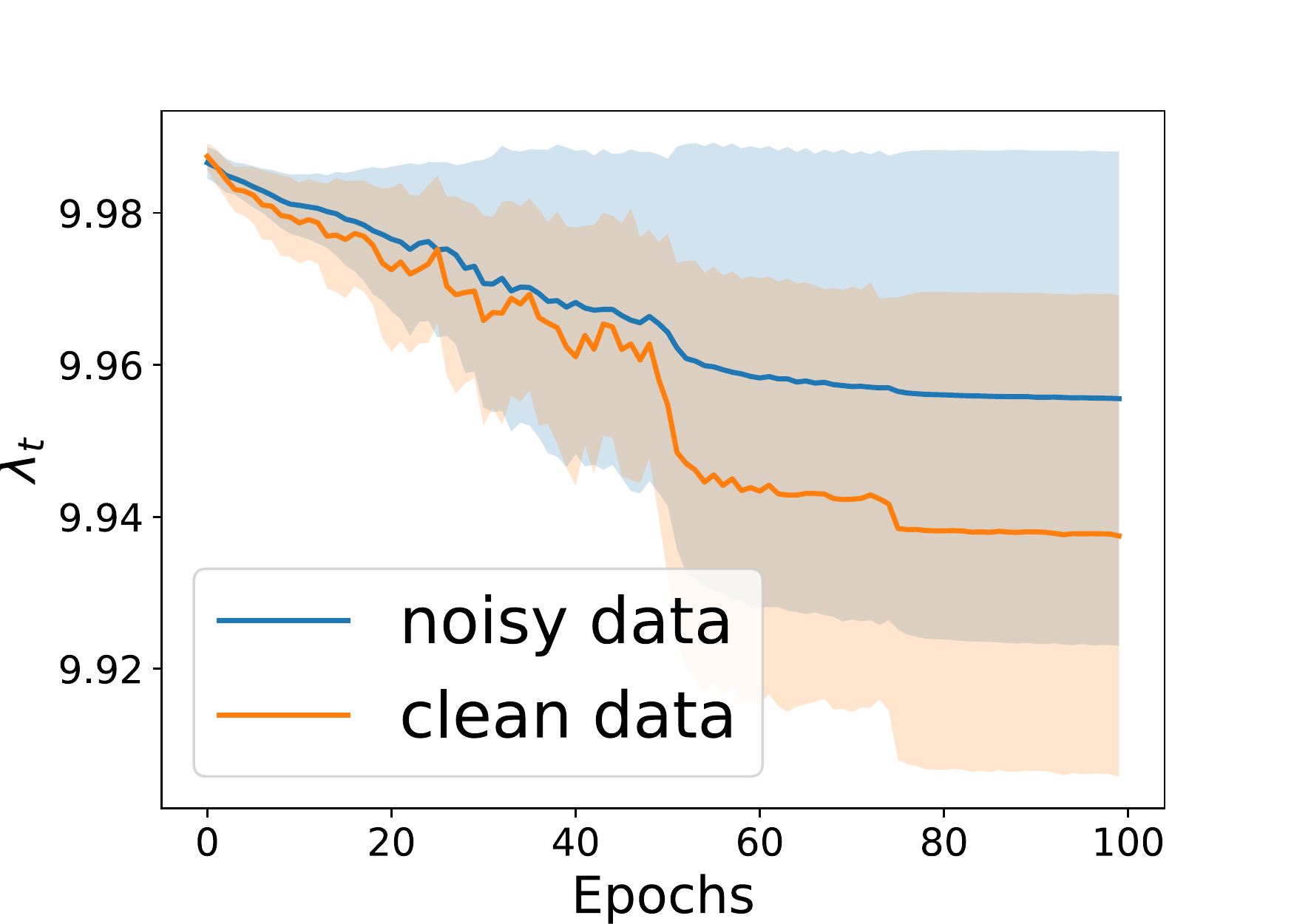}}
    \vspace*{-0.1in}
    \subfigure[$\lambda_0=1$, U(0.3)]{\includegraphics[scale=0.2]{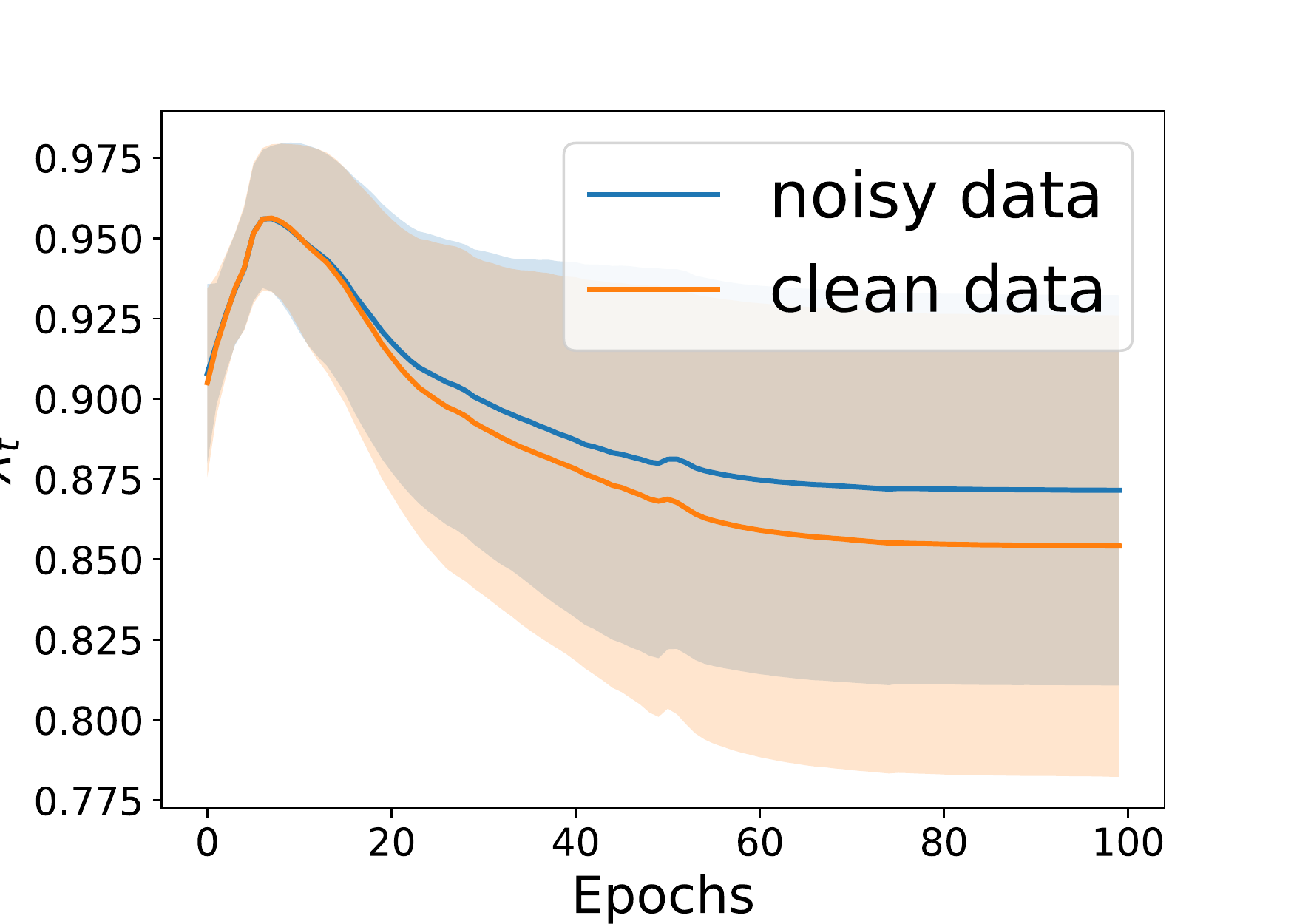}}
    \subfigure[$\lambda_0=10$, U(0.3)]{\includegraphics[scale=0.2]{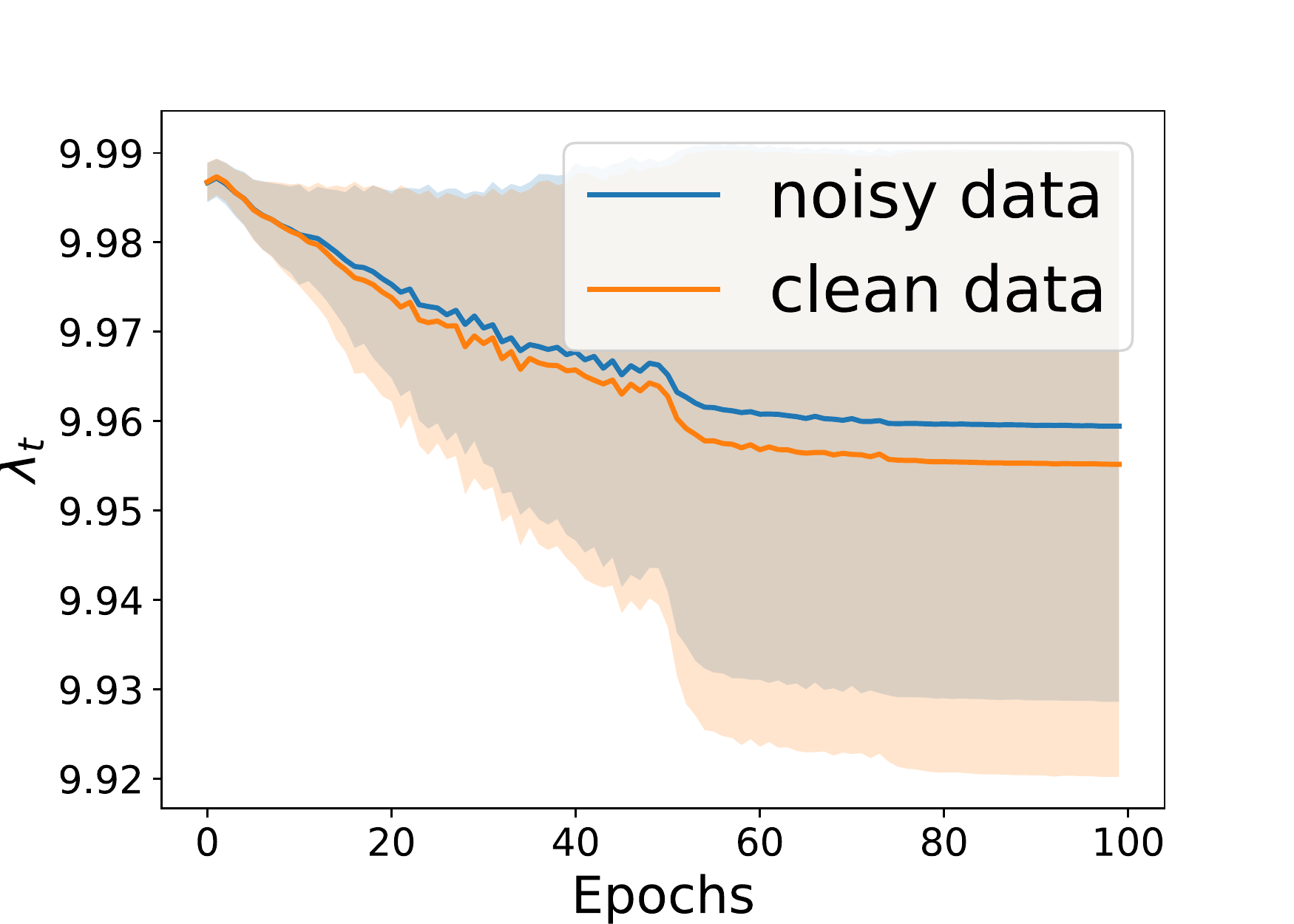}}
        \vspace{-0.05in}\caption{ Averaged $\lambda_t$ values for noisy samples and clean samples with error bar  for ALDR-KL loss on Vowel dataset, `U' is short for uniform noise, `CD' is short for class-dependent noise.} \label{fig:adaptive}
\end{figure}

\subsection{Real-world Noisy Data}\vspace*{-0.05in}
We evaluate the proposed methods on a real-world noisy dataset, namely webvision~\cite{DBLP:journals/corr/abs-1708-02862}. It contains more than 2.4 million of web images crawled from the Internet by using queries generated from the same 1,000 semantic concepts as the benchmark ILSVRC 2012 dataset. We mainly follow the settings on the previously work~\cite{zhou2021asymmetric,ma2020normalized,englesson2021generalized} by using a mini version, i.e., taking the first 50 classes from webvision as training data, and evaluating the results on ILSVRC12 validation data.  More details about the experimental settings are summarized in the Appendix~\ref{sec:webvision-setup}.  ResNet50 is utilized as the backbone model~\cite{he2016deep}. The results  summarized in Table~\ref{tab:webvision} show ALDR-KL performs the best followed by LDR-KL. We notice that~\cite{englesson2021generalized} reports a higher result for JS than that in Table~\ref{tab:webvision}, which is due to different setups, especially with more epochs, larger weight decay and tuning of learning rates.  Following the same experimental setup as~\cite{englesson2021generalized}, our ALDR-KL and LDR-KL achieve 70.80 and 70.64 for top-1 accuracy, respectively, while JS's is reported as 70.36. 


\vspace*{-0.05in}
\subsection{Adaptive $\lambda$ for ALDR-KL loss}\vspace*{-0.05in}
Lastly, we conduct experiments to illustrate the adaptive $\lambda_t^i$ values learned by using ALDR-KL loss on the Vowel dataset with uniform noise ($\xi=0.3$) and class dependent noise ($\xi=0.1$). We maintain the adaptive $\lambda_t^i$ values for noisy data samples and clean data samples separately at each epoch. Then we output the mean and standard deviation for the adaptive $\lambda_t^i$ values for noisy data and clean data. $\lambda_0\in\{1,10\}$ are adopted for ALDR-KL at this subsection. All the settings are identical with previous experimental setups except initial learning rate is fixed as $1e-2$ for this specific justification experiment. The results are presented in Figure~\ref{fig:adaptive}. From the figures, we observe that ALDR-KL applies larger DW regularization parameter $\lambda$ for the noisy data, which makes ALDR-KL loss closer to symmetric loss function, and therefore it is more robust to certain types of noises (e.g. uniform noise, simple non-uniform noise and class dependent noise)~\cite{ghosh2017robust}. Besides, the adaptive $\lambda_t^i$ values are generally decreasing for both noisy and clean data because the model is more certain about its predictions with more training steps.


\subsection{Running Time}
 We provide the running time analysis for the LDR-KL and ALDR-KL loss in Table~\ref{tab:running-time} in Appendix~\ref{sec:running-time}. We observe that the running time of optimizing ALDR-KL is slightly higher than that of optimizing the CE loss, it is still comparable with optimizing other robust losses such as TGCE, SCE, RLL, etc. 

\vspace*{-0.05in}
\section{Conclusions}
\vspace*{-0.05in}We proposed a novel family of Label Distributional Robust (LDR) losses. We studied consistency and robustness of proposed losses. We also proposed an adaptive LDR loss to adapt to the noisy degree of individual data. Extensive experiments demonstrate the effectiveness of our new losses, LDR-KL and ALDR-KL losses.  

\section{Acknowledgement}
{This work is partially supported by NSF Career Award 2246753, NSF Grant 2246757 and NSF Grant 2246756. The work of Yiming Ying is partially supported by NSF (DMS-2110836, IIS-2110546, and IIS-2103450).  }

\bibliography{reference}
\bibliographystyle{icml2023}

\appendix

%

%

\onecolumn

\section{Explicit expressions of all relevant losses}\label{sec:lss}
\begin{table*}[ht]
    \centering
 \caption{Comparison between different loss functions. Notice that $A$ from SCE loss is a negative value, which is suggested to be set as $-4$ in the original paper~\cite{wang2019symmetric}. For JS loss, $\mathbf m=\pi_1\e_y+(1-\pi_1)\p$. NA means not available or unknown. * means that the results are derived by us (proofs are included in the appendix~\ref{sec:discuss-loss}). 
}
    \label{table:loss-formulations}
    \resizebox{1.\textwidth}{!}{
    \begin{tabular}{l|l|l|l|l|l|l}
    \toprule
       Category&Loss & $p_y=\frac{\exp(f_y)}{\sum_k \exp(f_k)}$, $\p=(p_1, \ldots, p_K)$, $\e_y$ one-hot vector & Top-$1$ consistent & All-$k$ consistent & Symmetric&\makecell{instance\\ addaptivity}\\
       \hline
        &CE & $\psi_{\text{CE}}(\f, y) =-\log\left(p_y\right)$ &  Yes&  Yes& No&No\\
       Traditional &CS~\cite{10.5555/944790.944813} & $\psi_{\text{CS}}(\f, y, c)=\max(0, \max_{k\neq y}f_k- f_y + c)$ & No& No& No&No\\
        &WW~\cite{weston1998multi} & $\psi_{\text{WW}}(\f, y, c)=\sum_{k\neq y}\max(0, f_k - f_y + c)$ & No& No& No&No\\
        \hline 
        \multirow{2}*{Symmetric}&MAE (RCE, clipped)~\cite{ghosh2017robust} & $\psi_{\text{MAE}}(\f, y) = 2(1-p_y)$ & Yes& No & Yes&No\\
        &NCE~\cite{ma2020normalized}&$\psi_{\text{NCE}}(\f, y) =\log\left(p_y\right)/\sum_k\log\left(p_k\right)$&Yes&No&Yes &No\\
        &RLL~\cite{patel2021memorization} & $\psi_{\text{RL}}(\f, y, \alpha) = -\log(\alpha+p_y)+\frac{1}{K-1}\sum_{i\neq y}\log(\alpha+p_i)$ & No &  No& Yes&No\\
\hline
       Interpolation between &GCE~\cite{zhang2018generalized} & $\psi_{\text{GCE}}(\f, y, q) = (1-p_y^q)/q$ where ($q\in[0,1]$) & Yes& Yes$^*$ ($0\le q <1$)& Yes ($q=1$)&No\\
       CE and MAE &SCE~\cite{wang2019symmetric} & $\psi_{\text{SCE}}(\f, y, \alpha, A) = \alpha\psi_{\text{CE}}(\f, y) - (1-\alpha)\frac{A}{2}\psi_{\text{MAE}}(\f, y)$ & Yes& Yes$^*$~($0\le\alpha<1$)& Yes ($\alpha=0$)&No\\
       &JS~\cite{englesson2021generalized}&$\psi_{\text{JS}}(\f, y, \pi_1) = (\pi_1\text{KL}(\e_y, \mathbf m) + (1-\pi_1)\text{KL}(\p, \mathbf m))/Z$&NA &NA&Yes~($\pi_1=1$)&No\\
        \hline
         \multirow{2}*{Bounded}&MSE~\cite{ghosh2017robust} & $\psi_{\text{MSE}}(\f, y) = 1-2p_y+\|\p\|_2^2$ &Yes & Yes$^*$ &No&No\\
        &TGCE~\cite{zhang2018generalized} &  $\psi_{\text{TGCE}}(\f, y, q, k) = (1-\max(p_y, k)^q)/q$  where ($q\in[0,1]$)  &NA &NA & No&No\\
        \hline 
         \multirow{2}*{Asymmetric}& AGCE~\cite{zhou2021asymmetric}& $\psi_{\text{AGCE}}(\f, y, q) = ((a+1)^q-(a+p_y)^q)/q$, ($a>0, q>0$)&Yes&No&No&No\\
        & AUL~\cite{zhou2021asymmetric}& $\psi_{\text{AUL}}(\f, y, q) = ((a-p_y)^q-(a-1)^q)/q$ ($a>1, q>0$)&Yes&No&No&No\\
        \hline
         \multirow{2}*{LDR}&LDR-KL& $\psi^{\text{KL}}_\lambda(f, \x, y) =\lambda\log\left(\sum_{k=1}^K \exp\left(\frac{f_k + c_{y,k} - f_y}{\lambda}\right)\right)$&\multicolumn{2}{|c|}{Yes$^*$ 
 ($\lambda>0$)}&Yes$^*$~($\lambda=\infty$)&No\\
                &ALDR-KL& $\psi^{\text{KL}}_{\alpha,\lambda_0}(f, \x, y)=\max_{\lambda\geq 0}\psi^{\text{KL}}_\lambda(f, \x, y)  - \frac{\alpha}{2}(\lambda-\lambda_0)^2$& \multicolumn{2}{|c|}{Yes$^*$ ($\lambda_0>0, \alpha>\frac{\log K}{\lambda_0}$)}&Yes$^*$ ($\lambda_0=\infty$)&Yes\\
        \bottomrule
    \end{tabular}}
    \vspace*{-0.2in}
\end{table*}
    
\section{Proof for LDR-KL Formulation and Special cases.}\label{sec:ps}
\begin{proof}
Let $\q=\f-\f_y+\c_y$, we want to solve:
$$\max_{\p\in\Delta}\p^\top\q-\lambda\sum_i\p_i\log K\p_i.$$

Using Langrangian duality theory to handle the constraint $\sum_ip_i=1$, we have
$$\min_\eta\max_{\p\in\Delta}\p^\top\q-\lambda\sum_i\p_i\log K\p_i - \eta (\sum_ip_i-1).$$

By maximizing over $\p$, we have $p^*_i = \exp((q_i-\eta)/\lambda)$. With the Karush?Kuhn?Tucker (KKT) condition $\sum_ip_i^*=1$, we have $p^*_i = \frac{\exp(q_i/\lambda)}{\sum_i \exp(q_i/\lambda)}$. By plugging this back we obtain the formulation of LDR-KL loss, i.e, 
\begin{align*}
   \psi^{\text{KL}}_\lambda(\x, y) =\lambda \log\left(\frac{1}{K}\sum_{k=1}^K \exp\left(\frac{f_k + c_{k,y} - f_y}{\lambda}\right)\right).
\end{align*}

{\bf Extreme Case $\lambda=0$:}, with $\lambda=0$, the optimal $\p_*$ will be one-hot vector that is only 1 for the  $k$ such that $f_k(\x) - f_y(\x) + c_{k,y}$ is largest. Since $f_y(\x) - f_y(\x) + c_{y,y}=0$. Hence, we have $\sum_kp_k^*(f_k(\x) - f_y(\x) + c_{k,y})= \max(0, \max_{k\neq y}f_k(\x) - f_y(\x) + c_y)$.  

{\bf Extreme Case $\lambda=\infty$: }, with $\lambda=\infty$, the optimal $\p_*=1/K$. Hence, we have $\psi^{\text{KL}}_\lambda(\x, y)  = \sum_kp_k^*(f_k(\x) - f_y(\x) + c_{k,y}) - \lambda\sum_ip_i^*\log p_i^*K= \frac{1}{K}\sum_{k=1}^K(f_k(\x) - f_y(\x) + c_{k,y})$.  

\end{proof}

\section{Proof for Lemma~\ref{lem:rank-cali}}

\begin{proof}
Fix $\x$, with $\q=(\Pr(y=1|\x), \ldots, \Pr(y=K|\x))$, we will prove that if $\f$ is rank consistent with respect to $\q$, then $P_k(\f, \q)$, i.e., $\f$ is top-$k$ preserving with respect to $\q$ for any $k\in[K]$. Fix $k$,  let $r_k(\f)$ be any top-$k$ selector of $\f$. First we have $1 - \sum_{m\in r_k(\f)}\q_m\geq 1 - \sum_{m=1}^j\q_{[k]}$. If $f$ is rank consistent, then the equality holds, i.e., $ \sum_{m\in r_k(\f)}q_m =\sum_{m=1}^j\q_{[k]} $. This is because that if the equality does not hold, then there exists $i\in r_k(f)$ and $j\in[K]\backslash r_k(f)$ (i.e., $f_j\leq f_i$) such that $q_j > q_i$. This would not happen as $\f$ is rank consistent with $\q$ according to assumption. 

The following argument follows~\cite{yang2020consistency}. If $\neg P_k(\f, \q)$, then there exists $i\in[K]$ such that $q_i > q_{[k+1]}$ but $f_i\leq  f_{[k+1]}$, or $q_i < q_{[k]}$ but $f_i\geq f_{[k]}$. In the first case, there is an $r_k$ such that $i\not\in r_k(\f)$, because there are at least $k$ indices $j\in[K]$, $j\ne i$ such that $f_j > f_i$. In the second case, there is an $r_k$ such that $i\in r_k(\f)$, because $f_i$ is one of the top $k$ values of $\f$. In either case, there is an $r_k$ such that$ \sum_{m\in r_k(\f)}\q_m <\sum_{m=1}^j\q_{[k]} $. This contradicts to the conclusion derived before. Hence, we can conclude that if $f$ is rank consistent with $\q$, we have $P_k(\f,q)$ for any $k\in[K]$. As a result, we conclude that $\inf_{\f\in\R^K: \neg P_k(\f, \q)}L_\psi(\f, \q)> L_\psi(\f^*, \q)$ if $\f^*$ is rank consistent with repect to $\q$. 
\end{proof}

\section{Proof for Theorem~1}
\begin{proof}
For simplicity, we consider $\lambda=1$. The extension to $\lambda>0$ is trivial. Let us consider the optimization problem: 
\begin{align*}
\min_{\f\in\R^K}F(\f)&:=\sum_kq_k \psi^\lambda_k(\f) = \sum_k q_k \log\left(\sum_l \exp(f_l - f_k + c(l,k))\right)
\end{align*}
By the first-order optimality condition $\partial F(f)/\partial f_j=0$, we have:
\begin{align*}
 - q_j&\frac{\sum_{l\neq j} \exp(f_l - f_j + c(l,j))}{\sum_l \exp(f_l - f_j + c(l,j))} + \sum_{k\neq j}q_k \frac{ \exp(f_j - f_k + c(j,k))}{\sum_l \exp(f_l - f_k + c(l,k))} = 0
\end{align*}
Move the negative term to the right and add $\frac{q_j}{\sum_l \exp(f_l - f_j + c(l,j))}$ to both sides:
\begin{align*}
  &q_j\frac{\sum_{l} \exp(f_l - f_j + c(l,j))}{\sum_l \exp(f_l - f_j + c(l,j))}  \\
  & = \sum_{k\neq j} \frac{q_k \exp(f_j - f_k + c(j,k))}{\sum_l \exp(f_l - f_k + c(l,k))} + \frac{q_j}{\sum_l \exp(f_l - f_j + c(l,j))}\\
  & = \sum_{k} \frac{ q_k\exp(f_j - f_k + c(j,k))}{\sum_l \exp(f_l - f_k + c(l,k))} \\
  &= \exp(f_j)\sum_{k} \frac{ q_k}{\sum_l \exp(f_l + c(l,k) - c(j,k))}
  \end{align*}
Moreover:
\begin{align*}
&\sum_{k} \frac{q_k}{\sum_l \exp(f_l + c(l,k) - c(j,k))}\\ 
& =  \frac{q_j}{\exp(f_j) + \sum_{l\neq j}\exp(f_l + c_j)} + \sum_{k\neq j}\frac{q_k}{\exp(f_k - c_k) + \sum_{l\neq k}\exp(f_l)}\\
&=\frac{q_j}{\exp(f_j) + \sum_{l\neq j}\exp(f_l + c_j)} + \sum_{k\neq j}\frac{q_k\exp(c_k)}{\exp(f_k) + \sum_{l\neq k}\exp(f_l + c_k)}\\
& = \sum_k \frac{q_k \exp(c(j,k))}{\exp(f_k) + \sum_{l\neq k}\exp(f_l + c_k)} \\
&= \frac{\sum_k q_k \exp(c(j,k))}{Z}
\end{align*}
where $Z>0$ is independent of $j$. 
Then
\begin{align*}
\exp(f_j^*) = \frac{q_jZ}{ \sum_k q_k \exp(c(j,k))} = \frac{q_jZ}{q_j  +  \sum_{k\neq j} q_k \exp(c_k)} 
\end{align*}
If $c_k =c$, we have
\begin{align*}
\exp(f_j^*) &= \frac{q_jZ}{ \sum_k q_k \exp(c(j,k))} \\
&= \frac{q_jZ}{q_j  +  \sum_k q_k \exp(c_k) - q_j\exp(c_j) }\\
&=  \frac{q_jZ}{  \sum_kq_k\exp(c_k)  - q_j(\exp(c_j) - 1)}
\end{align*}
Hence the larger $q_j$, the larger $f_j^*$. For $q_i<q_j$, we have
\begin{align*}
&1/\exp(f_i^*) - 1/\exp(f_j^*) \\
&= \frac{\exp(c)}{q_iZ} - \frac{\exp(c)-1}{Z} -  \frac{\exp(c)}{q_jZ} + \frac{\exp(c)-1}{Z}>0
\end{align*}
It implies that $f_i^*<f_j^*$. 
\end{proof}

\section{Proof of Theorem~\ref{thm:2}}
\begin{definition}
For a function $R: \Omega\rightarrow \R\cap\{-\infty, \infty\}$ taking values on extended real number line, its convex conjugate is defined as 
\begin{align*}
R^*(\u) = \max_{\p\in\Omega}\p^{\top}\u - R(\p)
\end{align*}
\end{definition}
\begin{lemma}\label{lem:1}
If $R(\cdot)$ and $\Omega$ are (input-) symmetric, then $R^*$ is also input-symmetric. 
\end{lemma}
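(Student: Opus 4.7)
The plan is to prove invariance of $R^*$ under coordinate permutations directly from its variational definition, by a change of variables in the defining maximization that pushes the permutation off of $\u$ and onto $\p$, where both $R$ and $\Omega$ absorb it.

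More concretely, let $P \in \R^{K\times K}$ be an arbitrary permutation matrix (so $P^\top P = I$). Input-symmetry of $R$ means $R(P\p) = R(\p)$ for every $\p$ in the domain, and symmetry of $\Omega$ means $P\Omega = \Omega$. We want to show $R^*(P\u) = R^*(\u)$ for all $\u$, which by the characterization of input-symmetric functions as those invariant under arbitrary coordinate permutations suffices to conclude input-symmetry of $R^*$.

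Starting from the definition, I would write
\begin{align*}
R^*(P\u) \;=\; \max_{\p \in \Omega} \bigl( \p^\top P\u - R(\p) \bigr) \;=\; \max_{\p \in \Omega} \bigl( (P^\top \p)^\top \u - R(\p) \bigr),
\end{align*}
and then substitute $\q = P^\top \p$, equivalently $\p = P\q$. As $\p$ ranges over $\Omega$, the new variable $\q$ ranges over $P^\top \Omega = \Omega$ by symmetry of $\Omega$ (using that $P^\top$ is itself a permutation matrix). Moreover, $R(\p) = R(P\q) = R(\q)$ by input-symmetry of $R$. Therefore
\begin{align*}
R^*(P\u) \;=\; \max_{\q \in \Omega} \bigl( \q^\top \u - R(\q) \bigr) \;=\; R^*(\u),
\end{align*}
which is the claim.

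I do not expect any serious obstacle: the proof is essentially a one-line change of variables. The only thing to be careful about is making the two uses of symmetry explicit, namely that $P^\top \Omega = \Omega$ (so the feasible set of the max is unchanged after substitution) and $R(P\q) = R(\q)$ (so the objective is unchanged). Since these are exactly the hypotheses, the argument is immediate.
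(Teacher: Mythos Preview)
Your proof is correct and follows essentially the same approach as the paper: a change of variables in the defining maximization that transfers the permutation from $\u$ to $\p$, then absorbs it using the symmetry of $R$ and $\Omega$. The only cosmetic difference is that the paper works with a single transposition (swapping coordinates $i$ and $j$) rather than a general permutation matrix, which suffices since transpositions generate all permutations.
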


\begin{proof}
\begin{align*}
R^*(\u) = \max_{\p\in\Omega}\p^{\top}\u - R(\p)
\end{align*}
For any $i, j$, let $\uh$ be a shuffled version of $\u$ such that $\uh_i = \u_j, \uh_j = \u_i, \uh_k=\u_k, \forall k\neq i, j$. Then 
\begin{align*}
R^*(\uh) &= \max_{\p\in\Omega}\p^{\top}\uh - R(\p) = \max_{\p\in\Omega}\sum_k p_k \uh_k - R(\p)\\
& =  \max_{\p\in\Omega}p_i \u_j + p_j \u_i + \sum_{k\neq i, j}p_k\u_k - R(\p)\\
& = \max_{\p \in\Omega}\widehat\p^{\top}\u - R(\p)  = \max_{\widehat\p \in\Omega}\widehat\p^{\top}\u - R(\widehat\p)\\
&= R^*(\u)
\end{align*}
where $\widehat\p$ is a shuffled version of $\p$. 
\end{proof}

\begin{proof}{\bf{Proof of Theorem~2}}
, without loss of generalization, we assume $\lambda=1$. 
\begin{align*}
\psi_y(\f)& = \max_{\p\in\Omega}\sum_k p_k (f_k - f_y + c_{k,y}) - R(\p)\\
&= R^*(\f - f_y\mathbf 1 +  \c_y)
\end{align*}
where $\c_y= c(1- \e_y)$, $\e_y$ is the $y$-th column of identity matrix. Let us consider the optimization problem: 
\begin{align*}
\f^*&=\arg\min_{\f\in\R^K}F(\f):=\sum_kq_k \psi_k(\f) \\
&= \sum_k q_kR^*(\f - f_k\mathbf 1  +  \c_k)
\end{align*}
Consider $\q$ such that $q_1<q_2$, we prove that $\f^*$ is order preserving. We prove by contradiction. Assume $\f^*_1>\f^*_2$. Define $\fh$ as $\fh_1=\f^*_2$ and $\fh_2=\f^*_1$ and $\fh_k = \f^*_k, k>2$. 
Then 
\begin{align*}
F(\fh)& =\sum_k q_kR^*(\fh - \widehat f_k\mathbf 1 +  \c_k) \\
&= q_1R^*(\fh - \widehat f_1\mathbf 1 +  \c_1)  + q_2R^*(\fh - \widehat f_2\mathbf 1 +  \c_2)+\sum_{k>2}q_k R^*(\fh - \widehat f_k\mathbf 1 + \c_k) \\
& = q_1R^*(\fh - f_2^*\mathbf 1 +  \c_1)  + q_2R^*(\fh -f_1^*\mathbf 1 + \c_2) +\sum_{k>2}q_k R^*(\f - f_k^*\mathbf 1  +  \c_k)
\end{align*}
Then 
\begin{align*}
F(\fh) - F(\f^*) &= q_1 (R^*(\fh - f_2^*\mathbf 1 +\c_1) - R^*(\f - f_1^*\mathbf 1  + \c_1)) + q_2 (R^*(\fh -f_1^*\mathbf 1 +  \c_2)- R^*(\f -f_2^*\mathbf 1 +  \c_2))
\end{align*}
Since $[\fh - f_2^*\mathbf 1 +  \c_1]_1 = 0 $, $[\fh - f_2^*\mathbf 1 +  \c_1]_2=f_1^* - f^*_2  + c$, $[\fh - f^*_2\mathbf 1  + \c_1]_k = f^*_k -f^*_2  + c, k>2$, similarly $[\f -f^*_2\mathbf 1  +  \c_2]_1 = f^*_1 - f^*_2 + c$, $[\f -f^*_2\mathbf 1  +  \c_2]_2 = 0$, $[\f^* -f^*_2\mathbf 1  + \c_2]_k = f^*_k - f^*_2  +  c, k>2$, hence $R^*(\fh - f^*_2\mathbf 1 +  \c_1)  = R^*(\f -f^*_2\mathbf 1  + \c_2)$. Similarly $R^*(\f^* - f^*_1\mathbf 1  + \c_1) = R^*(\fh -f_1^*\mathbf 1  +  \c_2)$. Hence, we have
\begin{align*}
F(\fh) - F(\f^*)  = (q_2- q_1)&(R^*(\f^* -f^*_1\mathbf 1 +  \c_1)- R^*(\f^* -f^*_2\mathbf 1  +  \c_2)) 
\end{align*}
Define $\f^\sharp=\f^* -f^*_1\mathbf 1 +  \c_1$ with $[\f^\sharp]_1=0, [\f^\sharp]_k=f^*_k-f^*_1 + c, k>1$, and $\f^\dagger=\f^* -f^*_2\mathbf 1  +  \c_2$ with $[\f^\dagger]_2=0, [\f^\dagger]_k=f^*_k-f^*_2 + c, k\neq 2$. Let $\f^\ddagger$ be a shuffled version of $\f^\dagger$ with $[\f^\ddagger]_1=0, [\f^\ddagger]_k=f^*_1 -f^*_2  +  c ,k=2, [\f^\ddagger]_k=f^*_k -f^*_2  +  c, k>2$.  Under the assumption that $f^*_1>f^*_2$, we have $\f^\sharp\leq \f^\ddagger$. Then
\begin{align*}
F(\fh) - F(\f^*)  = (q_2- q_1)(R^*(\f^\sharp)- R^*(\f^\ddagger))
\end{align*}
Note that 
\[
R^*(\f) =\max_{\p\in\Omega}\p^{\top}\f - R(\p) 
\]
Below, we prove $R^*(\f^\sharp)- R^*(\f^\ddagger)<0$. We will consider two scenarios. First, consider $f^*_2 - f^*_1 + c>0$.  Let $\p^*= \arg\max_{\p\in\Omega}\sum_{k>1}p_k [\f^\sharp]_k - R(\p)$. We have $\f^\sharp\in \nabla R(\p^*)$. Since $[\f^\sharp]_2>0$, we have $p^*_2>0$ (due to the assumption on $R$).  Hence $R^*(\f^\sharp)= \max_{\p\in\Omega}\sum_{k>1}p_k [\f^\sharp]_k - R(\p) = \sum_{k>1}p^*_k [\f^\sharp]_k - R(\p^*)<  \sum_{k>1}p^*_k [\f^\ddagger]_k - R(\p^*)\leq R^*(\f^\ddagger)$. Next, consider $f^*_2 - f^*_1 + c\leq 0$.  By the convexity of $R^*$, we have $R^*(\f^\sharp)- R^*(\f^\dagger)\leq  \nabla R^*(\f^\sharp)( \f^\sharp  -\f^\dagger) = [\nabla R^*(\f^\sharp)]_1(f_2- f_1 - c) +[\nabla R^*(\f^\sharp)]_2(f^*_2- f^*_1 +  c)+ \sum_{k>2}[\nabla R^*(\f^\sharp)]_k(f_2 -f_1) \leq c([\nabla R^*(\f^\sharp)]_2 - [\nabla R^*(\f^\sharp)]_1) +  \sum_k[\nabla R^*(\f^\sharp)]_k(f^*_2 -f^*_1) <0$ due to that (i) $[\nabla R^*(\f^\sharp)]_2\leq [\nabla R^*(\f^\sharp)]_1$ due to $[\f^\sharp]_2\leq [\f^\sharp]_1$; (ii) and $\sum_k[\nabla R^*(\f^\sharp)]_k = \sum_k p^*_k >0$. This is because $\p^*$ cannot be all zeros otherwise increasing the first coordinate of $\p^*$ will decrease the  value of $R(\p)$ and therefore increase the value of $Q$. However, such a solution $\p^*$ is impossible since for the function  $Q(\p) = \p^{\top}\f^{\sharp} - R(\p) $, which has a unique maximizer  due to strong concavity of $Q(\p)$. This will prove that $R^*(\f^\sharp)- R^*(\f^\ddagger)<0$. As a result, we prove that if $f^*_1>f^*_2$, then $F(\fh)< F(\f^*)$, which is impossible since $\f^*$ is the minimizer of $F$. Hence, we have $f^*_1\leq f^*_2$. 

Next, we prove that $f^*_1<f^*_2$. Assume $f^*_1 = f^*_2$, we establish a contradiction. By the first-order optimality condition, we have 
\begin{align*}
\frac{\partial F(f)}{\partial f_k} & =q_k \frac{\psi_k(\f)}{\partial f_k} + \sum_{l\neq k}q_l\frac{\partial \psi_l(\f)}{\partial f_k}\\
&= q_k (\e_k - 1)^{\top}\nabla R^*(\f - f_k\mathbf 1 - \c_k)+\sum_{l\neq k}q_l\e_k^{\top}\nabla R^*(\f - f_l\mathbf 1 - \c_l) = 0
\end{align*}
Hence
\begin{align*}
q_k \mathbf 1^{\top}\nabla R^*(\f^* - f^*_k\mathbf 1 - \c_k)  =\e_k^{\top} \sum_{l}q_l\nabla R^*(\f^* - f^*_l\mathbf 1 - \c_l) 
\end{align*}
Next, we prove that for any $\f$ such that $[\f]_1=[\f]_2$, then $[\nabla R^*(\f)]_1 = [\nabla R^*(\f)]_2$. To this end, we consider the equation,  $R(\p) + R^*(\f) = \p^{\top}\f$, where $\p = \arg\max_{\p\in\Omega}Q(\p):= \p^{\top}\f - R(\p)$ satisfying $\p = \nabla R^*(\f)\in\Omega$. If  $[\nabla R^*(\f)]_1 \neq  [\nabla R^*(\f)]_2$, which means $p_1\neq p_2$. We can define another vector $\p'=(p_2, p_1, p_3, \ldots, p_K)\in\Omega, \p'\neq \p$, which satisfies $Q(\p') = \p'^{\top}\f - R(\p') = Q(\p)$, which contradicts to the fact that $Q(\cdot)$ has a unique maximizer. Then  for any $\f$ with $[\f]_1=[\f]_2$, we have $[\f - f_l\mathbf 1 - \c_l]_1 =[\f - f_l\mathbf 1 - \c_l]_2$. As a result, 
\begin{align*}
q_1 \mathbf 1^{\top}\nabla R^*(\f^* - f^*_1\mathbf 1 - \c_1) = q_2 \mathbf 1^{\top}\nabla R^*(\f^* - f^*_2\mathbf 1 - \c_2) 
\end{align*}
It is clear that the above equation is impossible if $f_1^*=f_2^*$ since  $\nabla R^*(\f)$ is symmetric, $q_1<q_2$ and $\mathbf 1^{\top}\nabla R^*(\f)\neq 0, \forall \f$ with $[f]_1=0$. The later is due to the assumption $\nabla R(0)<0$, the optimal solution  $\p = \arg\max_{\p\in\Omega}Q(\p):= \p^{\top}\f - R(\p)$ satisfying $\p=\nabla R^*(\f)\in\Omega$ cannot be all zeros (otherwise we can find a better solution by increasing the first component of $\p$ with a larger value  of $\p^{\top}\f - R(\p)$.
\end{proof}

\section{Proof for Theorem~\ref{thm:4}}
Let $\q=\q(\x)$ be class distribution for a given $\x$. Let $\pi^q_k$ denote the index of $q$ whose corresponding value is the $k$-th largest entry. We define $\psi(\f, l) = \psi(f, \x, l)$. 
\begin{lemma}\label{lem:2}
If $\psi$ is top-$1,2$ consistent simultaneously when class number $K\ge3$, then $\exists~\q=\q(\x)$  such that $q_{[3]}<q_{[2]}<q_{[1]}$ makes $\psi(\f^*,\pi^q_1)<\psi(\f^*,\pi^q_2)<\psi(\f^*,\pi^q_3)$, here $\f^* = \inf L_\psi(\f, \q) = \sum_lq_l\psi(\f, l)$. 
\end{lemma}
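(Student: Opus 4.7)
The plan is to reduce to the essential $3$-class subproblem and combine a label-swap optimality inequality with the envelope/concavity structure of the value function $L^*(\q):=\inf_\f L_\psi(\f,\q)$.

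\textbf{Setup and reduction.} Top-$1,2$ consistency of $\psi$ implies top-$1,2$ calibration, so for any $\q$ with $q_{[1]}>q_{[2]}>q_{[3]}$ every minimizer $\f^*$ of $L_\psi(\cdot,\q)$ must be top-$1,2$ preserving, i.e.\ $f^*_{\pi^q_1}>f^*_{\pi^q_2}>f^*_l$ for all other $l$. Since only one $\q$ must be exhibited, I will take $\q$ supported on three classes and, after relabeling, assume $\pi^q_k=k$; so $q_1>q_2>q_3>0$ and $f^*_1>f^*_2>f^*_l$ for $l\ge3$. It remains to produce such a $\q$ with $\psi(\f^*,1)<\psi(\f^*,2)<\psi(\f^*,3)$.

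\textbf{Swap-optimality inequality.} Let $\q^\sigma$ be the distribution obtained by transposing coordinates $2$ and $3$ of $\q$, and $\f^{*\sigma}$ its minimizer; by calibration at $\q^\sigma$, $f^{*\sigma}_1>f^{*\sigma}_3>f^{*\sigma}_2,\,f^{*\sigma}_l$ for $l\ge 4$. Adding the two optimality inequalities $L_\psi(\f^*,\q)\le L_\psi(\f^{*\sigma},\q)$ and $L_\psi(\f^{*\sigma},\q^\sigma)\le L_\psi(\f^*,\q^\sigma)$ and cancelling the coordinates common to $\q$ and $\q^\sigma$ gives
\[
(q_2-q_3)\bigl\{[\psi(\f^*,2)-\psi(\f^*,3)]-[\psi(\f^{*\sigma},2)-\psi(\f^{*\sigma},3)]\bigr\}\le 0,
\]
so $\psi(\f^*,2)-\psi(\f^*,3)\le\psi(\f^{*\sigma},2)-\psi(\f^{*\sigma},3)$.

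\textbf{Envelope and concavity.} $L^*$ is concave on $\Delta_K$ as an infimum of affine functionals of $\q$, and its supergradient at $\q$ equals $(\psi(\f^*,1),\ldots,\psi(\f^*,K))$ (up to the choice of minimizer) by the envelope theorem. Parameterize $\q(t)=(q_1,q_2+t,q_3-t,0,\ldots,0)$ for $t$ near $0$ and set $h(t):=L^*(\q(t))$; then $h$ is concave with $h'(t)=\psi(\f^*(t),2)-\psi(\f^*(t),3)$ wherever it is differentiable. Top-$2$ calibration forces the coordinate ordering of $\f^*(t)$ to flip at $t^*=(q_3-q_2)/2<0$: $f^*(t)_2>f^*(t)_3$ for $t>t^*$ and $f^*(t)_3>f^*(t)_2$ for $t<t^*$. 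The strict infimum in the definition of top-$2$ calibration promotes this flip to a strict sign change of $h'$ across $t^*$; since $h$ is concave and we are evaluating at $t=0>t^*$, this yields $h'(0)<0$, i.e.\ $\psi(\f^*,2)<\psi(\f^*,3)$. Applying the same argument with the $(1,2)$-swap gives $\psi(\f^*,1)<\psi(\f^*,2)$, completing the chain of strict inequalities.

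\textbf{Main obstacle.} The delicate step is the last one: translating a structural property of $\f^*$ (strict coordinate ordering coming from calibration) into a functional comparison between $\psi(\f^*,2)$ and $\psi(\f^*,3)$, with no label-permutation symmetry on $\psi$ available. The bridge must be built out of the strict-infimum content of the calibration definition together with the supergradient/concavity structure of $L^*$, and special care is required at the symmetric boundary $q_2=q_3$, where top-$2$ calibration permits but does not force $f^*_2=f^*_3$ and the supergradient of $L^*$ may fail to be single-valued.
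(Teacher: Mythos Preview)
Your envelope/concavity framework is set up correctly, but the argument has a real gap at exactly the step you label the ``main obstacle,'' and that gap is not closed by anything in the proposal. Concavity of $h(t)=L^*(\q(t))$ makes $h'$ non-increasing, and top-$2$ calibration forces the \emph{minimizer} $\f^*(t)$ to flip the ordering of $f^*_2,f^*_3$ as $t$ crosses $t^*$. But nothing you wrote converts a flip in the minimizer into a sign flip of $h'(t)=\psi(\f^*(t),2)-\psi(\f^*(t),3)$. Calibration constrains where $\f^*$ lives, not the values $\psi(\f^*,\ell)$; absent label-permutation symmetry of $\psi$ there is no reason the ordering of $\psi(\f^*,2),\psi(\f^*,3)$ tracks that of $f^*_2,f^*_3$. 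Your swap-optimality inequality is correct but redundant: since $\q^\sigma=\q(2t^*)$ with $2t^*<0$, the inequality $h'(0)\le h'(2t^*)$ is just monotonicity of $h'$ from concavity, and carries no sign information.

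The most your machinery actually delivers is the following conditional statement: if $h'(0)\ge 0$ then $h'(2t^*)>0$ strictly (because equality would make $h$ affine on $[2t^*,0]$, hence $\f^*(0)$ optimal at every $\q(t)$ on that interval, contradicting top-$2$ calibration for $t<t^*$). But this says only that if the desired inequality fails at $\q$ it holds at the swapped $\q^\sigma$; it does \emph{not} give $h'(0)<0$ at the original $\q$, and it does not produce both inequalities simultaneously as you claim at the end.

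The paper's proof bypasses the envelope viewpoint and runs this swap logic directly and elementarily: start from any $\q$ with $q_1>q_2>q_3$; if, say, $\psi(\f^*,1)\ge\psi(\f^*,2)$, pass to $\q'$ obtained by swapping labels $1,2$, let $\hat\f^*$ be its minimizer, and assume for contradiction that $\psi(\hat\f^*,\pi^{q'}_1)\ge\psi(\hat\f^*,\pi^{q'}_2)$. Chaining the two optimality inequalities gives $L_\psi(\f^*,\q)\ge L_\psi(\f^*,\q')\ge L_\psi(\hat\f^*,\q')\ge L_\psi(\hat\f^*,\q)$, so $\hat\f^*$ would be optimal for $\q$; but $\hat\f^*$ has $\hat f^*_2>\hat f^*_1$ by calibration at $\q'$, hence is not top-$1$ preserving for $\q$, contradicting top-$1$ calibration. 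This is precisely the ``strict-affineness forces contradiction'' step above, applied label-pair by label-pair, and the existential conclusion comes from moving to the swapped $\q'$ rather than from asserting $h'(0)<0$ at a fixed $\q$. If you want to salvage your approach, you should abandon the claim that $h'(0)<0$ holds universally and instead use concavity plus calibration to produce, via the swap, a $\q'$ at which the needed inequality holds---which is the paper's argument in different clothing.
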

\begin{proof}
Without loss of generality, we consider $q_1>q_2>q_3$ and prove that if $\psi(\f^*,1)<\psi(\f^*,2)<\psi(\f^*,3)$ does not hold, then we can construct $\q'$ such that $q'_{[3]}<q'_{[2]}<q'_{[1]}$ and $\psi(\f^*,\pi^{q'}_1)<\psi(\f^*,\pi^{q'}_2)<\psi(\f^*,\pi^{q'}_3)$.

If $\psi(\f^*,1)\geq \psi(\f^*,2)$ and we construct $\q'$ such that  $q'_{[3]}<q'_{[2]}<q'_{[1]}$ and $\psi(\hat\f^*,\pi^{q'}_1)<\psi(\hat\f^*,\pi^{q'}_2)$, where  $\hat\f^*$ be the optimal solution to $ \inf L_\psi(\f, \q') = \sum_lq'_l\psi(\f, l)$.  Then we construct $\q'$ by switching the 1st and 2nd entry of $\q$.  Then we have $L_\psi(\f^*, \q)\geq L_\psi(\f^*,\q')$ because $\psi(\f^*, 1)q_1 + \psi(\f^*, 2)q_2 \ge\psi(\f^*, 1)q'_1 + \psi(\f^*, 2)q'_2$ due to $q'_1 = q_2, q'_2 = q_1$ and $q_1>q_2$. Due to that $\psi$ is top-1, 2 consistent,  it is top-1,2 calibrated and hence $P_k(\f^*, \q)$ and $P_k(\hat \f^*, \q')$ hold for $k=1, 2$. As a result, we can prove that $\f^*_1>\f^*_2>\f^*_3$ and $\hat\f^*_2>\hat\f^*_1>\f^*_3$. As a result, $\f_*\neq \hat \f_*$. Next, we prove that  $\psi(\hat\f^*,\pi^{q'}_1)< \psi(\hat\f^*,\pi^{q'}_2)$. If $\psi(\hat\f^*,\pi^{q'}_1)\geq  \psi(\hat\f^*,\pi^{q'}_2)$, then  $L_\psi(\hat\f^*, \q')\geq L_\psi(\hat\f^*,\q)$ because $\psi(\hat\f^*, 1)q'_1 + \psi(\hat\f^*, 2)q'_2 \ge\psi(\hat\f^*, 1)q_1 + \psi(\hat\f^*, 2)q_2$ due to $\psi(\hat\f^*, 2)=\psi(\hat\f^*, \pi^{q'}_1)$ and  $\psi(\hat\f^*, 1)=\psi(\hat\f^*, \pi^{q'}_2)$ and $q_1>q_2$. Hence, we derive a fact that $L_\psi(\f^*, \q)\geq L_\psi(\f^*,\q')\geq L_\psi(\hat\f^*, \q')\geq L_\psi(\hat\f^*,\q)$ under the assumption that  $\psi(\hat\f^*,\pi^{q'}_1)\geq  \psi(\hat\f^*,\pi^{q'}_2)$, which is impossible as $\hat \f^*$ is not the optimal solution to $\inf L_\psi(\f, \q) = \sum_lq_l\psi(\f, l)$.  Thus, we have  $\psi(\hat\f^*,\pi^{q'}_1)<  \psi(\hat\f^*,\pi^{q'}_2)$. As a result, there exists $\q'$ such that $q'_{[3]}<q'_{[2]}<q'_{[1]}$ and $\psi(\f^*,\pi^{q'}_1)<\psi(\f^*,\pi^{q'}_2)$. 

If $\psi(\f^*,3)\geq \psi(\f^*,2)$ and construct $\q'$ such that $q'_{[3]}<q'_{[2]}<q'_{[1]}$ and $\psi(\hat\f^*,\pi^{q'}_2)<\psi(\hat\f^*,\pi^{q'}_3)$. This case can be proved similarly as above. 

\end{proof}

Based on above lemma, we can finish the proof of Theorem~3.  For any $f(\x)$, let $\psi_f = (\psi(f, \x, 1), \ldots, \psi(f, \x, K))$. Then $\min_f \sum_l\psi(f, \x, l)q_l = \min_{\psi_f\in\R^K}\psi_f^{\top}\q$. Since $\psi_f^{\top}1 = C$ due to the symmetry property. As a result, there exists $\psi_{f_*} \in \arg\min_{\psi_f\in\R^K}\psi_f^{\top}\q, s.t. \quad\psi_f^{\top}1=C, \psi_f\geq 0$ such that it is at a vertex of the feasible region (a generalized simplex),  i.e., there only exist one $i\in[K]$ such that $[\psi_{f_*}]_i \neq 0$.  This holds for any $\q$. Combining with the above lemma, we can conclude a contradiction, i.e., a non-negative symmetric loss cannot be top-1, 2 consistent simultaneously.

\section{Proof for Theorem~\ref{thm:3}}
\begin{proof}
When $\lambda=\infty$, the loss function becomes: $\psi^{\text{KL}}_\infty(\x, y) = \frac{1}{K}\sum_{k=1}^K (f_k(\x) - f_y(\x)+ c(k,y))$. To show it is symmetric: $\sum_{j=1}^K\psi^{\text{KL}}_\infty\big(\x,y=j\big)=C,  ~\forall f(\x)$, we have 
\begin{align*}
    \sum_{j=1}^K\psi^{\text{KL}}_\infty\big(\x,y=j\big) = \frac{1}{K}\sum_{j=1}^K\sum_{k=1}^K (f_k(\x) - f_j(\x) + c(k,j))=\frac{1}{K}\sum_{j=1}^K\sum_{k=1}^K c(k,j) = C
\end{align*}
where $C$ is a constant only depending on the predefined margins.

To prove All-$k$ consistency, we prove that minimizing the expected loss function is rank preserving, i.e., $\f^*=\arg\min_{\f\in\mathcal C}\sum_kq_k \psi_k(\f)$ is rank preserving, where $ \psi_y(\f)=\psi^{\text{KL}}_\infty(\x, y)= \frac{1}{K}\sum_{k=1}^K ([\f]_k - [\f]_y+ c(k,y))$. We can write $ \psi_y(\f) = \frac{1}{K}\a_y^{\top}\f + \frac{(K-1)c_y}{K}$, where $[\a_y]_{y}=1-K, [\a_y]_k=1, \forall k\neq y$. As a result, we have $\sum_kq_k \psi_k(\f)= 
 (\sum_k q_k\a_k)^{\top}\f+ \frac{(K-1)\sum_kq_kc_k}{K}$, where $[\sum_kq_k\a_k]_i = 1 - Kq_i$. Then minimizing $\arg\min_{\|\f\|_2\leq B}\sum_kq_k \psi_k(\f) = \arg\max_{\|\f\|_2\leq B}\f^{\top}\hat\a$, where $[\hat\a]_i = Kq_i - 1$. As a result, $\f^*= \frac{B\hat\a}{\|\hat\a\|_2}$. Hence, if $q_i>q_j$, then $\f^*_i>\f^*_j$, which proves the rank preserving property of the optimal solution. Then, the conclusion follows the Lemma~\ref{lem:rank-cali}. 

\end{proof}


\section{Proof for Theorem~\ref{thm:5}}

\begin{proof}
We first consider $\alpha, \lambda_0\in(0, \infty)$. For the ALDR-KL loss,  
\begin{align}
\psi_{\alpha, \lambda_0}^{\text{KL}}(\x, y) =& \max_{\lambda\in\R_+}\max_{\p\in\Delta_K}\sum_{k=1}^K p_k (\delta_{k,y}(f(\x)) + c_{k, y})\nonumber\\
&- \lambda \text{KL}(\p, \frac{1}{K}) - \frac{\alpha}{2} (\lambda-\lambda_0)^2\\
 =& \max_{\lambda\in\R_+}\lambda \log \frac{1}{K}\sum_{k=1}^K\exp(\frac{f_k - f_y + c(k, y)}{\lambda})  - \frac{\alpha}{2} (\lambda-\lambda_0)^2\nonumber
\end{align}
Then the expected loss is given by 
\begin{align*}
&F(\f)=\sum_k q_k \psi_{\alpha, \lambda_0}^{\text{KL}}(\x, k) = \sum_k q_k \max_{\lambda\in\R_+}\lambda \log \frac{1}{K}\sum_{l=1}^K\exp(\frac{f_l - f_k + c(l, k)}{\lambda})  - \frac{\alpha}{2} (\lambda-\lambda_0)^2\\
&= \sum_k q_k \max_{\lambda\in\R_+}(-f_k  + \log \frac{1}{K}\sum_{l=1}^K\exp(\frac{f_l + c(l, k)}{\lambda})  - \frac{\alpha}{2} (\lambda-\lambda_0)^2).
\end{align*}
Assuming that $c_{l,k}=0$ (i.e., no margin is used), then we have
\begin{align*}
&F(\f)=\sum_k q_k \psi_{\alpha, \lambda_0}^{\text{KL}}(\x, k)= \sum_k -q_kf_k+ \sum_k q_k \max_{\lambda\in\R_+} \log \frac{1}{K}\sum_{l=1}^K\exp(\frac{f_l}{\lambda})  - \frac{\alpha}{2} (\lambda-\lambda_0)^2\\
& =  \sum_k -q_kf_k+ \max_{\lambda\in\R_+}  \log \frac{1}{K}\sum_{l=1}^K\exp(\frac{f_l}{\lambda})  - \frac{\alpha}{2} (\lambda-\lambda_0)^2
\end{align*}

Similar to Theorem~\ref{thm:1}, we need to show the optimal solution $\f^*$ to the above problem is rank preserving as $\q$. Denote by the optimal solution to $\lambda$ as $\lambda_*$ and to $\p$ as $\p_*$. Then $[\p_*]_j=\exp((f_j - f_y+c_{j,y})/\lambda_*)/\sum_k\exp((f_k - f_y +c_{k,y})/\lambda_*)$. Then $\partial F/\partial \f$ is same as $\partial F_{\lambda^*}/\partial\f$. Hence as long as $\lambda_*>0$ is finite, we can follow the proof of Theorem~\ref{thm:1} to prove the rank preserving. We know that $\lambda_* = [\lambda_0 - \frac{1}{\alpha}\text{KL}(\p_*, 1/K)]_+$. Hence, if $\lambda_0$ is finite, $\text{KL}(\p_*, 1/K)$ is clearly finite, so $\lambda_*$ is finite. $\lambda_*>0$ is ensured if $\frac{\log K}{\lambda_0}<\alpha$ as $\max_{\p\in\Delta} \text{KL}(\p, 1/K)=\log K$.  Then the conclusion follows. 

If $\lambda_0=\infty$, then we have $\lambda_*=\infty$; otherwise the loss is negative infinity. As a result, $\p_*=1/K$. Then it reduces to that of Theorem~\ref{thm:3}. 

\end{proof}

\section{Efficient Computation of LDR-$k$--KL loss.}
 As an interesting application of our general LDR-$k$ loss, we restrict our attention to the use of KL divergence for the regularization term $R(\p) =\sum_{i=1}p_i\log(Kp_i)$ in the definition of~E.q. \ref{eqn:sk},
\begin{align}\label{eqn:sk}
\hspace*{-0.1in}\psi^{k}_\lambda(\x, y) = &\max_{\p\in\Omega(k)}\sum_l p_l (\delta_{l,y}(f(\x)) + c_{l,y})- \lambda R(\p) 
\end{align}
 to which we refer as LDR-$k$-KL loss. In particular,  we consider how to efficiently compute the LDR-$k$-KL loss. We present an efficient solution with $O(K\log K)$ complexity. 

\begin{thm}\label{thm:ldr-k-opt}
Consider the problem $\max_{\p\in\Omega(k)}\sum_i p_i q_i - \lambda \sum_{i=1}p_i\log(Kp_i)$. To compute the optimal solution,  let the $\q$ to be sorted and $[i]$ denote the index for its $i$-th largest value. Given $a\in[K]$ define a vector $\p(a)$ as
 \begin{align*}
     [\p(a)]_{[i]}&=\frac{1}{k}, i<a, \\
     [\p(a)]_{[i]}&=\exp(\frac{\q_{[i]}}{\lambda}-1)\times\min(\frac{1}{K}, \frac{1-\frac{a-1}{k}}{\sum_{j=a}^K\exp(\frac{\q_{[j]}}{\lambda}-1)}),~~i\ge a
 \end{align*}
The optimal solution $\p_*$ is given by $\p(a)$ such that $a$ is the smallest number in $\{1,\ldots, K\}$ satisfying $[\p(a)]_{[a]}\le\frac{1}{k}$. The overall time complexity is $O(K\log (K))$.
\end{thm}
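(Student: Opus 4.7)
The plan is to derive the structure of the optimum from the KKT conditions of the strictly concave problem, and then match it to the explicit form $\p(a)$. First I would write the Lagrangian
\[
\mathcal L(\p,\eta,\boldsymbol\mu)=\sum_ip_iq_i-\lambda\sum_ip_i\log(Kp_i)-\eta\Bigl(\sum_ip_i-1\Bigr)-\sum_i\mu_i\Bigl(p_i-\tfrac1k\Bigr),
\]
with $\eta\ge0$ and $\mu_i\ge0$, and use stationarity to obtain $p_i^\ast=\tfrac1K\exp\bigl((q_i-\eta-\mu_i)/\lambda-1\bigr)$. Complementary slackness then splits the coordinates: on the unclipped set $\{i:p_i^\ast<1/k\}$ we have $\mu_i=0$ and $p_i^\ast=\tfrac1K\exp((q_i-\eta)/\lambda-1)$, while on the clipped set $\{i:p_i^\ast=1/k\}$ the multiplier $\mu_i\ge0$ absorbs the slack. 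Strict concavity of the objective guarantees that $\p^\ast$ is unique.

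Next I would prove the key monotonicity property: if $q_i\ge q_j$ then $p_i^\ast\ge p_j^\ast$. This follows from the unclipped closed form (strictly increasing in $q_i$) combined with the uniform upper bound $1/k$, or alternatively from a swap argument using uniqueness. A direct consequence is that the clipped set is exactly the indices of the top entries of $\q$, so once $\q$ is sorted there is an integer $a$ with $p_{[i]}^\ast=1/k$ for $i<a$ and $p_{[i]}^\ast<1/k$ for $i\ge a$.

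With this structure fixed, the only remaining unknown is the scaling $c:=\exp(-\eta/\lambda)/K$ on the unclipped coordinates. If the simplex constraint $\sum_ip_i\le1$ is inactive, then $\eta=0$ and $c=1/K$; if it is active, then $c\sum_{j\ge a}\exp(q_{[j]}/\lambda-1)=1-(a-1)/k$ pins down $c$. Since $\eta\ge0$ forces $c\le1/K$, the correct choice is the minimum of the two candidates, exactly matching the definition of $\p(a)$. The correct index $a$ is then characterized by requiring $[\p(a)]_{[a]}\le1/k$ so that index $[a]$ truly lies in the unclipped set; by the monotonicity established above, the quantity $[\p(a)]_{[a]}$ decreases as $a$ grows, so the smallest such $a\in[K]$ produces the unique KKT point.

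The main obstacle is the two-case analysis for $c$: I would verify carefully that the $\min$-rule simultaneously enforces $\eta\ge0$ and $\mu_i\ge0$ at index $[a-1]$ by checking that the active branch is selected precisely when $\tfrac1K\sum_{j\ge a}\exp(q_{[j]}/\lambda-1)+(a-1)/k>1$. For complexity, sorting $\q$ costs $O(K\log K)$; building the suffix sums $S_a=\sum_{j=a}^K\exp(q_{[j]}/\lambda-1)$ requires a single backward sweep in $O(K)$; and each candidate $a$ is then tested in constant time, giving the stated $O(K\log K)$ bound.
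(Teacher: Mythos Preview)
Your approach is the same Lagrangian/KKT route the paper takes: write the stationarity condition, split coordinates into clipped and unclipped via complementary slackness, use monotonicity in $q_i$ to conclude the clipped set is a prefix in sorted order, and then pin down the common scaling on the unclipped block via the $\min$-rule. The paper's Checkpoint~I is exactly your two-case derivation for $c$, and its complexity argument matches yours.

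There is one concrete slip. The claim ``by the monotonicity established above, the quantity $[\p(a)]_{[a]}$ decreases as $a$ grows'' is a non sequitur and is in fact false: the monotonicity you proved concerns coordinates of the \emph{fixed} optimizer $\p^\ast$, not the map $a\mapsto[\p(a)]_{[a]}$. For instance, with $K=4$, $k=3$ and $w_{[i]}:=\exp(q_{[i]}/\lambda-1)=(2,\,1.9,\,1,\,0.01)$ one computes $[\p(1)]_{[1]}\approx0.407$, $[\p(2)]_{[2]}\approx0.435$, $[\p(3)]_{[3]}=0.25$, which is not monotone (yet the smallest valid $a$ is $3$ and does give the optimum). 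What you actually need is that the smallest $a$ with $[\p(a)]_{[a]}\le 1/k$ yields a genuine KKT point, i.e.\ that $\mu_{[a-1]}\ge0$ there; you already flagged this as the ``main obstacle,'' and the check does go through by a short case split on which branch of the $\min$ is active at $a-1$ versus $a$ (the key inequality $w_{[a-1]}(k-a+1)>S_a$ follows directly from $a-1$ being invalid). The paper's Checkpoint~II handles this same point by a different direct argument: it shows that any $a'<a$ produces an infeasible $\p(a')$, and any $a'>a$ forces a strictly worse dual objective, so no alternative $a$ can be optimal.
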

Given that the algorithm would be simple with the presented theorem~\ref{thm:ldr-k-opt}, we do not present a formal algorithm box here. Instead, we simply describe it as follows:
\begin{enumerate}
    \item scan (from largest to smallest) through vector $\q$ to get cumulative summation $\sum_{j=a}^K\exp(\frac{\q_{[j]}}{\lambda}-1)$ for all possible values of $a$.
    \item apply the theorem~\ref{thm:ldr-k-opt} to calculate $[\p(a)]_{[a]}$ for all possible values of $a$. It is obvious that the cost is linear with $K$.
\end{enumerate}

It is worth noting that the bottleneck for computing the analytical solution is sorting for $\q$, which usually costs O$(K\log K)$. Hence, the computation of the LDR-$k$-KL loss can be efficient conducted. 
\begin{proof}
Apply Lagrangian multiplier: \begin{align*}
\max_{\p\ge0}\min_{\beta\ge0,\boldsymbol\gamma\ge0}\p^\top\frac{\q}{\lambda}&-\sum_i\p_i\log K\p_i+\beta(1-\sum_i\p_i)+\sum_i\gamma_i(\frac{1}{k}-\p_i)
\end{align*}
Stationary condition of $\p$: $$K\p^*_i=\exp(\frac{\q_i}{\lambda}-\beta-\gamma_i-1)$$

Dual form:$$\min_{\beta\geq 0,\boldsymbol\gamma\geq 0}\sum_i\frac{1}{K}\exp(\frac{\q_i}{\lambda}-\beta-\gamma_i-1)+\beta+\sum_i\frac{\gamma_i}{k}$$

Stationary condition for $\beta$ and $\boldsymbol\gamma$:
$$\gamma^*_i=\max\big(\log\frac{k}{K}\exp(\frac{\q_i}{\lambda}-\beta^*-1),0\big)$$
$$\beta^*=\max\big(\log\sum_i\frac{1}{K}\exp(\frac{\q_i}{\lambda}-\gamma^*_i-1), 0\big)$$

{\bf Checkpoint I:} for the $[a]$ that $\boldsymbol\gamma_{[a]}^*=0$ and $\boldsymbol\gamma_{[a-1]}^*>0$, where $2\le a\le k+1$, (or $a=1$, $\boldsymbol\gamma_{[a-1]}^*$ is undefined):
\begin{itemize}
\item We first consider the case when $\beta^*=0$, then the $[\p(a)]^*_{[i]}=\frac{1}{k}$ for $i<a$, and $[\p(a)]^*_{[i]}=\frac{1}{K}\exp(\frac{\q_{[i]}}{\lambda}-1)$ for $i\geq a$.

\item Otherwise, if $\beta^*>0$:
\begin{align*}
[\p(a)]^*_{[a]}&=\frac{1}{K}\exp(\frac{\q_{[a]}}{\lambda}-\beta^*-1)=\frac{\exp(\frac{\q_{[a]}}{\lambda}-1)}{\sum_j\exp(\frac{\q_j}{\lambda}-\gamma_j^*-1)}\\
&=\frac{\exp(\frac{\q_{[a]}}{\lambda}-1)}{\sum_{j=a}^K\exp(\frac{\q_{[j]}}{\lambda}-\gamma_{[j]}^*-1)}\times\frac{\sum_{j=a}^K\exp(\frac{\q_{[j]}}{\lambda}-\gamma_{[j]}^*-1)}{\sum_{j=1}^K\exp(\frac{\q_{j}}{\lambda}-\gamma_j^*-1)}\\
&=\frac{\exp(\frac{\q_{[a]}}{\lambda}-1)}{\sum_{j=a}^K\exp(\frac{\q_{[j]}}{\lambda}-1)}(1-\frac{a-1}{k})
\end{align*}
It is similar to show for any $i$ that $i>a$:
$$[\p(a)]^*_{[i]}=\frac{\exp(\frac{\q_{[i]}}{\lambda}-1)}{\sum_{j=a}^K\exp(\frac{\q_{[j]}}{\lambda}-1)}(1-\frac{a-1}{k})$$
For those $i<a$ that $\gamma_{[i]}^*>0$:
\begin{align*}
[\p(a)]^*_{[i]}&=\frac{1}{K}\exp(\frac{\q_i}{\lambda}-\beta^*-\gamma_i^*-1)\\
&=\frac{\exp(\frac{\q_i}{\lambda}-\gamma_i^*-1)}{K\exp(\beta^*)}\\
&=\frac{\frac{K}{k}\exp(\beta^*-\frac{\q_i}{\lambda}+1)\exp(\frac{\q_i}{\lambda}-1)}{K\exp(\beta^*)}=\frac{1}{k}
\end{align*}

So we can get the analytical solution as far as we know $a$ subject to $\gamma_{[a]}^*=0$ and $\gamma_{[a-1]}^*>0$. The $[\p(a)]^*_{[i]}=\frac{1}{k}$ for $i<a$, and $[\p(a)]^*_{[i]}=\frac{\exp(\frac{\q_{[i]}}{\lambda}-1)}{\sum_{j=a}^K\exp(\frac{\q_{[j]}}{\lambda}-1)}(1-\frac{a-1}{k})$ for $i\geq a$.
\end{itemize}

{\bf Checkpoint II: }next, we show that as far as we find the smallest $a$ such that $\boldsymbol\gamma_{[a]}^*=0$ and $\boldsymbol\gamma_{[a-1]}^*>0$, then it is the optimal solution.
\begin{itemize}

\item Suppose $a'<a$: because $\forall a'~~s.t.~~ k+1\ge a>a'\ge1$,
assume $[\p(a')]_{[a']}\le\frac{1}{k}\implies a'$ is another smaller valid $a$, which violates the pre-condition;
therefore, $[\p(a')]_{[a']}>[\p(a)]_{[a']}=\frac{1}{k}$, the $a^*$ must lead to a $\p^*$ that violate the $\Omega^k$ constrain, hence can't be the optimal solution, contradiction.
\item Suppose $a'>a$: 

i) if $\beta^*(a)=0$, then by pre-condition $[\boldsymbol\gamma(a)]^*_{[a]}=0$ and $[\boldsymbol\gamma(a')]^*_{[a]}>0,~ \text{consequently}~\beta^*(a')=0$, which deviates from the optimality of the objective function: $$\min_{\beta\geq 0,\boldsymbol\gamma\geq 0}\sum_i\frac{1}{K}\exp(\frac{\q_i}{\lambda}-\beta-\boldsymbol\gamma_i-1)+\beta+\sum_i\frac{\boldsymbol\gamma_i}{k}$$ (notice that $[\boldsymbol\gamma(a)]^*_{[i]}=[\boldsymbol\gamma(a')]^*_{[i]}>0, \forall i <a$ in order to hold the $\Omega(k)$ constrain).

ii) if $\beta^*(a)>0$, then  $[\boldsymbol\gamma(a)]^*_{[a]}=0$ and $[\boldsymbol\gamma(a')]^*_{[a]}>0$ $\implies\beta^*(a')<\beta^*(a)$ $\implies\forall i\ge a',~ [\p(a')]_{[i]}> [\p(a)]_{[i]}$.

On the other hand, $\forall j< a'$, $[\p(a')]_{[j]}=\frac{1}{k}\ge [\p(a)]_{[j]}$ $\implies \sum_{i=a'}^K[\p(a')]_{[i]}\le 1-\sum_{i=1}^{a'-1}[\p(a')]_{[i]}\le 1-\sum_{i=1}^{a'-1}[\p(a)]_{[i]}=\sum_{i=a'}^K[\p(a)]_{[i]}$, which is contradictory with $\forall i\ge a',~~ [\p(a')]_{[i]}> [\p(a)]_{[i]}$.
\end{itemize}
\end{proof}

\section{Extensive Discussion for Loss Functions}\label{sec:discuss-loss}

From our theorem~\ref{thm:5}, we conclude that MAE, RCE, NCE and RLL are not All-$k$ consistent. We have already proved the All-$k$ consistency for LDR loss family by theorem~\ref{thm:1} and theorem~\ref{thm:2} (e.g. LDR-KL loss).

Next, we provide proofs for the All-$k$ consistency of GCE~($0\le q <1$), SCE~($0\le \alpha <1$) and MSE. 

\textbf{Proof of All-$k$ consistency for GCE~($0\le q <1$)}

To make the presentation clearer, we rename the $q$ parameter in GCE loss function as $\lambda$, in order to not confuse with underlying class distribution notation $\q$.
Objective: $$\min_{\p\in\Delta_K}L_{\psi_{\text{GCE}}}(\p,\q)=\sum_{i=1}^Kq_i\frac{1-p_i^\lambda}{\lambda}$$
\begin{proof}
    We will first show $\p^*$ is rank consistent with $\q$; then, show $\f^*$ is rank consistent with $\p^*$; finally, it is easy to see that $\f^*$ is rank consistent with $\q$ by transitivity rule.

    By Lagrangian multiplier:
    $$\min_\p\max_{\alpha,\beta_i\ge 0}\sum_{i=1}^Kq_i\frac{1-p_i^\lambda}{\lambda}+\alpha(\sum_{i=1}^Kp_i-1)-\sum_{i=1}^K\beta_ip_i$$

    The stationary condition for $\p$:
    $$p_i^*=\left(\frac{q_i}{\alpha-\beta_i}\right)^{\frac{1}{1-\lambda}}$$
    Notice that the underlying class distribution $\q\in\Delta_K$, and $0\le\lambda<1\implies 1\le\frac{1}{1-\lambda}$; moreover, with primal feasibility for KKT condition, $p^*_i\ge0$ and $\sum_{i=1}^Kp^*_i=1$ (it also practically holds because the model requires the output $\p\in\Delta_K$, e.g. softmax); then, consider the complementary slackness: $p^*_i>0\implies \beta^*_i=0$; besides, $p^*_i\ge0, q_i\ge0\implies\alpha>0$; 
    therefore, $p^*_i\propto q_i^{\frac{1}{1-\lambda}}$ and consequently $\p^*$ is rank consistent with $\q$.

    Next, it is clear that $p_i=\frac{\exp(f_i)}{\sum_k \exp(f_k)}\implies \exp(f^*_i)\propto p^*_i$; therefore, $\f^*$ is rank consistent with $\p^*$ (notice that the proof is not restricted with softmax). By transitivity, $\f^*$ is rank consistent with $\q$.
\end{proof}

\textbf{Proof of All-$k$ consistency for SCE~($0< \alpha \le1$)}

Without loss of generality, we consider the case $A=-1$ but it could be generalized to any $A<0$. We rename the $\alpha$ parameter for SCE loss as $1-\lambda$ for a better presentation. Objective:
$$\min_{\p\in\Delta_K}L_{\psi_{\text{SCE}}}(\p,\q)=\sum_{i=1}^Kq_i\left(-(1-\lambda)\log p_i +\lambda(1-p_i)\right)$$
 By Lagrangian multiplier:
    $$\min_\p\max_{\alpha,\beta_i\ge 0}\sum_{i=1}^Kq_i\left(-(1-\lambda)\log p_i +\lambda(1-p_i)\right)+\alpha(\sum_{i=1}^Kp_i-1)-\sum_{i=1}^K\beta_ip_i$$
The stationary condition for $\p$:
    $$p_i^*=\frac{(1-\lambda)q_i}{\alpha-\beta_i-\lambda q_i}$$
    $$\frac{1}{p_i^*}=\frac{\alpha-\beta_i-\lambda q_i}{(1-\lambda)q_i}=\frac{\alpha-\beta_i}{(1-\lambda)q_i}-\frac{\lambda}{1-\lambda},~~~0<p_i^*<1$$
By primal feasibility for KKT condition, $p^*_i\ge0$ and $\sum_{i=1}^Kp^*_i=1$. By complementary slackness: $p^*_i>0\implies \beta^*_i=0$. Besides, $p^*_i\ge0, q_i\ge0\implies\alpha>0$. Notice that $0<1-\lambda\le 1$. Therefore, $(\frac{1}{p_i^*}+\frac{\lambda}{1-\lambda})\propto\frac{1}{q_i}$, and consequently $\p^*$ is rank consistent with $\q$.

Next, it is obvious that $p_i=\frac{\exp(f_i)}{\sum_k \exp(f_k)}\implies \exp(f^*_i)\propto p^*_i$; therefore, $\f^*$ is rank consistent with $\p^*$. By transitivity, $\f^*$ is rank consistent with $\q$.

Notice that the proof is for the empirical form of SCE. For the theoretical form: $L_{\psi_{\text{SCE}}}(\p,\q)=H(\q,\p)+H(\p,\q)$, where $H(\q,\p)=-\E_q[\log p]$ denotes cross entropy, it can be proved with the similar logic.




\textbf{Proof of All-$k$ consistency for MSE}

Objective:
$$\min_{\p\in\Delta_K}L_{\psi_{\text{MSE}}}(\p,\q)=\sum_{i=1}^Kq_i(1-2p_i+\|\p\|_2^2)$$
By Lagrangian multiplier:
$$\min_\p\max_{\alpha,\beta_i\ge 0}\sum_{i=1}^Kq_i(1-2p_i+\|\p\|_2^2)+\alpha(\sum_{i=1}^Kp_i-1)-\sum_{i=1}^K\beta_ip_i$$
The stationary condition for $\p$:
$$p_i^*=q_i+\frac{\beta_i-\alpha}{2}$$
By the primal feasibility for KKT condition, we have $p^*_i\ge0$ and $\sum_{i=1}^Kp^*_i=1$. By complementary slackness: $p^*_i>0\implies \beta^*_i=0$. Furthermore, $p^*_i=0$, we can prove $\beta_i^*=0$ holds; otherwise, $\exists j,~s.t.~ p^*_j=0, \beta_j^*>0$, with primal feasibility $\sum_{i=1}^K\left(q_i+\frac{\beta^*_i-\alpha^*}{2}\right)=1\implies\alpha^*>0$ given that $\sum_{i=1}^Kq_i=1$. 
As a consequence, for the $p^*_i=0$, we conclude $p^*_i\le q_i$; for the $p^*_i>0$, because $\beta^*_i=0,\alpha^*>0$, we conclude $p^*_i=q_i+\frac{\beta_i^*-\alpha^*}{2}<q_i$; therefore, $\sum_{i=1}^Kp^*_i<\sum_{i=1}^Kq_i=1$, a contraction.
Hence, $\p^*=\q$ and is rank consistent with $\q$.

Next, it is obvious that $p_i=\frac{\exp(f_i)}{\sum_k \exp(f_k)}\implies \exp(f^*_i)\propto p^*_i$; therefore, $\f^*$ is rank consistent with $\p^*$. By transitivity, $\f^*$ is rank consistent with $\q$.

\section{Dataset Statistics}
\begin{table}[H]
    \centering
        \caption{statistics for the benchmark datasets}
    \label{table:data}
    \begin{tabular}{c|ccc}
    \toprule
       Dataset  &  \# of samples & \# of features & \# of classes\\
       \hline
        ALOI & 108000& 128& 1,000\\
        News20 & 15935& 62060& 20\\
        Letter & 15000& 16& 26\\
        Vowel & 990& 10& 11\\
        Kuzushiji-49 & 232365 / 38547 & 28x28& 49\\
        CIFAR-100& 50000 / 10000 & 32x32x3&100\\
        Tiny-ImageNet & 100000 / 10000 & 64x64x3& 200\\
        \bottomrule
    \end{tabular}
    \vspace*{-0.2in}
\end{table}

\section{More Experimental Results}\label{sec:exP}
\subsection{Relationship between $\lambda$ and KL divergence values $\text{KL}(\p, 1/K)$ for ALDR-KL on Synthetic Data}

We present the averaged learned $\lambda_t$ and $\text{KL}(\p_t,\frac{\mathbf 1}{K})$ values for ALDR-KL loss at Figure~\ref{fig:lamswithkls}. It justifies our motivation that we prefer to give the uncertain data with larger $\lambda$ value.
\begin{figure}[h]
    \centering
    \includegraphics[scale=0.7]{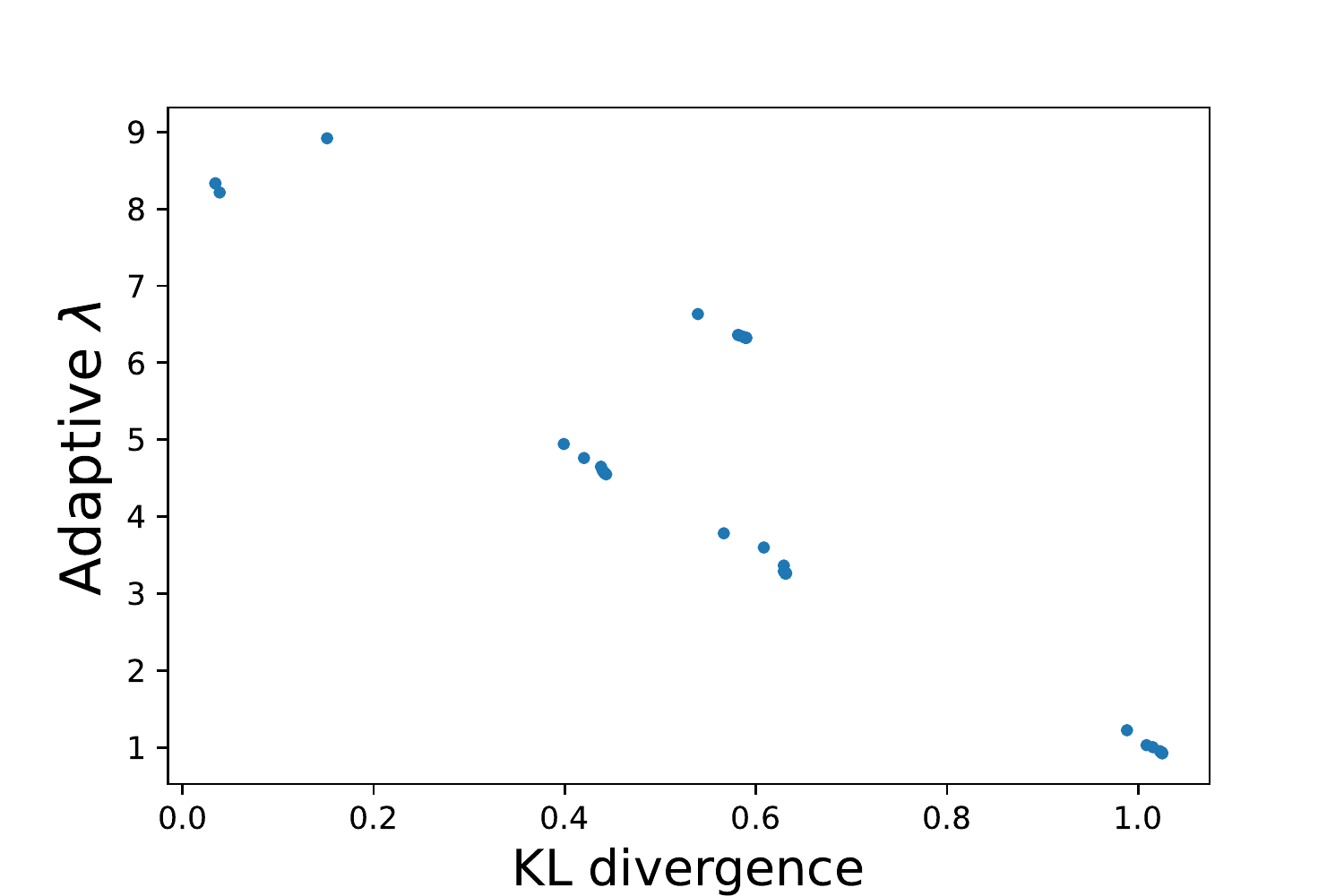}
    \caption{The averaged $\lambda_t$ and $\text{KL}(\p_t,\frac{\mathbf 1}{K})$ values across all iterations for each instance on the synthetic dataset.}
    \label{fig:lamswithkls}
\end{figure}

\subsection{Comprehensive  Accuracy Results on Benchmark datasets}
The top-$1, 2, 3, 4, 5$ accuracy results are summarized at Table~\ref{table:top-1}, \ref{table:top-11}, \ref{table:top-2}, \ref{table:top-22}, \ref{table:top-3}, \ref{table:top-33}, \ref{table:top-4}, \ref{table:top-44}, \ref{table:top-5},  \ref{table:top-55}.  From the results, we can see that (i) the performance of ALDR-KL and LDR-KL are stable; (ii) MAE has very poor performance, which is consistent with many earlier works; (ii) CE is still competitive in most cases.  
\begin{table*}[p]
    \centering
        \caption{Testing Top-1 Accuracy (\%) with mean and standard deviation for tabular dataset. The highest values are marked as bold.}
    \label{table:top-1}
    \resizebox{.95\textwidth}{!}{
}
    \vspace*{-0.2in}
\end{table*}

\subsection{Setups for Mini-Webvision Experiments}\label{sec:webvision-setup}

We adopt the noisy Google resized version from Webvision version 1.0 as training dataset (only for the first 50 classes)~\cite{DBLP:journals/corr/abs-1708-02862} and do evaluation on the ILSVRC 2012 validation data (only for the first 50 classes). The total training epochs is set as 250. The weight decay is set as 3e-5. SGD optimizer is utilized with nesterov momentum as 0.9, initial learning rate as 0.4 that is epoch-wise decreased by 0.97. Typical data augmentations including color jittering, random width/height shift and random horizontal flip are applied. For our proposed LDR-KL and ALDR-KL, we apply SoftPlus function to make model output $\f$ to be positive. The results are presented at Table~\ref{tab:webvision}.

\subsection{Leaderboard with Standard Deviation}
In addition to the averaged rank for the leaderboard at Table~\ref{tab:ranks}, we provide the leaderboard with standard deviation at Table~\ref{tab:ranks+std}.

\begin{table}[h]
    \centering
\caption{leaderboard for comparing 15 different loss functions on 7 datasets in 6 noise and 1 clean settings. The reported numbers are averaged ranks of performance. The smaller the better.}
    \label{tab:ranks+std}
    \begin{tabular}{c|ccccc|c}
    \toprule
       Loss Function & top-1 & top-2& top-3 & top-4& top-5  &  overall \\
       \hline

    ALDR-KL & \textbf{2.163(1.754)} & \textbf{2.469(2.149)} & \textbf{2.245(1.933)} & \textbf{2.224(1.951)} & \textbf{2.673(2.543)} & \textbf{2.355(2.092)} \\
    LDR-KL & 2.449(1.819) & 2.571(1.841) & 2.776(2.043) & 3.0(2.138) & 2.959(2.109) & 2.751(2.006) \\
    CE    & 4.714(1.852) & 4.592(2.465) & 4.224(2.131) & 4.449(2.148) & 4.388(2.184) & 4.473(2.171) \\
    SCE   & 4.959(1.958) & 4.816(1.769) & 4.449(1.63) & 4.918(2.174) & 4.469(1.63) & 4.722(1.857) \\
    GCE   & 5.796(2.39) & 5.347(2.462) & 6.061(2.18) & 5.551(2.167) & 5.571(2.241) & 5.665(2.304) \\
    TGCE  & 6.204(2.204) & 6.204(1.948) & 6.286(2.109) & 6.204(1.84) & 6.306(1.929) & 6.241(2.011) \\
    WW    & 7.673(2.198) & 6.592(2.465) & 6.061(2.377) & 5.898(2.35) & 5.469(2.492) & 6.339(2.496) \\
    JS    & 7.816(2.946) & 7.837(2.637) & 7.857(2.295) & 7.98(2.114) & 8.245(2.005) & 7.947(2.43) \\
    CS    & 8.653(4.943) & 8.857(4.699) & 8.755(4.506) & 8.551(4.764) & 8.714(4.84) & 8.706(4.754) \\
    RLL   & 10.184(2.569) & 10.224(1.951) & 10.224(2.41) & 10.49(2.34) & 10.408(2.166) & 10.306(2.3) \\
    NCE+RCE  & 9.694(3.065) & 10.633(2.686) & 10.878(2.553) & 10.612(2.798) & 10.714(2.483) & 10.506(2.756) \\
    NCE+AUL  & 10.449(2.167) & 10.98(1.436) & 11.102(1.297) & 11.245(1.17) & 11.163(1.633) & 10.988(1.605) \\
    NCE+AGCE  & 11.735(1.613) & 11.837(1.448) & 11.878(1.56) & 11.51(1.704) & 11.673(1.038) & 11.727(1.496) \\
    MSE   & 12.776(2.216) & 12.612(2.64) & 12.735(2.126) & 12.939(1.921) & 12.878(2.067) & 12.788(2.21) \\
    MAE   & 14.735(0.563) & 14.429(1.666) & 14.469(1.739) & 14.429(2.06) & 14.367(2.087) & 14.486(1.72) \\
    \bottomrule
    \end{tabular}%
 
\end{table}%

\subsection{Convergence for ALDR-KL Optimization}
We show the practical convergence curves (mean value with standard deviation band) for Algorithm 1 for ALDR-KL loss at this section. `CD' is short for class-dependent noise; `U' is short for uniform noise; `$c$' is the margin value; initial learning rate $\eta_0$ is tuned in $\{1e-1,1e-2,1e-3\}$. The convergence curves on ALOI dataset is shown in Figure~\ref{fig:aldr-aloi-cvg}; the convergence curves on News20 dataset is shown in Figure~\ref{fig:aldr-news20-cvg}; the convergence curves on Letter dataset is shown in Figure~\ref{fig:aldr-letter-cvg}; the convergence curves on Vowel dataset is shown in Figure~\ref{fig:aldr-vowel-cvg}.

\begin{figure}[p]
\centering

\subfigure[CD(0.1), $c=0.1$]{\includegraphics[scale=0.1]{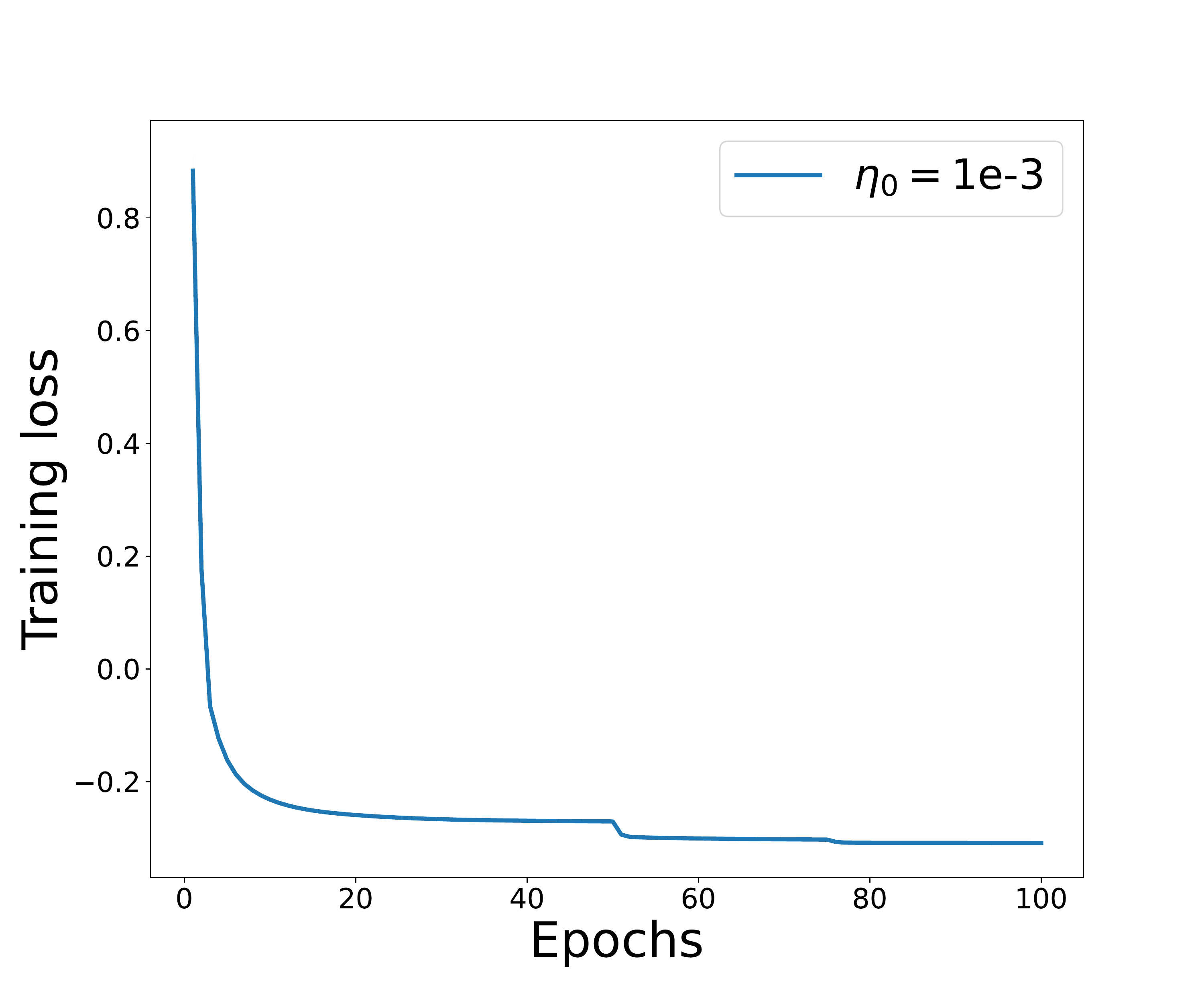}}
\subfigure[CD(0.1), $c=1$]{\includegraphics[scale=0.1]{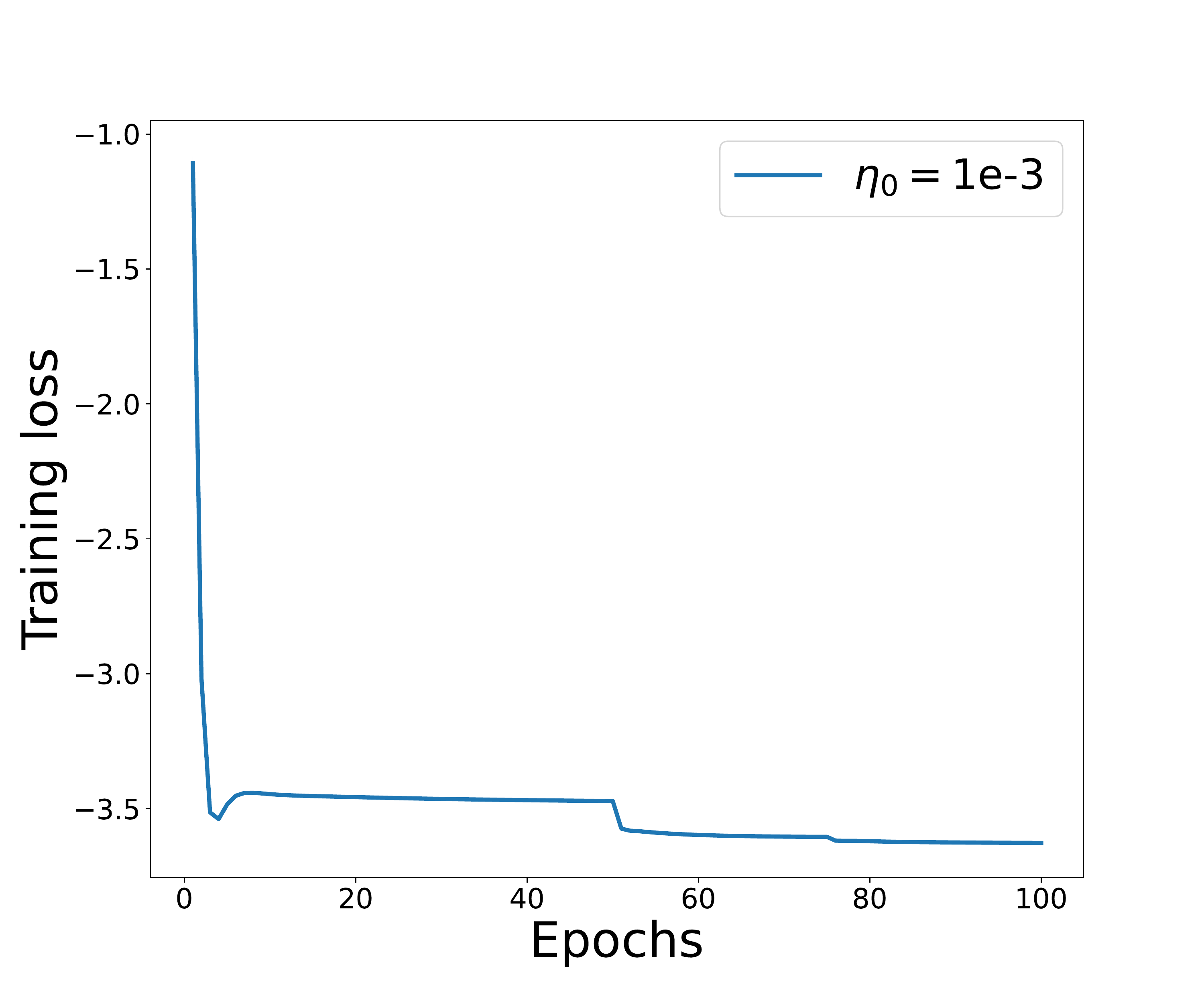}}
\subfigure[CD(0.1), $c=10$]{\includegraphics[scale=0.1]{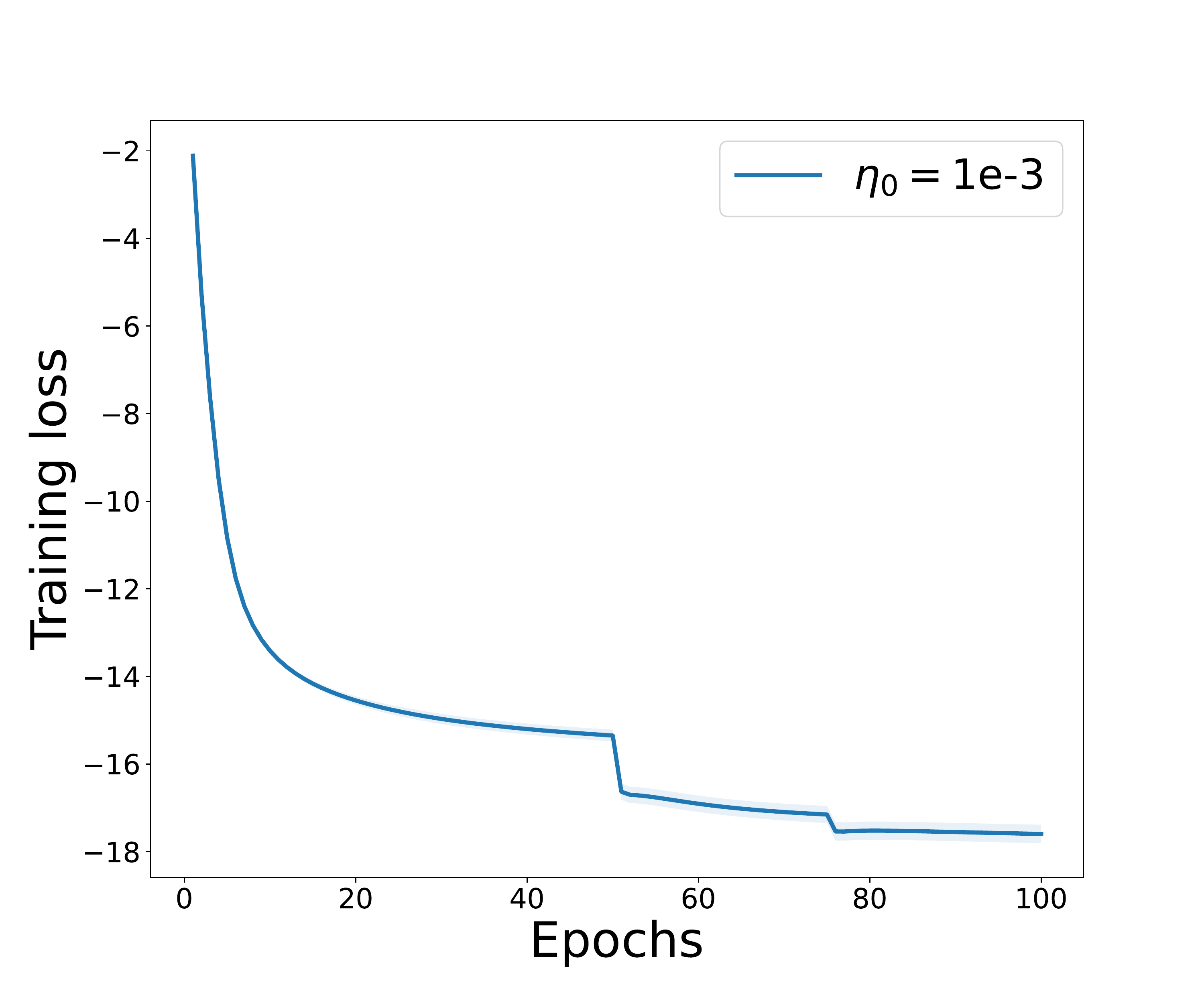}}

\subfigure[CD(0.3), $c=0.1$]{\includegraphics[scale=0.1]{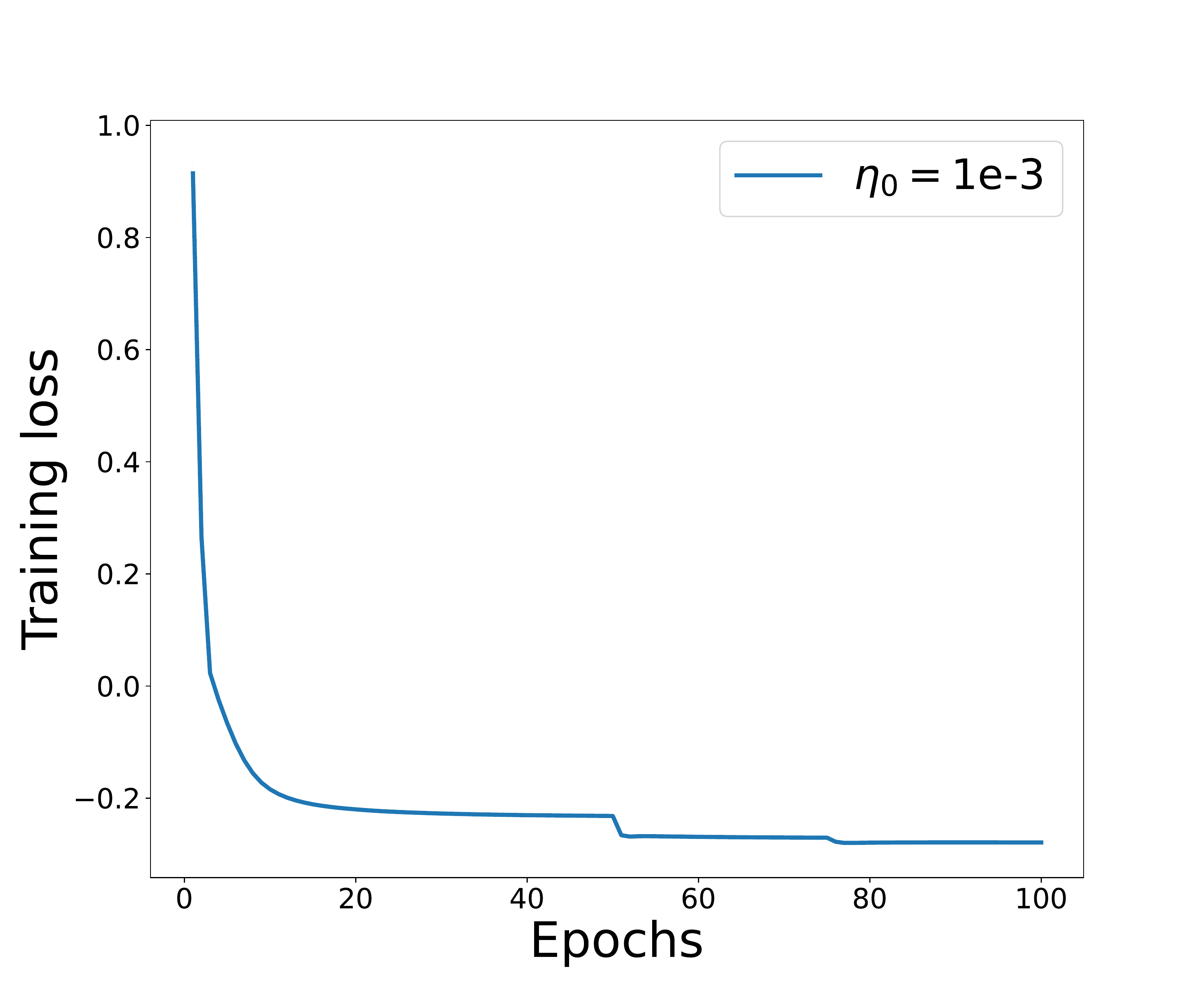}}
\subfigure[CD(0.3), $c=1$]{\includegraphics[scale=0.1]{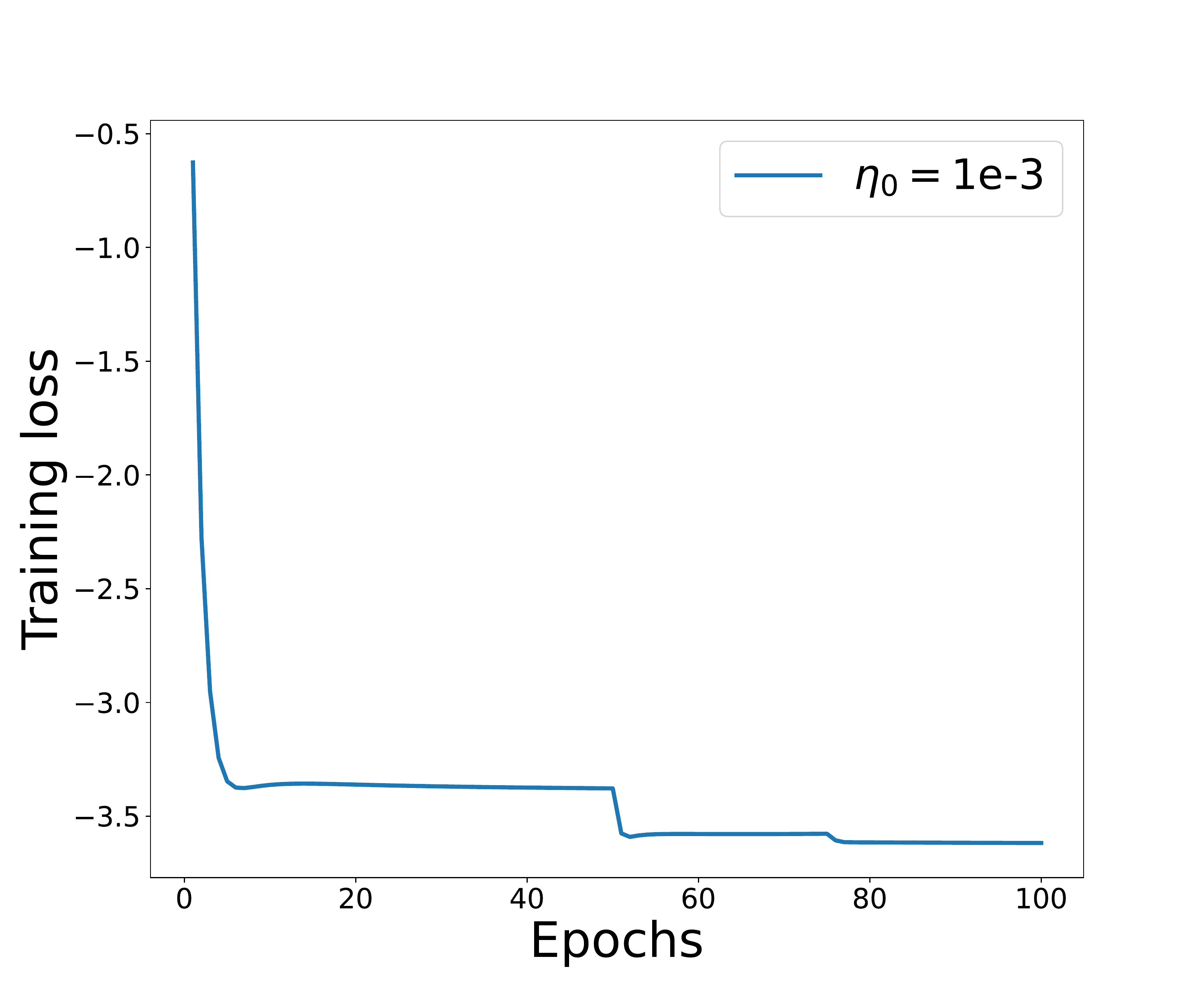}}
\subfigure[CD(0.3), $c=10$]{\includegraphics[scale=0.1]{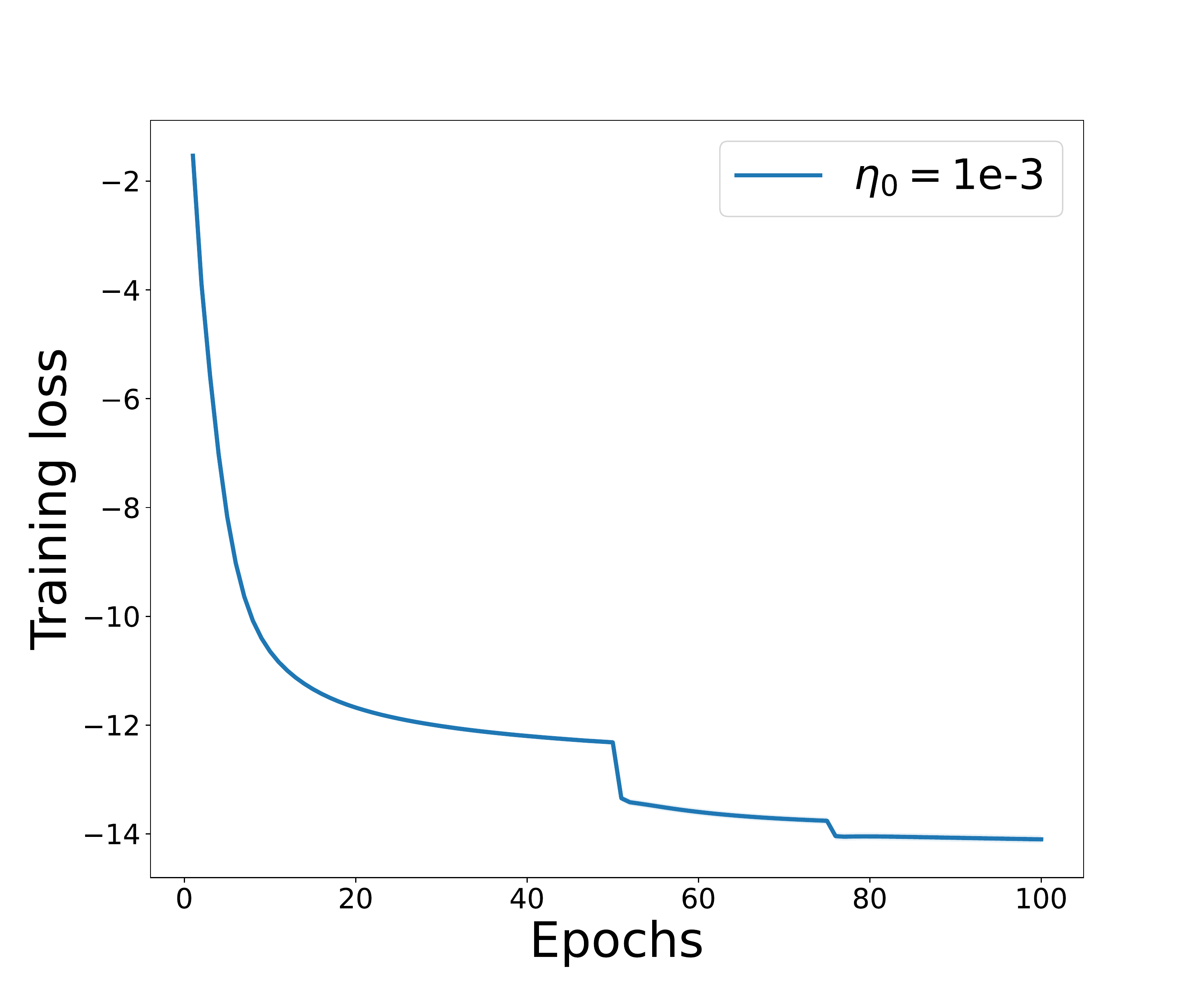}}

\subfigure[CD(0.5), $c=0.1$]{\includegraphics[scale=0.1]{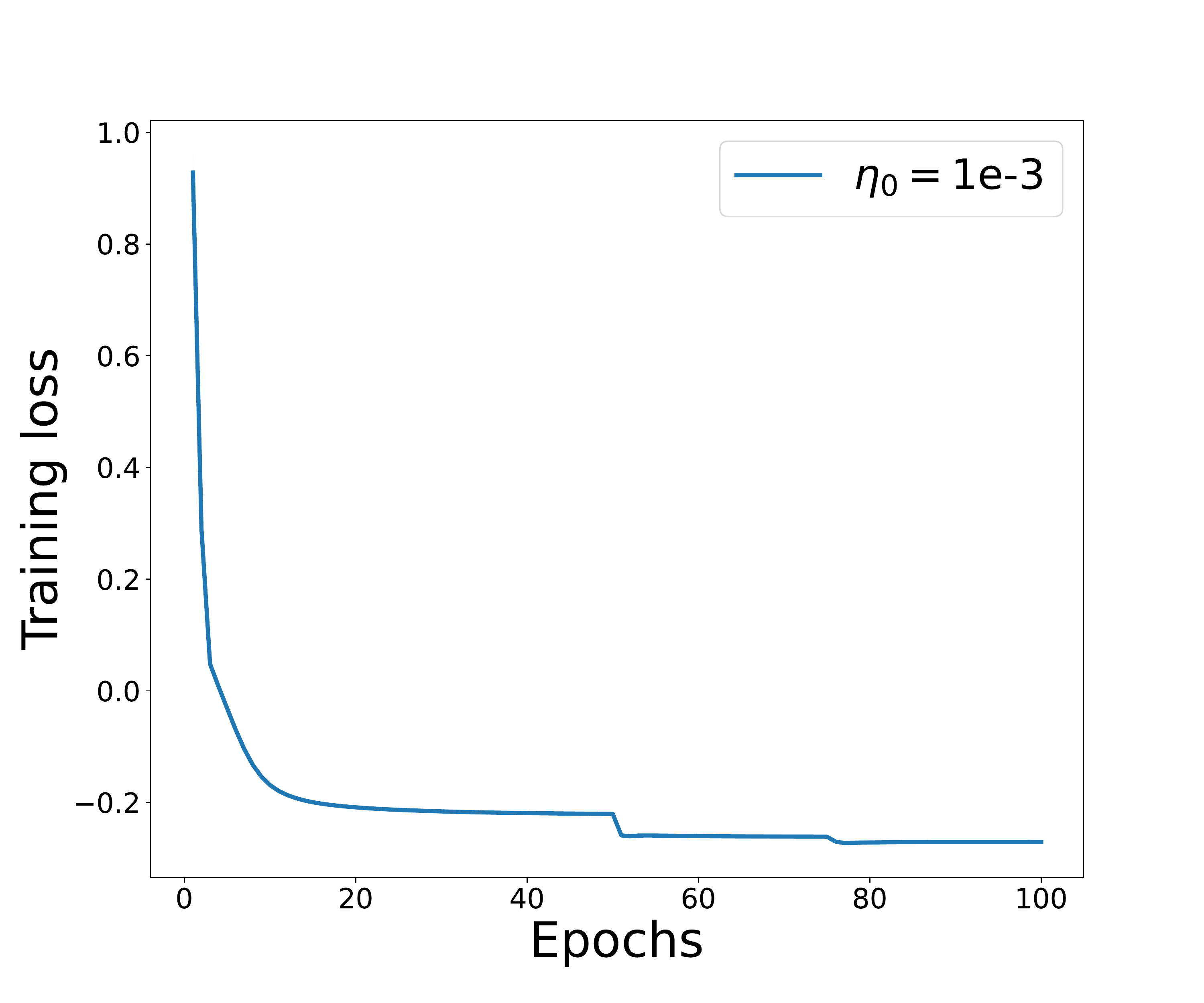}}
\subfigure[CD(0.5), $c=1$]{\includegraphics[scale=0.1]{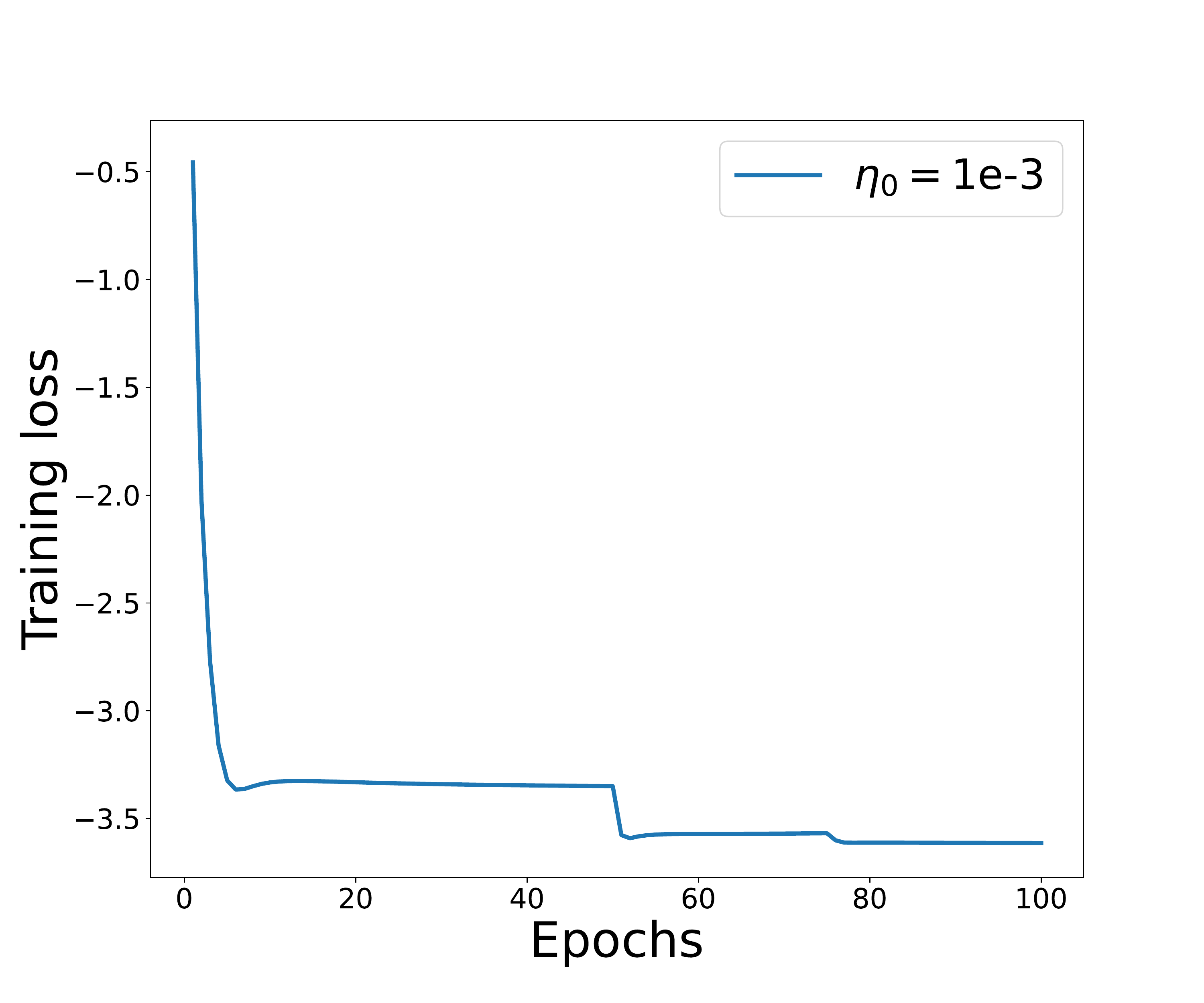}}
\subfigure[CD(0.5), $c=10$]{\includegraphics[scale=0.1]{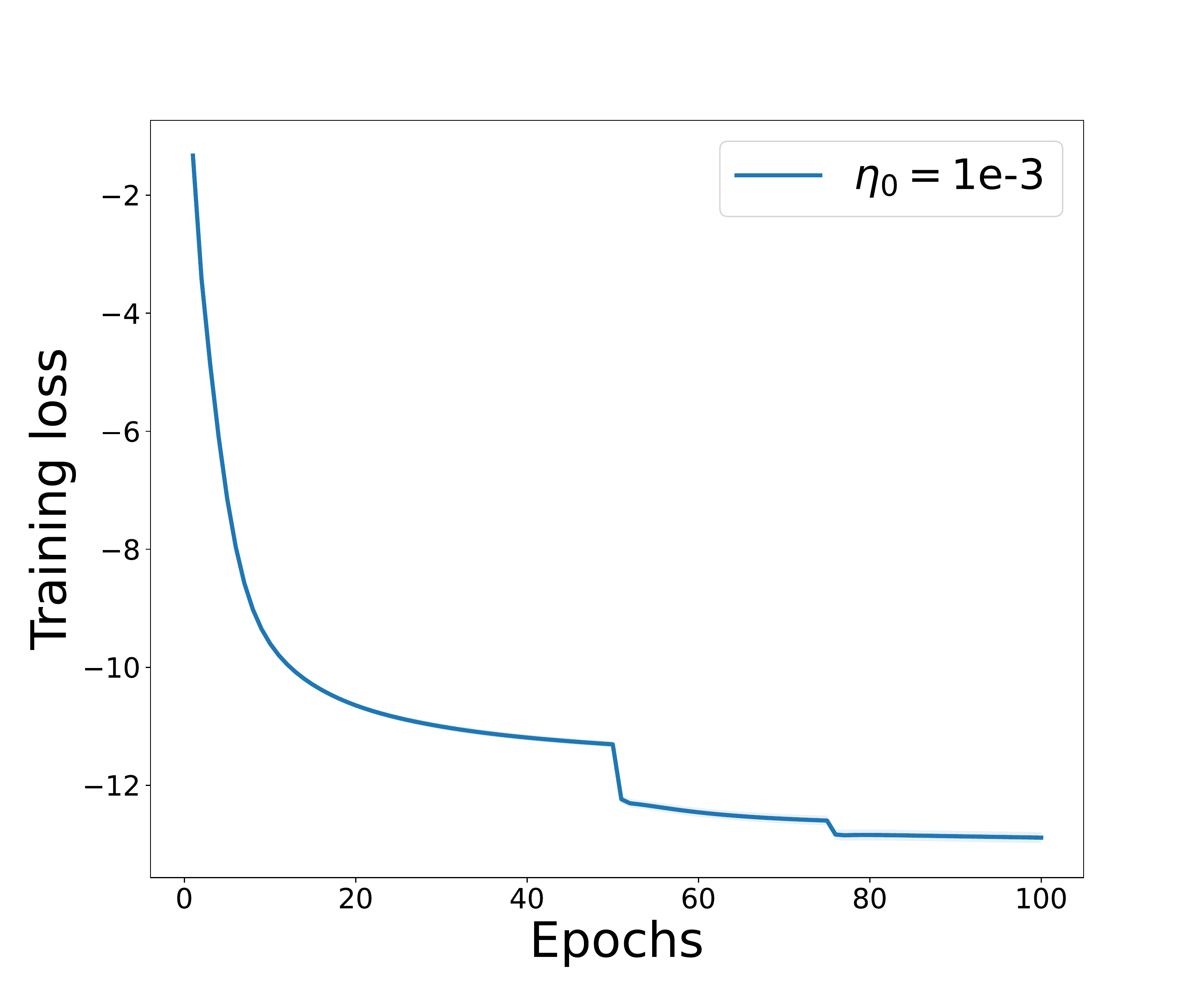}}

\subfigure[U(0.3), $c=0.1$]{\includegraphics[scale=0.1]{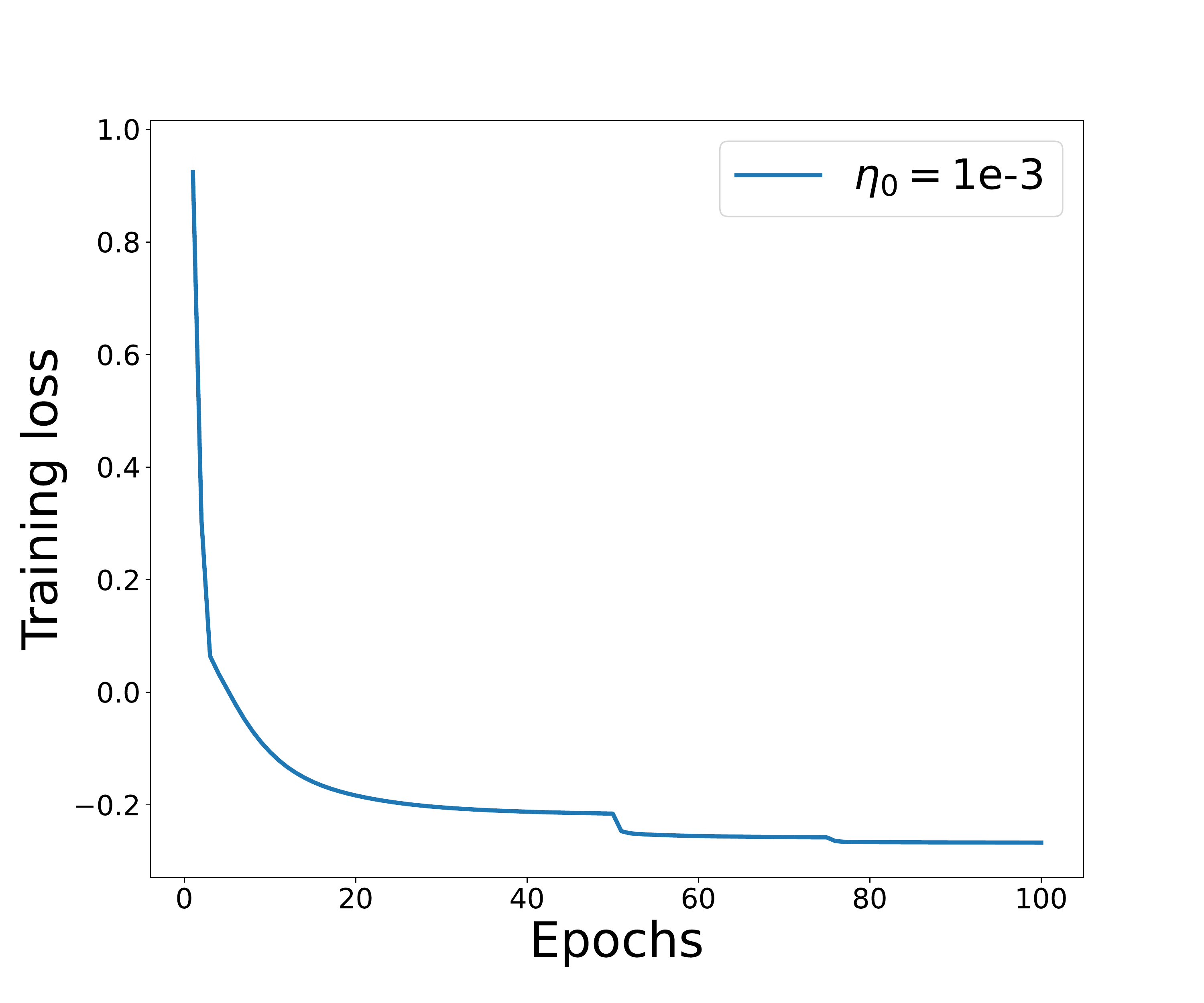}}
\subfigure[U(0.3), $c=1$]{\includegraphics[scale=0.1]{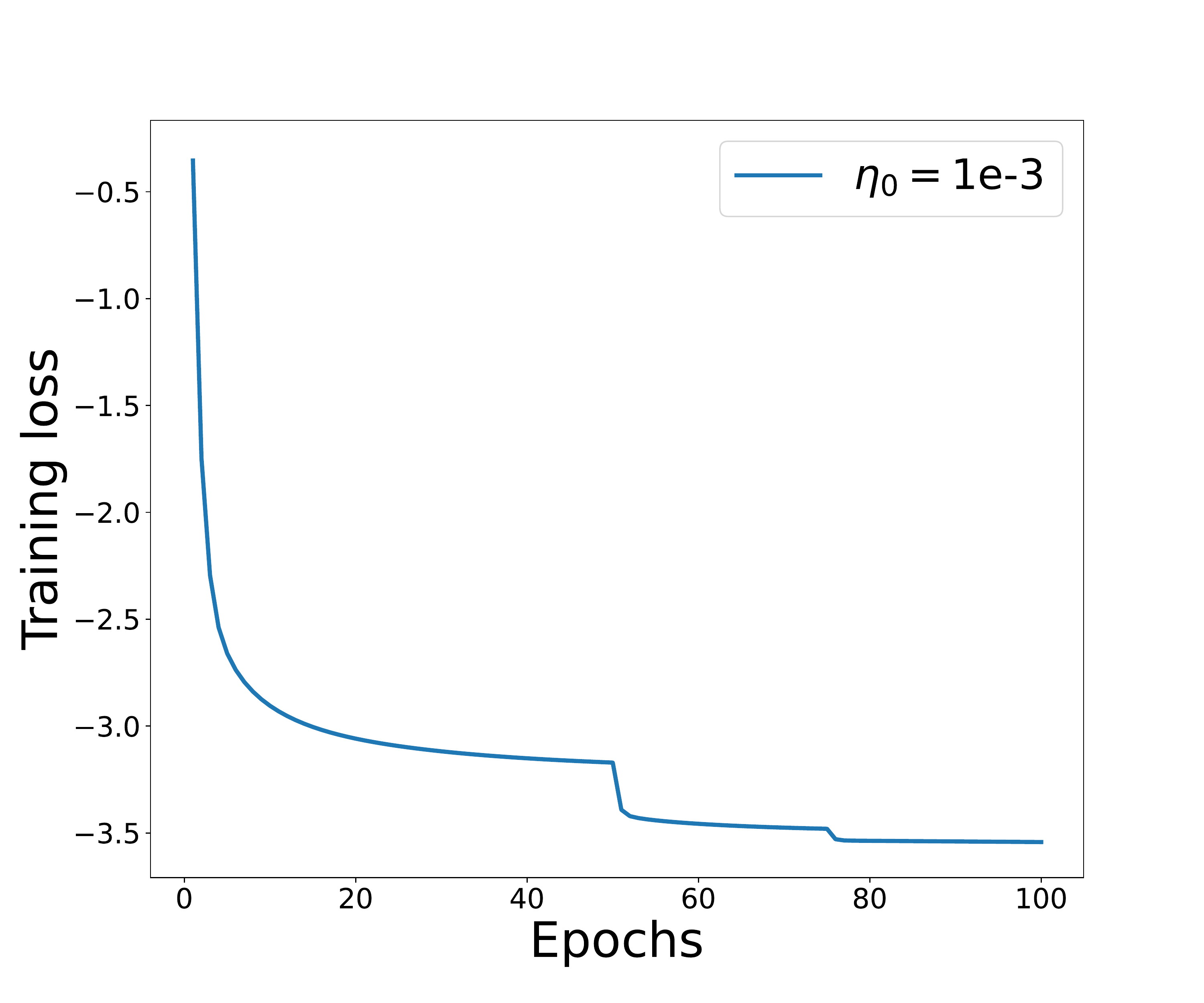}}
\subfigure[U(0.3), $c=10$]{\includegraphics[scale=0.1]{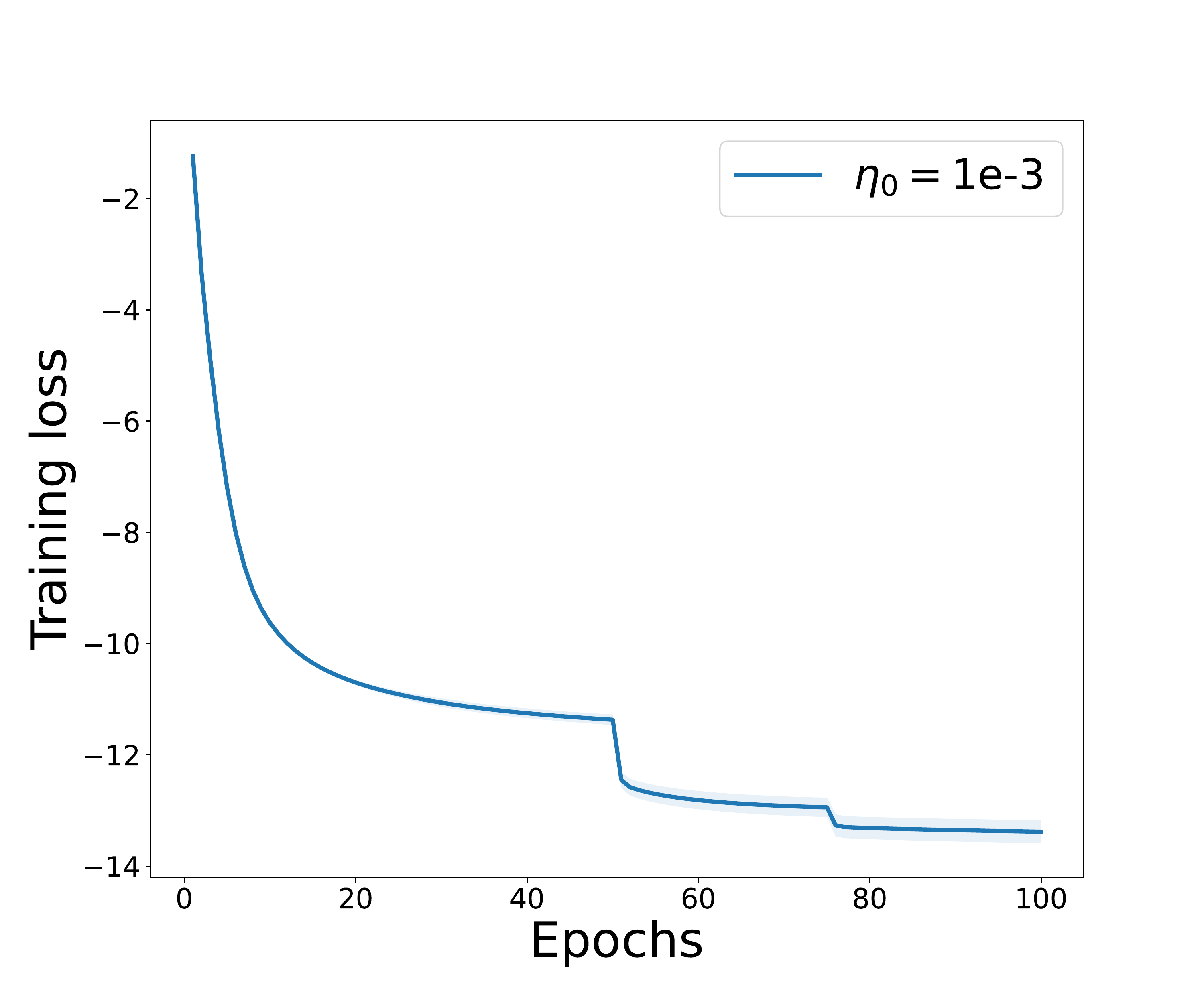}}

\subfigure[U(0.6), $c=0.1$]{\includegraphics[scale=0.1]{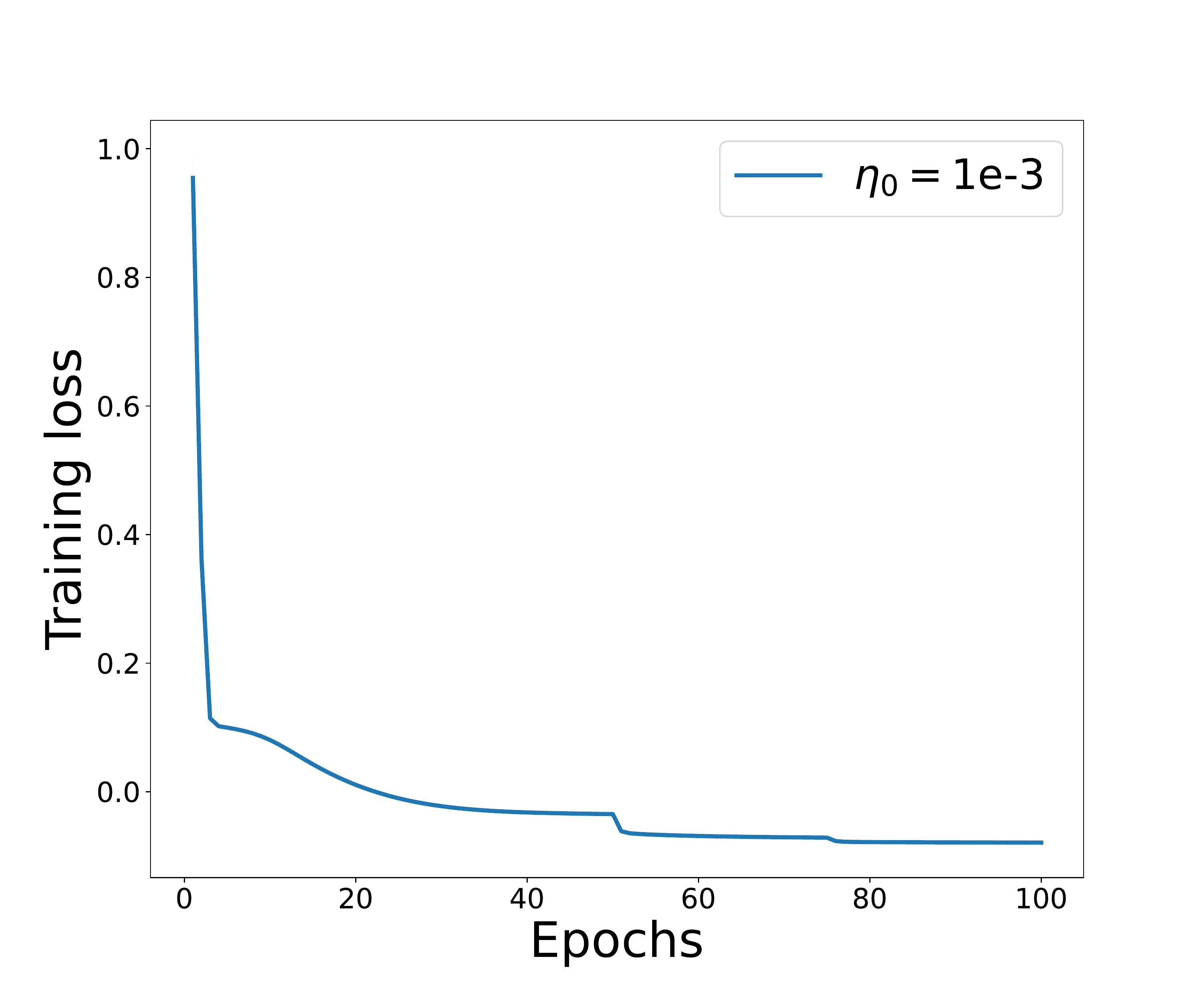}}
\subfigure[U(0.6), $c=1$]{\includegraphics[scale=0.1]{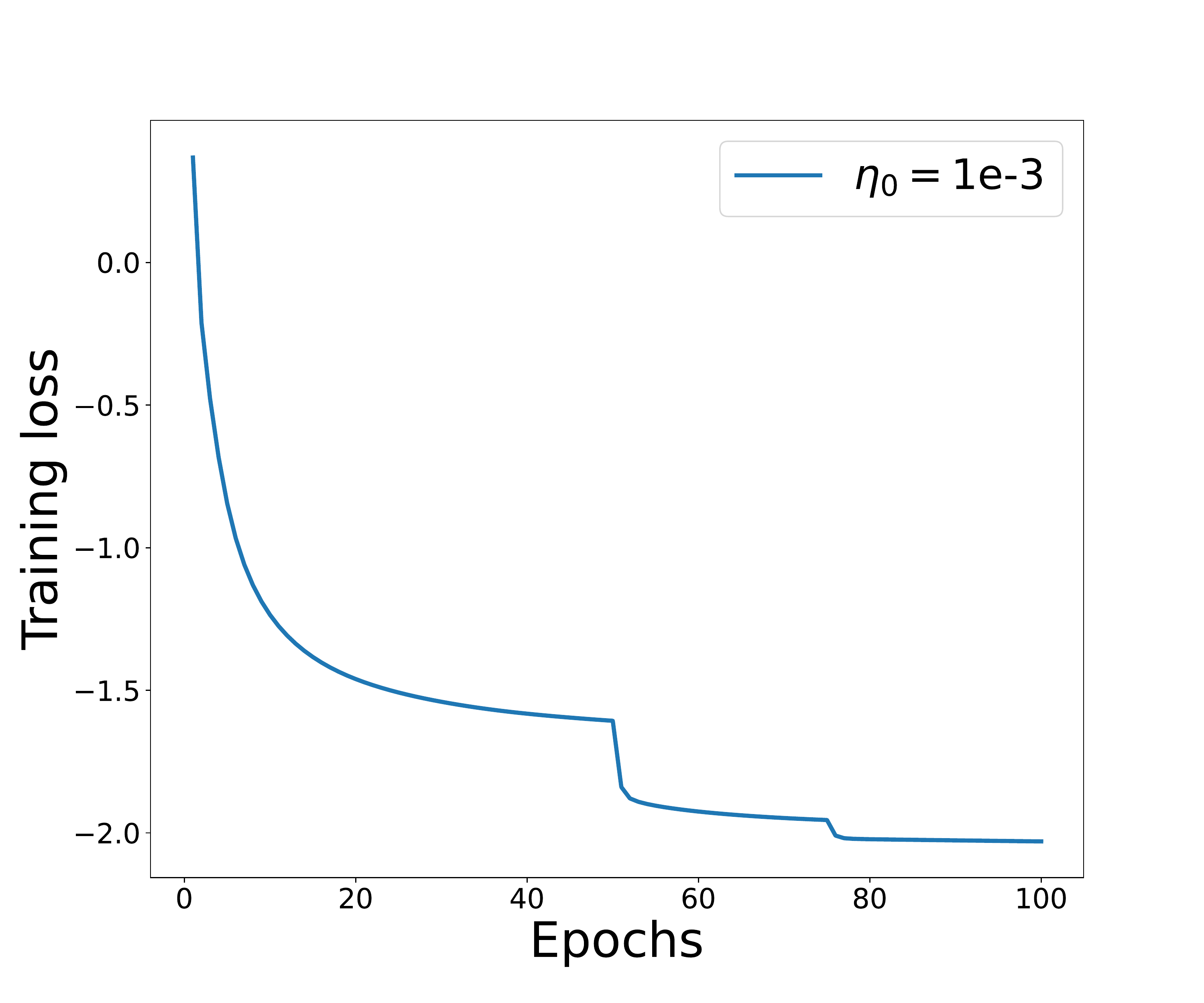}}
\subfigure[U(0.6), $c=10$]{\includegraphics[scale=0.1]{figures/aldr-aloi-uni06-cvg-top-m1.pdf}}

\subfigure[U(0.9), $c=0.1$]{\includegraphics[scale=0.1]{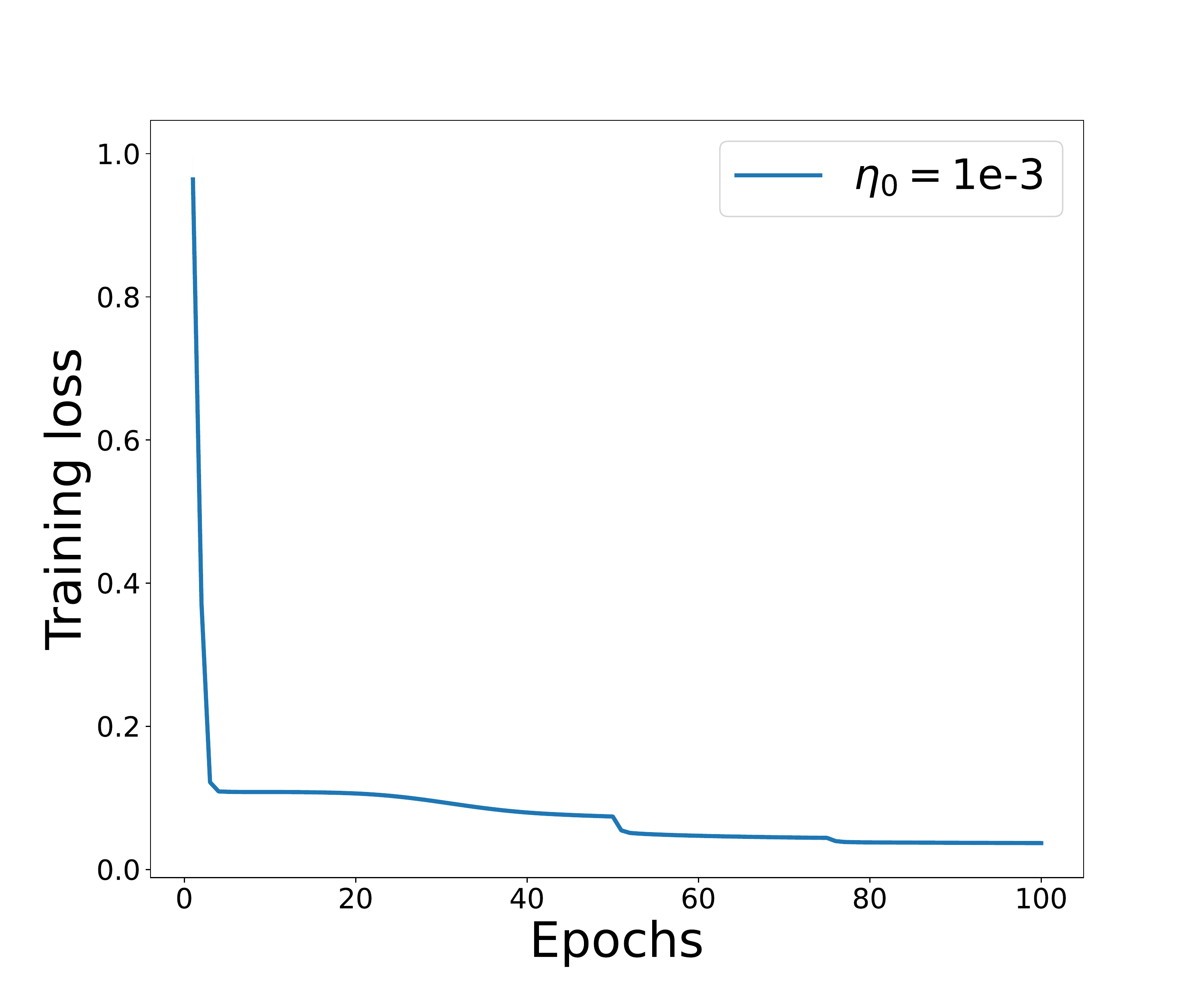}}
\subfigure[U(0.9), $c=1$]{\includegraphics[scale=0.1]{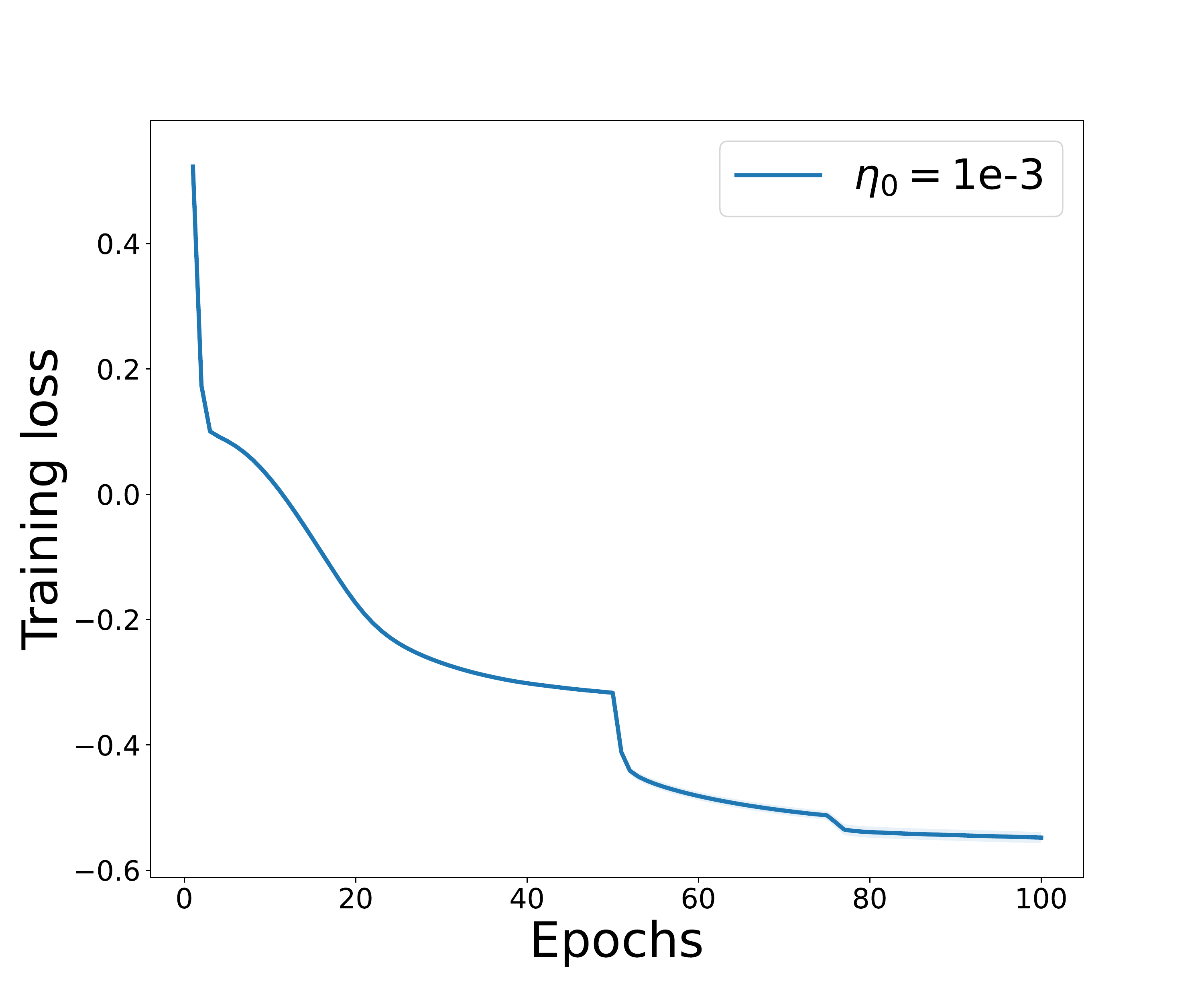}}
\subfigure[U(0.9), $c=10$]{\includegraphics[scale=0.1]{figures/aldr-aloi-uni09-cvg-top-m1.pdf}}
\caption{Training loss convergence for ALDR-KL (Algorithm 1) on ALOI dataset }\label{fig:aldr-aloi-cvg}
\end{figure}

\begin{figure}[p]
\centering

\subfigure[CD(0.1), $c=0.1$]{\includegraphics[scale=0.1]{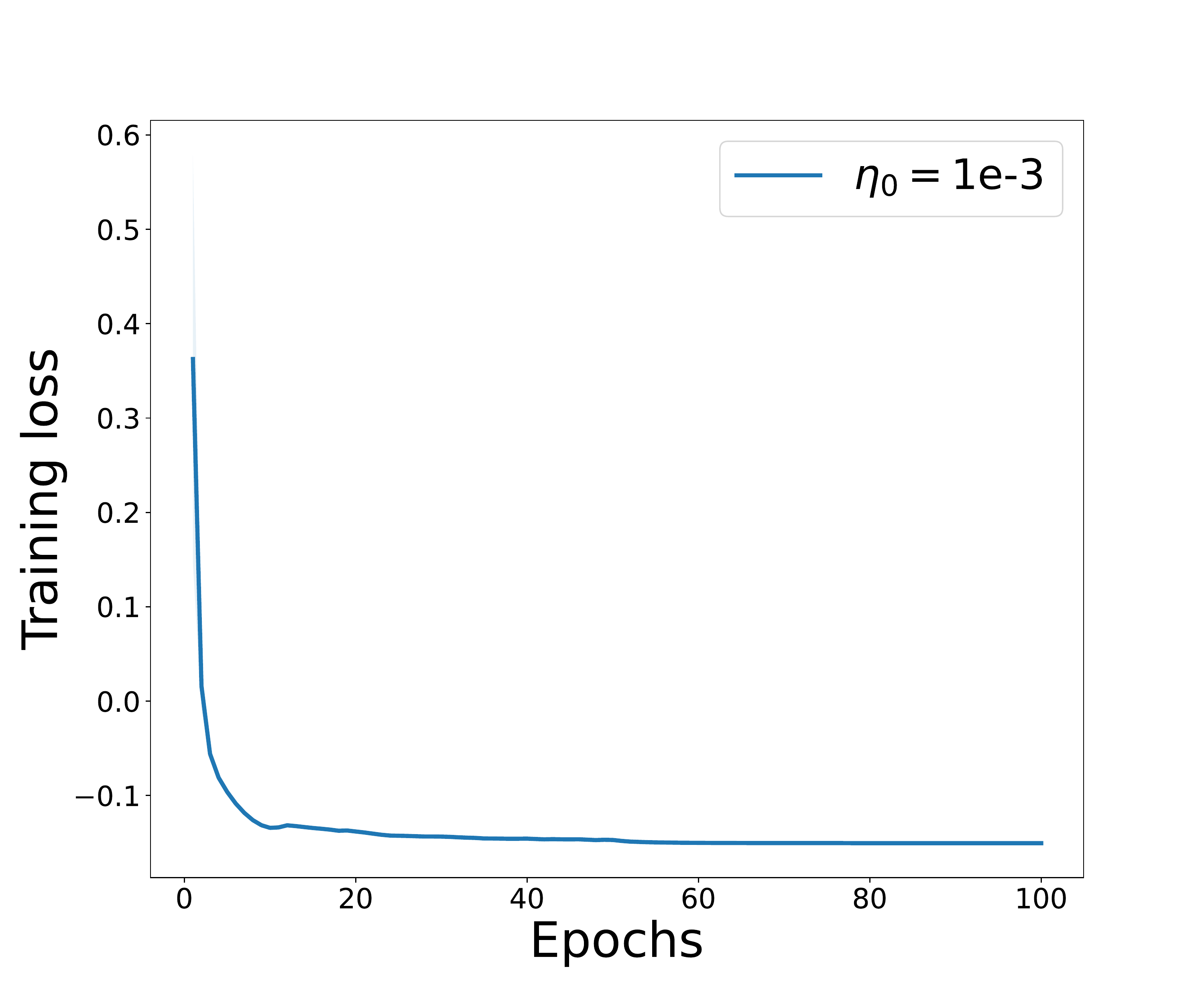}}
\subfigure[CD(0.1), $c=1$]{\includegraphics[scale=0.1]{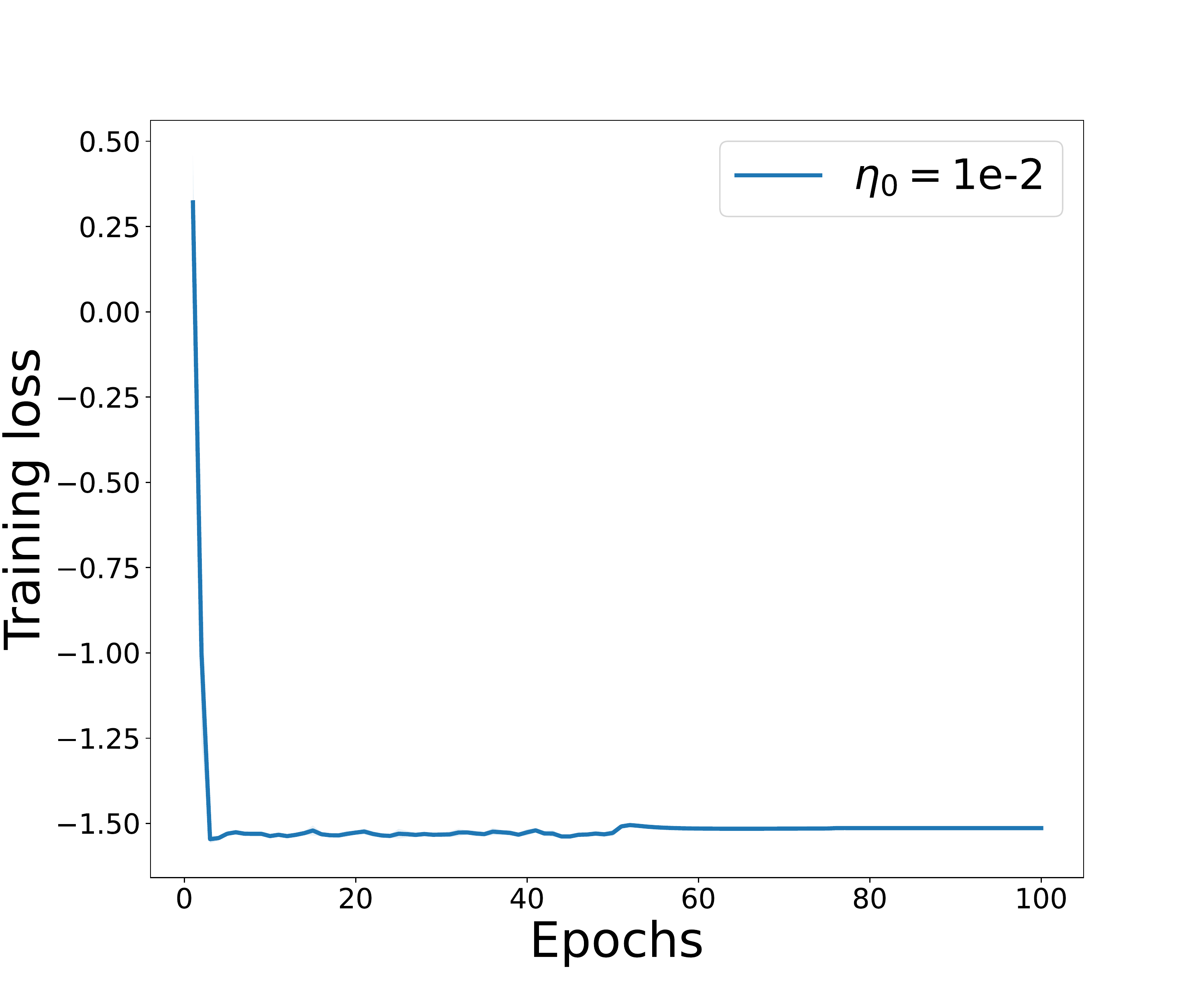}}
\subfigure[CD(0.1), $c=10$]{\includegraphics[scale=0.1]{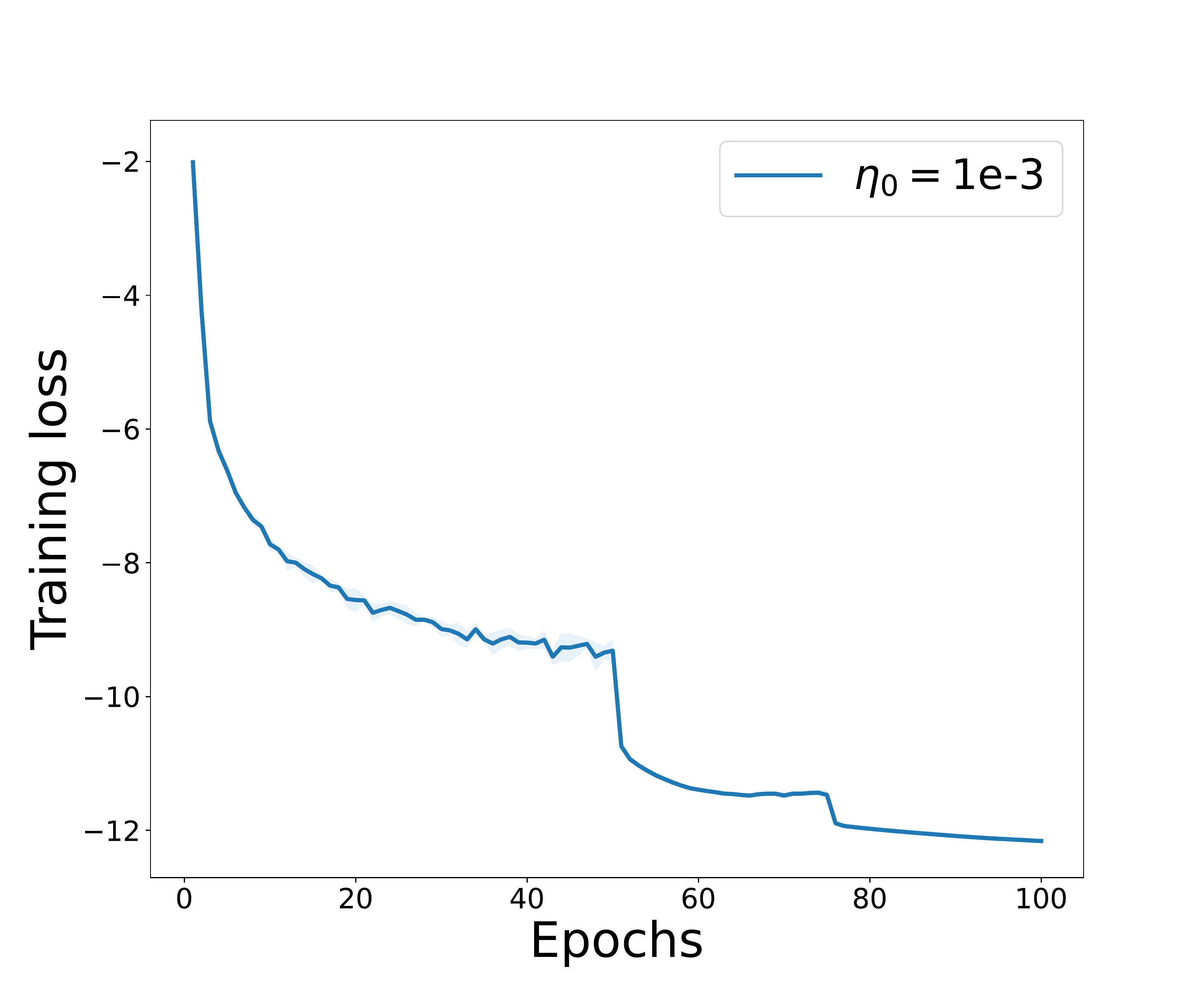}}

\subfigure[CD(0.3), $c=0.1$]{\includegraphics[scale=0.1]{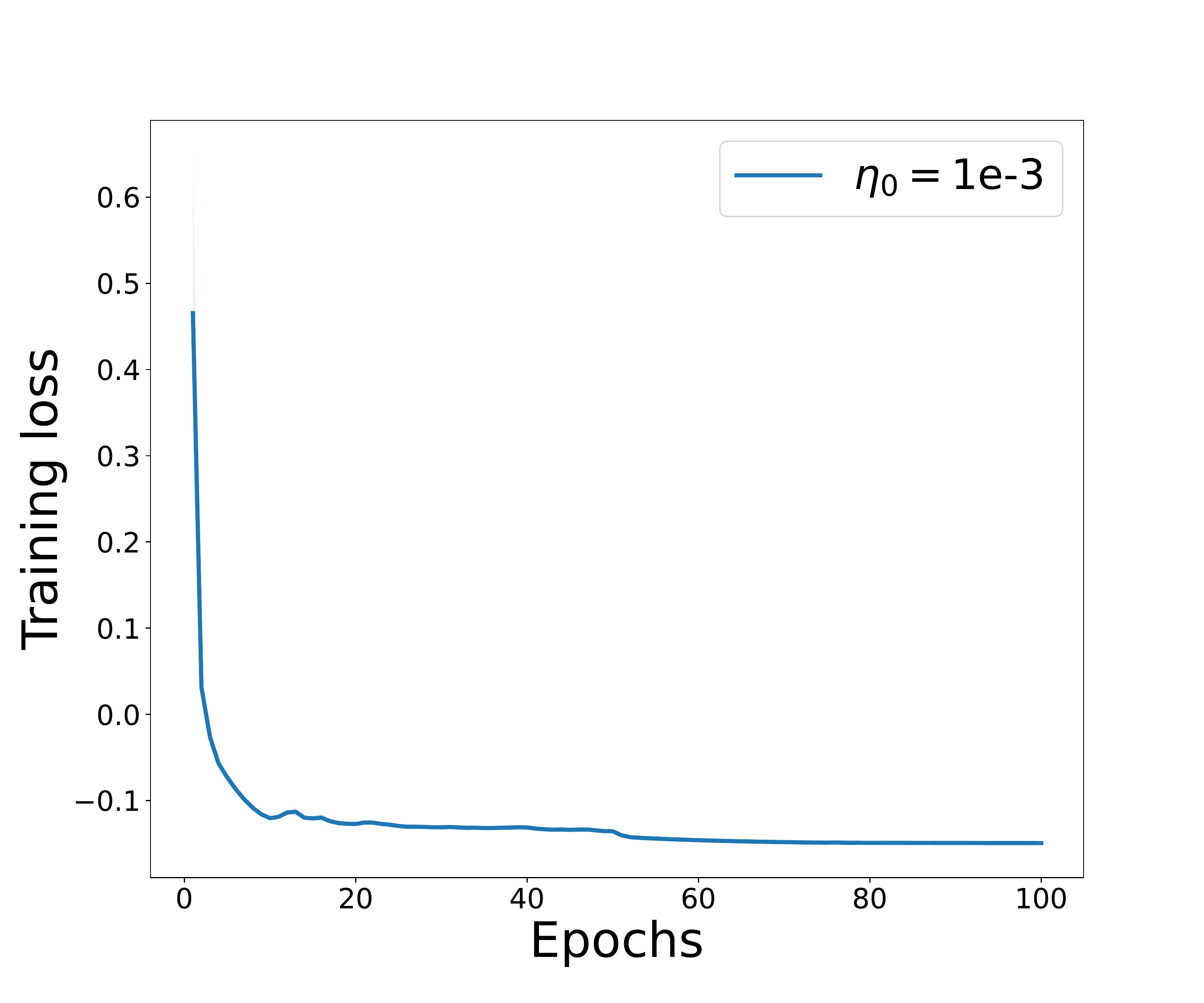}}
\subfigure[CD(0.3), $c=1$]{\includegraphics[scale=0.1]{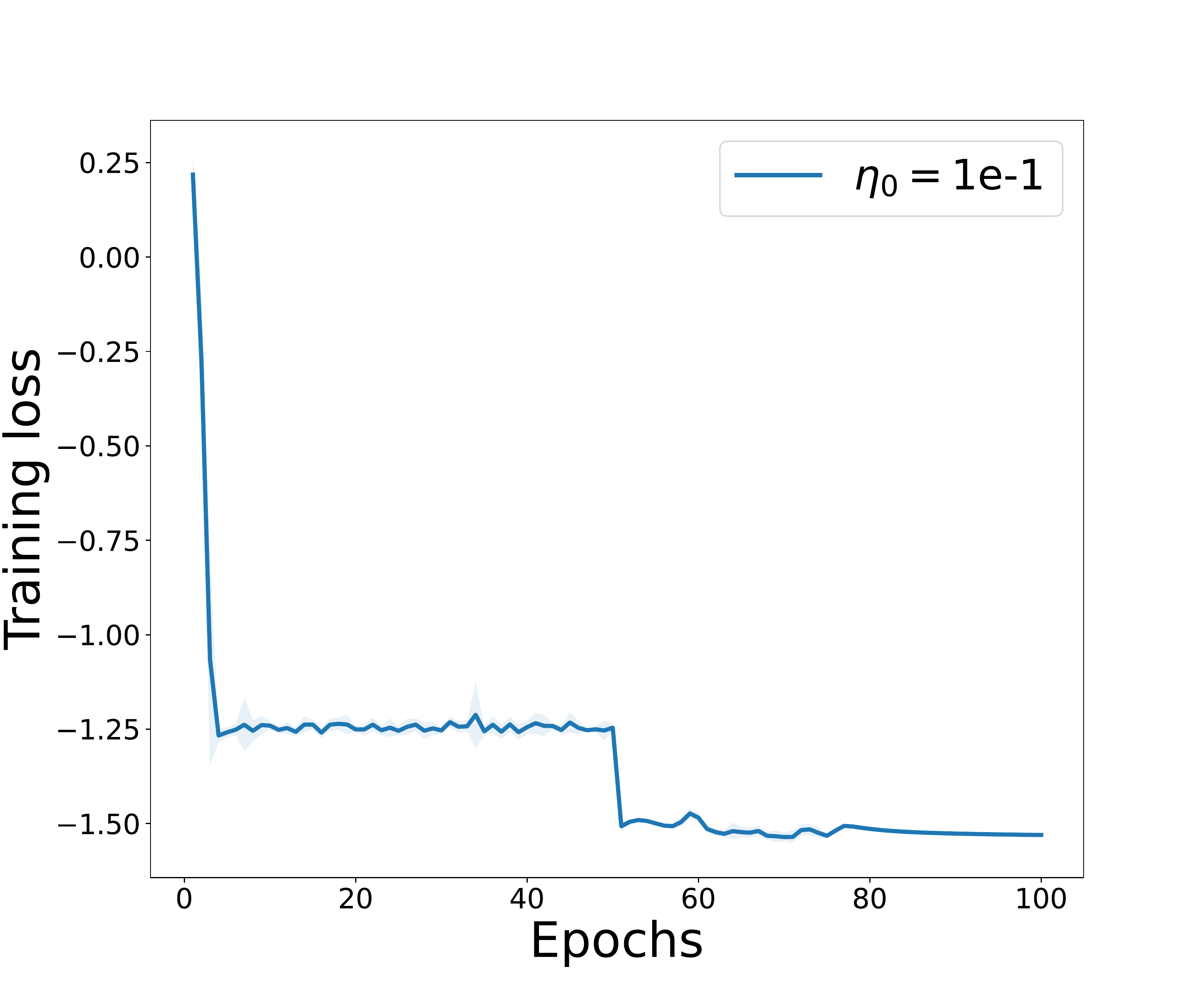}}
\subfigure[CD(0.3), $c=10$]{\includegraphics[scale=0.1]{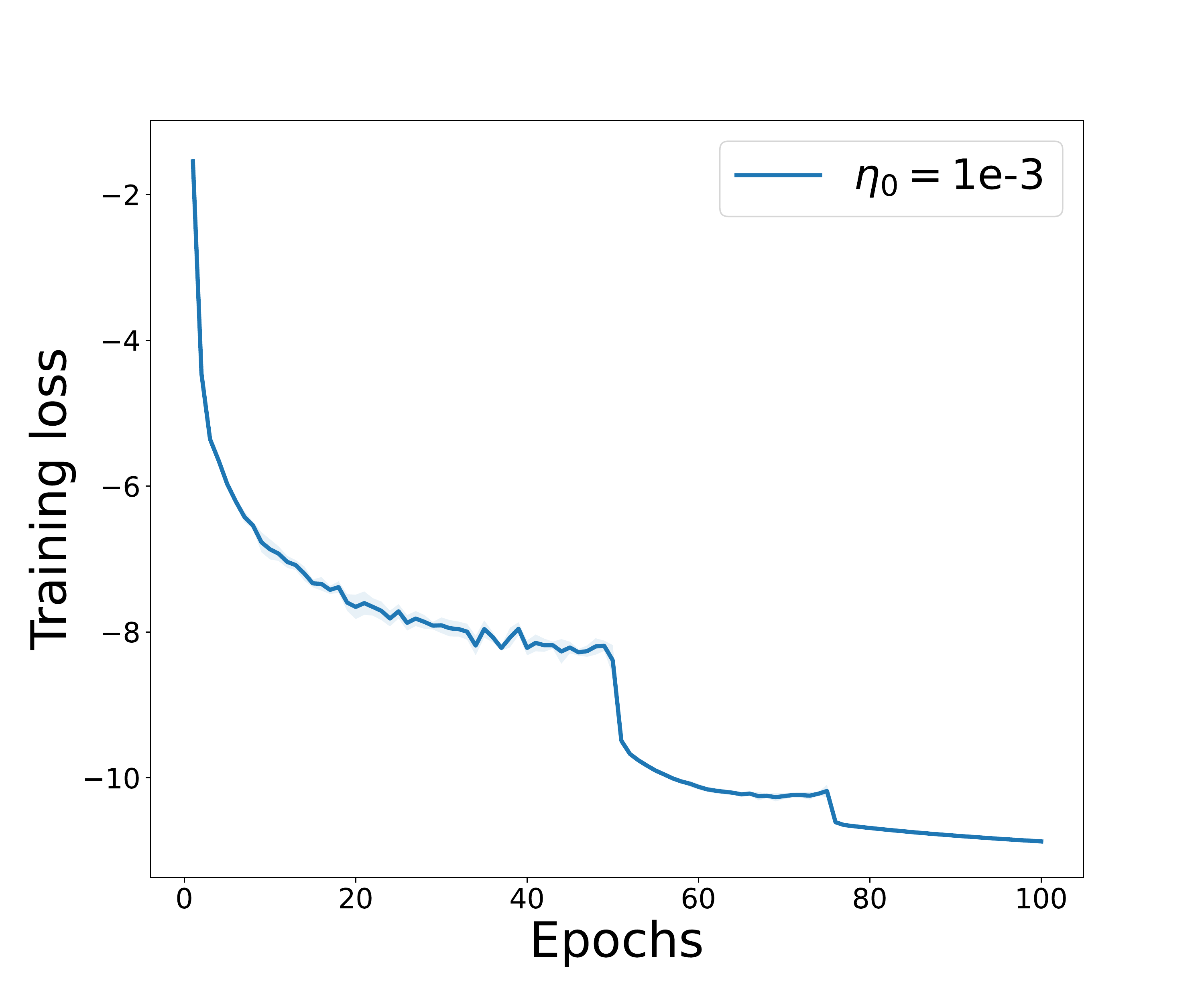}}

\subfigure[CD(0.5), $c=0.1$]{\includegraphics[scale=0.1]{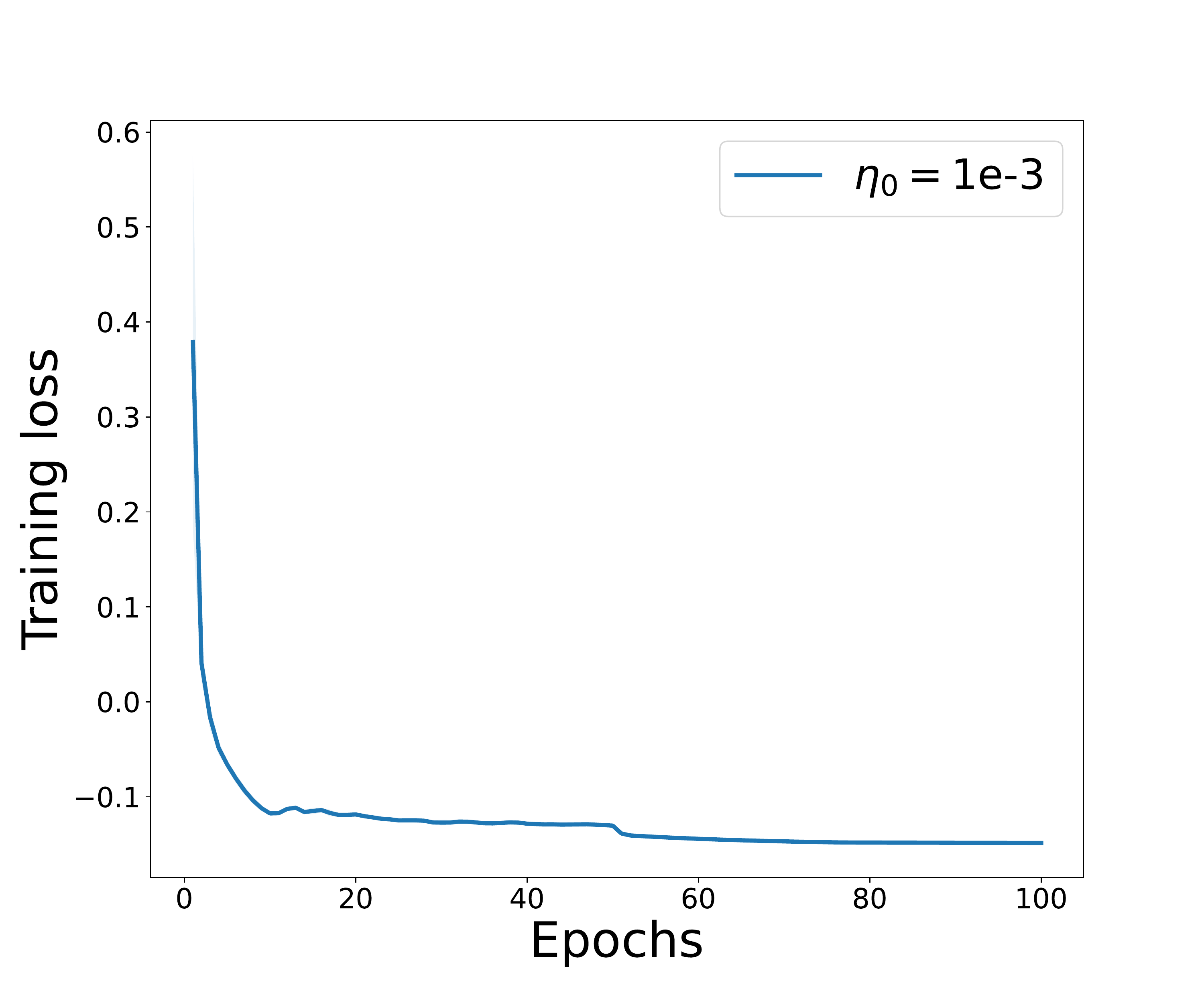}}
\subfigure[CD(0.5), $c=1$]{\includegraphics[scale=0.1]{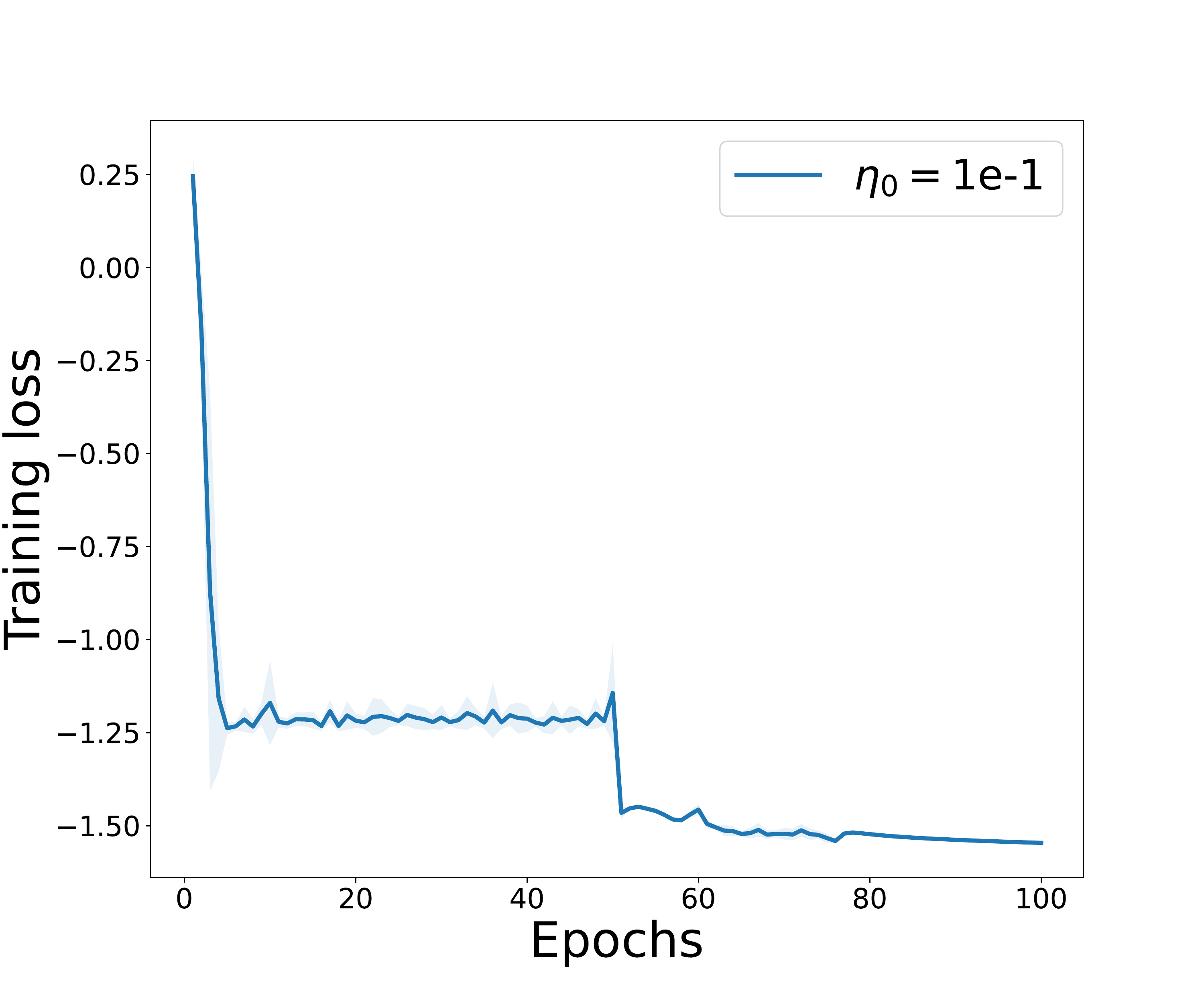}}
\subfigure[CD(0.5), $c=10$]{\includegraphics[scale=0.1]{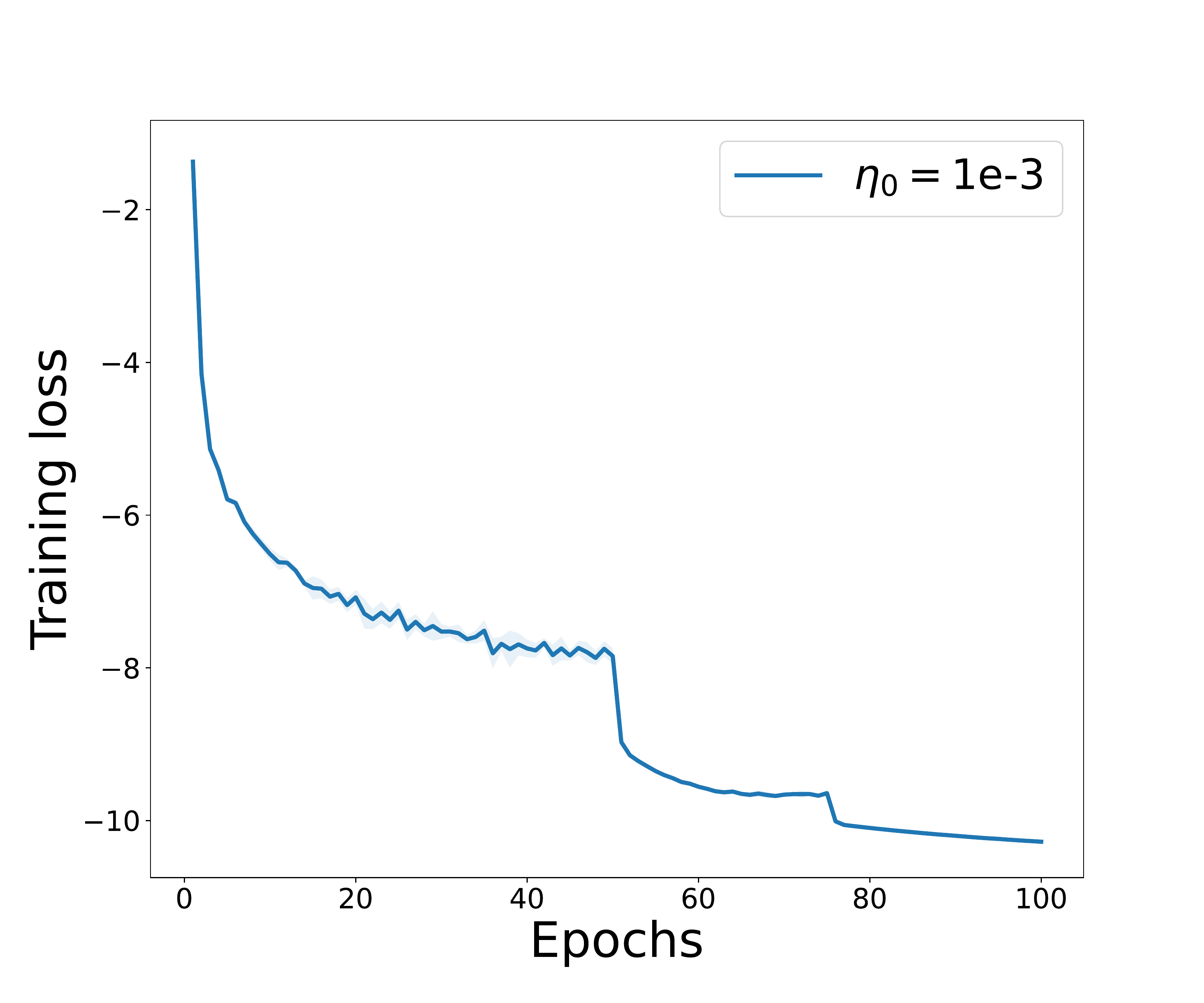}}

\subfigure[U(0.3), $c=0.1$]{\includegraphics[scale=0.1]{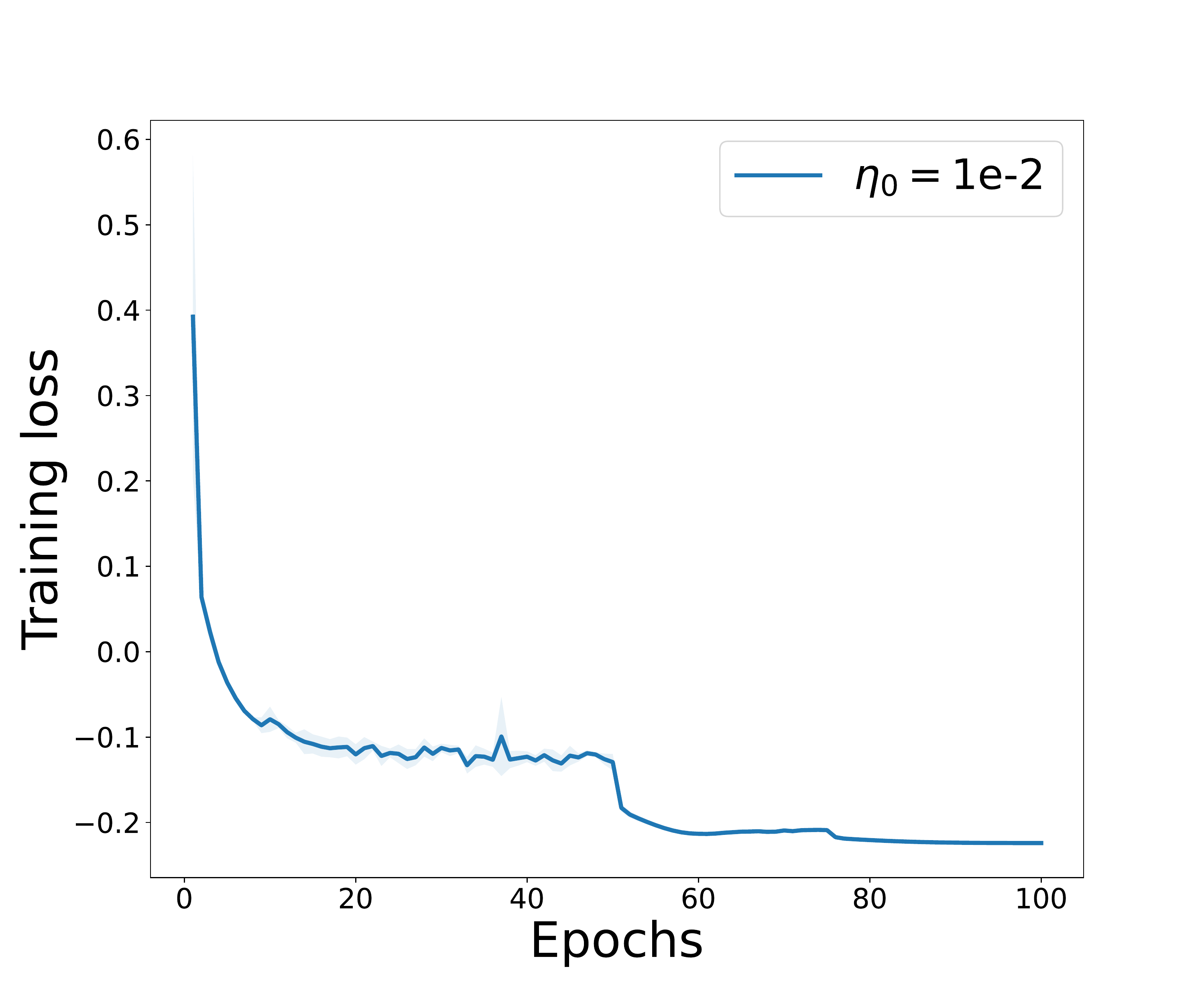}}
\subfigure[U(0.3), $c=1$]{\includegraphics[scale=0.1]{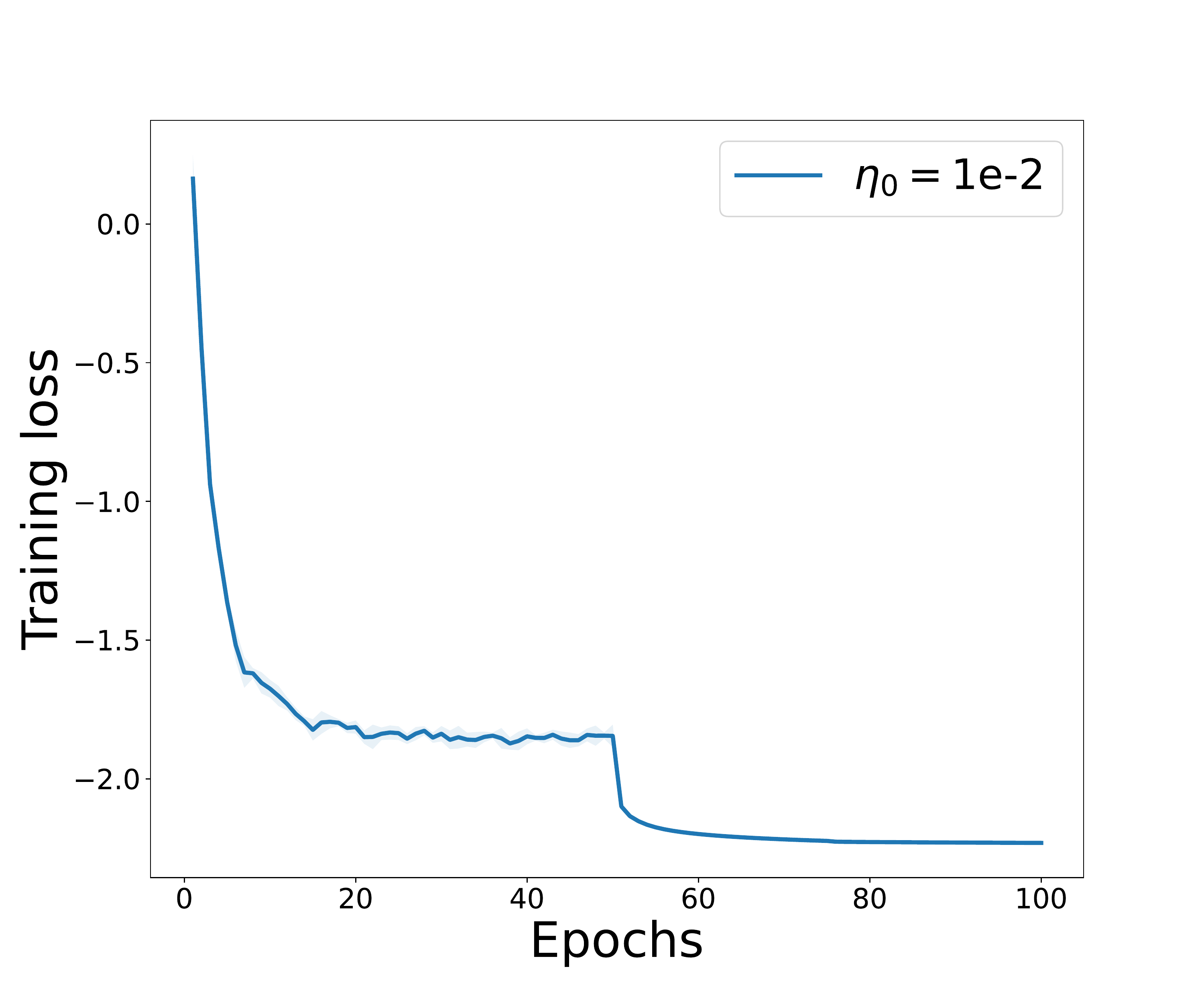}}
\subfigure[U(0.3), $c=10$]{\includegraphics[scale=0.1]{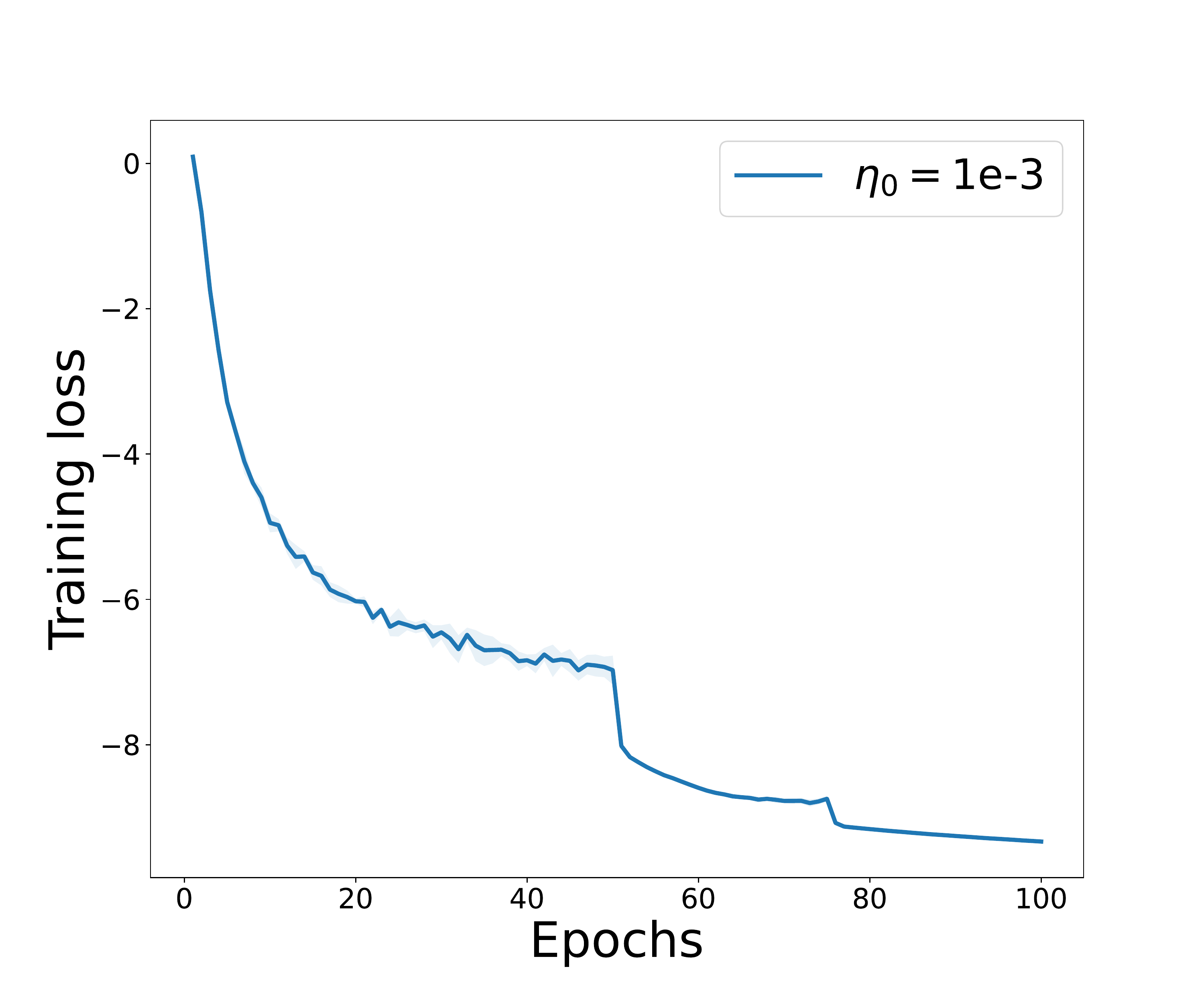}}

\subfigure[U(0.6), $c=0.1$]{\includegraphics[scale=0.1]{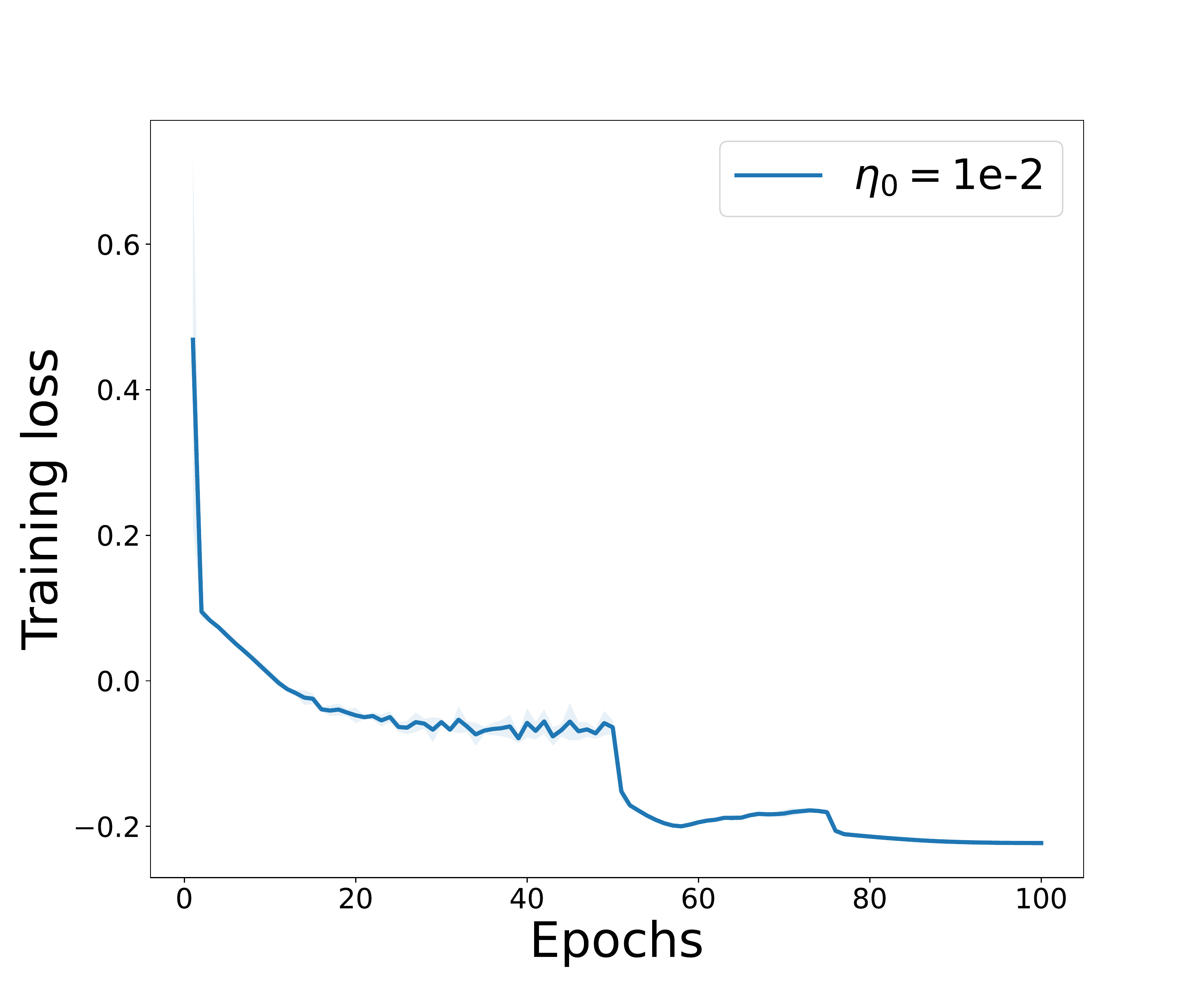}}
\subfigure[U(0.6), $c=1$]{\includegraphics[scale=0.1]{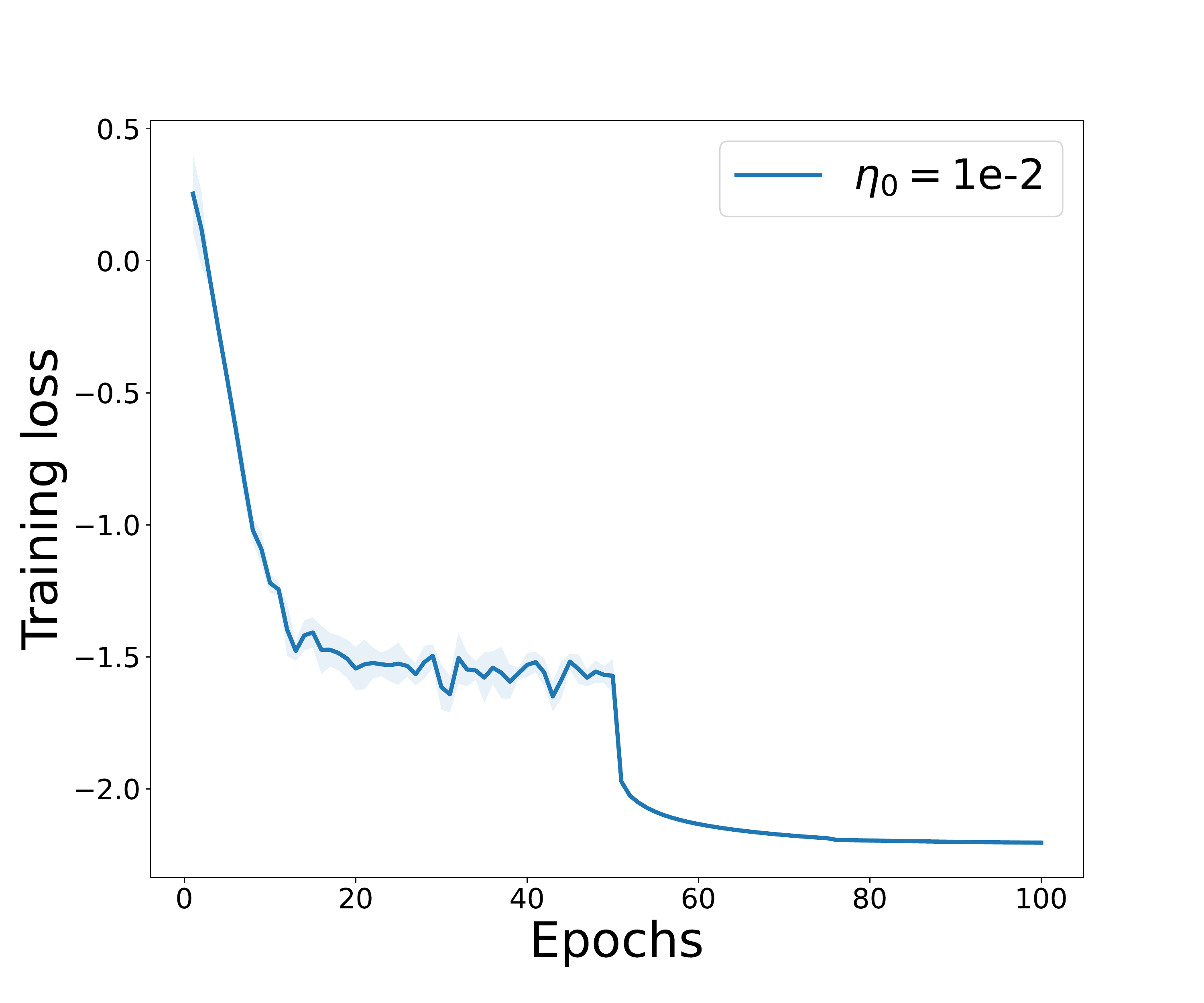}}
\subfigure[U(0.6), $c=10$]{\includegraphics[scale=0.1]{figures/aldr-news20-uni06-cvg-top-m1.pdf}}

\subfigure[U(0.9), $c=0.1$]{\includegraphics[scale=0.1]{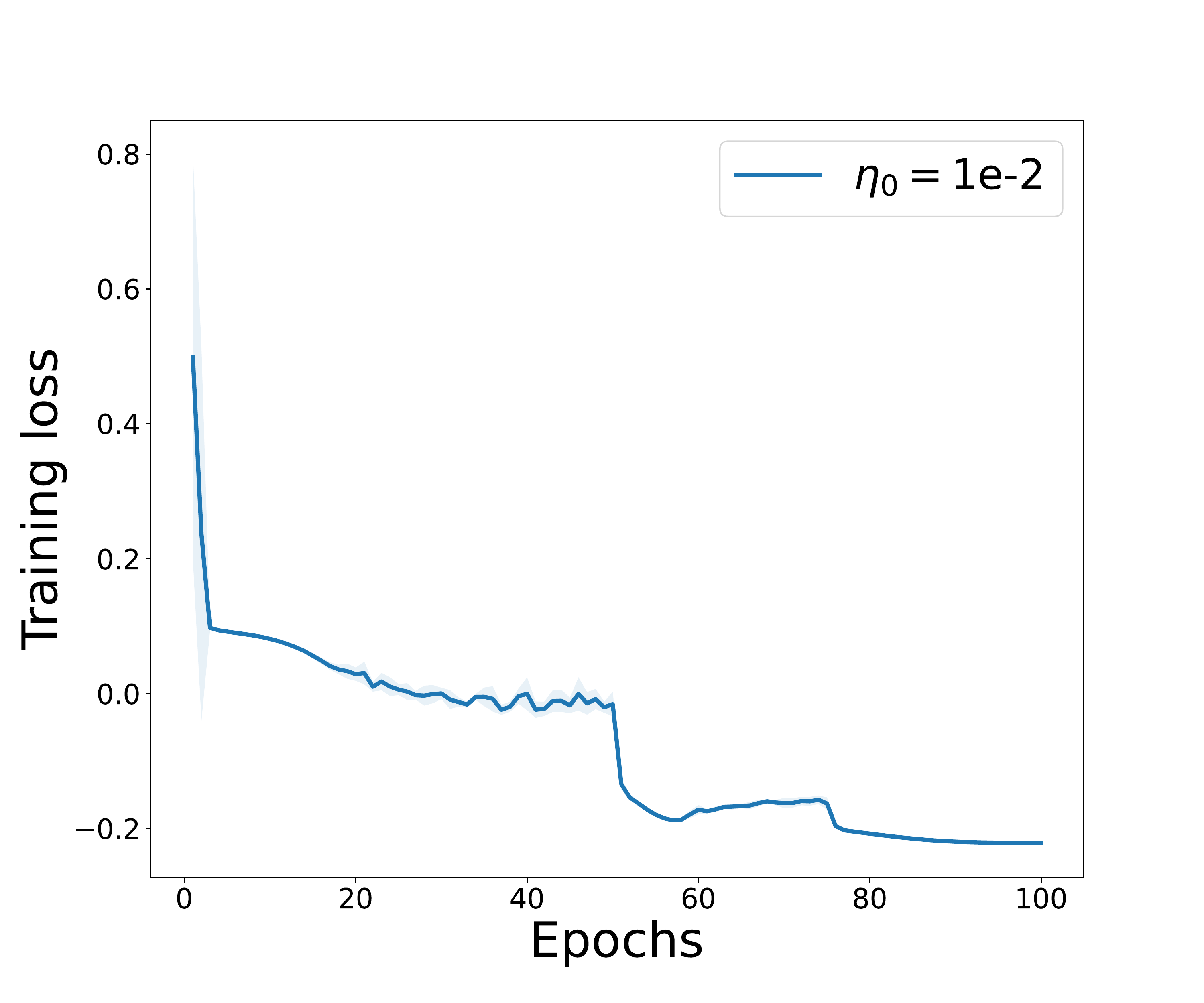}}
\subfigure[U(0.9), $c=1$]{\includegraphics[scale=0.1]{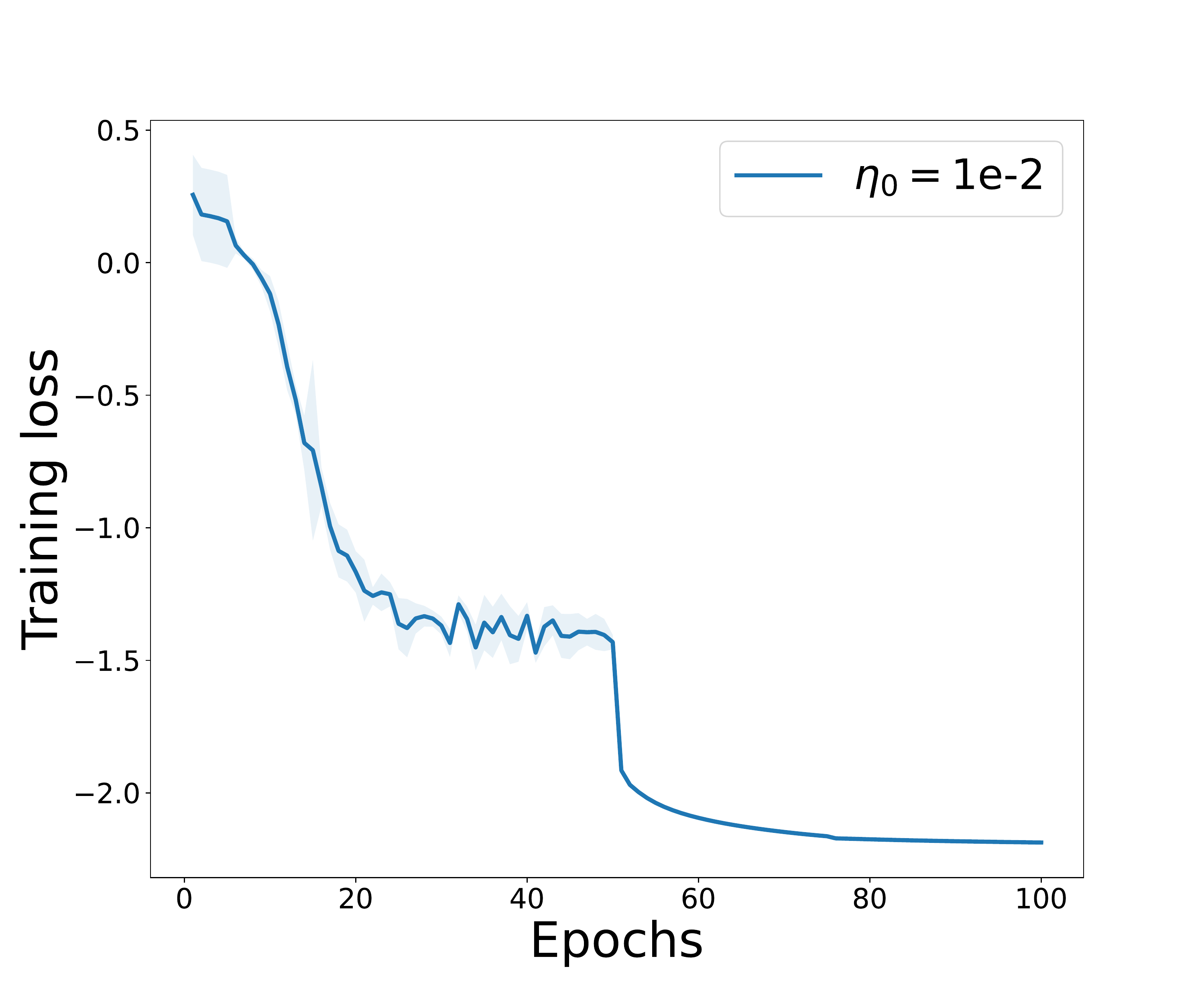}}
\subfigure[U(0.9), $c=10$]{\includegraphics[scale=0.1]{figures/aldr-news20-uni09-cvg-top-m1.pdf}}
\caption{Training loss convergence for ALDR-KL (Algorithm 1) on News20 dataset }\label{fig:aldr-news20-cvg}
\end{figure}

\begin{figure}[p]
\centering

\subfigure[CD(0.1), $c=0.1$]{\includegraphics[scale=0.1]{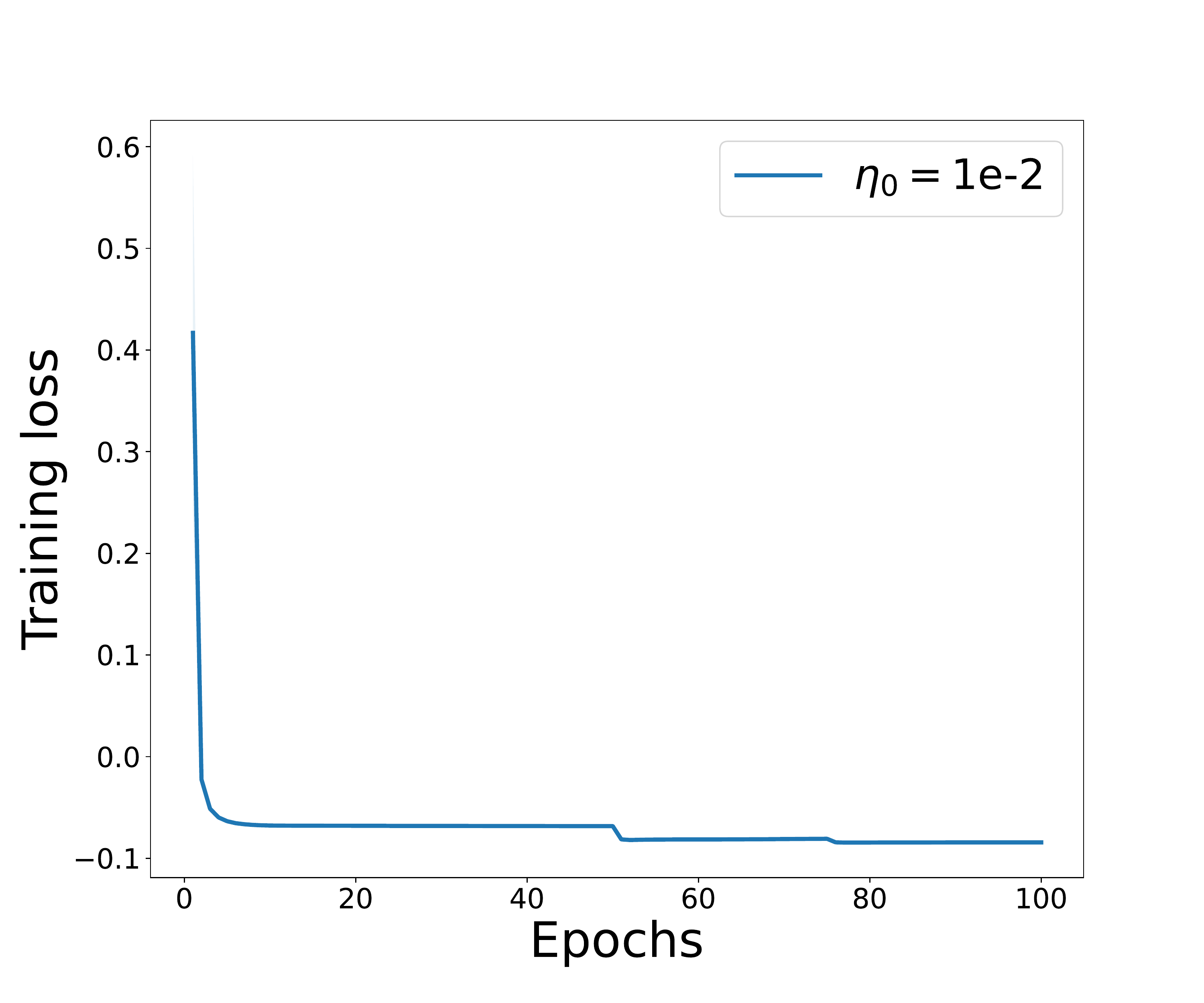}}
\subfigure[CD(0.1), $c=1$]{\includegraphics[scale=0.1]{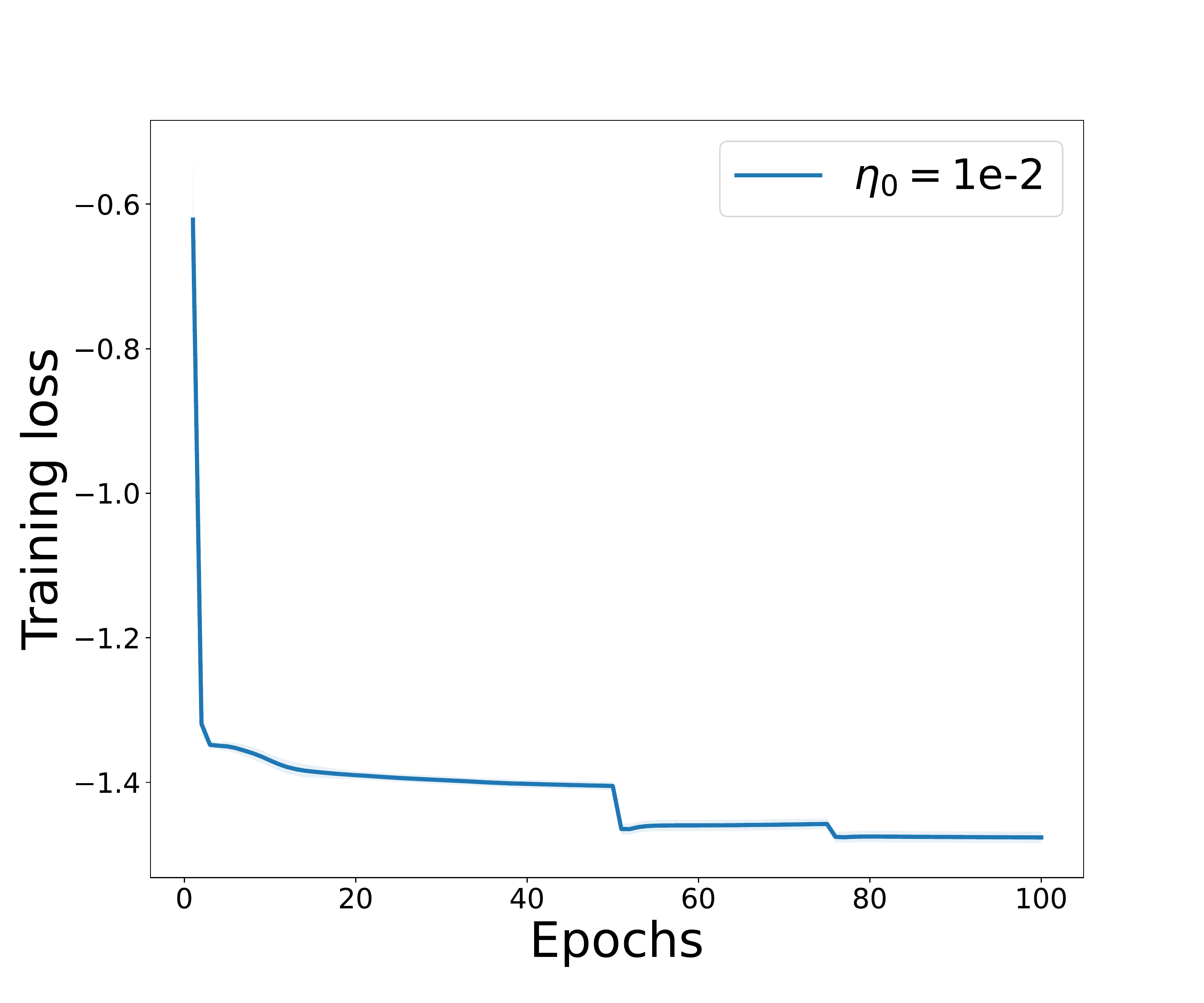}}
\subfigure[CD(0.1), $c=10$]{\includegraphics[scale=0.1]{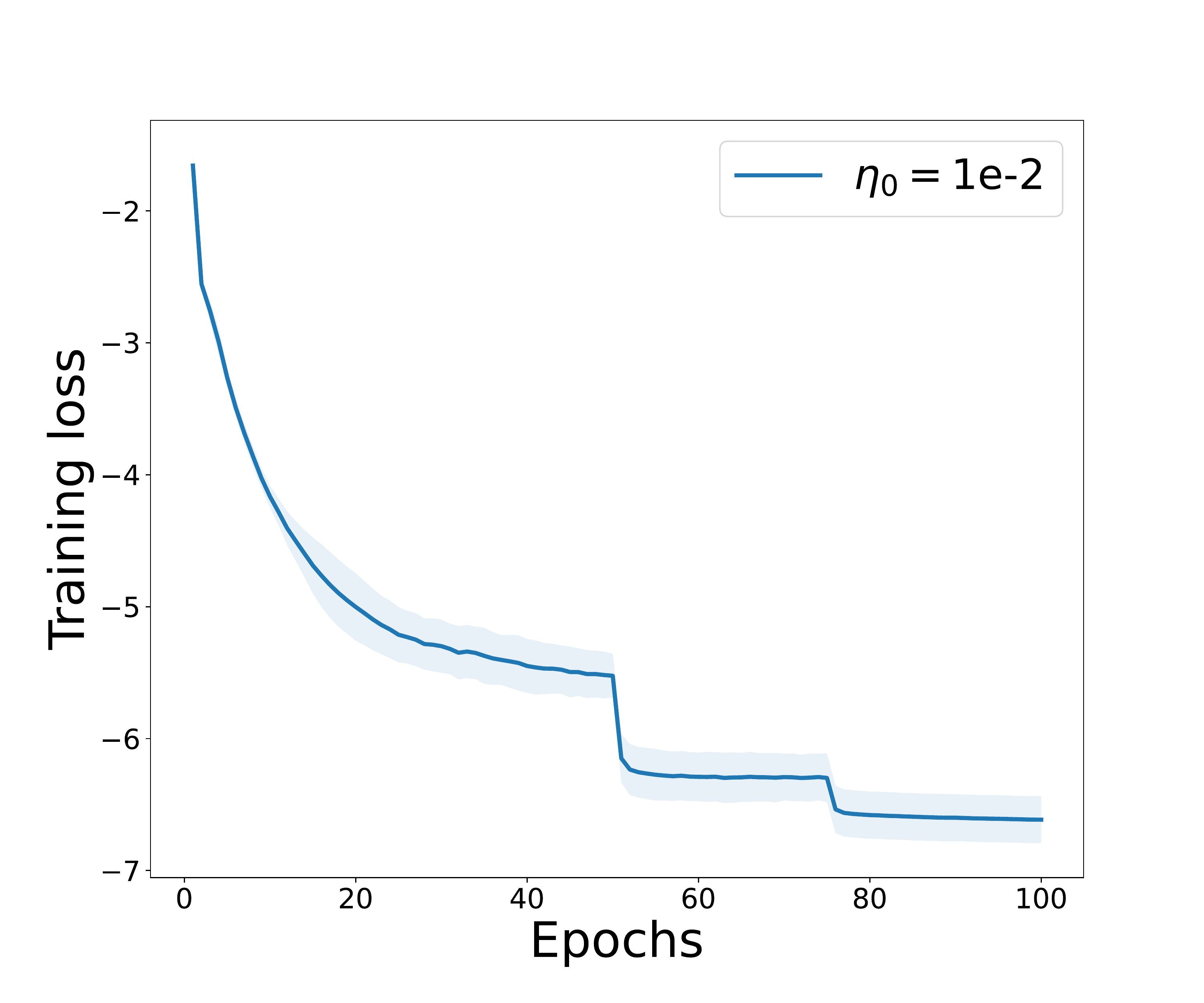}}

\subfigure[CD(0.3), $c=0.1$]{\includegraphics[scale=0.1]{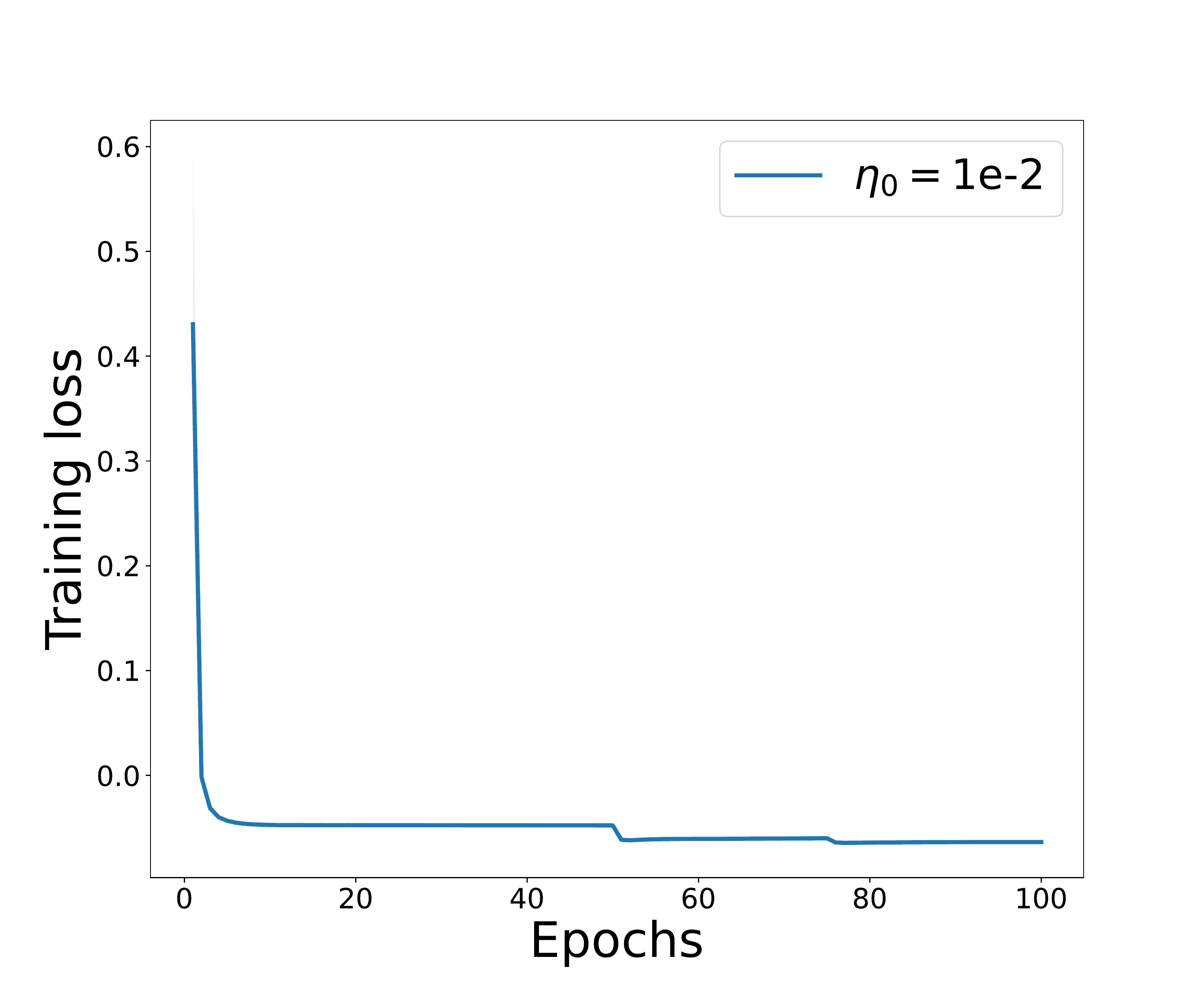}}
\subfigure[CD(0.3), $c=1$]{\includegraphics[scale=0.1]{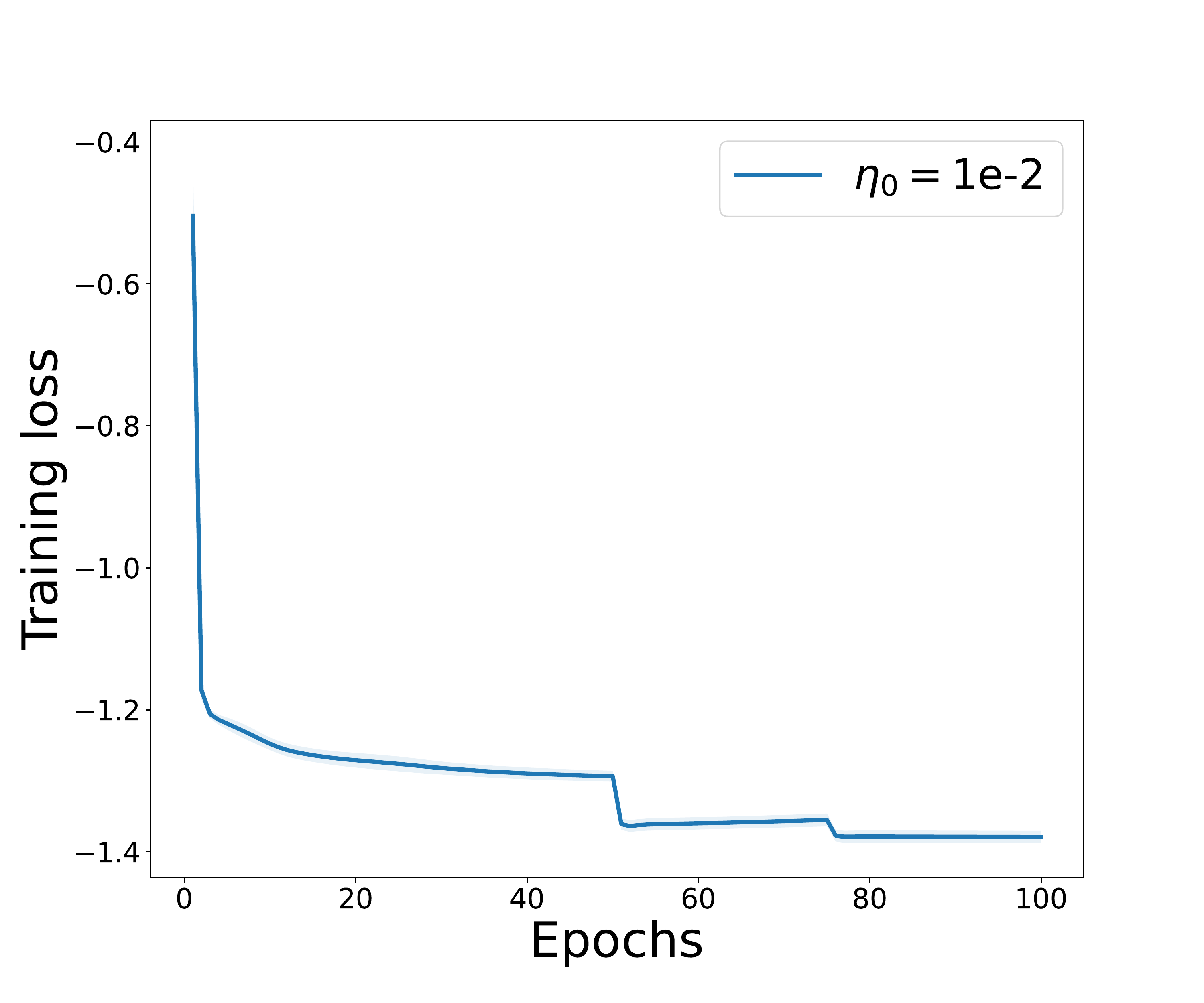}}
\subfigure[CD(0.3), $c=10$]{\includegraphics[scale=0.1]{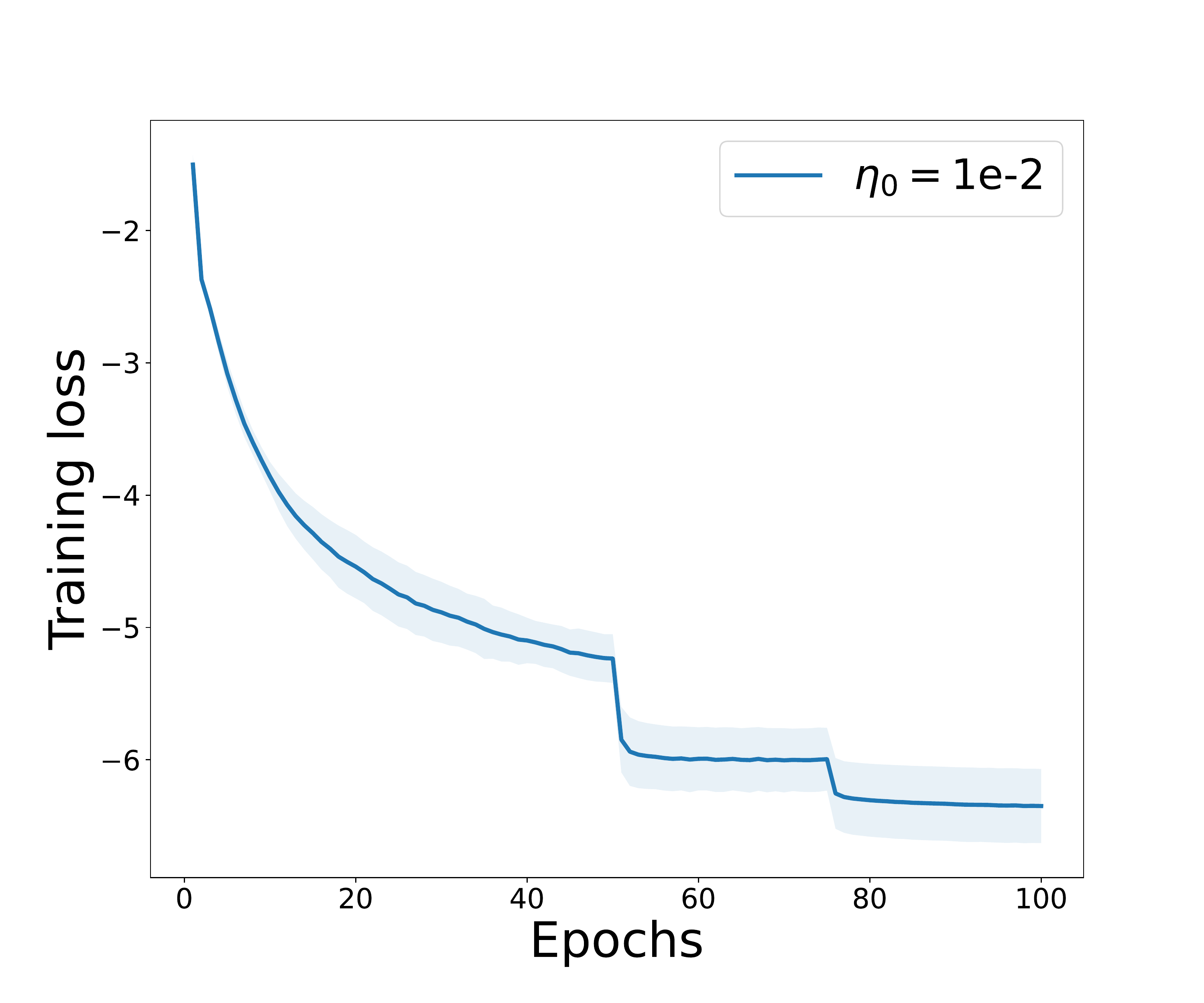}}

\subfigure[CD(0.5), $c=0.1$]{\includegraphics[scale=0.1]{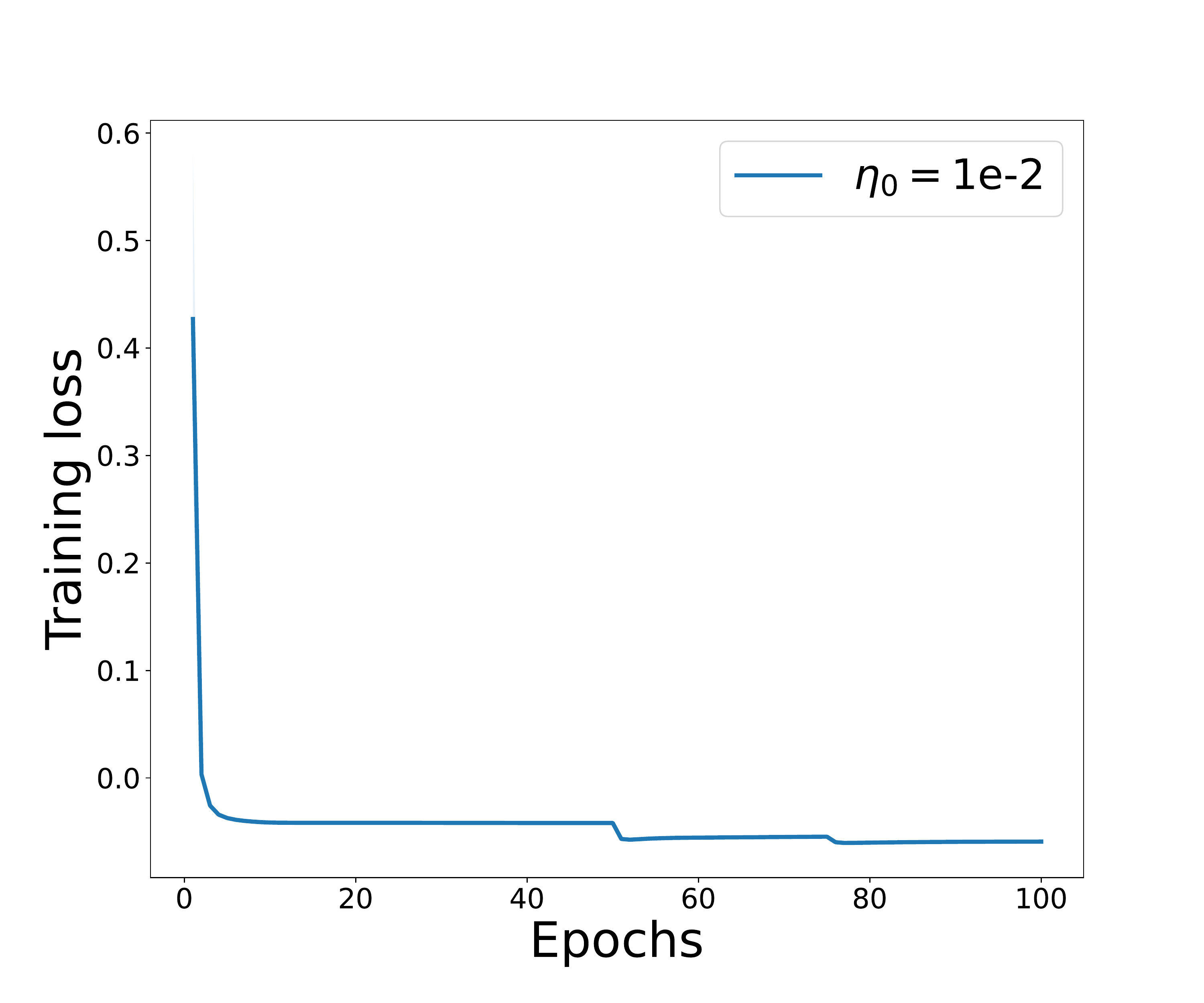}}
\subfigure[CD(0.5), $c=1$]{\includegraphics[scale=0.1]{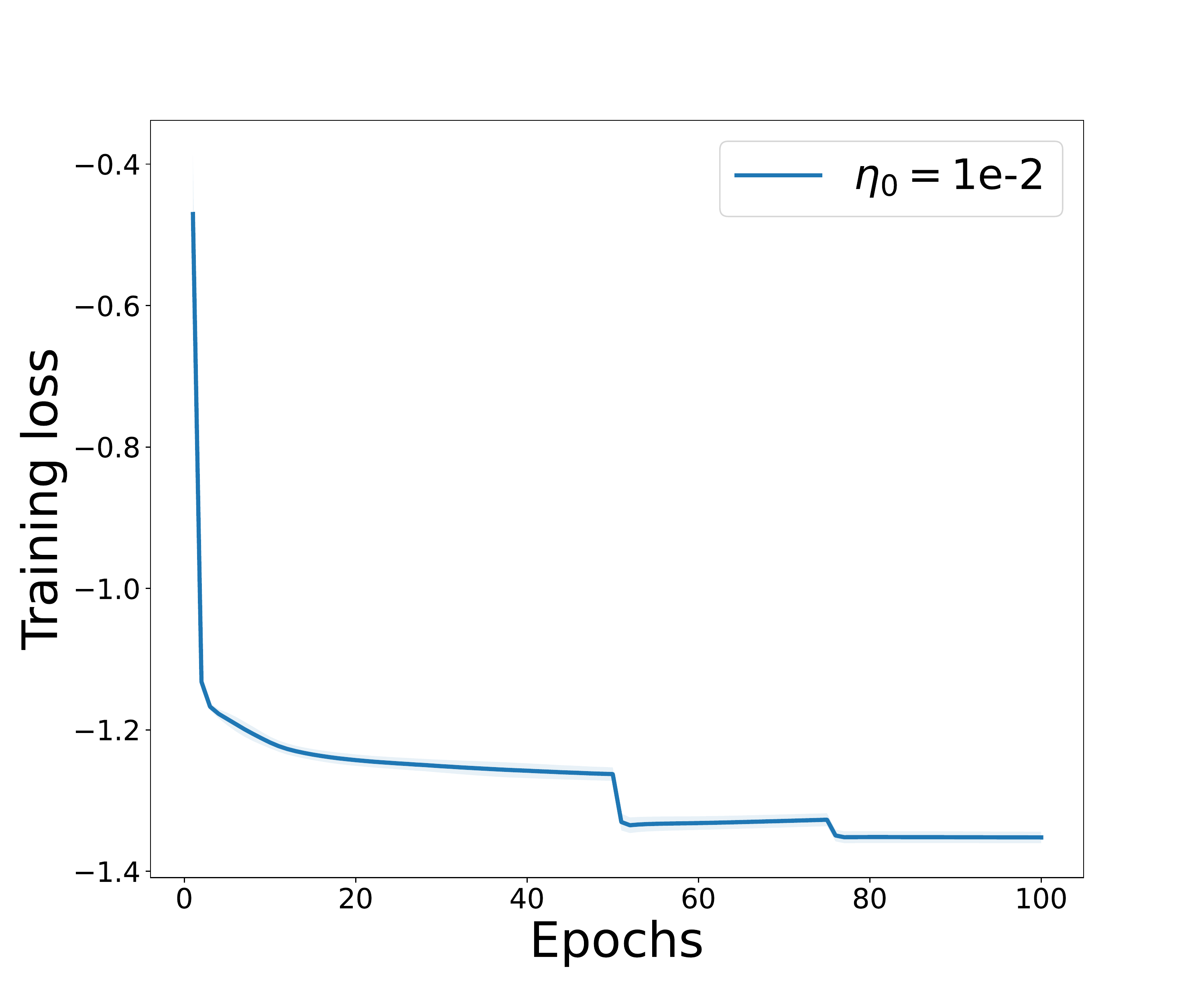}}
\subfigure[CD(0.5), $c=10$]{\includegraphics[scale=0.1]{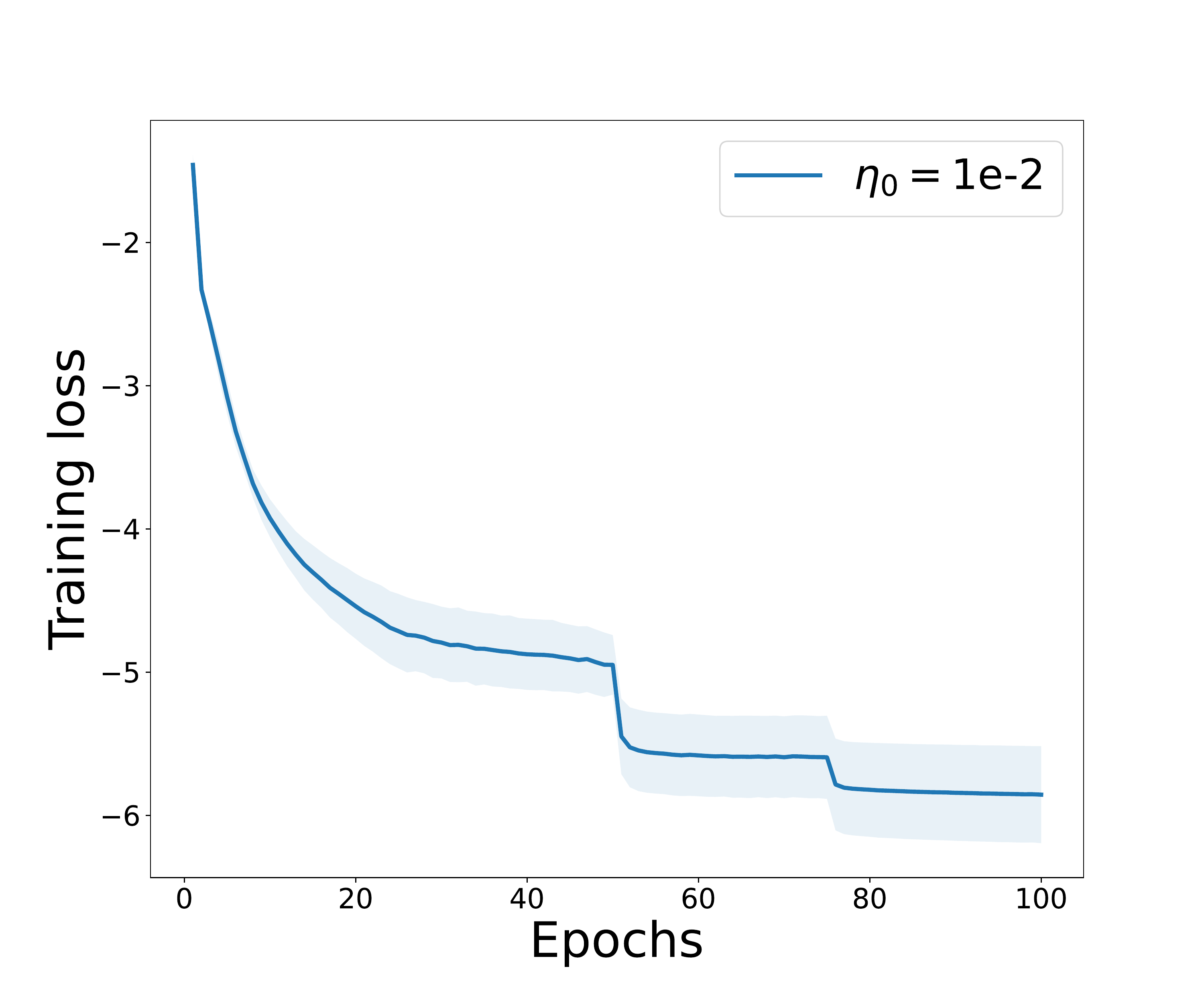}}

\subfigure[U(0.3), $c=0.1$]{\includegraphics[scale=0.1]{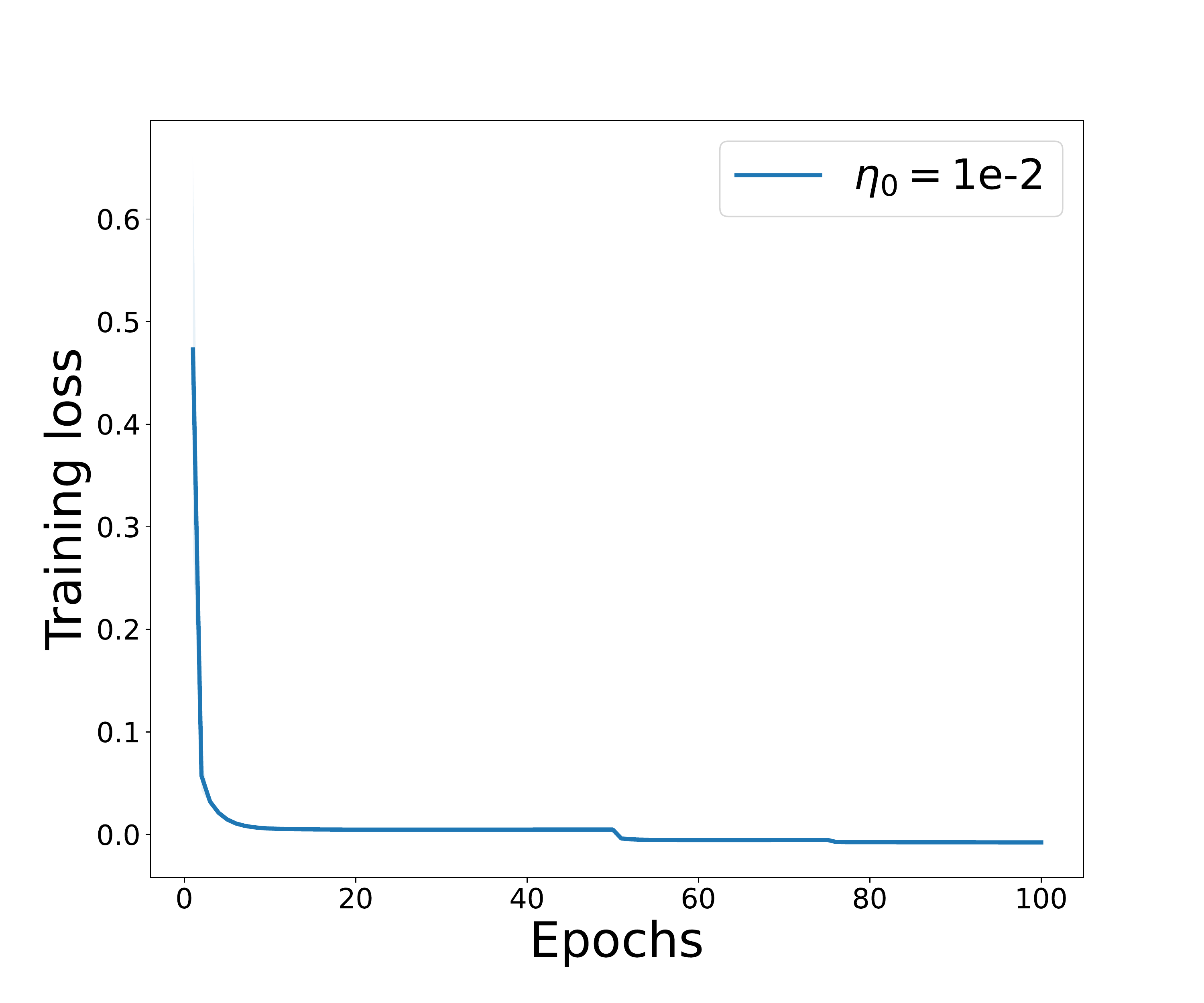}}
\subfigure[U(0.3), $c=1$]{\includegraphics[scale=0.1]{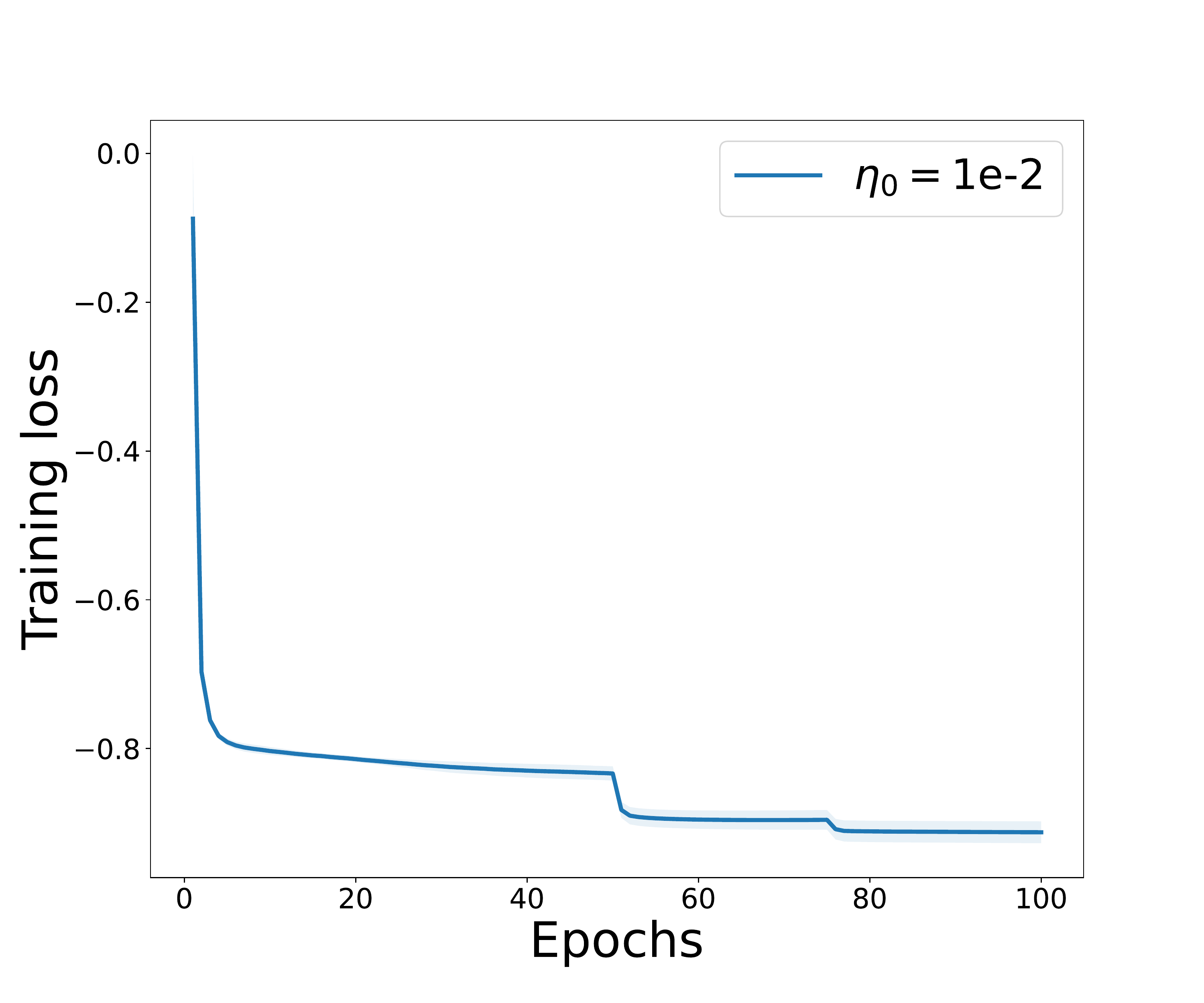}}
\subfigure[U(0.3), $c=10$]{\includegraphics[scale=0.1]{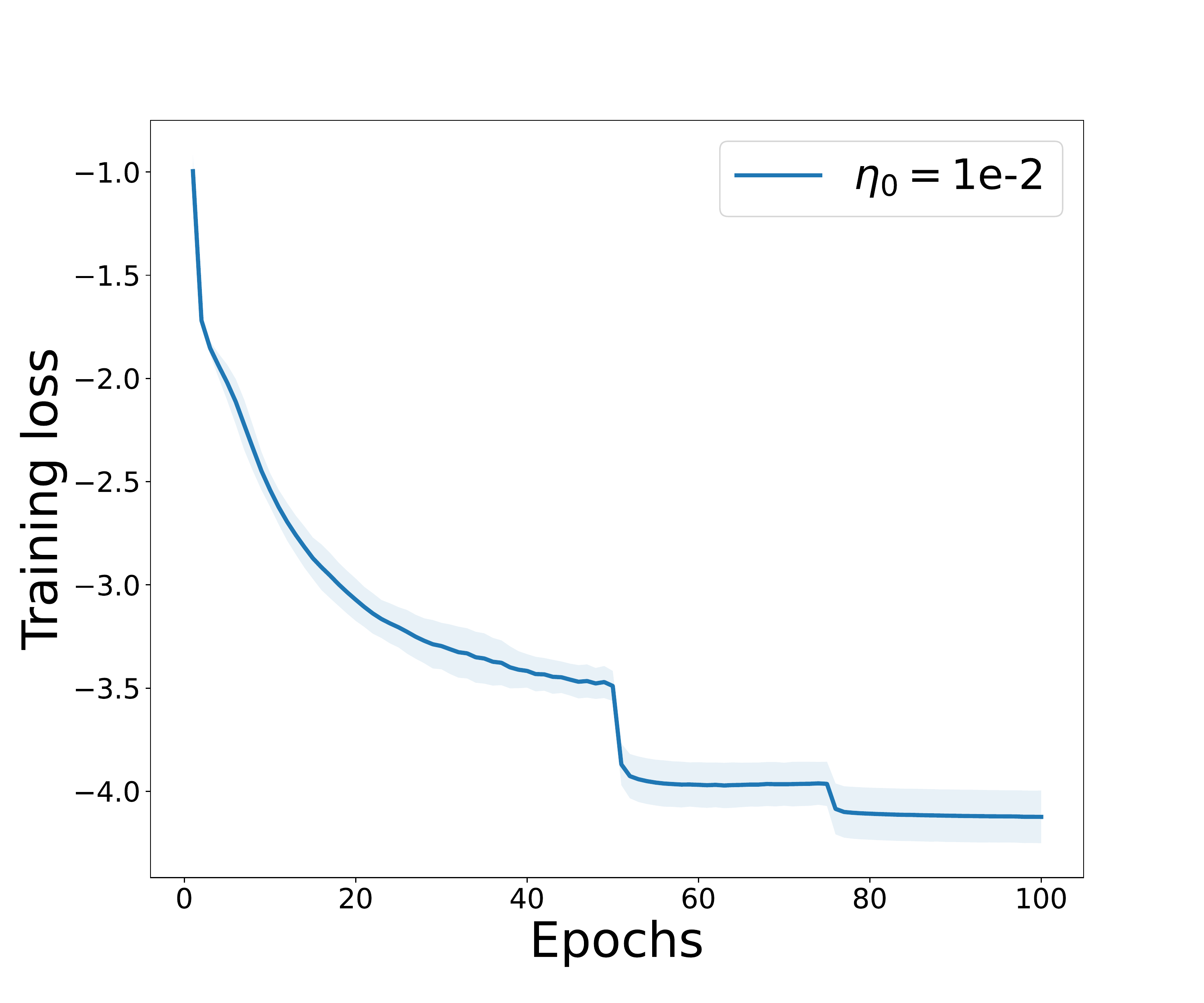}}

\subfigure[U(0.6), $c=0.1$]{\includegraphics[scale=0.1]{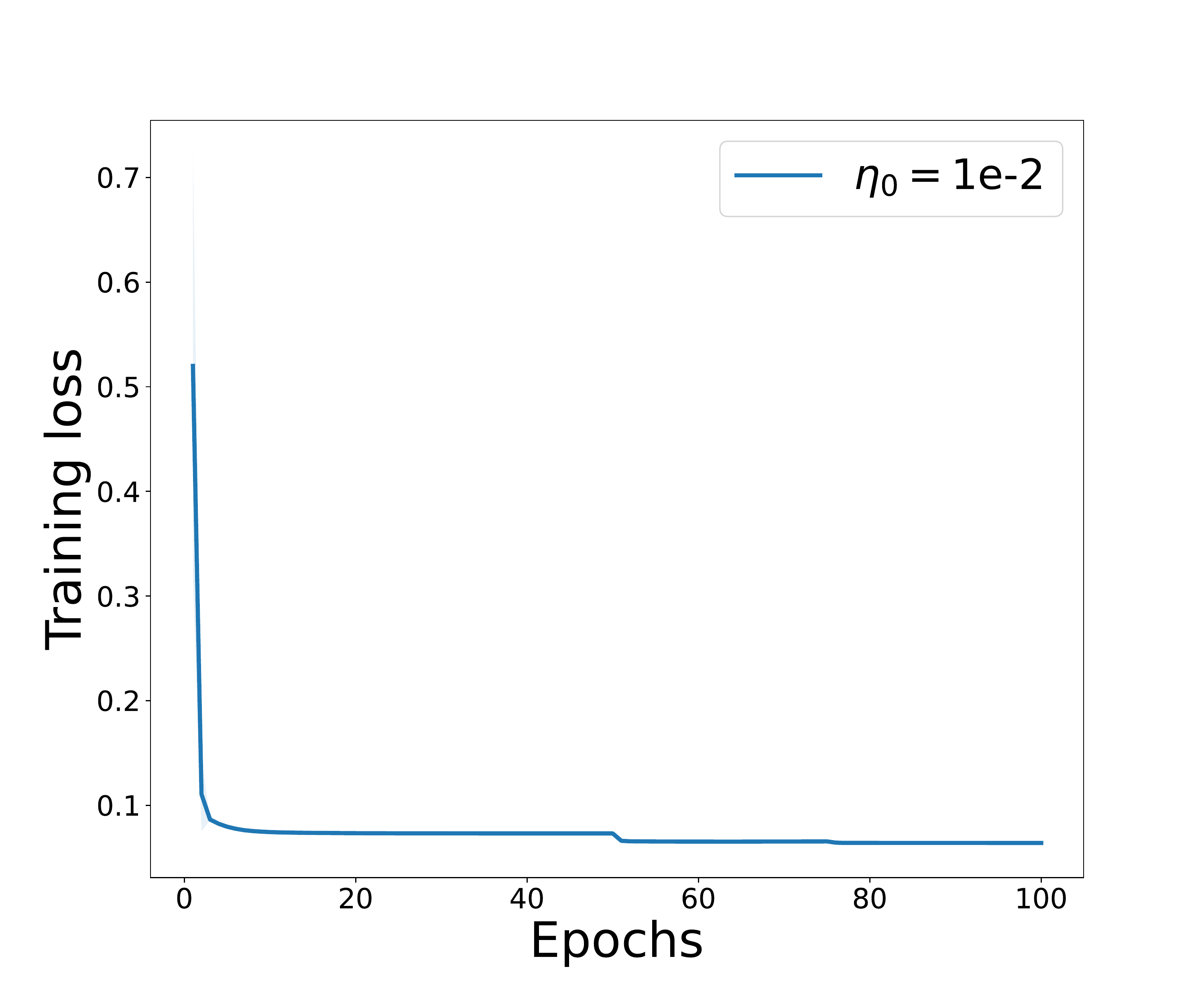}}
\subfigure[U(0.6), $c=1$]{\includegraphics[scale=0.1]{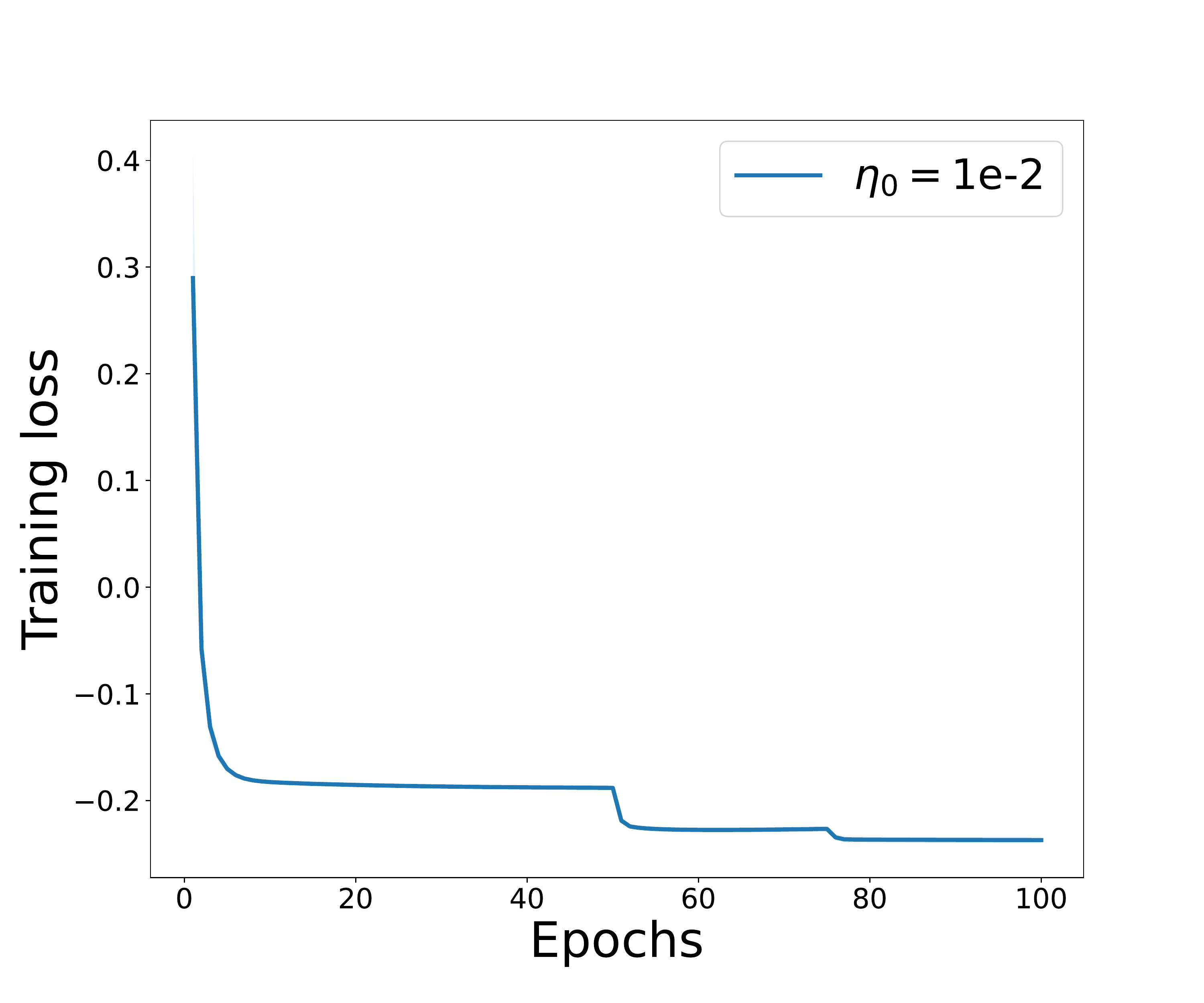}}
\subfigure[U(0.6), $c=10$]{\includegraphics[scale=0.1]{figures/aldr-letter-uni06-cvg-top-m1.pdf}}

\subfigure[U(0.9), $c=0.1$]{\includegraphics[scale=0.1]{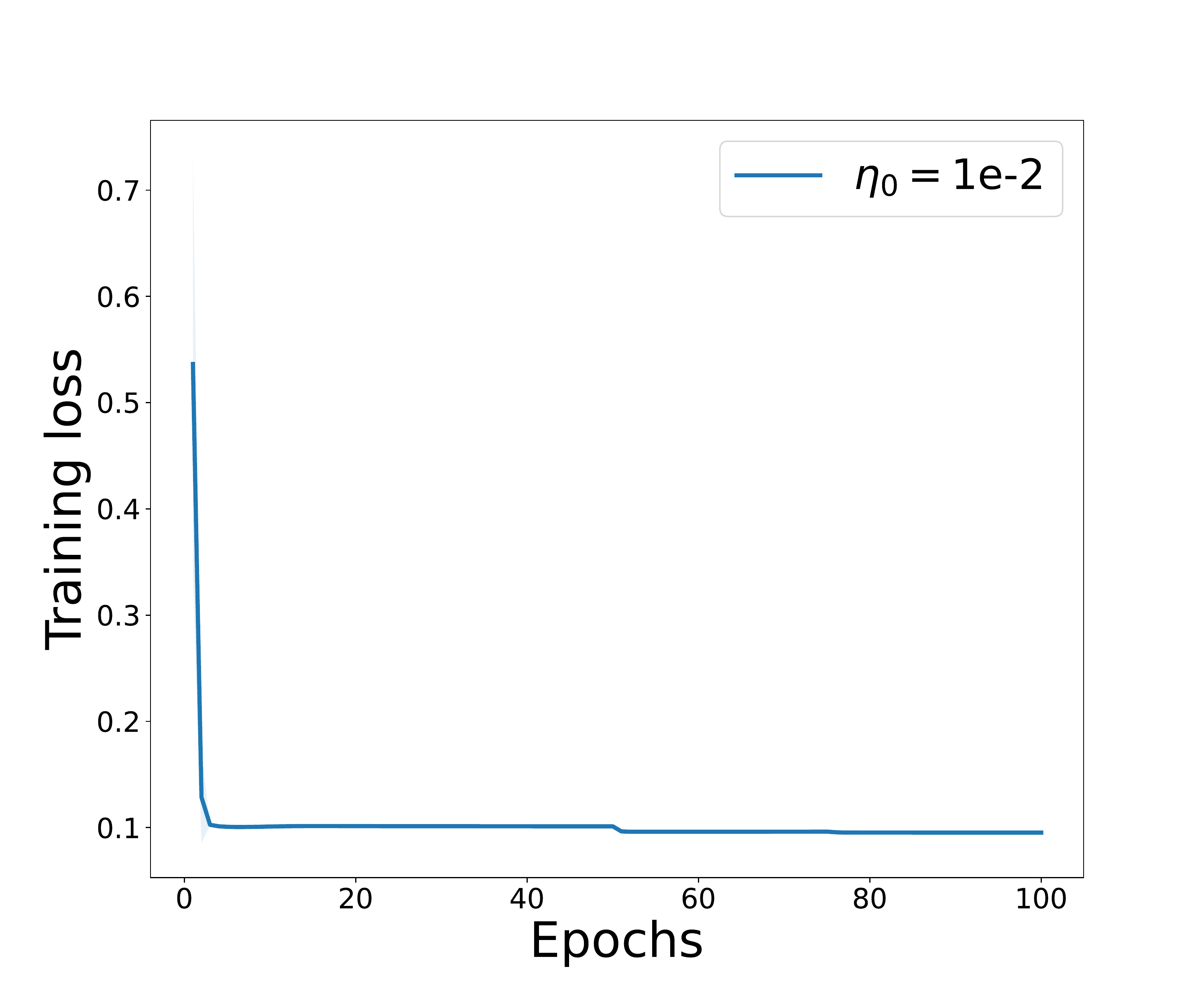}}
\subfigure[U(0.9), $c=1$]{\includegraphics[scale=0.1]{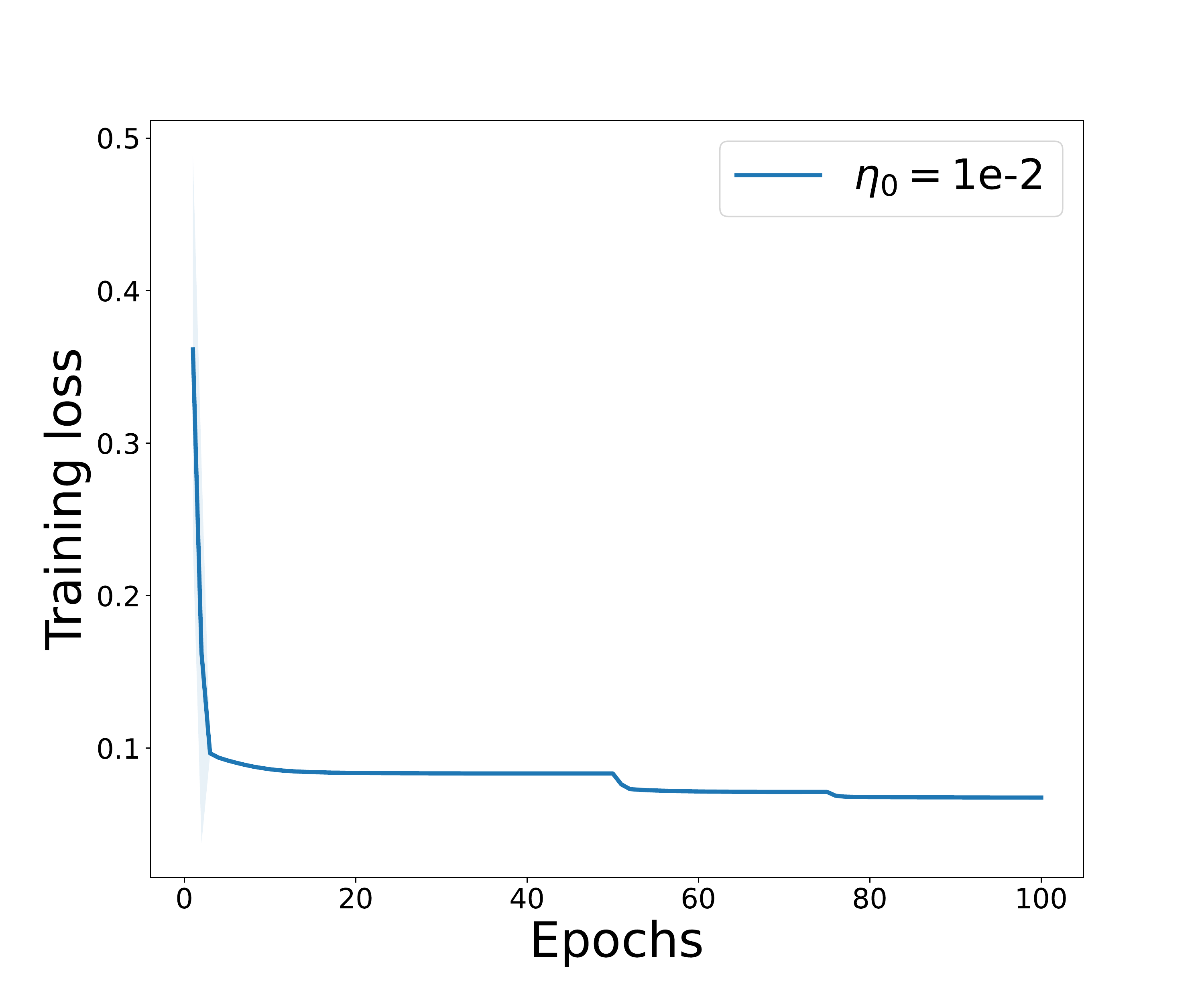}}
\subfigure[U(0.9), $c=10$]{\includegraphics[scale=0.1]{figures/aldr-letter-uni09-cvg-top-m1.pdf}}
\caption{Training loss convergence for ALDR-KL (Algorithm 1) on Letter dataset }\label{fig:aldr-letter-cvg}
\end{figure}

\begin{figure}[p]
\centering

\subfigure[CD(0.1), $c=0.1$]{\includegraphics[scale=0.1]{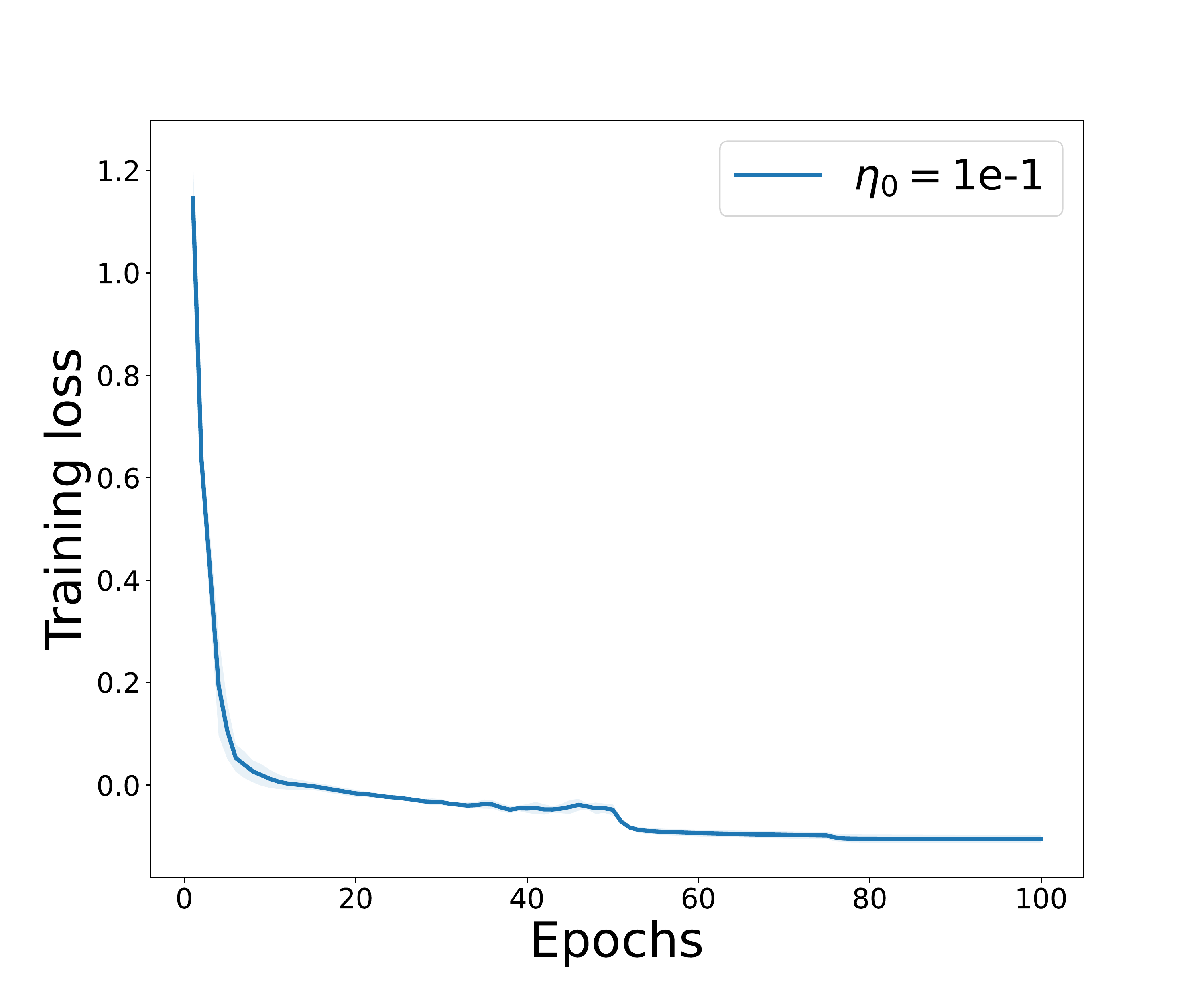}}
\subfigure[CD(0.1), $c=1$]{\includegraphics[scale=0.1]{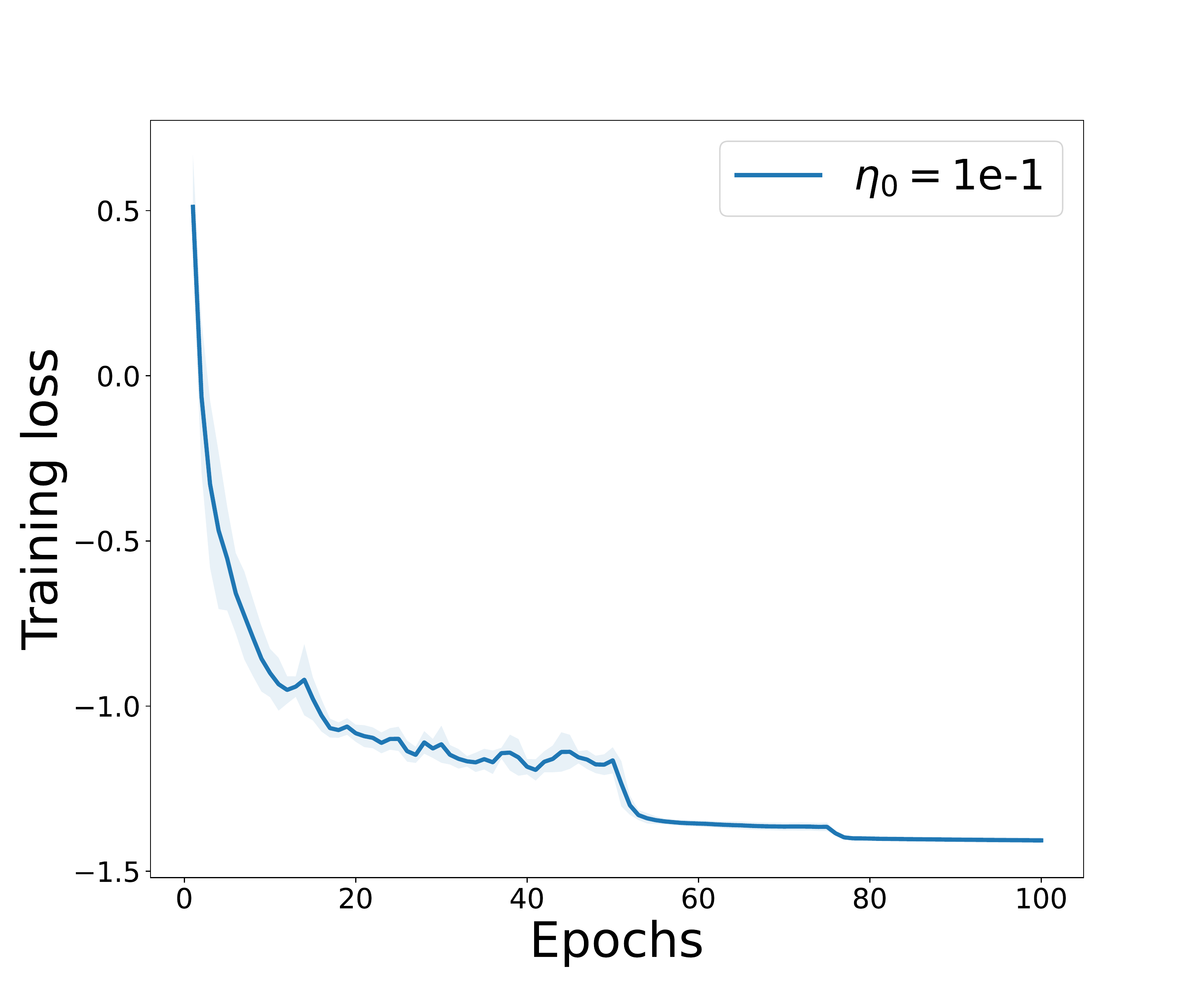}}
\subfigure[CD(0.1), $c=10$]{\includegraphics[scale=0.1]{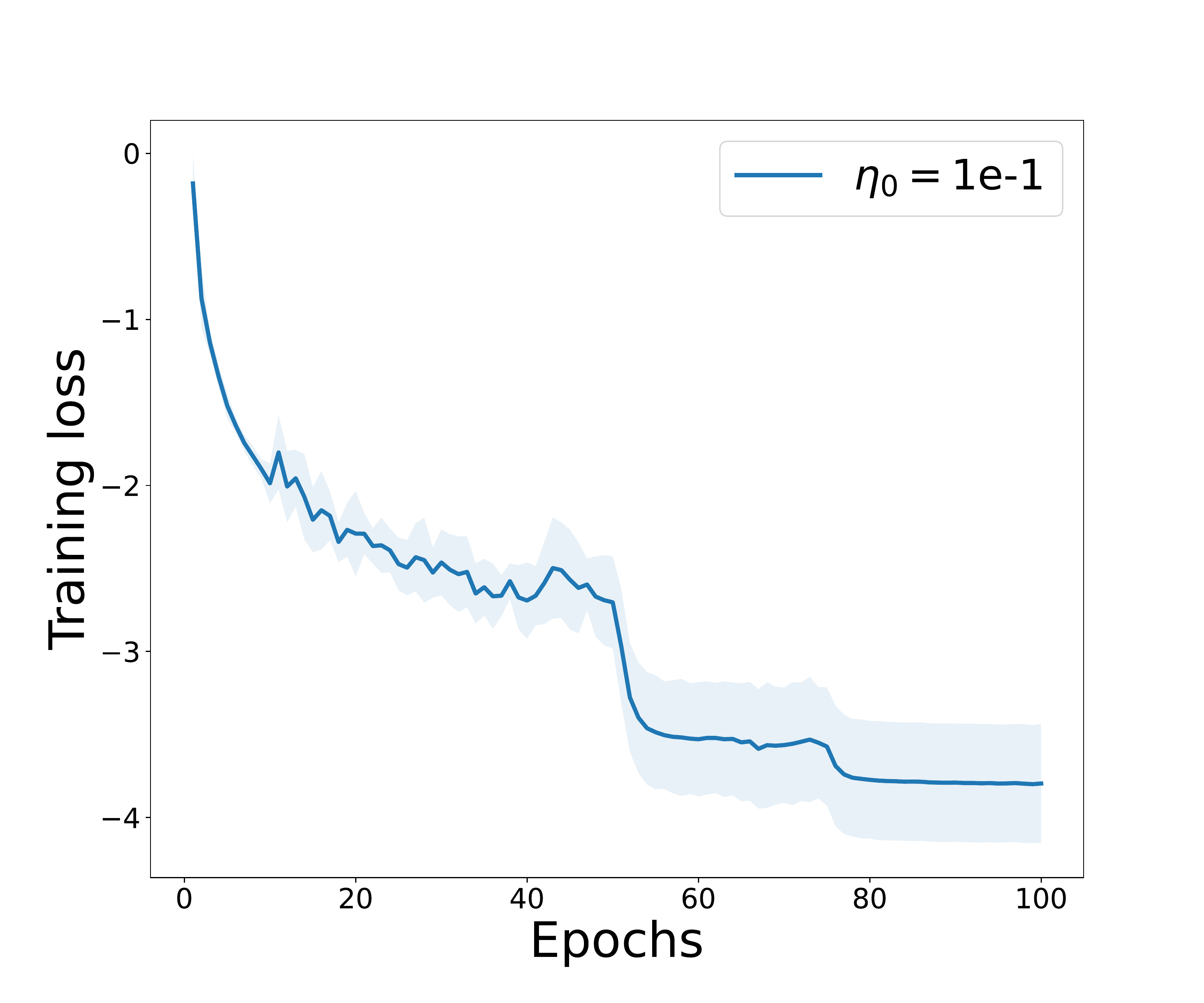}}

\subfigure[CD(0.3), $c=0.1$]{\includegraphics[scale=0.1]{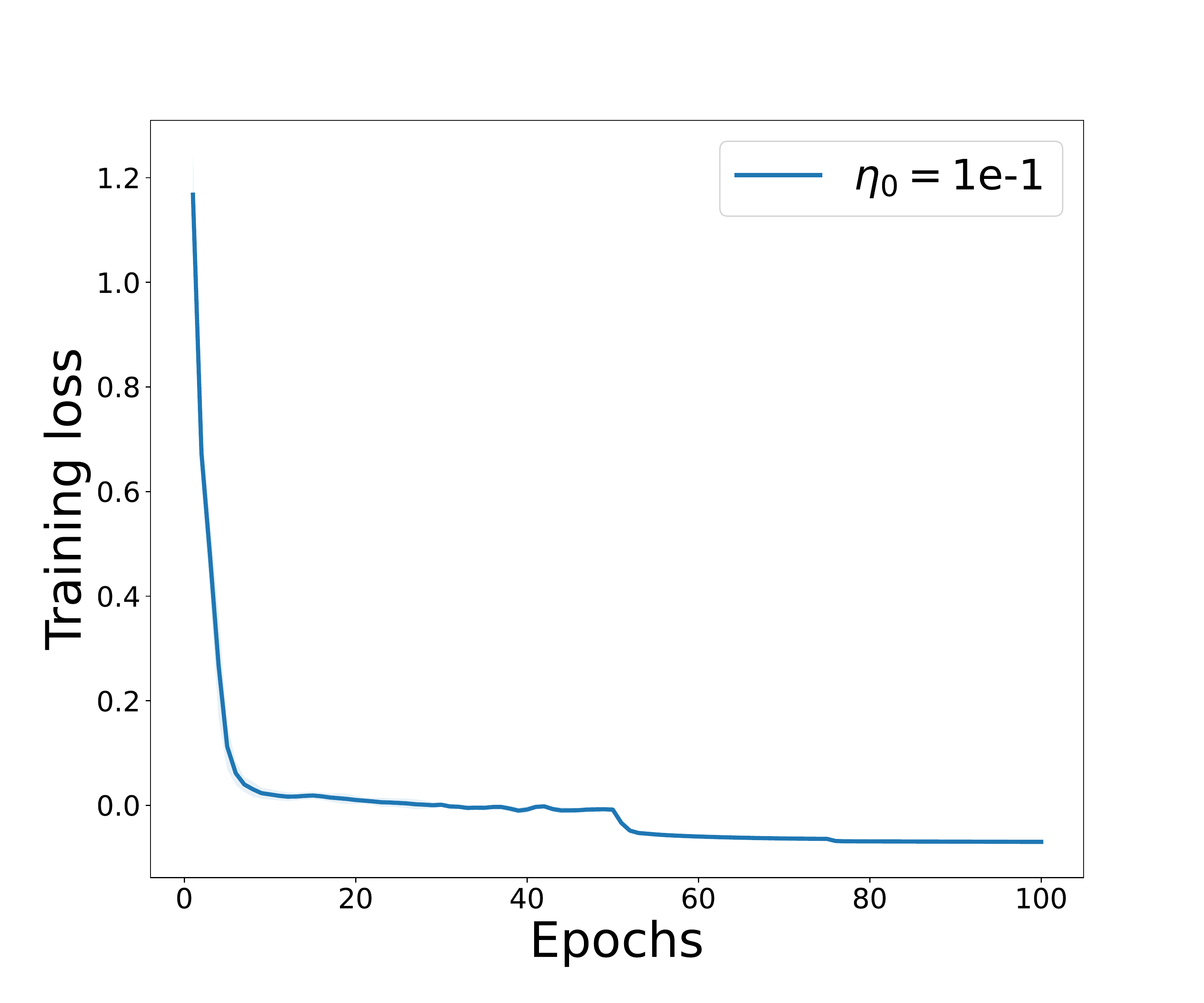}}
\subfigure[CD(0.3), $c=1$]{\includegraphics[scale=0.1]{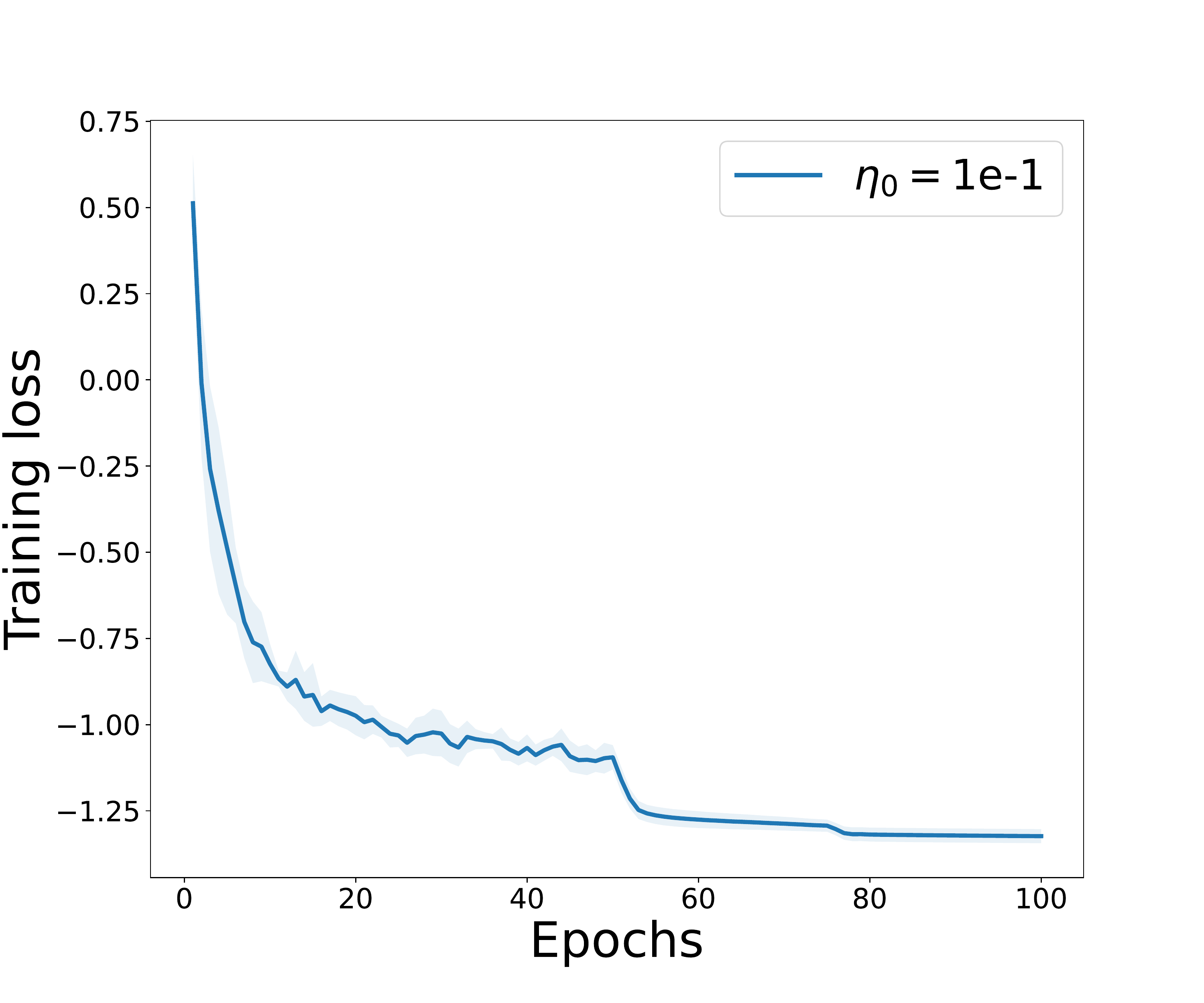}}
\subfigure[CD(0.3), $c=10$]{\includegraphics[scale=0.1]{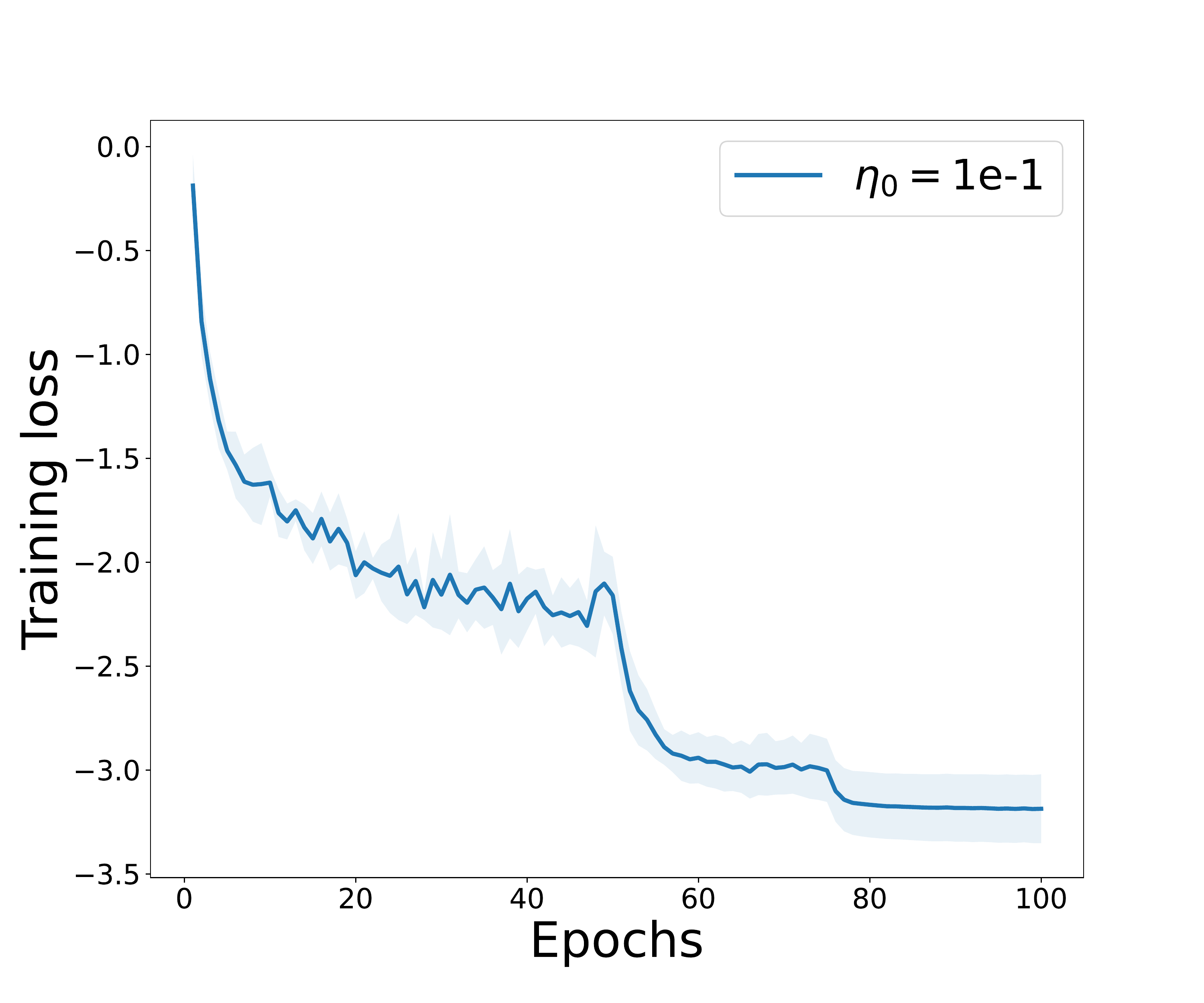}}

\subfigure[CD(0.5), $c=0.1$]{\includegraphics[scale=0.1]{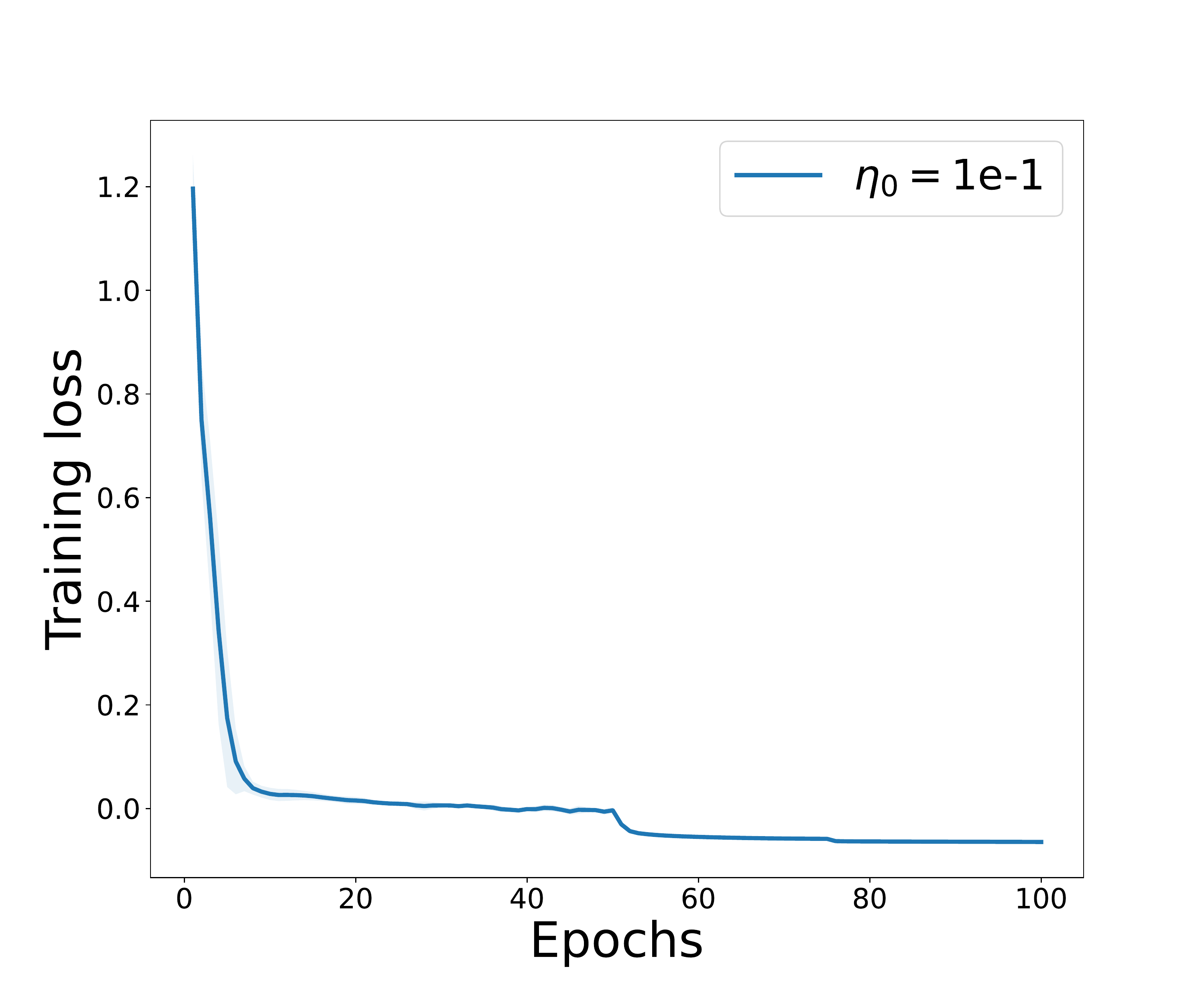}}
\subfigure[CD(0.5), $c=1$]{\includegraphics[scale=0.1]{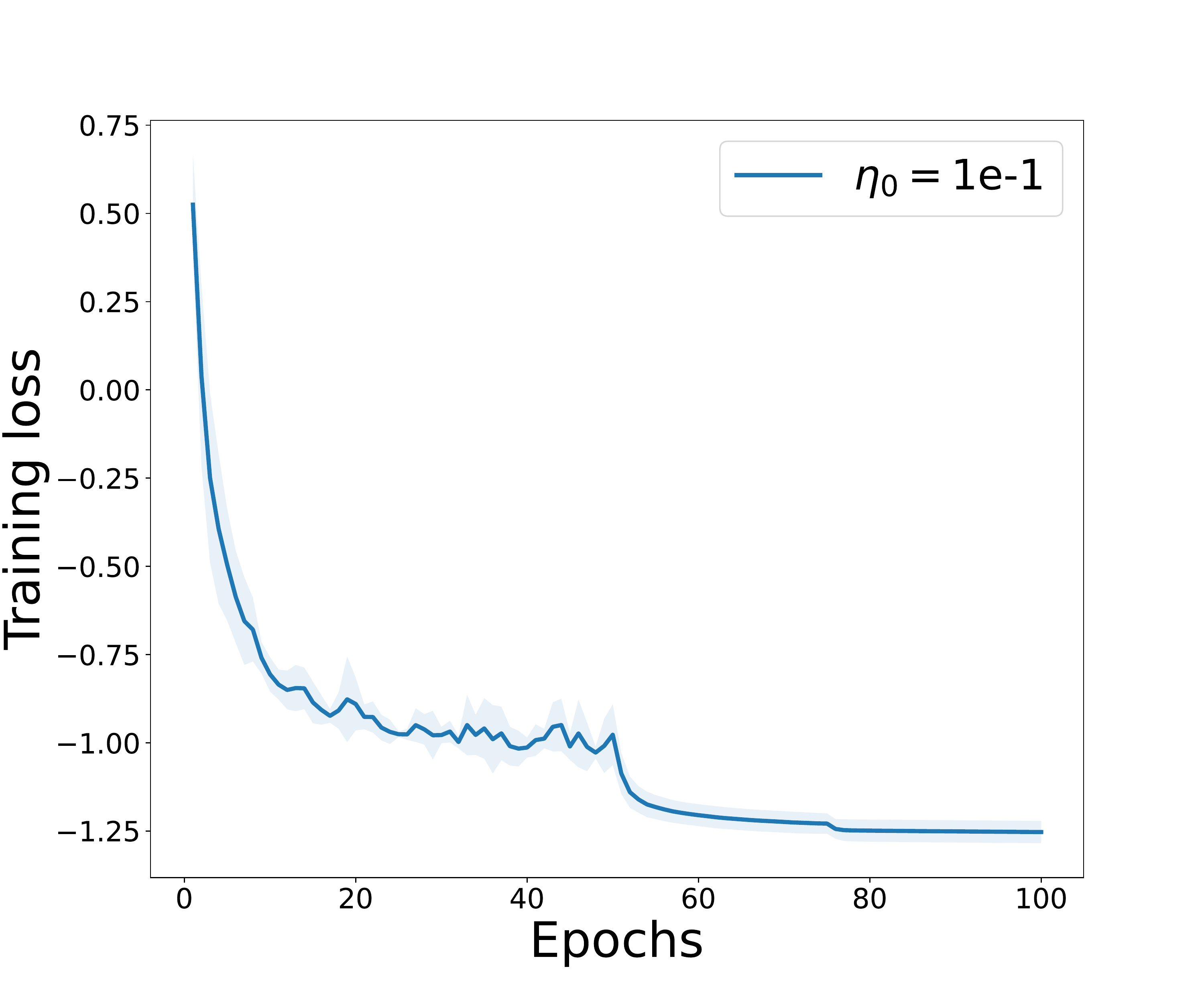}}
\subfigure[CD(0.5), $c=10$]{\includegraphics[scale=0.1]{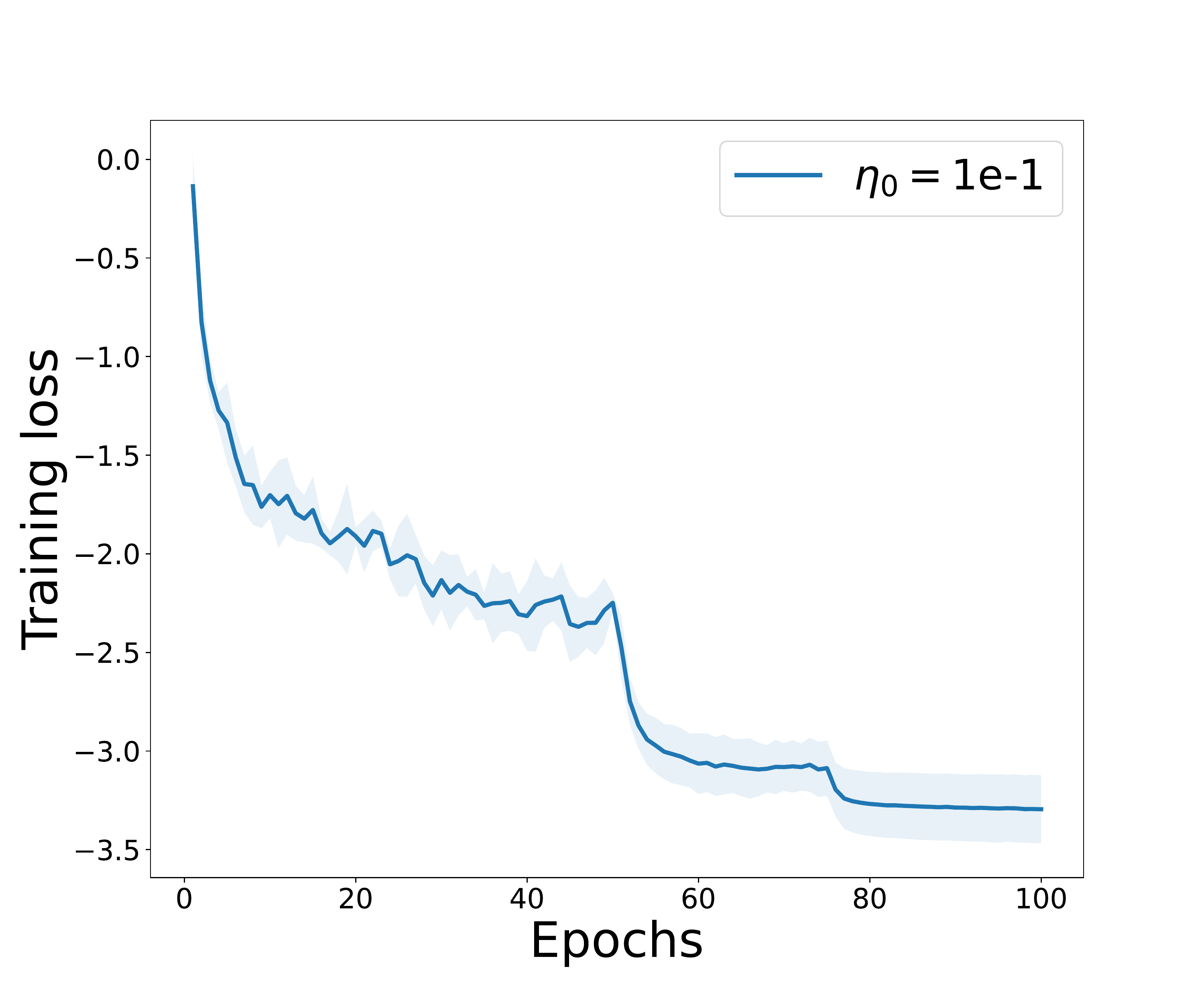}}

\subfigure[U(0.3), $c=0.1$]{\includegraphics[scale=0.1]{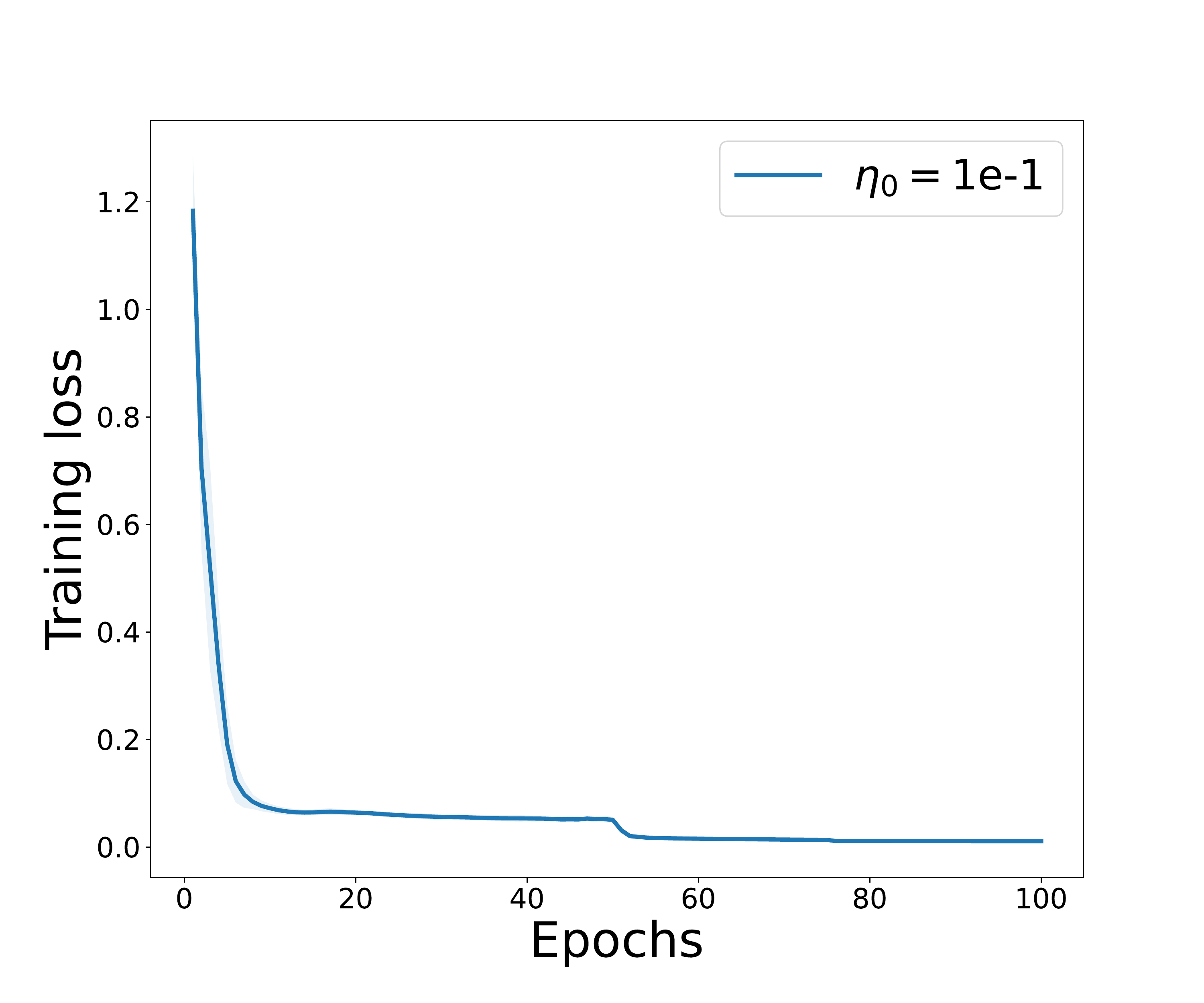}}
\subfigure[U(0.3), $c=1$]{\includegraphics[scale=0.1]{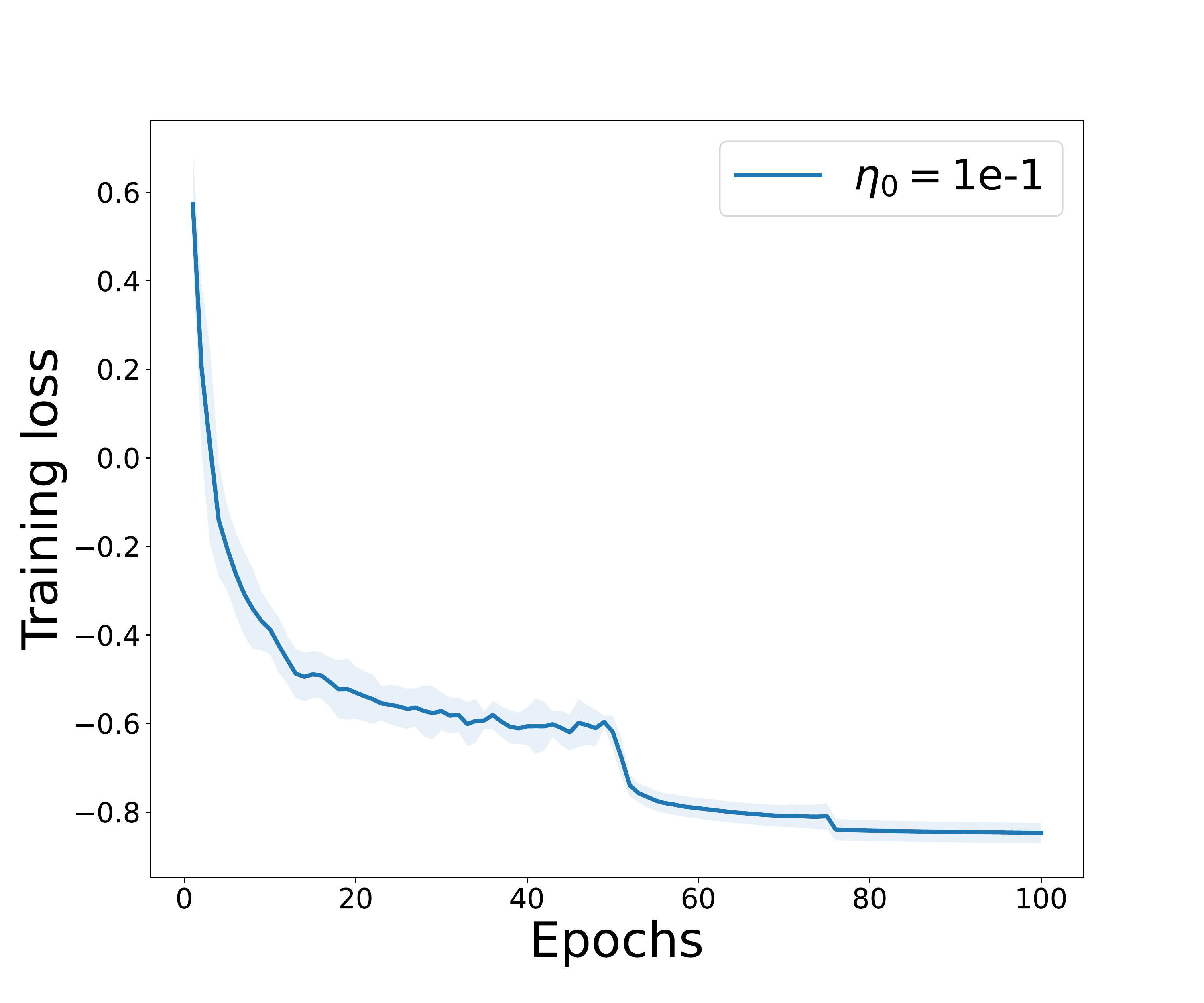}}
\subfigure[U(0.3), $c=10$]{\includegraphics[scale=0.1]{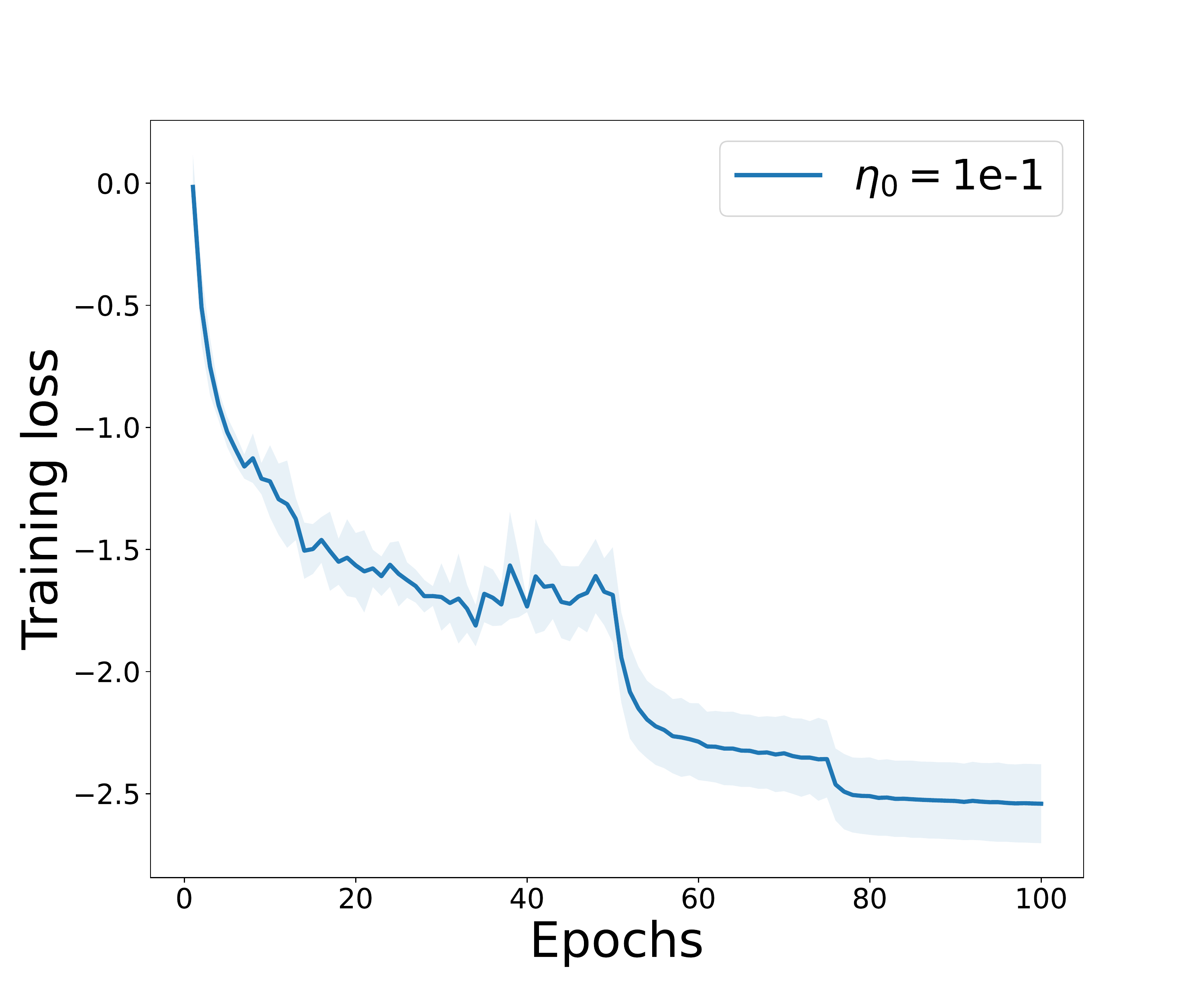}}

\subfigure[U(0.6), $c=0.1$]{\includegraphics[scale=0.1]{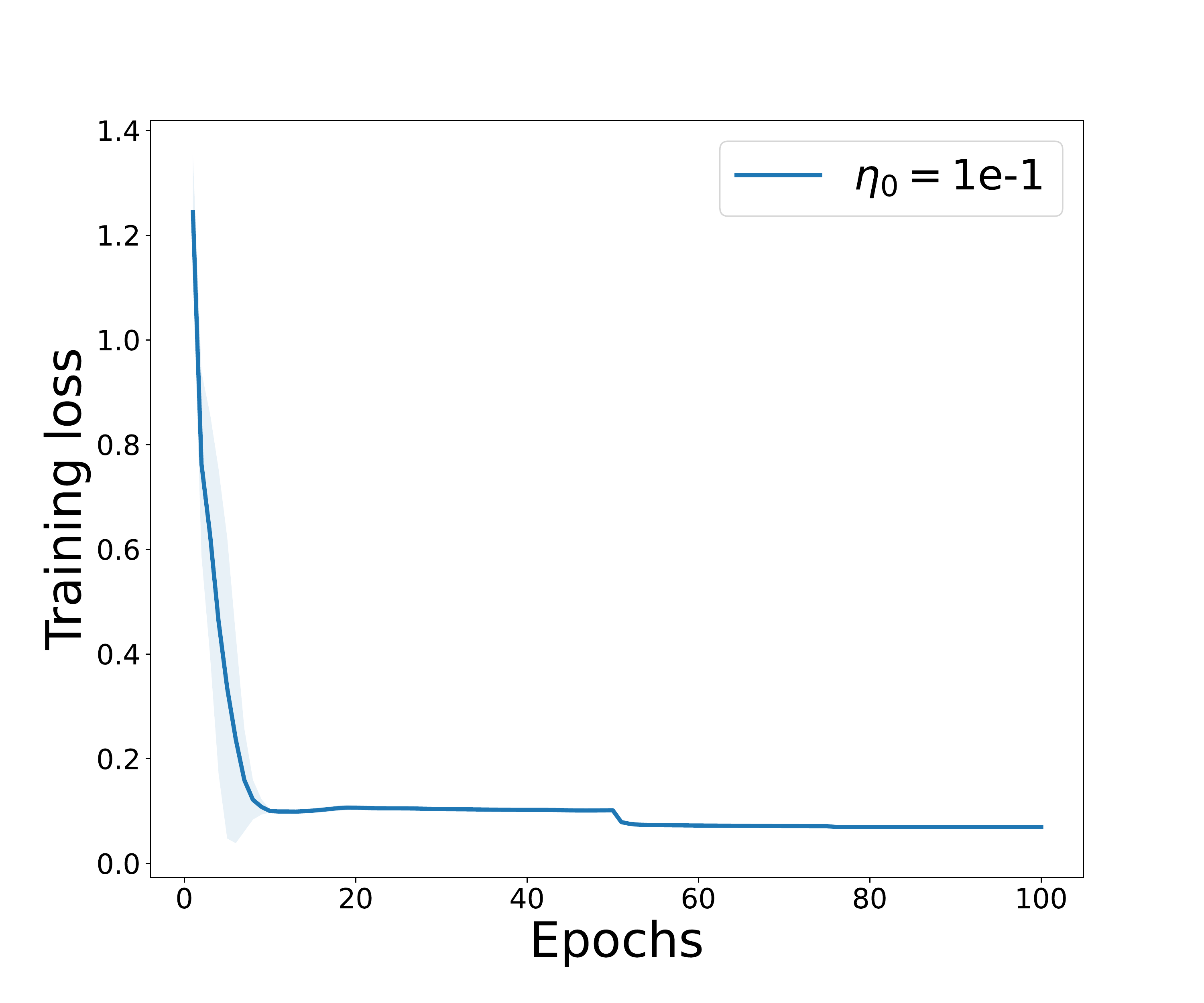}}
\subfigure[U(0.6), $c=1$]{\includegraphics[scale=0.1]{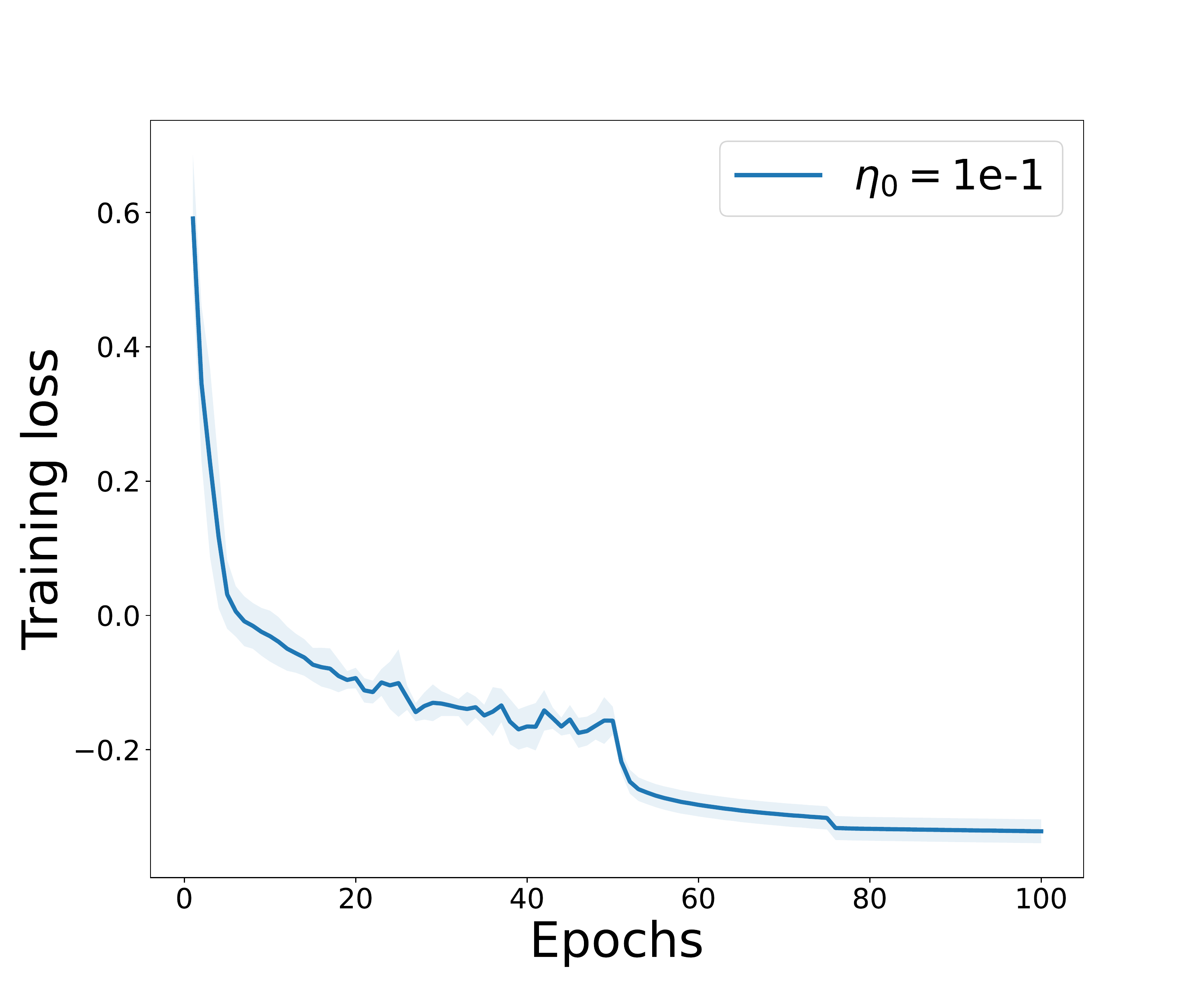}}
\subfigure[U(0.6), $c=10$]{\includegraphics[scale=0.1]{figures/aldr-vowel-uni06-cvg-top-m1.pdf}}

\subfigure[U(0.9), $c=0.1$]{\includegraphics[scale=0.1]{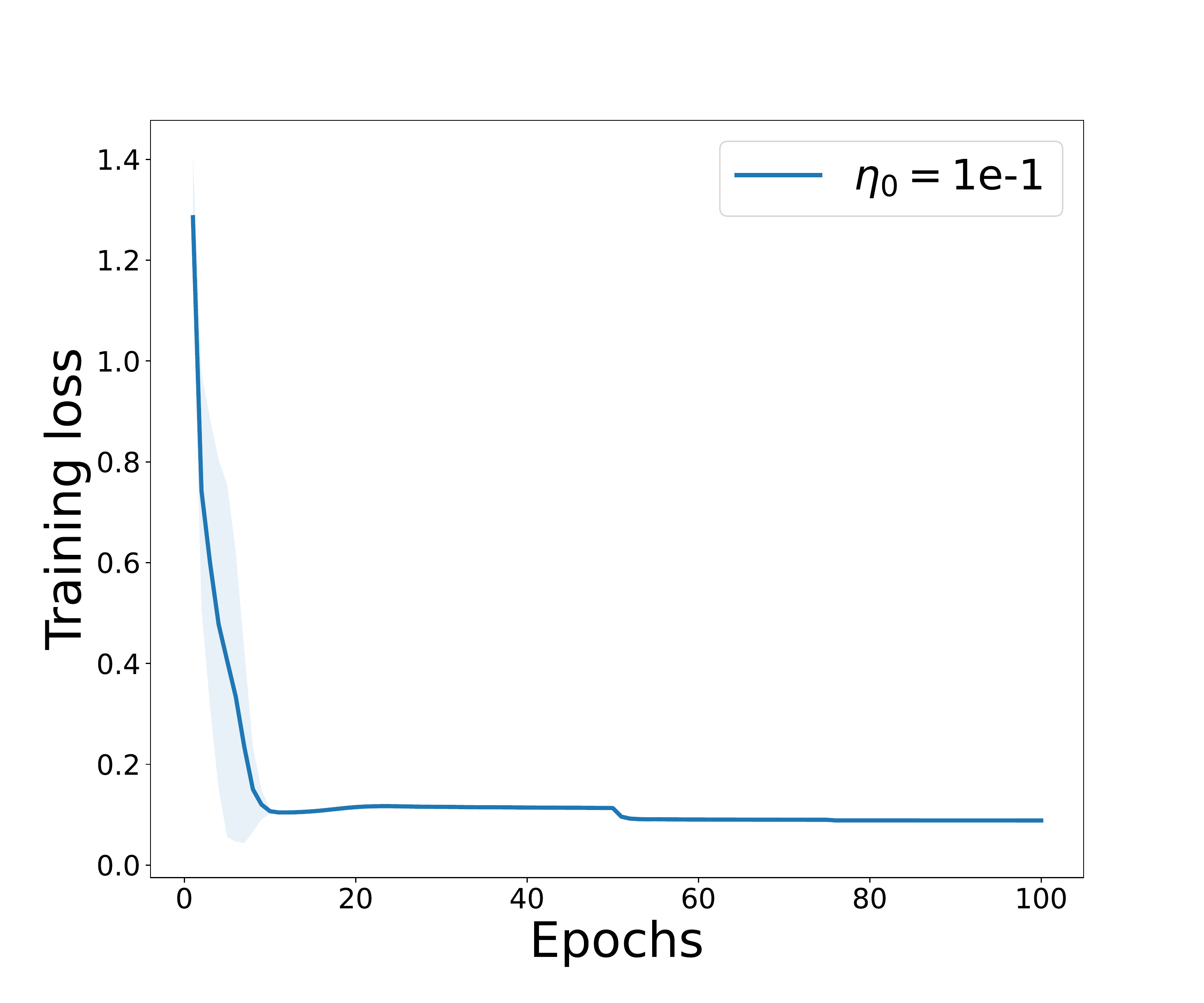}}
\subfigure[U(0.9), $c=1$]{\includegraphics[scale=0.1]{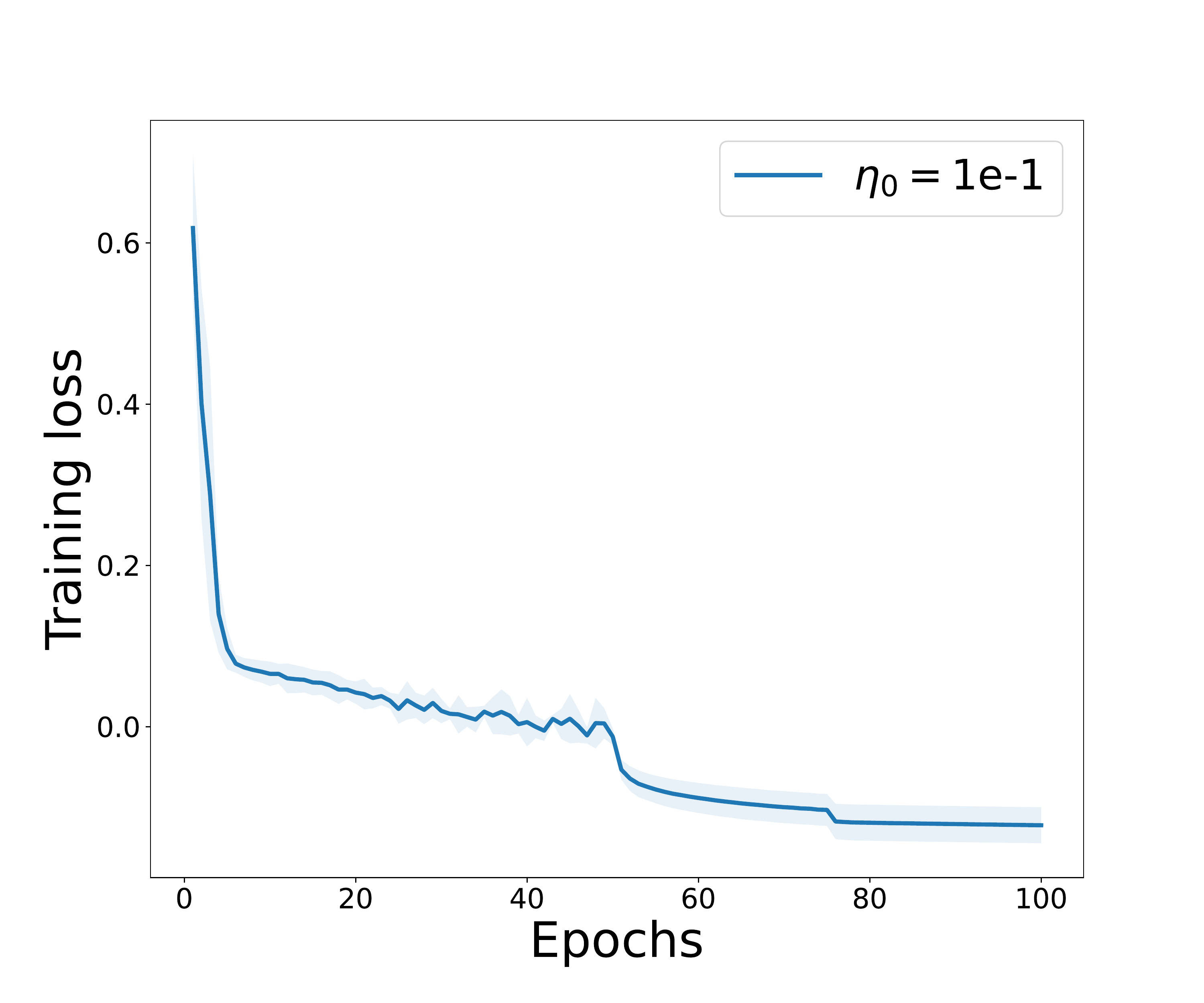}}
\subfigure[U(0.9), $c=10$]{\includegraphics[scale=0.1]{figures/aldr-vowel-uni09-cvg-top-m1.pdf}}
\caption{Training loss convergence for ALDR-KL (Algorithm 1) on Vowel dataset }\label{fig:aldr-vowel-cvg}
\end{figure}






\subsection{More Synthetic Experiments}\label{sec:more-synthetic}
{We additionally provide the synthetic experiments for training from scratch setting in Figure~\ref{fig:syn-scratch}, and LDR-KL loss in Figure~\ref{fig:syn-seq} and Figure~\ref{fig:syn-scratch}. The experimental settings are kept as the same, except for training from scratch setting, the total training epoch is fixed as 200 for all methods. The $\lambda$ parameter for LDR-KL loss is set as the mean value from the optimized $\lambda$ from ALDR-KL loss. From the results, the ALDR-KL loss consistently enjoys the most robust decision boundary.}

\begin{figure}
    \centering
\includegraphics[scale=0.5]{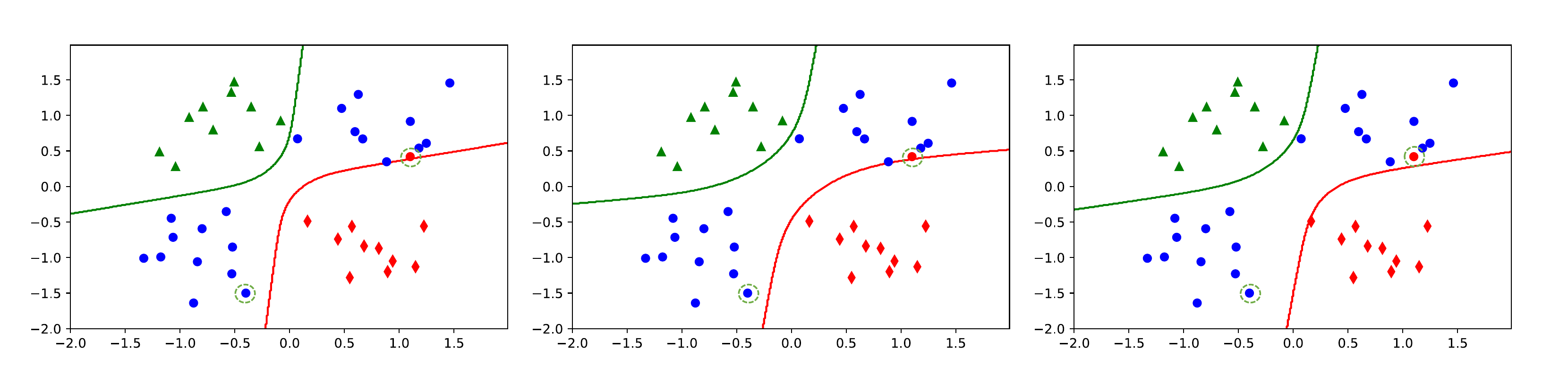}
     \vspace*{-0.2in}\caption{Design: training from scratch. Left: decision boundary for CE loss; Middle: decision boundary for LDR-KL loss with $\lambda$ set as $\mathbb E_{\x_i, t}[\lambda^i_t]$ from ALDR-KL loss; Right: decision boundary for ALDR-KL loss.   
     }\label{fig:syn-scratch}
\end{figure}
\begin{figure}
    \centering
\includegraphics[scale=0.5]{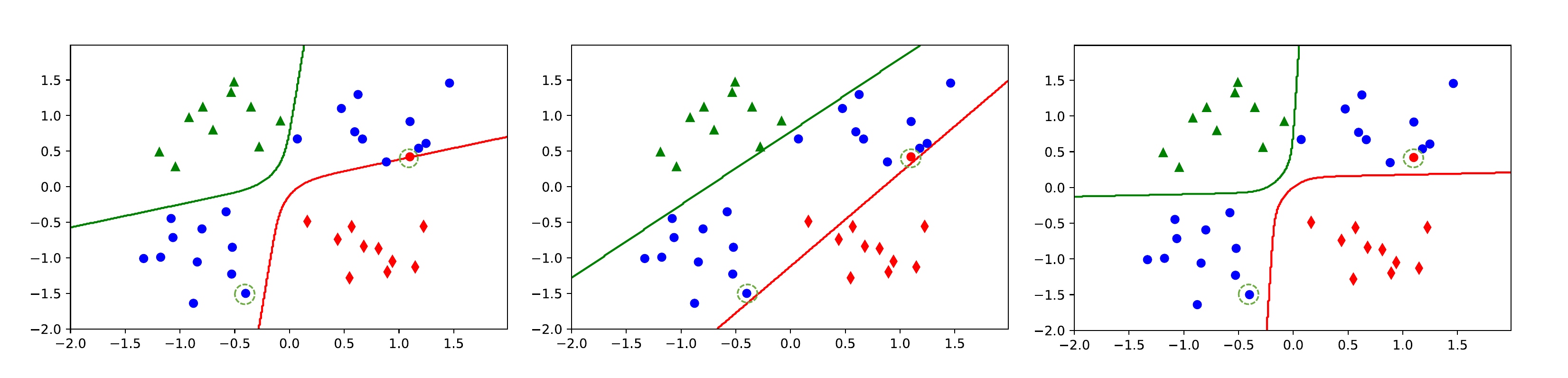}
     \vspace*{-0.2in}\caption{Design: pretrained from CE loss on clean data. Left: decision boundary for pretrained model with CE loss; Middle: decision boundary for LDR-KL loss with $\lambda$ set as $\mathbb E_{\x_i, t}[\lambda^i_t]$ from ALDR-KL loss; Right: decision boundary for ALDR-KL loss.   
     }\label{fig:syn-seq}
\end{figure}

\subsection{Running Time Cost}\label{sec:running-time}
{We provide the empirical running time cost for all baselines in Table~\ref{tab:running-time}. Each entry stands for mean and standard deviation for 100 consecutive epochs running on a x86\_64 GNU/Linux cluster with NVIDIA GeForce GTX 1080 Ti GPU card.}
\begin{table}[htbp]
  \centering
  \caption{Running time for 15 baselines on 7 benchmark datasets (seconds per epoch).}
  \resizebox{1.\textwidth}{!}{
    \begin{tabular}{llllllll}
    \toprule
    Loss  & Vowel & Letter & News20 & ALOI  & Kuzushiji-49 & CIFAR100 & Tiny-ImageNet \\
    \hline
    ALDR-KL & 0.176(0.009) & 0.933(0.033) & 2.002(0.068) & 7.029(0.091) & 121.123(1.767) & 31.923(0.381) & 61.204(0.26) \\
    LDR-KL & 0.172(0.006) & 0.783(0.033) & 1.868(0.107) & 5.779(0.098) & 116.774(0.512) & 31.507(0.826) & 59.106(0.318) \\
    CE    & 0.17(0.008) & 0.719(0.027) & 1.846(0.095) & 5.157(0.074) & 116.612(0.296) & 31.49(0.289) & 58.687(0.141) \\
    SCE   & 0.167(0.008) & 0.756(0.024) & 2.082(0.081) & 5.903(0.175) & 120.376(0.275) & 29.97(0.156) & 60.336(0.456) \\
    GCE   & 0.165(0.008) & 0.771(0.029) & 1.869(0.079) & 5.772(0.42) & 122.294(0.268) & 30.373(0.187) & 61.026(0.213) \\
    TGCE  & 0.178(0.009) & 0.769(0.03) & 1.921(0.077) & 5.813(0.193) & 120.343(1.339) & 30.293(0.163) & 58.869(0.765) \\
    WW    & 0.155(0.009) & 0.715(0.028) & 2.067(0.068) & 5.255(0.165) & 127.304(0.49) & 29.386(0.148) & 63.249(0.666) \\
    JS    & 0.161(0.011) & 0.876(0.034) & 1.942(0.108) & 6.648(0.1) & 118.172(0.279) & 31.618(0.259) & 59.562(0.14) \\
    CS    & 0.158(0.012) & 0.73(0.036) & 1.909(0.084) & 5.393(0.178) & 120.702(1.923) & 29.278(0.139) & 60.391(1.28) \\
    RLL   & 0.172(0.008) & 0.794(0.024) & 2.118(0.077) & 6.198(0.157) & 122.021(0.339) & 30.35(0.48) & 60.609(0.785) \\
    NCE+RCE  & 0.177(0.007) & 0.818(0.037) & 2.264(0.085) & 6.326(0.194) & 116.682(1.909) & 30.139(0.124) & 59.967(0.589) \\
    NCE+AUL  & 0.174(0.007) & 0.835(0.031) & 2.154(0.121) & 6.464(0.194) & 115.613(0.766) & 29.951(0.12) & 60.022(0.49) \\
    NCE+AGCE  & 0.169(0.01) & 0.835(0.039) & 2.21(0.122) & 6.418(0.203) & 121.702(1.55) & 29.632(0.063) & 63.533(0.499) \\
    MSE   & 0.158(0.006) & 0.702(0.025) & 1.932(0.089) & 4.983(0.158) & 117.601(1.364) & 29.951(0.46) & 59.44(0.684) \\
    MAE   & 0.163(0.011) & 0.687(0.031) & 1.884(0.263) & 4.843(0.179) & 119.065(1.854) & 29.404(0.092) & 61.176(1.256) \\
    \bottomrule
    \end{tabular}}%
  \label{tab:running-time}%
\end{table}%

\subsection{Ablation Study for $\lambda$ and $\lambda_0$}\label{sec:diff-lams}
{We addtionally show the validation results for LDR-KL and ALDR-KL losses on CIFAR100 dataset with different $\lambda$ and $\lambda_0$ values in Table~\ref{tab:diff-lams}. When noisy level is higher, LDR-KL or ALDR-KL loss prefer larger $\lambda$ or $\lambda_0$ (1 over 0.1) on CIFAR100 dataset.}

\begin{table}[htbp]
  \centering
  \caption{The mean and standard deviation for top-1 validation accurcay on CIFAR100 dataset. The best choice is marked as bold.}
  \resizebox{1.\textwidth}{!}{
    \begin{tabular}{l|lll|lll}
    \toprule
    Loss  & \multicolumn{3}{c}{LDR-KL} & \multicolumn{3}{c}{ALDR-KL} \\
    \hline
    Noise setting & $\lambda$=0.1 & $\lambda$=1 & $\lambda$=10 & $\lambda_0$=0.1 & $\lambda_0$=1 & $\lambda_0$=10 \\
    \hline
    Clean & \textbf{68.948(0.211)} & 66.848(0.465) & 61.528(0.665) & \textbf{68.456(0.351)} & 66.54(0.304) & 60.702(0.332) \\
    Uniform 0.3 & \textbf{40.658(0.218)} & 38.918(0.337) & 36.914(0.466) & \textbf{40.872(0.259)} & 38.888(0.24) & 36.958(0.255) \\
    Uniform 0.6 & 13.462(0.831) & \textbf{16.084(0.467)} & 14.766(0.306) & 13.196(0.821) & \textbf{16.038(0.55)} & 14.976(0.158) \\
    Uniform 0.9 & 1.414(0.086) & \textbf{1.46(0.088)} & 1.348(0.063) & 1.444(0.084) & \textbf{1.468(0.044)} & 1.366(0.118) \\
    CD 0.1 & \textbf{59.346(0.258)} & 56.478(0.376) & 51.696(0.718) & \textbf{59.368(0.32)} & 55.99(0.328) & 52.252(0.416) \\
    CD 0.3 & \textbf{43.198(0.434)} & 40.098(0.375) & 33.89(1.045) & \textbf{43.272(0.34)} & 39.66(0.445) & 34.354(1.212) \\
    CD 0.5 & 32.026(0.373) & \textbf{32.344(0.373)} & 25.672(0.379) & 31.916(0.286) & \textbf{32.378(0.333)} & 26.396(0.637) \\
    \bottomrule
    \end{tabular}}
  \label{tab:diff-lams}%
\end{table}%

\end{document}